\documentclass[10pt]{article}
\usepackage[utf8]{inputenc}
\usepackage[stable]{footmisc}
\usepackage{amsmath}  
\allowdisplaybreaks
\usepackage{graphicx} 
\usepackage{amsthm}
\usepackage{tikz-cd} 
\usepackage{amssymb}
\usepackage{bm}
\usepackage{bbm}
\usepackage{booktabs}
\usepackage{pifont}
\usepackage{xcolor}
\usepackage{array}
\usepackage{makecell}
\usepackage{dsfont}
\usepackage{graphicx}
\usepackage{subcaption}
\usepackage{blindtext}
\usepackage{url}
\usepackage{algorithm}
\usepackage{algorithmic}
\usepackage{thmtools}
\usepackage{thm-restate}
\usepackage{setspace}
\usepackage{enumitem}
\usepackage[margin=1in]{geometry}
\usepackage{xcolor}
\usepackage{cite} 
\usepackage[final]{hyperref} 
\hypersetup{
	colorlinks=true,       
	linkcolor=blue,        
	citecolor=blue,        
	filecolor=magenta,     
	urlcolor=blue         
}
\usepackage{cleveref}
\usepackage{tcolorbox}
\usepackage{tabularx}
\usepackage{threeparttable}
\usepackage{authblk}
\usepackage{etoc}
\usepackage[export]{adjustbox}

\newtheorem{theorem}{Theorem}[section]
\newtheorem{proposition}[theorem]{Proposition}
\newtheorem{lemma}[theorem]{Lemma}
\newtheorem{corollary}[theorem]{Corollary}
\newtheorem{assumption}[theorem]{Assumption}

\theoremstyle{definition}

\theoremstyle{definition}

\newtheorem{remark}[theorem]{Remark}

\numberwithin{equation}{section}

\newcommand{\R}{\mathbb{R}}

\newcommand{\M}{\mathcal{M}}

\newcommand{\BBS}{\mathbb{S}}
\newcommand{\BBC}{\mathbb{C}}

\newcommand{\BBT}{\mathbb{T}}

\newcommand{\CE}{\mathcal{E}}
\newcommand{\CR}{\mathcal{R}}

\newcommand{\C}{\mathbb{C}}
\renewcommand{\S}{\mathbb{S}}

\newcommand{\IM}[1]{\mathrm{Im}\left(#1\right)}
\newcommand{\RE}[1]{\mathrm{Re}\left(#1\right)}
\newcommand{\im}[1]{\mathrm{Im} \left(#1\right)}
\newcommand{\re}[1]{\mathrm{Re}\left(#1\right)}

\newcommand{\alphaBar}{\overline{\alpha}}

\newcommand{\aDPlus}{a_+^{\text{d}}}
\newcommand{\aDMinus}{a_-^{\text{d}}}
\newcommand{\aOPlus}{a_+^{\text{o}}}
\newcommand{\aOMinus}{a_-^{\text{o}}}
\newcommand{\vDPlus}{v_+^{\text{d}}}
\newcommand{\vDMinus}{v_-^{\text{d}}}
\newcommand{\vOPlus}{v_+^{\text{o}}}
\newcommand{\vOMinus}{v_-^{\text{o}}}

\renewcommand{\P}{\mathsf{P}}

\newcommand{\Law}{\mathrm{Law}}

\newcommand{\dummy}{\mathord{\color{black!33}\bullet}}
\newcommand{\WC}{\mathrm{WC}}

\newcommand{\tr}[1]{\mathrm{tr}(#1)}

\newcommand{\Dtheta}{\Dot{\theta}}
\newcommand{\Dxi}{\Dot{\xi}}
\newcommand{\Drho}{\Dot{\rho}}
\newcommand{\Dphi}{\Dot{\phi}}
\newcommand{\Dpsi}{\Dot{\psi}}
\newcommand{\Dalpha}{\Dot{\alpha}}
\newcommand{\DalphaBar}{\Dot{\overline{\alpha}}}
\newcommand{\Dvartheta}{\Dot{\vartheta}}
\newcommand{\Deta}{\Dot{\eta}}
\newcommand{\Dgamma}{\Dot{\gamma}}
\newcommand{\DPhi}{\Dot{\Phi}}
\newcommand{\rhoTH}{\rho_{\mathrm{th}}}
\newcommand{\gammaTH}{\gamma_{\mathrm{th}}}

\newcommand{\lambdaMax}{\lambda_{\mathrm{max}}}

\newcommand{\thetaMF}{\Theta}
\newcommand{\tb}{\widetilde{b}}
\newcommand{\rOverline}{\overline{r}}

\newcommand{\Arg}{\operatorname{Arg}}
\newcommand{\zeq}{z_{\mathrm{eq}}}

\newcommand{\eq}{\mathrm{eq}}
\newcommand{\Wvarphi}{\widetilde{\varphi}}
\newcommand{\DeltaGamma}{\Delta_{\gamma}}
\newcommand{\DeltaPhi}{\Delta_{\Phi}}
\newcommand{\BL}{\mathrm{BL}}

\newcommand{\PhiA}{\Phi_{A}}
\newcommand{\PhiV}{\Phi_{V}}

\newcommand{\phiV}{\phi_{V}}

\newcommand{\WPhi}{\widetilde{\Phi}}
\newcommand{\PhiTH}{\Phi_{\mathrm{th}}}
\newcommand{\BargammaTH}{\overline{\gamma}_{\mathrm{th}}}
\newcommand{\wz}{\widetilde{z}}
\newcommand{\rhoEq}{\rho_{\mathrm{eq}}}
\newcommand{\WGammaTH}{\widetilde{\gamma}_{\mathrm{th}}}

\makeatletter
\def\url@leostyle{%
	\@ifundefined{selectfont}{\def\UrlFont{\sf}}{\def\UrlFont{\small\ttfamily}}}
\makeatother
\definecolor{darkgreen}{rgb}{0,0.4,0}

\newcommand{\cmark}{\textcolor{green!60!black}{\ding{51}}}
\newcommand{\xmark}{\textcolor{red!70!black}{\ding{55}}}

\newcommand{\Unif}{\mathrm{Unif}}
\newcommand{\OA}{\mathrm{OA}}

\newcommand{\SX}[1]{\marginpar{\scriptsize\textcolor{darkgreen}{SX: {#1}}}}

\title{{\usefont{OT1}{bch}{b}{n}
	\LARGE  On the Diverse Dynamical Behaviors Arising in\\Deep Linear Transformers}}

\author[1]{Sixu Li\thanks{Email: \texttt{sli739@wisc.edu}}}
\author[2]{Thomas Jacob Maranzatto\thanks{Email: \texttt{tmaran@umd.edu}}}
\author[3]{Jan Peszek\thanks{Email: \texttt{j.peszek@uw.edu.pl}}}
\author[4]{Trevor Teolis\thanks{Email: \texttt{tt111@rice.edu}}}
\author[2]{Semih Akkoc\thanks{Email: \texttt{akkoc@umd.edu}}}
\author[5]{Konstantin Riedl\thanks{Email: \texttt{Konstantin.Riedl@maths.ox.ac.uk}}}
\author[2]{Sennur Ulukus\thanks{Email: \texttt{ulukus@umd.edu}}}
\author[1]{Nicol\'as {Garc\'ia Trillos}\thanks{Email: \texttt{garciatrillo@wisc.edu}}}

\affil[1]{University of Wisconsin--Madison, Department of Statistics}
\affil[2]{University of Maryland, Department of Electrical and Computer Engineering}
\affil[3]{University of Warsaw, Institute of Applied Mathematics and Mechanics}
\affil[4]{Rice University, Department of Computational and Applied Mathematics \& Operations Research}
\affil[5]{University of Oxford, Mathematical Institute}
\date{\today}

\begin{document}
\maketitle
\vspace{-1.0em}
\begin{abstract}
\noindent
We study the inference-time behavior of deep linear encoder-only transformers through the lens of interacting particle systems.
In this perspective, tokens are modeled as particles that interact dynamically through successive linear self-attention layers. 
We show that in embedding dimension two, for any key, query, and value matrices, the dynamics can be reformulated as a generalized Kuramoto-type model with pure second-harmonic coupling. 
This formulation is amenable to Watanabe--Strogatz theory which reveals the dynamics are intrinsically low-dimensional regardless of the parameter matrices. 
For a class of token initializations associated with the Ott--Antonsen (OA) manifold, we show that the parameter matrices induce a diverse variety of long-time behaviors in linear transformers, including clustering, oscillations, and bifurcations.
The oscillations and bifurcations are characterized by uncovering a hidden Hamiltonian structure in the dynamics.
By establishing a structural stability result, we further show that dynamics initialized near the OA manifold exhibit the same long-time behavior as those initialized exactly on the manifold.
Motivated by our theory in dimension two, we conduct numerical experiments for analogous parameter regimes in higher-dimensional transformers. Our numerical experiments suggest that the long-time behaviors characterized in our theoretical results persist in higher dimensions.
\end{abstract}

{\noindent\small{\textbf{Keywords:} transformers, linear self-attention, interacting particle systems, Kuramoto model, low-dimensional structure, Watanabe--Strogatz transformation, Ott--Antonsen ansatz}.}\\

{\noindent\small{\textbf{AMS subject classifications:} {34C15, 34D06, 35Q83, 82C22, 37C20}}}

%
%

\newpage 
\tableofcontents

\etocdepthtag.toc{main}
\etocsettagdepth{main}{subsection}
\etocsettagdepth{appendix}{none}



\section{Introduction}\label{sec:intro}

The transformer architecture, through its central role in modern large language and computer vision models~\cite{bert,dosovitskiy2021an,gemini2023,gpt,llama} and, more broadly, in state-of-the-art foundation models~\cite{bommasani2021opportunities}, has become a fundamental component of contemporary machine learning and artificial intelligence.
At its core lies the self-attention mechanism~\cite{vaswani2017attention}, which enables the model to capture long-range, context-aware dependencies within the input data.
This mechanism distinguishes transformers from earlier neural network architectures such as feedforward \cite{krizhevsky2012imagenet} or recurrent neural networks \cite{hochreiter1997long}, where semantic interactions within the data are either absent or propagated sequentially through a number of hidden states.
While the capabilities of transformer models have experienced remarkable improvements in performance since their introduction,
apart from phenomenological explanations it remains mostly unclear how these large-scale models process data, what the internal transformation of the data leads to, and, more generally, why these architectures turn out to be highly capable for the wide ranges of tasks they are employed for. 

This paper is motivated by the general goal of developing a mathematical understanding of transformer architectures, and specifically the self-attention mechanism.
Toward that goal, and as in recent theoretical works on the mathematical analysis of the inference dynamics of encoder-only transformers~\cite{sander2022sinkformers,geshkovski2023mathematical,bruno2024emergence,castin2025unified,burger2025analysis,bruno2025multiscale,alcalde2026quantifying}, 
we consider an idealized version of a deep encoder-only transformer
and view the evolution of a sequence of input tokens (prompt)
as a continuous-time process described by a system of coupled ODEs of the form
\begin{equation}
    \label{eq:transformer}
    \Dot{x}_k(t) = \P^{\perp}_{x_k(t)} \left( \frac{1}{Z_k(t)} \sum_{j=1}^n h\left(\beta \langle Qx_k(t), K x_j(t) \rangle \right) V x_j(t) \right), \qquad k=1, \dots, n.
\end{equation}
Here, $x_k(t) \in \S^{d-1}$ is interpreted as the output of the $t$-th self-attention layer for the $k$-th token representation, which we will thereafter also refer to as the $k$-th particle.
The matrices $Q, K\in \R^{d_\text{int}\times d}$, and $V \in \R^{d\times d}$ (the query, key, and value parameters of the transformer) are real-valued matrices assumed to have been previously determined at the moment of training.
$Z_k$ is a normalization factor
that in principle can be token dependent; different choices of the normalization factor lead to  different, although in many cases qualitatively similar, models, see, e.g., \cite{geshkovski2023mathematical,burger2025analysis}.
The operator $\P_x^{\perp}$ denotes the projection onto the tangent space $T_x\BBS^{d-1}$ of the sphere at a point $x\in\BBS^{d-1}$, i.e., $\P^{\perp}_x y := y - \langle x, y \rangle x$.
This projection map ensures that the dynamics~\eqref{eq:transformer} remains on the sphere and can be regarded as a model for layer normalization (in particular, RMSNorm), see, e.g., \cite[Section~1.2]{burger2025analysis}.
The function $h:\R \mapsto \R$ is a suitable attention kernel that may take a variety of forms. Most popularly, $h$ takes exponential~\cite{vaswani2017attention} and linear forms~\cite{katharopoulos2020transformers}, which lead (provided suitable normalization) to softmax and linear self-attention, respectively.
For simplicity,
we use the notation $A = Q^\top K$ in the sequel.

After formulating the transformer dynamics \eqref{eq:transformer}, a central question is to characterize its long-time behavior. More specifically, we ask the following question:
\begin{center}
\begin{minipage}{0.95\textwidth}
\centering
\textit{For which choices of the weight matrices $A$ and $V$ does the associated dynamics converge to consensus and stabilize, and under which choices can other long-time behaviors emerge?}
\end{minipage}
\end{center}
By answering this question, researchers hope to clarify the structure and qualitative properties of the token representations produced after repeated application of self-attention layers.

One class of settings in which this question is relatively well-understood is when \eqref{eq:transformer} admits a Wasserstein gradient flow structure on the space of probability measures.
In particular, \cite{geshkovski2023mathematical,burger2025analysis} study the case $h(y) = \exp(y)$ and shows that, if $A^{\top} = A$ and $V = cA$ with some constant $c \in \R$, then \eqref{eq:transformer} can be written as a Wasserstein gradient flow for a suitable energy functional.
A further analysis of the corresponding energy landscape shows that, under additional assumptions on $A$ and $V$, for instance $A = V = I_d$, the dynamics exhibits clustering/synchronization behavior, in the sense that all token representations converge in the long-time limit to a single point. 

In practical transformer models, however, the matrices $A$ and $V$ do not necessarily satisfy the structural assumptions required for the Wasserstein gradient flow formulation.
This naturally leads to the following question:
\begin{center}
\begin{minipage}{0.95\textwidth}
\centering
\textit{What is the long-time behavior of the transformer dynamics \eqref{eq:transformer} under more general choices of the weight matrices $A$ and $V$?}
\end{minipage}
\end{center}

In the present work, we use a different analytical framework, distinct from the Wasserstein gradient flow approach, that enables us to study the transformer dynamics \eqref{eq:transformer} for general matrices $A$ and $V$, albeit with a different choice of attention kernel $h$. 
In particular, we focus on the \textit{linear} self-attention model (LSA), corresponding to the choice $h(y)=y$. 
This model has received considerable attention in recent years given its advantage on fast inference \cite{katharopoulos2020transformers,peng2021random,choromanski2021rethinking,schlag2021linear,shen2021efficient}.
It is also particularly well-suited for the analytical approach developed in this paper, for reasons that will become clear later. 
In this case, the model \eqref{eq:transformer} takes the form 
\begin{equation}\label{eq:LSA} \tag{LSA}
    \Dot{x}_k(t) = \P^{\perp}_{x_k(t)} \left( \frac{1}{n} \sum_{j=1}^n \beta \langle x_k(t), A x_j(t) \rangle V x_j(t) \right), \qquad k=1, \dots, n,
\end{equation}
where we have set the normalization factor $Z_k$ equal to $n$ for all tokens, following \cite[Equation~(USA)]{geshkovski2023mathematical}. 
Without loss of generality, we henceforth set $\beta = 1$, since this parameter can be absorbed into either $A$ or $V$.

As the (USA) model studied in \cite{geshkovski2023mathematical, burger2025analysis}, that is, model \eqref{eq:transformer} with $h(y) = \exp(y)$, the model \eqref{eq:LSA}  also admits a Wasserstein gradient flow formulation when $A^{\top} = A$ and $V = cA$ for some constant $c$. 
A rigorous justification is given in Lemma \ref{lem:model_gen_h_WGF}. However, our goal is to go beyond the restrictive structural assumptions required by the Wasserstein gradient flow framework.
The key observation is that, at least in the case of dimension $d=2$, the Fourier representation of \eqref{eq:LSA} takes the form of a \textit{pure} second-harmonic; see \eqref{eq:finite_sys} and \eqref{eq:vf_b}. 
This type of interaction has an intrinsic low-dimensional structure, which in turn allows us to apply dimensionality-reduction techniques developed in the study of the Kuramoto model \cite{skardal2011cluster,ott2008low}, a classical mathematical model for synchronization. By ``low-dimensional structure” we mean that both the full $n$-particle system and the mean-field PDE which will be introduced shortly are completely governed by a small number of effective ODEs.

To make this idea more precise, we represent each token $x_k$ as a complex number on the unit circle, written as $e^{i\theta_k}$. In these variables, the equations of motion \eqref{eq:LSA} become
\begin{equation} \label{eq:particle_interact_intro}
    \dot{\theta}_k(t) = b(\theta_k(t), \CR_2^n(t)), \qquad k=1, \dots, n,
\end{equation}
for a function $b$ that depends implicitly on the matrices $A$ and $V$; see Theorem \ref{thm:finite_sys}.
Importantly, the particles interact only through the \emph{second order parameter} $\CR_2^n(t)$ of the empirical distribution of tokens $f^n_t$,
\begin{equation*}
f^n_t = f^n(t, \dummy) := \frac{1}{n} \sum_{k=1}^n \delta_{\theta_k(t)}
\end{equation*}
and
\begin{equation}
    \CR_2^n(t)
    := \int_{0}^{2\pi} e^{i2\theta} f^n_t(d\theta)
    = \frac{1}{n} \sum_{j=1}^n e^{i2\theta_j(t)} \, .
\end{equation}
An analogous characterization continues to hold for the dynamics of a ``typical" particle in the mean-field limit, that is, as $n \rightarrow \infty$ and under the assumption that the initial tokens are drawn from a fixed distribution $f_0$. More precisely, one may consider the characteristic flow 
\begin{equation*}
    \dot{\thetaMF}(t) = b \left(\thetaMF(t), \CR_2(t) \right)
\end{equation*}
to describe the evolution of this typical particle, where $f := \Law(\thetaMF)$ solves the continuity equation
\begin{equation*}
    \partial_t f_t + \partial_{\theta} \left( b(\dummy, \CR_2(t))) f_t \right) = 0, \qquad \text{with } \CR_2 (t) = \int_0^{2\pi} e^{i2\theta} f_t( d\theta).
\end{equation*}

The above formulation shows that each token is influenced by the others only through the order parameter $\CR_2$, highlighting the central role of this quantity in the analysis of transformer dynamics. 
Moreover, $\CR_2$ quantifies the degree of synchronization of the system.
Indeed, the modulus $|\CR_2(f)|$ is close to $1$ if and only if the distribution $f$ is concentrated near two antipodal points; see Figure~\ref{fig:R2behavior} for an illustration.
This observation motives tracking the evolution of the order parameter $\CR_2$.

\begin{figure}
\centering
\begin{tcolorbox}[colframe=black, colback=white, arc=0mm, sharp corners, boxrule=0.5pt, left=1mm, right=1mm, top=1mm, bottom=1mm]
\begin{subfigure}[t]{.24\textwidth}
\includegraphics[trim={5.3cm 4cm 22cm 4cm},clip,width=\textwidth]{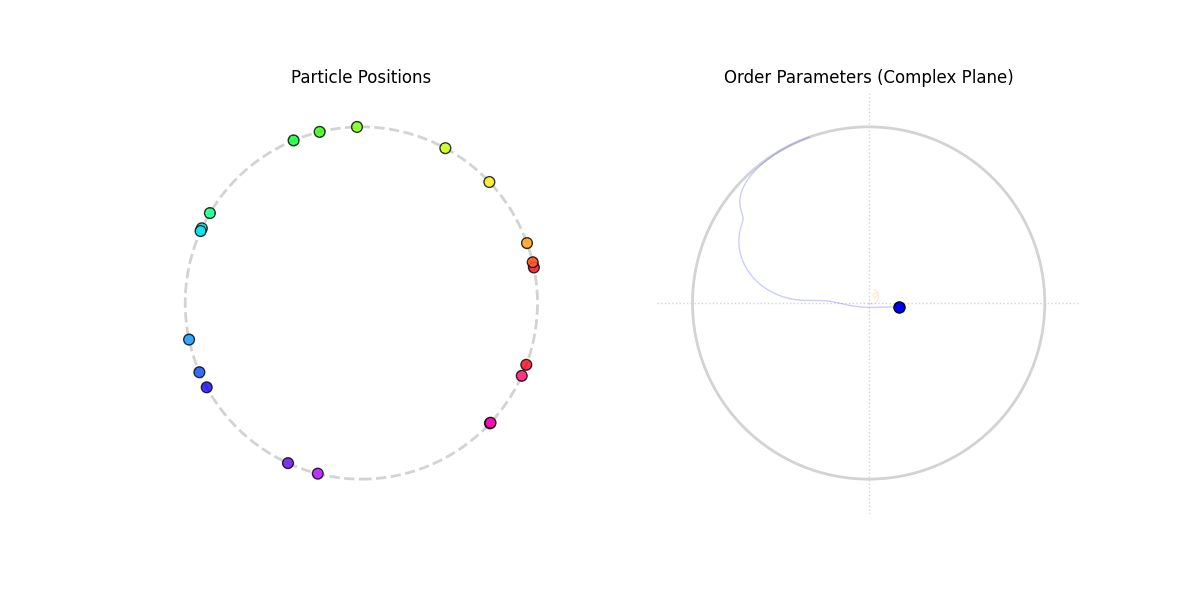}
\end{subfigure}%
\begin{subfigure}[t]{.24\textwidth}
\includegraphics[trim={5.3cm 4cm 22cm 4cm},clip,width=\textwidth]{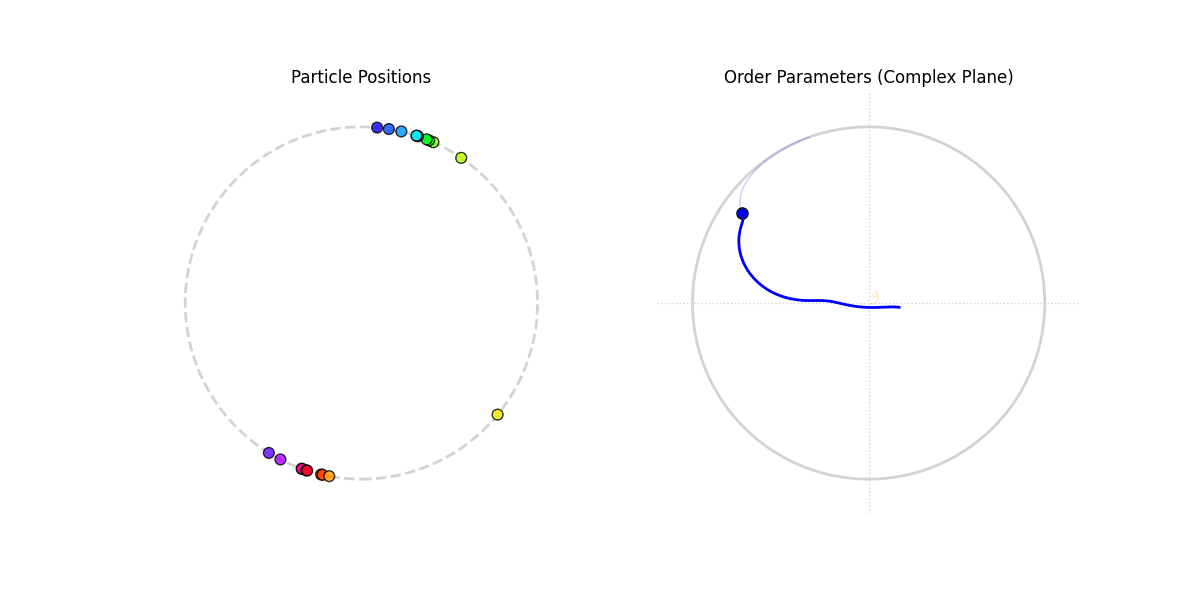}
\end{subfigure}%
\begin{subfigure}[t]{.24\textwidth}
\includegraphics[trim={5.3cm 4cm 22cm 4cm},clip,width=\textwidth]{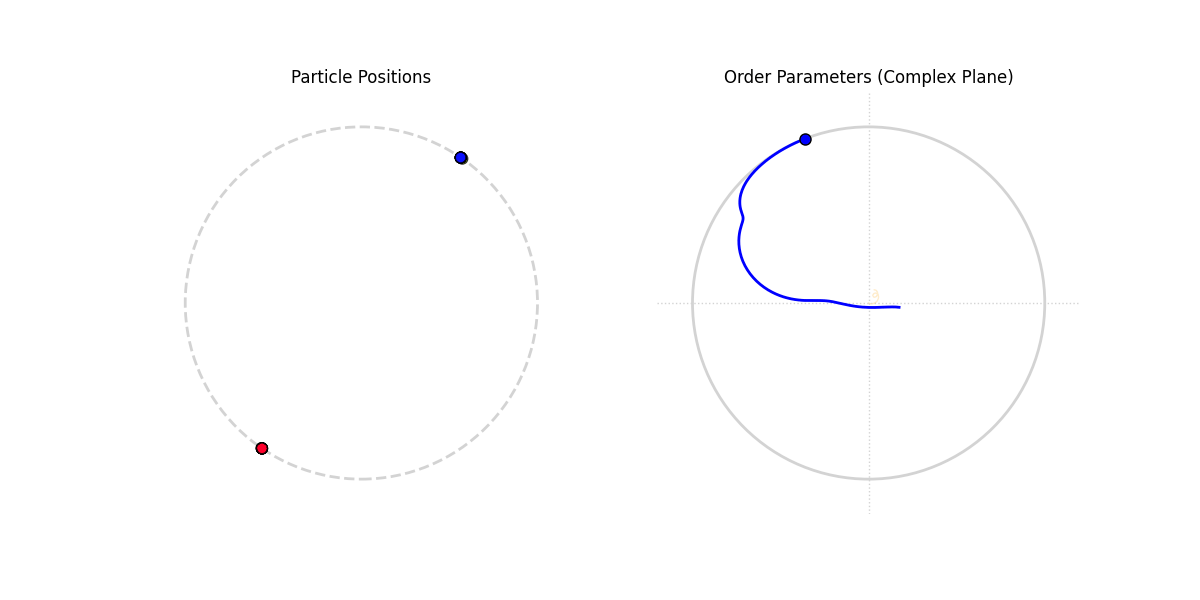}
\end{subfigure}%
{\unskip\ \vrule\ }
\begin{subfigure}{.24\textwidth}
\includegraphics[trim={22cm 4cm 5.3cm 4cm},clip,width=\textwidth]{figs/Numerics/converge2.png}
\end{subfigure}
\end{tcolorbox}
\vspace{.1pt}
\begin{tcolorbox}[colframe=black, colback=white, arc=0mm, sharp corners, boxrule=0.5pt, left=1mm, right=1mm, top=1mm, bottom=1mm]
\begin{subfigure}[t]{.24\textwidth}
\includegraphics[trim={5.3cm 4cm 22cm 4cm},clip,width=\textwidth]{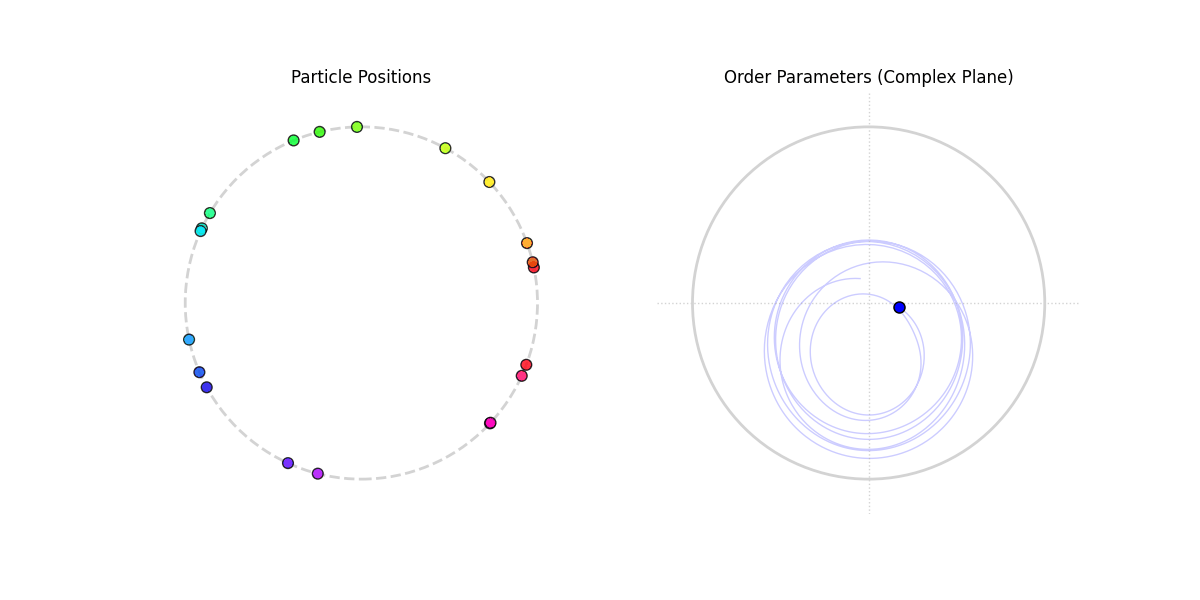}
\end{subfigure}%
\begin{subfigure}[t]{.24\textwidth}
\includegraphics[trim={5.3cm 4cm 22cm 4cm},clip,width=\textwidth]{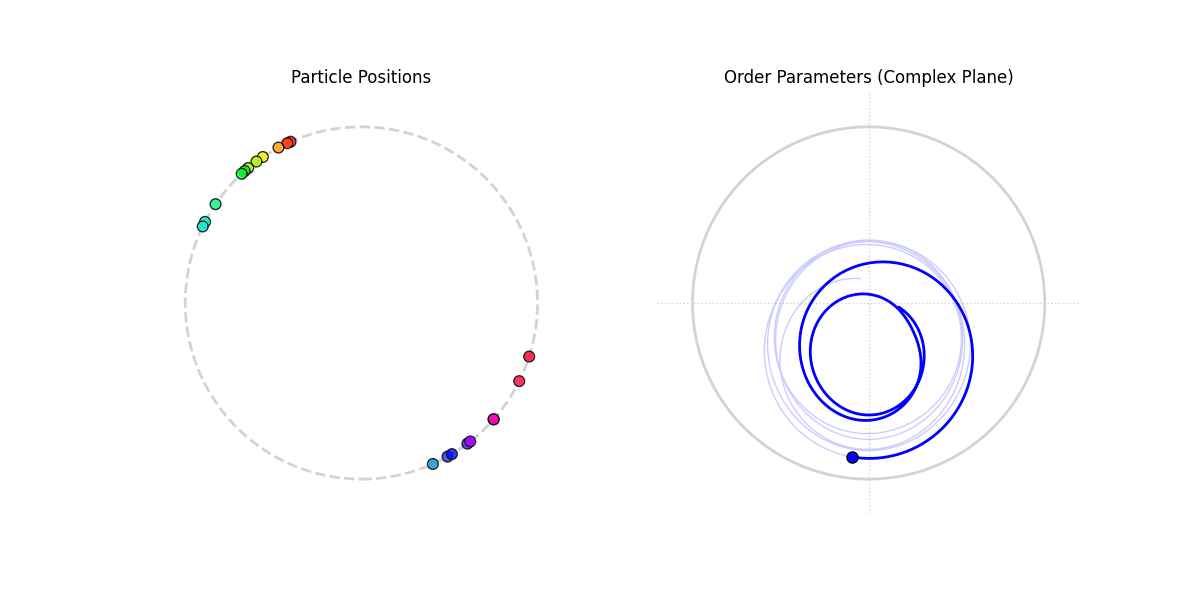}
\end{subfigure}%
\begin{subfigure}[t]{.24\textwidth}
\includegraphics[trim={5.3cm 4cm 22cm 4cm},clip,width=\textwidth]{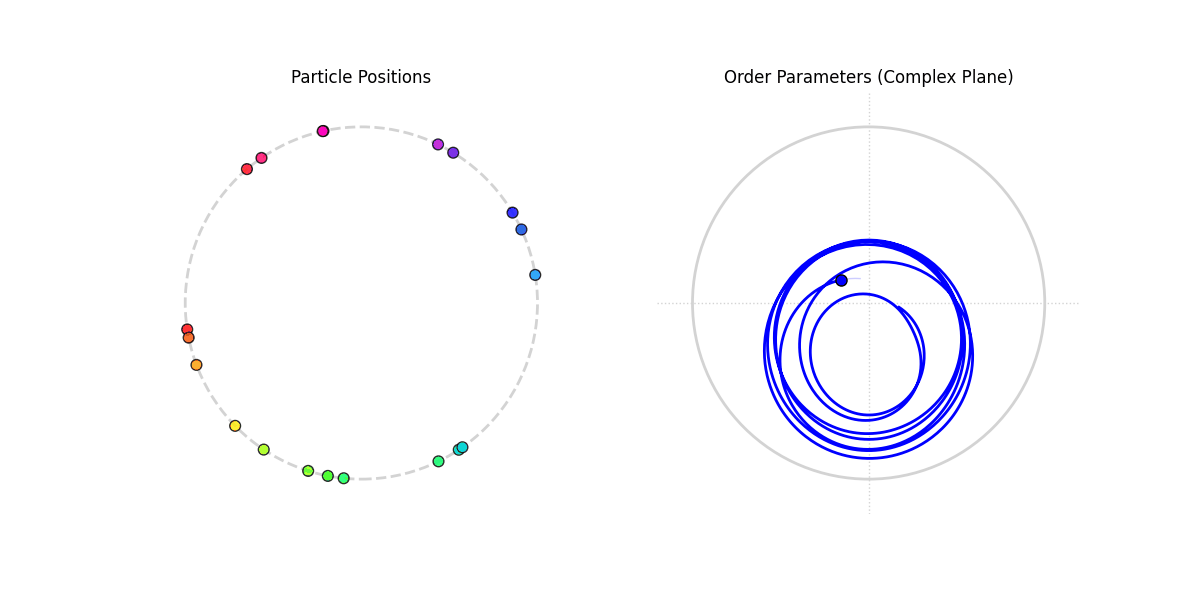}
\end{subfigure}%
{\unskip\ \vrule\ }
\begin{subfigure}{.24\textwidth}
\includegraphics[trim={22cm 4cm 5.3cm 4cm},clip,width=\textwidth]{figs/Numerics/noconverge2.png}
\end{subfigure}
\end{tcolorbox}

\caption{Snapshots of two systems with $20$ particles evolving according to \eqref{eq:LSA} under different choices of the matrices $A$ and $V$. 
The left three columns show the particle positions at three points in time, and the right-most column shows the corresponding time evolution of the order parameter $\CR_2^n(t)$. 
In the system at the top, the dynamics form two antipodal clusters, and the associated quantity $|\CR_2^n(t)|$ converges to $1$. 
In the system at the bottom, the particles neither cluster nor converge, and the associated quantity $|\CR_2^n(t)|$ does not tend to $1$.}
\label{fig:R2behavior}
\end{figure}

The behavior of $\CR_2$ becomes considerably more tractable for a special class of initialization of the angle $\Theta$.
More precisely, we prove that there exists a \textit{finite-dimensional} submanifold $\M_{\mathrm{OA}}$ of the space of probability measures over $\BBT^1 := \R/2\pi \mathbb{Z}$, referred to as the \emph{Ott--Antonsen (OA) manifold} \cite{ott2008low}, that is \emph{invariant} under the dynamics of the double-angle variable $\Xi := 2\Theta$, interpreted modulo $2\pi$.
Furthermore, within this submanifold, the dynamics of $\Xi$ are completely determined by a single ODE of the order parameter $\CR_2$ in the complex plane. This ODE takes the form 
\begin{equation}\label{eq:r2_dyn_in_intro}
\Dot{\CR}_2(t) = 2i \left( B(\CR_2(t)) (\CR_2(t))^2 + C(\CR_2(t)) \CR_2(t) + \overline{B}(\CR_2(t)) \right),
\end{equation}
where the functions $B$ and $C$ are linear in $\CR_2$ and its complex conjugate, with coefficients depending on the matrices $A$ and $V$; see \eqref{eq:coeff_B_C}. 

This reduction allows us to study the influence of the matrices $A$ and $V$ on the behavior of the mean-field PDE through the simpler ODE \eqref{eq:r2_dyn_in_intro} for the order parameter $\CR_2$, when the dynamics is initialized on the OA manifold.
On top of that, we further establish structural stability results showing that, for certain choices of $A$ and $V$, dynamics initialized near the OA manifold exhibit the same long-term behavior as those initialized exactly on it. 


\subsection{Contributions}
The main contributions of this paper are the following.

\begin{enumerate}[label=\textup{(\roman*)}]
\item \textbf{A pure second-harmonics Kuramoto formulation of linear self-attention.}
We show that, in dimension $d=2$, the linear self-attention model \eqref{eq:LSA} can be formulated as a generalized Kuramoto model with pure second harmonics; see Theorem \ref{thm:finite_sys}.
This formulation reveals an intrinsic low-dimensional structure generated by the Watanabe--Strogatz transformation.
Importantly, this property holds \emph{independently} of the matrices $A$ and $V$,
and forms the basis of an analytical framework that differs fundamentally from the Wasserstein gradient flow approach.

\item \textbf{Diverse dynamics on the OA manifold.} We prove that, for a class of initializations associated with the Ott--Antonsen (OA) manifold, the linear self-attention model \eqref{eq:LSA} exhibits clustering behavior for a broad class of matrices $A$ and $V$, going substantially beyond the assumptions considered in the existing literature.
We further show that, in other parameter regimes, the model can display a rich range of dynamical behaviors, including oscillations and bifurcations.
These oscillations and bifurcations are characterized by uncovering a hidden Hamiltonian structure in the dynamics.
The results are summarized in Table \ref{tab:case_study}. 
To the best of our knowledge, these are the first theoretical characterizations linking such diverse dynamical behaviors in transformer models to explicit parameter choices. 
    
\begin{table}[!htb] 
\centering
\begin{threeparttable}
\renewcommand{\arraystretch}{1.5}
\begin{tabular}{>{\raggedright\arraybackslash}p{0.09\textwidth}
                >{\centering\arraybackslash}p{0.24\textwidth}
                >{\centering\arraybackslash}p{0.12\textwidth}
                >{\centering\arraybackslash}p{0.14\textwidth}
                >{\centering\arraybackslash}p{0.27\textwidth}
                }
\toprule
\textbf{} & \textbf{Class of matrices} & \textbf{Clustering} & \textbf{Convergence} & \textbf{Assumptions} \\
\midrule

\textbf{Case 1} (Thm.\@~\ref{thm:case_1})
& $A$ arbitrary, $V = I$ 
& \cmark
& \cmark 
& $\tr{A} > 0$ or $\det(A) \leq 0$  \\

\textbf{Case 2} (Thm.\@~\ref{thm:case_2})
& $A = I$, $V$ symmetric 
& \cmark
& \cmark
& $\lambdaMax(V) \geq 0$ \\

\textbf{Case 3}  (Thm.\@~\ref{thm:case_3})
& $A$ diagonal, $V$ with two degrees of freedom 
& \cmark
& \xmark 
& $A$ and $V$ satisfy \eqref{eq:case3_cond} \\

\textbf{Case 4}  (Thm.\@~\ref{thm:case_4})
& $A = I$, $V = \left(\begin{smallmatrix} v_{11} & v_{12} \\ -v_{12} & -v_{11} \end{smallmatrix}\right)$ 
& \xmark
& \xmark
& $v_{11} < v_{12}$  \\



\bottomrule
\end{tabular}
\caption{Summary of clustering and convergence regimes across four case studies.}
\label{tab:case_study}
\end{threeparttable}
\end{table}

    \item \textbf{Stability beyond the OA manifold.} We establish a structural stability result showing that, for certain choices of $A$ and $V$, dynamics initialized near the OA manifold exhibit the same long-time behavior as those initialized exactly on the manifold.
    The corresponding results are presented in Theorems \ref{thm:case_1stab} and \ref{thm:case_2stab}.
    Our argument goes beyond a standard Grönwall estimate by exploiting the structural properties of the dynamics induced by the linear self-attention model, which allows us to obtain the stability over an infinite time horizon.
    The techniques developed here may also be of independent interest for the study of Kuramoto-type collective dynamics.

    \item \textbf{Numerical evidence.}
    We provide numerical experiments in higher dimensions $d$ showing that the
    qualitative behavior proved for linear self-attention persists beyond the
    $2$-dimensional setting. We also observe analogous behavior for the softmax self-attention;
    see Section~\ref{sec:experiments}.
\end{enumerate}

\subsection{Related Works}

\paragraph{Mathematical analysis of the transformer inference dynamics.}
One fundamental question in the mathematical study of transformer architectures is to characterize the behavior of token representations as they are propagated through the layers of the transformer model \cite{sander2022sinkformers,noci2022signal,geshkovski2023mathematical}. 
We refer to this evolution of token representations as the transformer inference dynamics.

The approach we adopt is to model the inference dynamics of encoder-only transformers with self-attention modules as interacting multi-particle systems, an idea pioneered in \cite{lu2019understanding,dutta2021redesigning}.
This viewpoint offers a continuous-time interpretation of the transformer architecture and, in the long-context regime \cite{peng2024yarn,yang2025qwen3,swan2025puvvada,qu2026tabiclv2},
naturally leads to mean-field perspectives, 
a strategy that we adopted also in this work.
Earlier studies in this direction have focused on self-attention-only transformers, due to the self-attention modules constituting the central novelty of the transformer architecture.
They have connected the self-attention dynamics to transport-type PDEs \cite{sander2022sinkformers}
and have
developed a mathematical framework for analyzing transformer architectures through the lens of Wasserstein gradient flows \cite{sander2022sinkformers,geshkovski2023mathematical,burger2025analysis}
and optimization perspectives \cite{alcalde2026quantifying}.

Building on this dynamical-system perspective, one of the main behaviors that has been characterized is \emph{token concentration} in the self-attention dynamics: after many self-attention layers, all token representations concentrate at a lower-dimensional region of the embedding space. 
In case of collapse to a single fixed point, this is  referred to as token uniformity, over-smoothing \cite{chen2022principle,guo2023contranorm,wu2023demystifying,chen2025residual}, or rank collapse \cite{dong2021attention,feng2022rank,cowsik2025geometric,giorlandino2026two}.
By analyzing the energy landscape of the Wasserstein gradient flow induced by the \emph{softmax} self-attention layers, earlier works \cite{geshkovski2023mathematical,criscitiello2024synchronization,polyanskiy2025synchronization} have shown that, under relatively restrictive assumptions on the matrices $A$ and $V$, such as $A = V = I$, the self-attention dynamics converge to a Dirac distribution.
Subsequent works have relaxed these assumptions, either by refining the analytical techniques within the Wasserstein gradient flow framework \cite{chen2025quantitative} or by adopting a different geometric approach from control theory \cite{abella2024asymptotic,abella2025consensus,altafini2026multistability} and optimization~\cite{alcalde2026quantifying},
leading to more general statements about concentration phenomena in mean-field transformers.
These types of cluster formation result have also been extended to hardmax-attention \cite{alcalde2025clustering,alcalde2025attention}, causal self-attention \cite{karagodin2024causal,abella2024asymptotic,abella2025consensus}, multi-head attention \cite{abella2024asymptotic,abella2025consensus,massucco2026multi}, transformers with fully-connected MLP layers \cite{alvarez2026perceptrons,koubbi2026homogenized,fedorov2026clustering,agazzi2026stochastic,alcalde2026reachability}, and noisy transformer dynamics \cite{balasubramanian2025structure,alvarez2026perceptrons,koubbi2026homogenized,fedorov2026clustering,agazzi2026stochastic,engel2026random,mun2026phaseA,mun2026phaseB,mun2026bphaseC}. 

Beside token representation collapse to a single point, several other theoretical properties of the transformer dynamics have also been established, including metastability \cite{geshkovski2024dynamic,bruno2024emergence,bruno2025multiscale}, connections to optimization algorithms \cite{alcalde2025attention,alcalde2026quantifying}, and scaling limits of hyperparameters \cite{chen2025critical,bruno2026scaling,giorlandino2026two}.
We refer the reader to \cite{rigollet2025mean} for a more systematic survey.

Our results in Theorems \ref{thm:case_1} and \ref{thm:case_2} also fall within the general direction of characterizing token clustering.
However, we diverge from existing lines of work by developing an \emph{entirely new analytical framework} by connecting the \emph{linear} self-attention dynamics to Kuramoto-type models with pure second-harmonic coupling; see the next paragraphs for a review.
This framework allows us to characterize cluster formation under more general assumptions on the parameter matrices $A$ and $V$.
More importantly, it enables us to identify parameter regimes in which the associated self-attention dynamics exhibit behaviors beyond cluster formation, including oscillation and bifurcation, as demonstrated by Theorems \ref{thm:case_3} and \ref{thm:case_4}.
This broader treatment comes at the cost of restricting our current analysis to the linear self-attention model in dimension $d=2$. In Section \ref{sec:Conclusions} we discuss some potential directions to extend our analysis to higher dimensional settings.  

\paragraph{Kuramoto model and transformer dynamics.}
The Kuramoto model~\cite{kuramoto1975,kuramoto1984chemical} is a classical mathematical model for studying synchronization in large populations of coupled oscillators, with broad applications in the natural sciences~\cite{emerging2002istvan,generative2010breakspear,acebron2005kuramoto}, engineering~\cite{wiesenfeld1998frequency,filatrella2008analysis,dorfler2014synchronization}, and more recently, machine learning~\cite{bohm2010clustering,nguyen2024coupled,miyato2025artificial,song2026kuramoto}. In its simplest form the model takes the form
\begin{equation}
    \label{eq:related_kuramoto}
\dot \theta_k = \omega_k + \frac{\kappa}{n} \sum_{j=1}^n \sin (\theta_j - \theta_k)
\end{equation}
for $n$ oscillators $\theta_k$ with natural frequencies $\omega_k$ and a global coupling constant $\kappa$.
One important generalization of the original model is the Kuramoto--Daido model~\cite{daido1992order,generic1994daido}, in which the sine coupling, corresponding to the first Fourier harmonic, is replaced by a general periodic coupling function that may contain higher-order or multiple Fourier modes.
A recent connection has been established between transformer dynamics and Kuramoto--Daido dynamics~\cite{geshkovski2023mathematical}, where in dimension $d=2$, the model \eqref{eq:transformer} with matrices $Q^{\top} K = A = V = I$ and a general attention function $h$ reduce to the Kuramoto--Daido model.
Subsequent works identified sufficient conditions on $h$ for synchronization~\cite{criscitiello2024synchronization,polyanskiy2025synchronization}, and analyzed bifurcation and phase-transition phenomena in noisy Kuramoto--Daido models through the associated McKean--Vlasov equations~\cite{balasubramanian2025structure,mun2026phaseA}.
However, in the context of transformer dynamics \eqref{eq:transformer}, these analyses are restricted to the special matrix choice $A = V = I$.

One important mathematical advance in the analysis of Kuramoto-type models comes from identifying low-dimensional structures in Kuramoto--Daido dynamics when the coupling function is a \emph{pure} Fourier harmonic.
This direction was pioneered by Watanabe and Strogatz~\cite{watanabe_integrability}, who developed a theory to explain phase coherence in coupled arrays of Josephson junctions.  
Their technique applies to the more general Kuramoto--Daido models, but it took some time for the underlying mechanism of the reduction to be discovered as a M\"obius group action~\cite{marvel2009identical}.
Independently, Ott and Antonsen~\cite{ott2008low,Ott_2009} showed that the mean-field Kuramoto dynamics further reduce to a single ODE for the order parameter when the oscillator positions are initially chosen from a certain family.
It turns out that these reductions are closely related: the Ott--Antonsen ansatz corresponds to a special initialization of oscillator positions within the Watanabe--Strogatz framework~\cite{marvel2009identical}.
These dimension-reduction techniques have also been extended to Kuramoto--Daido models whose coupling function are pure higher-order Fourier harmonics~\cite{skardal2011cluster,Gong_2019}.

In this paper, we make a new observation connecting the Kuramoto--Daido model~\cite{daido1992order} and the transformer dynamics \eqref{eq:transformer}.
Specifically, Theorem~\ref{thm:finite_sys} shows that the transformer dynamics \eqref{eq:transformer} with the attention function $h(y) = y$ can be reformulated as a Kuramoto--Daido model with \emph{pure second-harmonic coupling} for \emph{arbitrary} matrices $A$ and $V$.
This observation allows us to apply the Watanabe--Strogatz transformation and Ott--Antonsen ansatz to reduce the mean-field linear self-attention dynamics to a single complex-valued governing ODE. We then use this reduced dynamics to study how different classes of matrices $A$ and $V$ determine the long-time behavior of the linear self-attention dynamics, going substantially beyond the previously studied parameter regimes.  We also note that Ott and Antonsen demonstrated the OA manifold is globally attracting under mild assumptions on the oscillator frequencies~\cite{Ott_2009}.  Unfortunately, these assumptions do not apply to our model, hence we conduct a separate stability analysis in Section~\ref{sec:stability}.

\subsection{Outline} The rest of the paper is organized as follows. 
In Section~\ref{sec:2}, we revisit the interacting particle system induced by the linear self-attention model~\eqref{eq:LSA} in dimension $d=2$ and show that it can be reformulated as a Kuramoto-type model with pure second-order harmonics.
We further show that this system admits a low-dimensional structure through the Watanabe--Strogatz (WS) transformation and the Ott--Antonsen (OA) ansatz, leading to reduced and trackable dynamics.
In Section~\ref{sec:case_study}, we present case studies for different choices of the matrices $A$ and $V$, and analyze the long-time behavior of the reduced dynamics under the OA ansatz in each case.
In Section~\ref{sec:stability}, we establish structural stability results beyond the OA ansatz for some choices of $A$ and $V$, thereby describing the asymptotic behavior of the dynamics without requiring special initializations. 
In Section \ref{sec:experiments}, we present a variety of numerical experiments where we explore the validity of our theoretical findings from Section \ref{sec:case_study} for the $2$-dimensional linear self-attention model \eqref{eq:LSA} in higher dimensions. 
We observe that the behaviors identified in the case $d = 2$ also arise in higher dimensions. 
We conclude the paper in Section \ref{sec:Conclusions}.
The main notation used throughout the manuscript is summarized in Table \ref{tab:notation}.



\section{Linear Self-Attention Model on the Circle and its Low-Dimensional Structure}\label{sec:2}

In this section,
we show that the linear self-attention model \eqref{eq:LSA} admits an intrinsic low-dimensional structure
by rewriting it, in dimension $d = 2$, as a Kuramoto-type model with pure second-order harmonics.
In Section~\ref{subsec:LSA}, we establish this connection and discuss the evolution of the mean-field's second order parameter $\CR_2$. Thereafter, in Section~\ref{subsec:WS_OA},
we apply the Watanabe--Strogatz transformation~\cite{watanabe_integrability} to show that the original system of $n$ coupled equations is completely governed by \emph{two} coupled ODEs.
For a special class of initializations, corresponding to the Ott--Antonsen (OA) ansatz~\cite{ott2008low}, this WS-reduced system further simplifies, leading to a single closed complex-valued ODE capturing the behavior of the full linear self-attention dynamics~\eqref{eq:LSA}.


\subsection{Linear Self-Attention Model on the Circle}\label{subsec:LSA}
We consider the linear self-attention model \eqref{eq:LSA} in the $2$-dimensional setting $d=2$, i.e., on the unit circle $\BBS^{1}$.
In this case, each token $x_k(t) \in \BBS^1$ is completely determined by an angle $\theta_k(t) \in \BBT^1 = \R/2\pi \mathbb{Z}$ since $x_k(t) = \cos(\theta_k(t)) e_1 + \sin(\theta_k(t)) e_2$, where $e_1 = (1,0)^\top$ and $e_2 = (0,1)^\top$ denote the standard basis vectors in $\R^2$. In what follows, recall that we have set $\beta=1$ without loss of generality.

For each $k \in \{1,\dots,n\}$, the induced dynamics for the angle $\theta_k$ is given by
\begin{equation}
\label{eq:LSA:1d}
\begin{split}
    \Dtheta_k(t)
    &= -\frac{1}{n \sin (\theta_k(t))} \left( \sum_{j=1}^n  \left\langle x_k(t), A x_j(t) \right\rangle  \big\langle V x_k(t), e_1 - \left\langle x_k(t), e_1 \right\rangle x_k(t) \big\rangle \right).
\end{split}
\end{equation}
This expression follows by differentiating the identity $\cos(\theta_k(t)) = \langle x_k(t), e_1 \rangle$ w.r.t.\@ time $t$, which yields $\langle \Dot{x}_k(t), e_1 \rangle=\frac{d}{dt}\cos(\theta_k(t)) = -\sin(\theta_k(t))\Dtheta_k(t)$, and then substituting the evolution equation~\eqref{eq:LSA} together with the definition of the projection~$\P_{x_k(t)}^{\perp}$.

To expand~\eqref{eq:LSA:1d}, we write the matrices $A$ and $V$ as
\begin{equation}\label{eq:matrices_A_V_general}
A := \left(\begin{array}{ll}
    a_{11} & a_{12} \\
    a_{21} & a_{22} 
\end{array} \right)
    \qquad
    \text{and}
    \qquad
V := \left( \begin{array}{ll}
    v_{11} & v_{12}  \\
    v_{21} & v_{22} 
\end{array} \right).
\end{equation}
With this notation, we show that the dynamics governed by \eqref{eq:LSA:1d} can be written as a Kuramoto-type model with pure second-order harmonics, as stated in the following proposition.
The proof is deferred to Appendix~\ref{app:finite_sys}.

\begin{theorem}
    \label{thm:finite_sys}
    Let $\{\theta_k(t)\}_{k=1}^n \in \BBT^1$ denote the angular variables associated with the LSA dynamics \eqref{eq:LSA} on $\BBS^1$.
    Define the second order parameter~$\CR_2^n$ as
    \begin{equation}\label{eq:order_param_finite}
        \CR_2^n(t) := \frac{1}{n} \sum_{j=1}^n e^{i2\theta_j(t)} .
    \end{equation}
    Then, for each $k \in \{1,\dots,n\}$, the angle $\theta_k$ satisfies the closed ODE
    \begin{equation}\label{eq:finite_sys}
        \Dot{\theta}_k(t) =  b \left(\theta_k(t), \CR^{n}_2 (t) \right), 
    \end{equation}
    where the vector field $ b: \BBT^1 \times \BBC \mapsto \R$ is given by
    \begin{equation}
        \label{eq:vf_b}
        b(\theta, r)
        := 2 \re{B(r) e^{i2\theta}} + C(r) .
    \end{equation}
    Here $B: \BBC \mapsto \BBC$ and $C: \BBC \mapsto \R$ are the coefficient functions defined in \eqref{eq:coeff_B_C}.
\end{theorem}
The resulting system \eqref{eq:finite_sys}, with vector field $b$ of the form \eqref{eq:vf_b}, is a Kuramoto-type model with \textit{pure} second-order harmonics~\cite{skardal2011cluster} (see Remark \ref{remark:modelling} below for a more detailed comparison with the standard Kuramoto model). 
A key feature of this class of models is its intrinsic low-dimensional structure.
In particular, the $n$-particle dynamics \eqref{eq:finite_sys} can be completely described by just \emph{two} macroscopic variables satisfying a closed low-dimensional system of coupled ODEs.
This leads to a substantial reduction in complexity: instead of analyzing $n$ coupled ODEs, or a PDE in the mean-field limit, one can study a system of only two coupled ODEs.
We discuss this reduction in Section~\ref{subsec:WS_OA}.

Before doing so, let us formulate the mean-field limit of the particle dynamics \eqref{eq:finite_sys}, which facilitates the analysis conducted in the subsequent sections.
Specifically, by letting the number of particles $n \rightarrow \infty$, we formally consider a process $\Theta = (\Theta(t))_{t \geq 0}$ evolving according to
\begin{equation}\label{eq:mf_sys}
    \Dot{\thetaMF}(t)
    = b(\thetaMF(t), \CR_2(t)), \qquad \Theta(0) \sim f(0, \dummy),
\end{equation}
where the second mean-field order parameter~$\CR_2$ is defined, accordingly, as
\begin{equation}
    \label{eq:mf_order_param}
    \CR_2(t)
    := \int_0^{2\pi} e^{i2\theta} f(t, d\theta) .
\end{equation}
Here, $f$ denotes the law of $\thetaMF$, i.e., $f := \Law (\thetaMF)$, and satisfies the continuity equation
\begin{equation}\label{eq:cont_eq}
    \partial_t f + \partial_{\theta} \left( b(\dummy, \CR_2(t)) f \right) = 0
\end{equation}
with the vector field $b$ as defined in \eqref{eq:vf_b}. Throughout the paper, we use either $f_t$ or $f(t, \dummy)$ to denote the distribution of $\Theta(t)$.

\begin{remark}
The dependence of the functions $B$ and $C$ on the matrices $A$ and $V$ is in general very complicated (see~\eqref{eq:coeff_B_C} for a full definition).  
In our analysis we have used computer algebra software\footnote{The associated code can be found in~\url{https://github.com/tmaranzatto/attention_dynamics}.} to aid in writing the ODE systems explicitly. 
Even in the simplest case $A = V = I$, it is not obvious that the system reduces to the standard Kuramoto-type model with pure second harmonic coupling. We choose to view the complicated dependence on $A$ and $V$ as a positive---our case analysis demonstrates the richness of dynamics induced by the linear self-attention layers.
\end{remark}

To expose the low-dimensional structure of both the finite particle system \eqref{eq:finite_sys} and its mean-field counterpart \eqref{eq:mf_sys}, and to track the behavior of the order parameter $\CR_2$, it is convenient to work with the \emph{double angles} $\xi_k:=2\theta_k$.
Specifically, we introduce the change of variable $\xi_k = 2\theta_k \in \BBT^1$ and $\Xi := 2\Theta \in \BBT^1$, with both variables understood modulo $2\pi$. 
The dynamics of the transformed particle system $\{\xi_k\}_{k=1}^n$ and its mean-field limit can then be derived directly from the corresponding equations~\eqref{eq:finite_sys} and \eqref{eq:mf_sys} for $\{\theta_k\}_{k=1}^n$ and $\Theta$, respectively.
In particular, the variables $\{\xi_k\}_{k=1}^n$ satisfy
\begin{equation}\label{eq:xi_finite_sys}
\Dxi_k(t) = \tb \left(\xi_k(t), \CR_2^n(t) \right),
\end{equation}
where the vector field $\tb$ is induced by $b$ in \eqref{eq:vf_b} through
\begin{equation}
    \label{eq:vf_tb}
\tb (\xi, r) := 2 b (\xi/2, r).
\end{equation}
Moreover, the second order parameter $\CR_2^n$ can be expressed purely in terms of the double angles $\{\xi_k\}_{k=1}^n$ since
\begin{equation}\label{eq:order_param_finite_in_xi}
\CR_2^n(t) = \frac{1}{n} \sum_{j=1}^n e^{i2\theta_j(t)} = \frac{1}{n} \sum_{j=1}^n e^{i\xi_j(t)}.
\end{equation}
Analogously, at the mean-field we obtain a process $\Xi$ evolving according to
\begin{equation}\label{eq:xi_mf_sys}
\Dot{\Xi} (t) = \tb \left( \Xi (t), \CR_2(t) \right), \qquad \Xi(0) \sim g(0, \dummy).
\end{equation}
Here, $g$ denotes the law of $\Xi$, i.e., $g := \Law (\Xi)$, which satisfies the continuity equation
\begin{equation}
\partial_t g + \partial_{\xi}\, ( \tb(\dummy, \CR_2(t)) g ) = 0
\label{eqn:ContEqDoubled}
\end{equation}
with the vector field $\tb$ as defined in \eqref{eq:vf_tb}.
The second mean-field order parameter $\CR_2$ defined in \eqref{eq:mf_order_param} admits the equivalent representation
\begin{equation}\label{eq:order_param_mf_in_xi}
\CR_2(t) = \int_0^{2\pi} e^{i2\theta} f(t, d\theta) = \int_0^{2\pi} e^{i\xi} g (t, d\xi),
\end{equation}
where we recall that $f = \Law(\Theta)$ is the law of the original angle variable $\Theta$.
The relationship between the distributions $f$ and $g$ is given by
\begin{equation}\label{eq:relation_f_and_g}
g(t, \xi) = \frac{1}{2} \left( f(t, \xi/2) + f(t, \xi/2 + \pi) \right),
\end{equation}
which is a consequence of the change of variables formula on $[0, 2\pi)$ due to $\Xi = 2\Theta \in \BBT^1$.

In the next section, we apply the Watanabe--Strogatz transformation \cite{watanabe_integrability} and the Ott--Antonsen ansatz \cite{ott2008low} to the double-angle dynamics \eqref{eq:xi_finite_sys} (and to its mean-field limit \eqref{eq:xi_mf_sys}) to reveal its underlying low-dimensional structure. This will enable a more tractable mathematical analysis.


\begin{remark}\label{remark:modelling}
    The classical Kuramoto model~\eqref{eq:related_kuramoto} can be expressed as $\dot \theta_k(t) = \omega_k + \operatorname{Im}({H(t)e^{i\theta_k(t)}})$, where $H$ is typically a function of the first order parameter $\mathcal{R}_1^n(t) := \frac{1}{n}\sum_{j=1}^n e^{i\theta_j(t)}$.
    Our model differs in a few ways.
    As stated above, our angles are coupled via the second-order harmonics $e^{i2\theta}$.
    Furthermore, our field extracts the real part of the second harmonic instead of the imaginary part. 
    The term $C(\mathcal{R}_2^n)$ can be interpreted as a time-varying external forcing which acts on all particles and is controlled by the second order parameter $\mathcal{R}_2^n$. Lastly, our model assumes that all intrinsic frequencies are zero, i.e., $\omega_k = 0$ for all $k=1,\dots,n$. Despite these differences the techniques used to analyze the standard Kuramoto system still apply to our model.
\end{remark}

\subsection{Watanabe--Strogatz Transformation and Ott--Antonsen Ansatz}\label{subsec:WS_OA}
In this section, we investigate the low-dimensional structure of the finite-particle double-angle dynamics for $\{\xi_k\}_{k=1}^n$ in \eqref{eq:xi_finite_sys} and its mean-field counterpart $\Xi$ in $\eqref{eq:xi_mf_sys}$ using the Watanabe--Strogatz (WS) transformation \cite{watanabe_integrability,marvel2009identical} and the Ott--Antonsen (OA) ansatz \cite{ott2008low}.
In particular, the WS transformation shows that the $n$-particle system \eqref{eq:xi_finite_sys} can be fully described by \emph{two} macroscopic variables, denoted by $\alpha$ and $\eta$, which satisfy a coupled system of ODEs.
In the mean-field regime, under a special choice of initialization for the mean-field variable $\Xi$, the dynamics of \eqref{eq:xi_mf_sys} reduces further to a \emph{single} complex variable $\alpha$, which, in this case, coincides with the order parameter $\CR_2$, that is, $\alpha = \CR_2$.
This is the Ott--Antonsen ansatz.

\subsubsection{Watanabe--Strogatz Transformation}\label{subsec:WS}
We begin by introducing the Watanabe--Strogatz (WS) transformation, which reveals the intrinsic low-dimensional structure of the systems \eqref{eq:xi_finite_sys} and \eqref{eq:xi_mf_sys}.
The corresponding result is stated in the following proposition, whose proof is deferred to Appendix~\ref{app:WS_OA}.

For fixed $\alpha\in\BBC$ and $\eta \in \R$, we define the Möbius transformation~$M_{\alpha, \eta}: \BBC \mapsto \BBC$ by
\begin{equation}\label{eq:Mobius_tf}
M_{\alpha, \eta} (u) := \frac{\alpha + e^{i\eta} u}{1 + \alphaBar e^{i\eta} u}.
\end{equation}

\begin{proposition}[Watanabe--Strogatz Transformation~\cite{marvel2009identical}]\label{prop:WS_transform}
Let $\{\xi_k\}_{k=1}^n$ be the solution to \eqref{eq:xi_finite_sys}.
Then there exist time-independent angles $\{\varphi_k\}_{k=1}^n$ such that, for each $k \in \{1,\dots,n\}$, it holds
\begin{equation}\label{eq:action_of_M_finite_sys}
e^{i \xi_k(t)} = M_{\alpha^n(t), \eta^n(t)} \left( e^{i \varphi_k} \right), \quad t \geq 0. 
\end{equation} 
Here, $\alpha^n: [0, \infty) \mapsto \BBC$ and $\eta^n: [0, \infty) \mapsto \R$ satisfy the system of ODEs
\begin{subequations}\label{eq:WS_variable_finite_sys}
\begin{align}
\dot{\alpha}^n(t) &= F(\alpha^n(t), \CR_2^n(t)), \label{eq:alpha_finite_sys}\\
\dot{\eta}^n(t) &= G(\alpha^n(t), \CR_2^n(t)),
\end{align}
\end{subequations}
where the functions $F$ and $G$ are defined for $\alpha,r\in\BBC$ and $\eta\in\R$ by
\begin{equation}\label{eq:F_and_G}
F(\alpha, r) := 2i \left( B(r) \alpha^2 + C (r) \alpha + \overline{B} (r)\right), \qquad G(\alpha, r) := 2\left(2 \re{B(r) \alpha} + C(r) \right),
\end{equation}
with $B$ and $C$ as in \eqref{eq:coeff_B_C}, and $\CR_2^n$ is the second order parameter defined in \eqref{eq:order_param_finite_in_xi}.
\end{proposition}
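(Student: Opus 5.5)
We will prove the statement by the standard Marvel--Mirollo--Strogatz argument: treat \eqref{eq:xi_finite_sys} as a non-autonomous Riccati flow on the circle driven by the given scalar signal $\CR_2^n(t)$, and show that this flow is realized by a one-parameter family of Möbius maps.

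\textbf{Step 1: Riccati form of the single-particle flow.} Set $z_k := e^{i\xi_k}$. By \eqref{eq:vf_b} and \eqref{eq:vf_tb},
\[
\tb(\xi,r) = 2\bigl(B(r)e^{i\xi} + \overline{B}(r)e^{-i\xi} + C(r)\bigr).
\]
Hence $\dot z_k = i\dot\xi_k z_k$ takes the form
\[
\dot z_k = 2i\bigl(B(r) z_k^2 + C(r) z_k + \overline{B}(r)\bigr), \qquad r = \CR_2^n(t).
\]
Thus every particle obeys the same Riccati equation $\dot z = F(z, r(t))$, with $F$ as in \eqref{eq:F_and_G}. We regard $r(t) := \CR_2^n(t)$ as a fixed continuous function of time along the given solution; it is smooth because the solution is.

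\textbf{Step 2: Möbius ansatz and matching coefficients.} Define $(\alpha^n,\eta^n)$ as the solution of \eqref{eq:WS_variable_finite_sys} with initial data $\alpha^n(0)=0$ and $\eta^n(0)=0$, so that $M_{0,0}=\mathrm{id}$. Put $\varphi_k := \xi_k(0)$, and let $w_k(t) := M_{\alpha^n(t),\eta^n(t)}(e^{i\varphi_k})$.

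For fixed $u$ with $|u|=1$, we differentiate $w = (\alpha + e^{i\eta}u)/(1+\overline{\alpha}e^{i\eta}u)$ in time and aim to show $\dot w = F(w,r)$ whenever $\dot\alpha = F(\alpha,r)$ and $\dot\eta = G(\alpha,r)$. The cleanest route is to invert: write $e^{i\eta}u = (w-\alpha)/(1-\overline{\alpha}w)$. Then $\dot w$ is a polynomial of degree at most two in $w$ after clearing denominators. Its coefficients can be compared with those of $2i(Bw^2+Cw+\overline{B})$, using $\dot{\overline{\alpha}} = \overline{F(\alpha,r)}$ and the fact that $B$ enters $F$ together with $\overline{B}$ in the conjugate-symmetric way that preserves the unit circle.

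Matching the $w^0$ coefficient should reproduce $\dot\alpha = F(\alpha,r)$. The $w^1$ coefficient, together with $|\alpha|$ terms, then forces $\dot\eta = 2(2\re{B\alpha}+C) = G$. The $w^2$ coefficient is consistent automatically by the conjugate symmetry. This coefficient comparison is the computational heart of the proof and the step I expect to be the main obstacle: it is an algebraic identity, but keeping track of the $|\alpha|^2$ factors requires care. If needed, I would instead use the Lie-algebra viewpoint. The vector fields $\partial_z$, $z\partial_z$, and $z^2\partial_z$ generate the Möbius group, and on the unit circle the combination $i(Bz^2+Cz+\overline{B})\partial_z$ lies in the $\mathfrak{su}(1,1)$ subalgebra. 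So $t\mapsto M_{\alpha,\eta}$ is the flow in the group, and the parametrization \eqref{eq:Mobius_tf} yields the stated $F$ and $G$ by evaluating the group ODE at $u=0$, which gives $\alpha$, and comparing derivatives at $u=0$, which gives $\eta$.

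\textbf{Step 3: Conclusion by uniqueness.} Now $z_k$ and $w_k$ both solve $\dot z = F(z,r(t))$ with the same initial value $e^{i\varphi_k}$. The right-hand side is locally Lipschitz in $z$, with $r(t)$ fixed. So $z_k \equiv w_k$ on the common existence interval, which is \eqref{eq:action_of_M_finite_sys}.

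Two points remain.

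\emph{Global existence.} We need $|\alpha^n(t)|<1$ for all $t$. The disk is invariant because $F(\alpha,r)$ is tangent to the unit circle when $|\alpha|=1$: indeed $\re{\overline{\alpha}F}=0$ there. Invariance of the disk and of the circle also keeps $M_{\alpha,\eta}$ well defined.

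\emph{Self-consistency.} The system \eqref{eq:WS_variable_finite_sys} depends on $\CR_2^n$, and by \eqref{eq:order_param_finite_in_xi}
\[
\CR_2^n = \frac1n\sum_{k=1}^n M_{\alpha^n,\eta^n}\bigl(e^{i\varphi_k}\bigr).
\]
Hence the pair $(\alpha^n,\eta^n)$ forms a closed ODE system, and the statement holds as formulated.
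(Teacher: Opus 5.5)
Your proposal is correct and follows essentially the paper's route (Riccati form for $z_k=e^{i\xi_k}$, Möbius ansatz, coefficient matching yielding $\dot\alpha=F$ and $\dot\eta=G$), and your construction of $(\alpha^n,\eta^n)$ from initial data followed by ODE uniqueness spells out the logic that the paper's ``set the coefficients to zero'' step leaves implicit. One bookkeeping caveat: in your inverted $w$-expansion the constant coefficient reads $\dot\alpha-i\alpha\dot\eta=2i\overline{B}(1-|\alpha|^2)$, so $\dot\alpha=F$ and $\dot\eta=G$ follow only together with the $w^1$ equation, whereas expanding in $u=e^{i\vartheta_k}$ as the paper does separates them directly.
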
 

The previous proposition says that the evolution of the $n$-particles system \eqref{eq:xi_finite_sys} is completely characterized by the two variables $\alpha$ and $\eta$, together with $n$ time-independent angles $\{\varphi_k\}_{k=1}^n$.
This reduces the description of the dynamics from $n$ coupled differential equations in \eqref{eq:xi_finite_sys} to the two equations in \eqref{eq:WS_variable_finite_sys}.
For later reference, we refer to $\alpha^n$ and $\eta^n$, as well as $\alpha$ and $\eta$ introduced shortly, as the \emph{WS variables}; to $\{\varphi_k\}_{k=1}^n$ as the \emph{constants of motion}; and to system \eqref{eq:WS_variable_finite_sys} (thus also \eqref{eq:mf_WS_param}) as the \emph{WS system}. 

In the mean-field limit, an analogous representation holds.
The double angle $\Xi$, which evolves according to the mean-field dynamics \eqref{eq:xi_mf_sys}, can be written in terms of a time-varying Möbius transformation on the complex unit circle, i.e.,
\begin{equation}\label{eq:action_of_M_mf_sys}
    e^{i \,\Xi (t)} = M_{\alpha(t), \eta(t)} \left( e^{i\varphi} \right),
\end{equation}
where $\varphi$ is a time-independent random variable (in analogy with the finite particle system, we will continue to refer to it as a constant of motion) and where the Möbius transformation $M_{\alpha, \eta}$ is as in \eqref{eq:Mobius_tf}.
The WS variables $\alpha$ and $\eta$ satisfy the coupled ODE system
\begin{subequations}\label{eq:mf_WS_param}
\begin{equation}\label{eq:alpha_dyn}
\dot{\alpha}(t) = F(\alpha(t), \CR_2(t)),
\end{equation}
\begin{equation}\label{eq:eta_dyn}
\dot{\eta}(t) = G(\alpha(t), \CR_2(t)) ,
\end{equation}
\end{subequations}
where $F$ and $G$ are as in \eqref{eq:F_and_G}, and where $\CR_2$ is the mean-field's second order parameter defined in \eqref{eq:order_param_mf_in_xi}.

\paragraph{Möbius transformation.}
Let us now discuss the Möbius transformation \eqref{eq:Mobius_tf}.  Below we reuse the same variables as in Section~\ref{subsec:WS}, however for the current discussion these should be seen as arbitrary constants. It is straightforward to verify that, when $|\alpha| \not =1$, $M_{\alpha, \eta}$ defines a bijection (and a diffeomorphism) from the unit circle $\BBS^1$ in the complex plane~$\BBC$ onto itself.
Its inverse is given by
\begin{equation}\label{eq:inverse_M}
M_{\alpha, \eta}^{-1} (v) := e^{-i\eta} \frac{v - \alpha}{1 - \alphaBar v},
\end{equation}
which can also be written as $M_{\tilde \alpha, \tilde \eta}$ for $\tilde \eta := -\eta$ and $\tilde{\alpha} := - e^{-i\eta} \alpha$. If, on the other hand, $|\alpha|=1$, the transformation $M_{\alpha, \eta}$ is constant and equal to $\alpha$. This can be seen from the fact that, when $|\alpha| = 1$, one can write $\alpha = e^{i\Phi}$ (for $\Phi \in \R$) and the Möbius map takes the form
\begin{equation}
    M_{\alpha, \eta} (u) = \frac{\alpha + u e^{i\eta}}{1 + \alphaBar u e^{i\eta}} = e^{i\Phi} \frac{1 + e^{-i\Phi} u e^{i\eta}}{1 + e^{-i\Phi} u e^{i\eta}} = \alpha,
    \label{eqn:MobiusUnitNormAlpha}
\end{equation}
provided $ue^{i\eta} \neq -\alpha$.

From the above discussion we conclude that, regardless of the values of $\alpha$ and $\eta$, each Möbius transformation $M_{\alpha, \eta}$ induces a map 
\begin{equation}\label{eq:map_T}
T_{\alpha, \eta} : [0, 2\pi) \mapsto [0, 2\pi),\qquad T_{\alpha,\eta}(\varphi) = \Arg\!\left(
\frac{\alpha+e^{i(\eta+\varphi)}}{1+\bar\alpha e^{i(\eta+\varphi)}}
\right)
\end{equation}
as illustrated in the diagram in Figure~\ref{fig:Mobius_tf}.
Therefore, any probability distribution $\nu$ on the constant-of-motion variable $\varphi$ induces a corresponding family of distributions $g$ on the physical variable $\Xi$ via pushforwards by the family of maps $\{ T_{\alpha, \eta}\}_{\alpha \in \C \setminus \S^1 , \eta \in \R}$. More precisely, each distribution in this family takes the form
\begin{equation}\label{eq:g_and_nu}
    g = (T_{\alpha, \eta})_{\sharp} \nu,
\end{equation} 
where $(T_{\alpha, \eta})_{\sharp} \nu$ denotes the pushforward of the measure $\nu$ under the map $T_{\alpha, \eta}$. 

\begin{figure}[!htb]
    \centering
    \begin{tikzcd}[column sep=5.0em, row sep=4.5em]
    \varphi \in [0,2\pi) \arrow[r, "p"] \arrow[d, dashed, "{T_{\alpha,\eta}}"'] 
    & u = e^{i\varphi} \in \mathbb{S}^1 \arrow[d, "M_{\alpha,\eta}"] \\
    \Xi \in [0,2\pi) 
    & e^{i \Xi} = M_{\alpha,\eta}(u) \in \mathbb{S}^1 \arrow[l, "p^{-1}"'] .
    \end{tikzcd}
    \caption{Diagram illustrating the Möbius transformation.}
    \label{fig:Mobius_tf}
\end{figure}
In light of \eqref{eq:action_of_M_mf_sys}, \eqref{eq:order_param_mf_in_xi} and \eqref{eq:Mobius_tf}, we can thus express the second order parameter $\CR_2(t)$ as 
\begin{equation}\label{eq:R2_push_foward}
\CR_2(t) = \int_{0}^{2\pi} e^{i \xi } g(t, d\xi) = \int_{0}^{2\pi} e^{i \xi} \left(T_{\alpha(t), \eta(t)}\right)_{\sharp} \nu  (d\xi) = \int_0^{2\pi} M_{\alpha(t), \eta(t)} (e^{i\varphi}) \nu (d\varphi) .
\end{equation}

We now discuss the interpretation of the WS variables $\alpha(t)$ and $\eta(t)$.
\begin{itemize}
    \item \textbf{Variable $\alpha(t)$.}
   The variable
    $\alpha(t)$ characterizes the level of synchronization of the particles.
    In particular, $|\alpha(t)| = 1$ implies $|\CR_2(t)| = 1$, which corresponds to complete synchronization (as discussed in previous sections). Indeed, when $|\alpha(t)| = 1$,  \eqref{eq:R2_push_foward} and \eqref{eqn:MobiusUnitNormAlpha} imply that $\CR_2(t) = \alpha(t)$ and in particular $|\CR_2(t)| = |\alpha(t)| = 1$. Conversely, it is straightforward to see that if the measure $\nu$ is assumed to have full support, then $|\CR_2(t)|=1$ only if $|\alpha(t)|=1$. 
    
    As discussed in Figure~\ref{fig:R2behavior}, we know that  $|\CR_2(t)|=1$ if and only if the corresponding particle distribution $f$ is supported on a single point or two antipodal points. For this reason, we will be interested in tracking the long-term behavior of the variable $|\alpha|$ and determining whether $|\alpha(t)| \rightarrow 1$ as $t \rightarrow \infty$.

    \item \textbf{Variable $\eta(t)$.}
    The variable $\eta(t)$ can be interpreted as a rotating frame acting on all particles. 
    Specifically, if we set $\alpha(t) \equiv 0$ for simplicity, then the Möbius transformation \eqref{eq:Mobius_tf} reduces to
    \begin{equation}
    e^{i\xi_k(t)} = M_{\alpha(t), \eta(t)} (e^{i \varphi_k}) = \frac{\alpha(t) + e^{i\eta(t)} e^{i \varphi_k}}{1 + \alphaBar(t) e^{i\eta(t)} e^{i \varphi_k}} = e^{i (\varphi_k + \eta(t))}
    \end{equation}
    Hence, in this case, $\eta(t)$ simply shifts all phases by the same amount, corresponding to a global rotation. In general, one can view $M_{\alpha(t), \eta(t)}$ as the composition of a rotation (counterclockwise by $\eta(t)$) and the transformation $M_{\alpha(t), 0}$.
\end{itemize}

Despite the simpler form of the WS system  \eqref{eq:mf_WS_param}, it remains difficult to analyze it in general since the variables $\alpha(t)$ and $\eta(t)$ are implicitly coupled through the second order parameter $\CR_2(t)$, whose dependence on $\alpha(t)$ and $\eta(t)$ can be quite intricate, as can be appreciated from equation \eqref{eq:R2_push_foward}.
However, as we will show in the next section, for a special choice of the distribution $\nu$, or, equivalently, for a corresponding special initialization of the double angle $\Xi(0) = 2\Theta(0)$, the WS system \eqref{eq:mf_WS_param} further decouples.
In this case, $\CR_2(t)$ coincides with the variable $\alpha(t)$ and becomes independent of the rotation variable $\eta(t)$.
This is precisely the setting of the Ott--Antonsen ansatz~\cite{ott2008low}.

\begin{remark}
    The intrinsic coupling to $\CR_2^n$ in the WS system~\eqref{eq:WS_variable_finite_sys} means there is no computational reduction compared to numerically evaluating the untransformed system.  In the simplest setting, using Euler's method to compute the next state requires $\mathcal{O}(n)$ operations for both the untransformed and WS systems.  This is straightforward to see for the original system; as for the WS system note $\CR_2$ is expressed in terms of the $n$ constants of motion $\varphi_k$ as seen in \eqref{eq:R2_push_foward}.  Savings may be found in the mean-field limit, where the computation of $\CR_2$ in the WS system involves integrating over the time-invariant measure  $\nu$, as opposed to the time-varying measure $f$.  If the integral \eqref{eq:R2_push_foward} can be efficiently approximated (or indeed analytically solved as in the next section), then there are potentially huge computational savings --- the WS transform reduces an infinite dimensional PDE into a finite dimensional ODE.  
    
\end{remark}

\subsubsection{Ott--Antonsen Ansatz}\label{subsec:OA}


Recall from the diagram in Figure~\ref{fig:Mobius_tf} that any probability distribution $\nu$ on the constant of motion $\varphi$ induces a corresponding family of distributions $g$ on the physical variable $\Xi$ via pushforwards by the maps $T_{\alpha, \eta}$.
In general, the resulting distributions $g=T_{\alpha, \eta\sharp} \nu$ may be quite complicated.
However, if $\nu$ is the uniform distribution on $[0, 2\pi)$, and $|\alpha|<1$, then the distribution of $\Xi$ is a \emph{wrapped Cauchy distribution} \cite{mardia2009directional}, associated with densities in the form of the Poisson kernel
\begin{equation}\label{eq:WC_dis}
    \WC (\xi; \alpha)
    := \frac{1}{2\pi} \frac{1 - |\alpha|^2}{|e^{i\xi} - \alpha|^2},
\end{equation}
parameterized by the single complex number $\alpha \in \BBC$ with $|\alpha|<1$; see Figure~\ref{fig:WC_dis} for an illustration. 

\begin{figure}[!htb]
    \centering
    \includegraphics[width=0.65\linewidth]{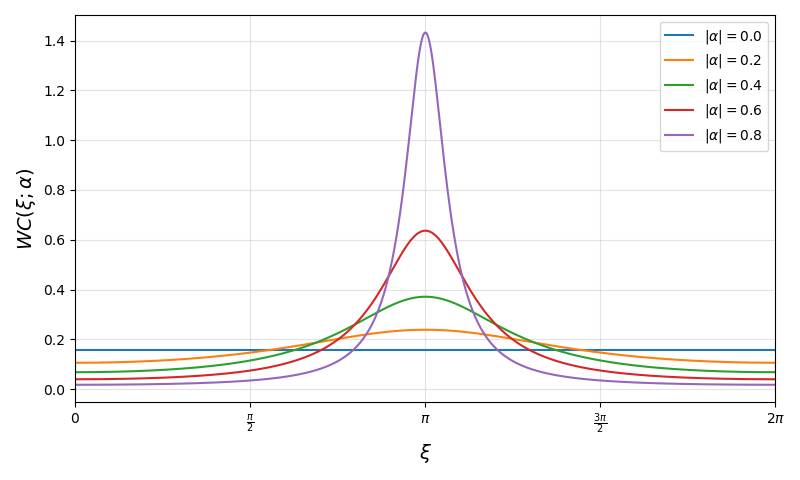}
    \caption{The family of wrapped Cauchy distributions generated by fixing the argument $\Arg(\alpha) = \pi$ and varying the magnitude $|\alpha|$.}
    \label{fig:WC_dis}
\end{figure}
In particular, $g = (T_{\alpha, \eta})_{\sharp}\mathrm{Unif}([0, 2\pi)) = \WC(\dummy; \alpha)$ when $|\alpha|<1$ (see Lemma \ref{lem:WS_to_OA} below). Note that the dependence on $\eta$ of the induced measures disappears because the uniform distribution is invariant under rotations. Also, note that $ \WC(\dummy; \alpha)$ for $\alpha=0$ is equal to $\mathrm{Unif}([0, 2\pi))$ and that $\WC(\dummy; \alpha)$ for $|\alpha|=1$ can be identified with a Dirac delta at $\Arg(\alpha)$. We refer to the induced family of distributions parameterized by $\alpha \in \C$ with $|\alpha|\leq1$ as the \emph{Ott--Antonsen (OA) manifold} \cite{Ott_2009}.

From formula \eqref{eq:R2_push_foward} it follows that the second order parameter $\CR_2$ depends only on $\alpha$ and is thus independent of the rotating-frame parameter $\eta$. As a consequence, the dynamics of $\alpha$ (recall \eqref{eq:alpha_dyn}) are independent of $\eta$.
This special case is exactly the setting of the Ott--Antonsen ansatz \cite{ott2008low, Ott_2009}.
Based on \cite{marvel2009identical}, we summarize the above discussion in the following lemma.
A detailed proof is deferred to Appendix~\ref{app:WS_OA}.
\begin{lemma}\label{lem:WS_to_OA}
Let $\Xi \in [0,2\pi)$ and $\varphi \in [0,2\pi)$ be two random variables satisfying the relation $\Xi = T_{\alpha, \eta} (\varphi)$ for some $\alpha \in \C$ with $|\alpha|<1$, where the map $T_{\alpha, \eta}$ is as in \eqref{eq:map_T}. Then the distribution $g(\dummy)$ of $\Xi$ is a wrapped Cauchy distribution of the form
\begin{equation}\label{eq:g_poisson}
g(\xi) = \WC(\xi; \alpha) 
\end{equation}
if and only if $\varphi \sim \nu =  \mathrm{Unif} ([0,2\pi))$.
\end{lemma}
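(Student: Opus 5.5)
The approach is to use that, for $|\alpha|<1$, the map $T_{\alpha,\eta}$ is a diffeomorphism of the circle, because $M_{\alpha,\eta}$ restricted to $\BBS^1$ is a smooth bijection with smooth inverse \eqref{eq:inverse_M}. Then $\Xi = T_{\alpha,\eta}(\varphi)$ gives a one-to-one correspondence between the law of $\varphi$ and the law of $\Xi$. Concretely, $\nu = (T_{\alpha,\eta}^{-1})_\sharp g$ and $g = (T_{\alpha,\eta})_\sharp \nu$. It therefore suffices to prove the forward implication: if $\nu = \Unif([0,2\pi))$, then $g = \WC(\dummy;\alpha)$. The converse then follows by injectivity of the pushforward under a bijection. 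If $g=\WC(\dummy;\alpha) = (T_{\alpha,\eta})_\sharp \Unif$, applying $(T_{\alpha,\eta}^{-1})_\sharp$ to both sides yields $\nu=\Unif$.

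For the forward direction, I would first remove $\eta$. We have $T_{\alpha,\eta}(\varphi) = T_{\alpha,0}(\varphi+\eta)$, and the uniform measure is rotation invariant, so $(T_{\alpha,\eta})_\sharp\Unif = (T_{\alpha,0})_\sharp\Unif$. Next I would compute the density by the change of variables formula using the inverse map. Set $v=e^{i\xi}$ and $u = M_{\alpha,0}^{-1}(v) = (v-\alpha)/(1-\alphaBar v)$, and let $\varphi(\xi)$ denote the corresponding angle. The density of $\Xi$ is then $\frac{1}{2\pi}\,|d\varphi/d\xi|$. Since $|du|=|d\varphi|$ and $|dv|=|d\xi|$ on the circle, this equals $\frac{1}{2\pi}|du/dv|$. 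Differentiating gives
\begin{equation*}
\frac{du}{dv} = \frac{(1-\alphaBar v) + \alphaBar (v-\alpha)}{(1-\alphaBar v)^2} = \frac{1-|\alpha|^2}{(1-\alphaBar v)^2}.
\end{equation*}
For $|v|=1$ we have $|1-\alphaBar v| = |\overline{v}-\alphaBar| = |v-\alpha|$. Hence the density is $\frac{1}{2\pi}\frac{1-|\alpha|^2}{|e^{i\xi}-\alpha|^2} = \WC(\xi;\alpha)$, which is \eqref{eq:g_poisson}. The sign issue is harmless because we take absolute values, and the map is orientation preserving anyway since $|\alpha|<1$.

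As a sanity check, I would alternatively verify the result via Fourier coefficients. The mean-value property gives $\int M_{\alpha,0}(e^{i\varphi})^m \frac{d\varphi}{2\pi} = M_{\alpha,0}(0)^m = \alpha^m$ for $m\ge 0$, because $M_{\alpha,0}$ is holomorphic on the closed disk when $|\alpha|<1$. These are exactly the moments of the Poisson kernel $\WC(\dummy;\alpha)$, and moments determine the measure on the circle.

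The main obstacle is bookkeeping rather than substance. One must justify that $T_{\alpha,\eta}$ is a genuine bijection of $[0,2\pi)$ modulo $2\pi$ when $|\alpha|<1$; this follows from the denominator $1+\alphaBar e^{i\eta}u$ never vanishing on $|u|=1$. One must also handle the branch of $\Arg$ in \eqref{eq:map_T}, which I would treat by working on $\BBT^1$ and computing Jacobians locally.
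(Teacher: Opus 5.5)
Your proof is correct. You make the same reduction as the paper: since $T_{\alpha,\eta}$ is a bijection when $|\alpha|<1$, it suffices to identify $(T_{\alpha,\eta})_\sharp \Unif$. The difference is in how that pushforward is identified.

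The paper works through Fourier coefficients. It rewrites $\frac{1}{2\pi}\int_0^{2\pi} M_{\alpha,\eta}(e^{i\varphi})^m\,d\varphi$ as a contour integral and evaluates it as $\alpha^m$ by the residue theorem. It then sums the series $\frac{1}{2\pi}\bigl(1+2\sum_{m\ge 1}\re{\alphaBar^m e^{im\xi}}\bigr)$ in closed form to recover the Poisson kernel. This is essentially your ``sanity check,'' with the residue theorem in place of the mean-value property and an explicit resummation in place of ``moments determine the measure.''

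Your main argument instead computes the density pointwise as $\frac{1}{2\pi}\bigl|(M_{\alpha,0}^{-1})'(e^{i\xi})\bigr|$. This uses the classical fact that the Poisson kernel is the boundary Jacobian of a disk automorphism. It is shorter, needs no Fourier inversion or series manipulation, and makes the role of $|\alpha|<1$ (nonvanishing denominator, bijectivity on the circle) transparent.

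What the Fourier route buys is the moment identity $\int e^{i\xi}\,g(d\xi)=\alpha$, i.e.\ $\CR_2=\alpha$ on the OA manifold, as a by-product. That identity is exactly what the paper relies on immediately afterwards, and again in Lemma~\ref{lem:Mobius_tf_properties}(i). With your approach it needs the separate verification you already included.
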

Let us now discuss several direct consequences of Lemma~\ref{lem:WS_to_OA},
which highlight some favorable features of the setting of the Ott--Antonsen ansatz.

\begin{itemize}
    \item \textbf{The wrapped Cauchy family is invariant under the dynamics \eqref{eq:xi_mf_sys}.} 
    Because of the representation \eqref{eq:action_of_M_mf_sys}, the constant of motion $\varphi$ is completely determined by the initial phase $\Xi(0)$ together with the initial values $\alpha(0)$ and $\eta(0)$.
    More precisely, using \eqref{eq:action_of_M_mf_sys} together with the inverse Möbius transform \eqref{eq:inverse_M}, we obtain
    \begin{equation*}
    e^{i\varphi} = M_{\alpha(0), \eta(0)}^{-1} \left( e^{i \,\Xi(0)} \right) .
    \end{equation*}
   Consequently, for any fixed $\eta(0)$, $\varphi \sim \nu =   \mathrm{Unif}([0,2\pi))$ if and only if the initial data $\Xi(0)$ is distributed according to the wrapped Cauchy distribution $\WC(\xi; \alpha(0))$. 
    The value of $\eta(0)$ only rotates the constant of motion $\varphi$ and therefore does not affect this distributional equivalence.

    As a direct consequence, the mean-field dynamics \eqref{eq:mf_sys} leaves the wrapped Cauchy family invariant: if the initial distribution belongs to this family, then it remains in the same family for all later times.

    \item \textbf{Closed dynamics for $\CR_2$ and characterization of distribution $g$.} 
    If the distribution $g$ has the form $\WC(\xi; \alpha)$, then it admits the Fourier representation
    \begin{equation*}
    g(t, \xi) = \frac{1}{2\pi} \left(1 + 2\sum_{m=1}^{\infty} \re{\alphaBar^m e^{im \xi}} \right).
    \end{equation*}
    Consequently, the order parameter $\CR_2$ defined in \eqref{eq:order_param_mf_in_xi} satisfies
    \begin{equation}
       \CR_2 (t) = \alpha(t).
    \end{equation}

    In particular, $\CR_2$ depends only on $\alpha$, and thus the WS system \eqref{eq:mf_WS_param} decouples, as had already been mentioned earlier. Moreover, the second order parameter $\CR_2$ satisfies the closed evolution equation
    \begin{equation}\label{eq:dyn_r2}
    \Dot{\CR}_2(t) = F(\CR_2(t), \CR_2(t))
    = 2i \left( B(\CR_2(t)) (\CR_2(t))^2 + C(\CR_2(t)) \CR_2(t) + \overline{B}(\CR_2(t)) \right) .
    \end{equation}
    We further note that, in this case, the distribution $g$ is fully determined by the order parameter $\CR_2$. 
    Hence, tracking the behavior of $\CR_2$ is sufficient to characterize the full distribution of the double-angle variable $\Xi$.
\end{itemize}

We refer to the dynamics \eqref{eq:dyn_r2} as the \emph{OA system}.
Compared with the WS system~\eqref{eq:mf_WS_param}, the OA system \eqref{eq:dyn_r2} is generally more amenable to analysis.
Accordingly, in the subsequent Section \ref{sec:case_study},
we investigate the long-time behavior of the OA system for different choices of the matrices $A$ and $V$.
The resulting analysis allows us to infer the synchronization properties of the mean-field particle system \eqref{eq:xi_mf_sys} through the behavior of $\CR_2$.

It is important to note, however, that the finite-particle system~\eqref{eq:xi_finite_sys} cannot satisfy the OA ansatz exactly, since it is necessarily associated with an atomic initial distributions $g(0) = (T_{\alpha(0), \eta(0)})_{\sharp} \nu$, and hence with an atomic distribution $\nu$.


\begin{remark}\label{rem:g_and_f_wrapped_Cauchy}
Let us comment on the relation between the distribution $f$ of the original angle variable $\Theta$ and the distribution $g$ of the double-angle variable $\Xi = 2\Theta$ in the Ott--Antonsen setting.

Using \eqref{eq:relation_f_and_g} and the definition of the wrapped Cauchy distribution in \eqref{eq:WC_dis}, we observe that if $\Theta$ is distributed according to a wrapped Cauchy distribution with parameter $\alpha$, namely, $f = \WC (\dummy; \alpha)$, then the double-angle variable $\Xi$ is distributed according to a wrapped Cauchy distribution, now with parameter $\alpha^2$, that is, $g = \WC(\dummy; \alpha^2)$. 

The converse, however, does not hold in general: if $g$ is a wrapped Cauchy distribution, $f$ need not be a wrapped Cauchy.
For example, let $g$ be the uniform distribution $\mathrm{Unif}([0,2\pi)])$, which corresponds to the wrapped Cauchy distribution \eqref{eq:WC_dis} with $\alpha = 0$.
Define $f(\dummy) = \frac{1}{2\pi} (1 + \cos(\dummy))$.
Then $f$ and $g$ satisfy the relation \eqref{eq:relation_f_and_g}, but $f$ is not a wrapped Cauchy distribution.
More generally, the relation \eqref{eq:relation_f_and_g} does not uniquely determine $f$.
Indeed, whereas $g$ belonging to the wrapped Cauchy family necessarily has at most one local maximum, one may perturb $f$ by a sufficiently small $\pi$-antiperiodic function, provided the total mass and non-negativity are preserved, thereby obtaining admissible densities with arbitrary many local maxima.

Consequently, if the initial distribution of $\Theta(0)$ is a wrapped Cauchy distribution, then the induced initial distribution of $\Xi(0)$ is also a wrapped Cauchy distribution, and hence the dynamics of $\Xi$ starts on the OA manifold.
This manifold, however, is invariant only for the double angle dynamics~\eqref{eq:xi_mf_sys}, and not for the original dynamics~\eqref{eq:mf_sys} of $\Theta$.
In other words, under the setting of the Ott--Antonsen ansatz, only the distribution $g$ remains in the wrapped Cauchy family, whereas the distribution $f$ may not necessarily stay in this family over time. This can be illustrated by the fact that distributions in the Cauchy family are unimodal (see Figure \ref{fig:WC_dis}), while the distribution $f$ may concentrate around antipodal points.  
\end{remark}

\section{Long-Time Behavior Within the Ott--Antonsen Ansatz: Case Studies}
\label{sec:case_study}
In this section,
we investigate the long-time behavior of the dynamics~\eqref{eq:dyn_r2} of the second mean-field order parameter~$\CR_2$ under the OA ansatz,
with a particular emphasis on its dependence on the parameter matrices $A$ and $V$ of the transformer dynamics. 
We will show that, even for relatively simple choices of these parameter matrices, the system~\eqref{eq:dyn_r2} exhibits remarkably rich behavior. 
These different dynamical regimes of $\CR_2$ reveal the richness of the mean-field linear transformer dynamics~\eqref{eq:mf_sys}.

For the reader's convenience, let us recall the dynamics~\eqref{eq:dyn_r2}, which is given by
\begin{equation*}
    \Dot{\CR}_2(t) = F(\CR_2(t), \CR_2(t)) = 2i \left( B(\CR_2(t)) (\CR_2(t))^2 + C(\CR_2(t)) \CR_2(t) + \overline{B}(\CR_2(t)) \right) \, .
\end{equation*}
When the context is clear,
we will simply write $F(\CR_2(t))$ in place of $F(\CR_2(t), \CR_2(t))$.
We also introduce the polar representation of the order parameter, 
\begin{equation*}
    \CR_2(t) = \rho(t) e^{i\phi(t)},
\end{equation*}
where $\rho: [0, \infty) \mapsto [0,1]$ and $\phi: [0, \infty) \mapsto \BBT^1$.
Substituting this representation into \eqref{eq:dyn_r2},
we obtain the coupled ODE system:
\begin{equation}\label{eq:pc_dyn_general}
\Drho(t) = \re{e^{-i\phi(t)} F(\rho(t) e^{i\phi(t)})}, \qquad \Dphi(t) = \frac{1}{\rho(t)} \im{e^{-i\phi(t)} F(\rho(t) e^{i\phi(t)})} .
\end{equation}

In what follows, we present a series of case studies for four different choices of the matrices $A$ and $V$.
By analyzing either the complex ODE~\eqref{eq:dyn_r2} directly or the polar-coordinate system~\eqref{eq:pc_dyn_general} in each case, we show that $\CR_2(t)$ can display a rich variety of long-time behaviors.
These, in turn, correspond to distinct long-time behaviors of the mean-field linear transformer dynamics~\eqref{eq:mf_sys}.
We highlight three types of behavior for which we are able to draw definitive conclusions about the dynamics~\eqref{eq:mf_sys}, and we further explain their interpretations in the context of linear transformers, where the particles are viewed as token embeddings evolving through linear self-attention layers.

\begin{itemize}
    \item 
    \textbf{(Type 1).} Both $\rho = |\CR_2| \rightarrow 1$ and $\phi = \Arg(\CR_2) \rightarrow \phi_{\infty}$ converge.
    Correspondingly, the distribution $f$ of the particle $\Theta$ converges to a distribution supported on two fixed antipodal points on $\BBT^1$; see Case 1 and Case 2.

    In the linear transformer interpretation, this means that, after sufficiently many layers, the token embeddings cluster into two \emph{fixed} feature representations.

    \item \textbf{(Type 2).} While
    $\rho = |\CR_2| \rightarrow 1$ converges, $\phi = \Arg(\CR_2)$ does not.
    In this scenario, the distribution $f$ of the particle $\Theta$ converges to a distribution supported on two antipodal points on $\BBT^1$.
    However, these support points themselves continue to move and do not converge; see Case 3.

    In the linear transformer interpretation, this means that, after sufficiently many layers, the token embeddings cluster into two feature representations, but these representations continue to change as the number of layers increases.

    \item \textbf{(Type 3).}
    Neither $\rho = |\CR_2|$ or $\phi = \Arg(\CR_2)$ converges; instead, both continue to oscillate.
    Correspondingly, the distribution $f$ does not converge to a stationary distribution; see Case 4.

    In the linear transformer interpretation, this means that the token embeddings do not converge: even after arbitrarily many layers, the output continues to change as additional layers are applied.
\end{itemize}

Moreover, as a by-product of our analysis, we also identify bifurcations in the dynamics~\eqref{eq:dyn_r2}, highlighting the sensitivity of the long-time behavior to small changes in the matrices $A$ and $V$; see Case 4.

Before proceeding to the case studies, we introduce some notation.
We denote the unit disk and its boundary by
\begin{equation}
D := \left\{ z \in \BBC : |z| \leq 1 \right\}, \qquad \partial D := \left\{ z \in \BBC : |z| = 1 \right\} .
\end{equation}
For the entries of the matrices $A$ and $V$, we use the shorthand
\begin{equation*}
\begin{aligned}
&\aDPlus := a_{11} + a_{22}, \quad \aDMinus := a_{11} - a_{22}, \quad \aOPlus := a_{12} + a_{21}, \quad \aOMinus := a_{12} - a_{21},\\[5pt]
&\vDPlus := v_{11} + v_{22}, \quad \vDMinus := v_{11} - v_{22}, \quad \vOPlus := v_{12} + v_{21}, \quad \vOMinus := v_{12} - v_{21}.
\end{aligned}
\end{equation*}
The proofs of theorems in this section are deferred to Appendix \ref{app:case_study}.

\subsection{Case 1: \texorpdfstring{$V = I$, $A$}{VA} Arbitrary}
\label{sec:th_case1}

Let $A$ be a real-valued matrix such that $A \neq 0$ and $V$ be the identity matrix, 
\begin{equation}\label{eq:matrices_A_V_case_1}
A := \left(\begin{array}{ll}
    a_{11} & a_{12} \\
    a_{21} & a_{22} 
\end{array} \right), \qquad V := \left( \begin{array}{ll}
    1 & 0  \\
    0 & 1 
\end{array} \right) \,.
\end{equation}

In this case, the dynamics~\eqref{eq:dyn_r2} of the order parameter $\CR_2$ and its corresponding polar-coordinate system~\eqref{eq:pc_dyn_general} simplify to

\begin{equation}\label{eq:OAcase1}
\Dot{\CR}_2 = \frac{1}{4} \left( 1 - |\CR_2|^2 \right) \left( \left(\aDPlus + i\aOMinus \right) \CR_2 + \left( \aDMinus + i\aOPlus \right) \right),
\end{equation}
and
\begin{equation}\label{eq:pc_dyn_case_1}
\begin{aligned}
\Drho = \frac{1}{4} (1-\rho^2) \left( \aDPlus \rho + \aDMinus\cos \phi + \aOPlus \sin \phi \right), \qquad 
\Dphi = \frac{1}{4} (1-\rho^2) \left(\aOMinus - \frac{\aDMinus \sin \phi - \aOPlus \cos \phi}{\rho} \right) .
\end{aligned}
\end{equation}

The long-time behavior of the dynamical system~\eqref{eq:pc_dyn_case_1} is summarized as follows.

\begin{theorem}\label{thm:case_1}
Consider the dynamical system \eqref{eq:pc_dyn_case_1}.
\begin{itemize}
    \item[1.] If the matrix $A$ satisfies the condition:
    \begin{equation} \label{eq:case1_condition_E1}
    \tr{A} > 0 \quad \text{or} \quad  \det(A) \leq 0,
    \end{equation}
    then for almost every initial condition $(\rho(0), \phi(0)) \in (0,1] \times \BBT^1$, the solution of \eqref{eq:pc_dyn_case_1} converges to $(1, \phi_{\infty})$ for some $\phi_{\infty} \in \BBT^1$ as $t \rightarrow \infty$.

    \item[2.] If the symmetric part of $A$ is negative definite, i.e.
    \begin{equation}\label{eq:case1_condition_E2}
        \frac{A + A^{\top}}{2} \prec 0,
    \end{equation}
    then for every initial condition $(\rho(0), \phi(0)) \in (0,1) \times \BBT^1$, the solution of \eqref{eq:pc_dyn_case_1} converges to $(\rho_{\eq}, \phi_{\eq})$ as $t \rightarrow \infty$ with $\rho_{\eq} \in (0,1)$ and $\phi_{\eq} \in \BBT^1$; see the explicit form of $(\rho_{\eq}, \phi_{\eq})$ in \eqref{eq:rho*_phi*}.
\end{itemize}
\end{theorem}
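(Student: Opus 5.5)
The plan is to remove the common factor $\tfrac14(1-|\CR_2|^2)$ in \eqref{eq:OAcase1} by a time change, which turns the OA system into a \emph{linear} ODE. Write $z=\CR_2$, $a:=\aDPlus+i\aOMinus$ and $b:=\aDMinus+i\aOPlus$, so that $\dot z=\tfrac14(1-|z|^2)(az+b)$. Every point of $\partial D$ is an equilibrium, and the open disk is invariant by uniqueness. On the open disk I introduce $\tau(t)=\int_0^t \tfrac14(1-|z(s)|^2)\,ds$. It is strictly increasing, and in this new time $z'=az+b$. Hence orbits are pieces of $z(\tau)=z^*+(z_0-z^*)e^{a\tau}$ with $z^*=-b/a$ when $a\neq0$, or $z(\tau)=z_0+b\tau$ when $a=0$ (then $b\neq0$, since $a=b=0$ forces $A=0$).

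The two matrix quantities translate as follows. First, $\Re a=\tr{A}$. Second, a direct expansion gives
\begin{equation*}
|a|^2-|b|^2=4(a_{11}a_{22}-a_{12}a_{21})=4\det(A),
\end{equation*}
so $|z^*|\ge1$ if and only if $\det(A)\le0$. Third, the eigenvalues of $(A+A^\top)/2$ are $\tfrac12\bigl(\aDPlus\pm\sqrt{(\aDMinus)^2+(\aOPlus)^2}\bigr)$. Hence \eqref{eq:case1_condition_E2} is equivalent to $\Re a+|b|<0$, and in that case also $\det(A)>0$, i.e. $|z^*|<1$.

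\textbf{Part 1.} The key step is to show that, for $z_0\neq z^*$ in the open disk, the linear orbit leaves $D$ (or reaches $\partial D$) at some finite $\tau_{\mathrm{exit}}$, or else converges to a point of $\partial D$. I split into cases.
\begin{itemize}
\item If $\Re a>0$, then $|z-z^*|$ grows exponentially, so the orbit must exit.
\item If $\Re a<0$ and $\det A\le0$, the orbit converges to $z^*$ with $|z^*|\ge1$. It therefore either exits, or tends to $z^*\in\partial D$ as $\tau\to\infty$.
\item If $\Re a=0$, the orbit is a circle centred at $z^*$ with $|z^*|\ge1$ (or a line when $a=0$). Such a circle cannot stay inside the open disk, so the orbit exits.
\end{itemize}
The excluded set consists of $\{z_0=z^*\}$, the polar singularity $\rho=0$, and trivial boundary data; it has measure zero.

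In the exit case I return to original time. Since $1-|z(\tau)|^2\le C(\tau_{\mathrm{exit}}-\tau)$ by Lipschitz continuity,
\begin{equation*}
t=\int 4\,(1-|z|^2)^{-1}\,d\tau\;\to\;\infty \quad\text{as } \tau\uparrow\tau_{\mathrm{exit}}.
\end{equation*}
Hence $\CR_2(t)\to z(\tau_{\mathrm{exit}})\in\partial D$, which gives $\rho\to1$ and $\phi\to\phi_\infty$. In the remaining sub-case, where the orbit tends to $z^*\in\partial D$ as $\tau\to\infty$, I also have $\CR_2(t)\to z^*$, using that $t\ge 4\tau\to\infty$.

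\textbf{Part 2.} On $|z|=1$ the linear field satisfies
\begin{equation*}
\frac{d}{d\tau}|z|^2=2\bigl(\Re a+\Re(b\bar z)\bigr)\le 2(\Re a+|b|)<0 .
\end{equation*}
So it points strictly inward, and the closed disk is forward invariant for the linear flow. Consequently $z(\tau)\to z^*$ as $\tau\to\infty$ without ever approaching $\partial D$. Then $1-|z|^2$ is bounded below along the orbit, so $\tau(t)\to\infty$ as $t\to\infty$, and $\CR_2(t)\to z^*=-b/a$. In polar form this gives $\rho_{\eq}=|b|/|a|\in(0,1)$ and $\phi_{\eq}=\Arg(-b/a)$; if $b=0$ then $\rho_{\eq}=0$, which presumably the explicit formula \eqref{eq:rho*_phi*} addresses.

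\textbf{Main obstacle.} I expect the delicate point to be the bookkeeping between the two time scales in Part 1: ensuring that reaching $\partial D$ at finite $\tau$ corresponds exactly to the limit $t\to\infty$, including tangential exits and the borderline case $|z^*|=1$. The Lipschitz bound above handles this uniformly.
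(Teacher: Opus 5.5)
Your argument is correct, and it takes a genuinely different route from the paper's.

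\textbf{The paper's route.} The paper stays in the original time $t$. It tracks $\CE=|z-\zeq|^2$ and $\zeta=\Arg(z-\zeq)$, whose evolution is driven by $\int_0^t(1-|z|^2)\,ds$; see \eqref{eq:energy_CE} and \eqref{eqn:case1-angle-integral}. It then splits on the sign of $\tr{A}$:
\begin{itemize}
\item[(i)] when $\tr{A}>0$, boundedness of $\CE$ plus Barb\u{a}lat's lemma;
\item[(ii)] when $\tr{A}=0$, invariant circles around $\zeq$;
\item[(iii)] when $\tr{A}<0$, LaSalle's principle with $\CE$ as a Lyapunov function, and for Part 2 a sign argument on $\dot\rho$ for $\rho>R/|\tr{A}|$ to keep the orbit off $\partial D$;
\item[(iv)] when $\tr{A}=\aOMinus=0$, a separate explicit solution.
\end{itemize}

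\textbf{Your route.} Your time change shows that this integral is exactly $4\tau$. The paper's $\CE$ and $\zeta$ are then simply the modulus and argument of $(z_0-z^*)e^{a\tau}$. Making this explicit reduces everything to the geometry of a linear flow plus one clock lemma: first contact with $\partial D$ at finite $\tau$ corresponds to $t=\infty$, because $1-|z|^2$ vanishes at most linearly there. As a result:
\begin{itemize}
\item[(i)] you need neither Barb\u{a}lat nor LaSalle;
\item[(ii)] tangential contact and the borderline $|z^*|=1$ are handled uniformly;
\item[(iii)] the degenerate case is just the straight-line orbit;
\item[(iv)] the limit point is identified explicitly as the first exit point of the linear orbit.
\end{itemize}
Your inward-pointing argument on $\partial D$ for Part 2 is also cleaner than the paper's LaSalle-plus-sign argument.

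\textbf{What the paper's formulation buys.} Because it works in original time, it feeds directly into the hitting-time bound of Proposition~\ref{prop:case1_quantitative}, which is needed later for Theorem~\ref{thm:case_1stab}. With your approach, such a bound would have to come from estimating $\int 4\,(1-|z(\tau)|^2)^{-1}\,d\tau$ along the explicit orbit.

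\textbf{The $b=0$ edge case.} Your caveat is justified, and \eqref{eq:rho*_phi*} does not cover it. If $\aDMinus=\aOPlus=0$ (e.g.\ $A=-I$), then $\CR_2(t)\to 0$, so $\rho_{\eq}=0\notin(0,1)$. If moreover $\aOMinus\neq 0$, then $\dot\phi\to\aOMinus/4$ and $\phi$ keeps rotating, so $(\rho,\phi)$ does not converge. In this case the correct conclusion is the one your complex-coordinate argument gives, namely $\CR_2(t)\to z^*=0$.
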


\begin{figure}[h!]
\centering
\begin{subfigure}{.33\textwidth}
  \centering
  \includegraphics[trim={0cm 0cm 0cm 1.6cm},clip,width=\textwidth]{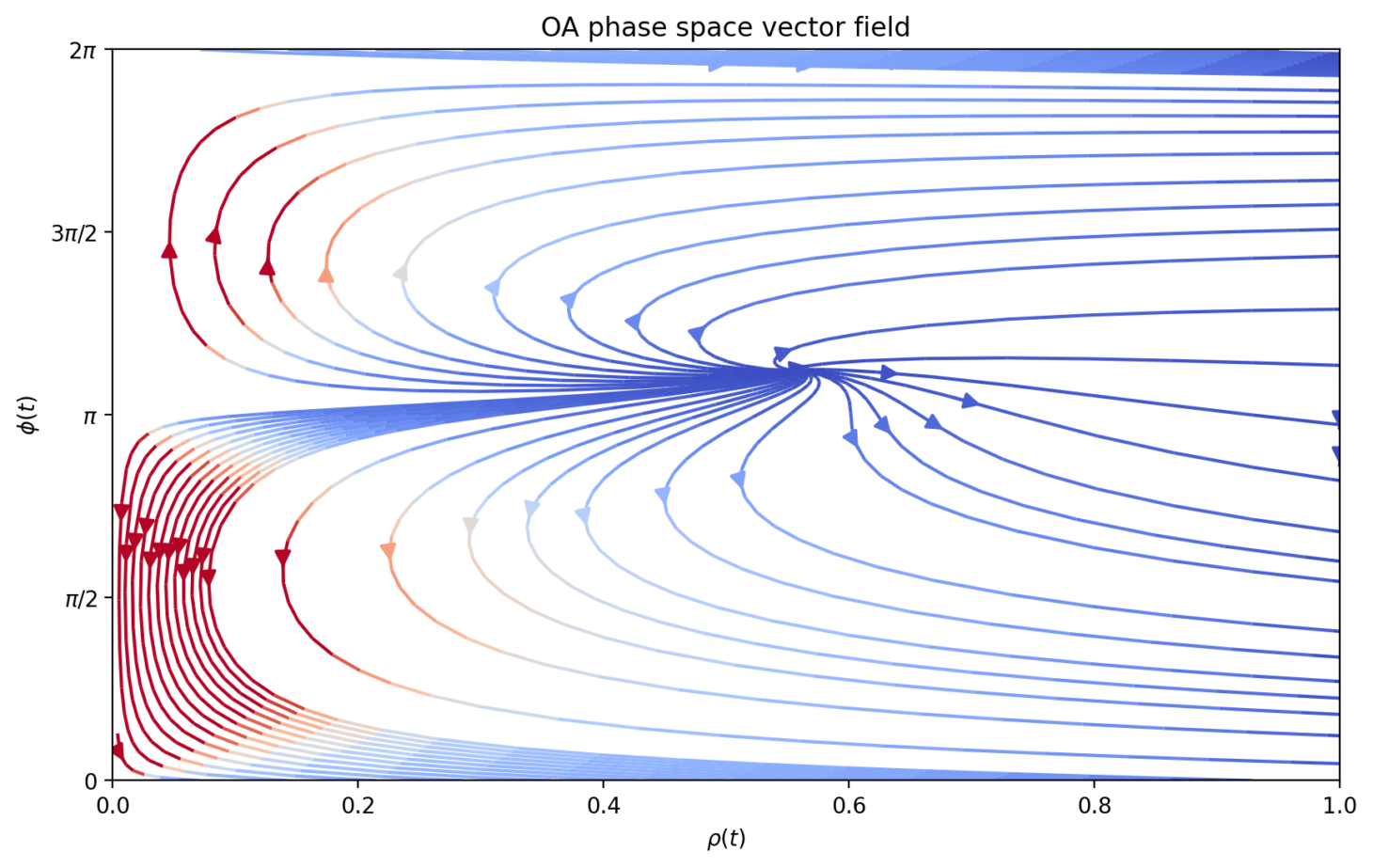}
  \subcaption{Phase portrait.}
  \label{fig:case_1_1_a}
\end{subfigure}%
\begin{subfigure}{.33\textwidth}
  \centering
  \includegraphics[trim={0cm 0cm 0cm 1.7cm},clip,width=\textwidth]{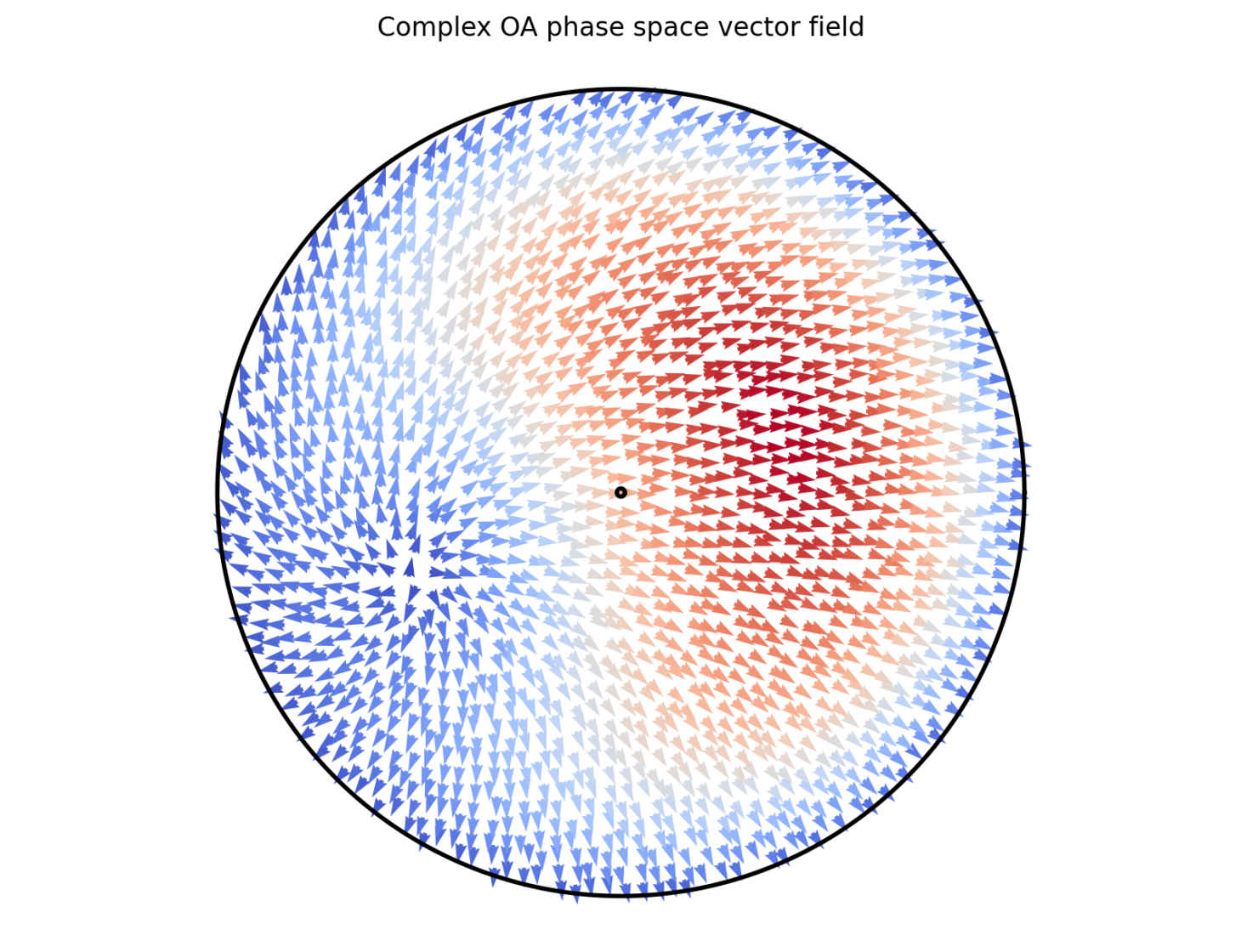}
  \subcaption{Complex phase portrait.}
  \label{fig:case_1_1_c}
\end{subfigure}%
\begin{subfigure}{.33\textwidth}
  \centering
  \includegraphics[trim={0cm 0cm 0cm 1.6cm},clip,width=\textwidth]{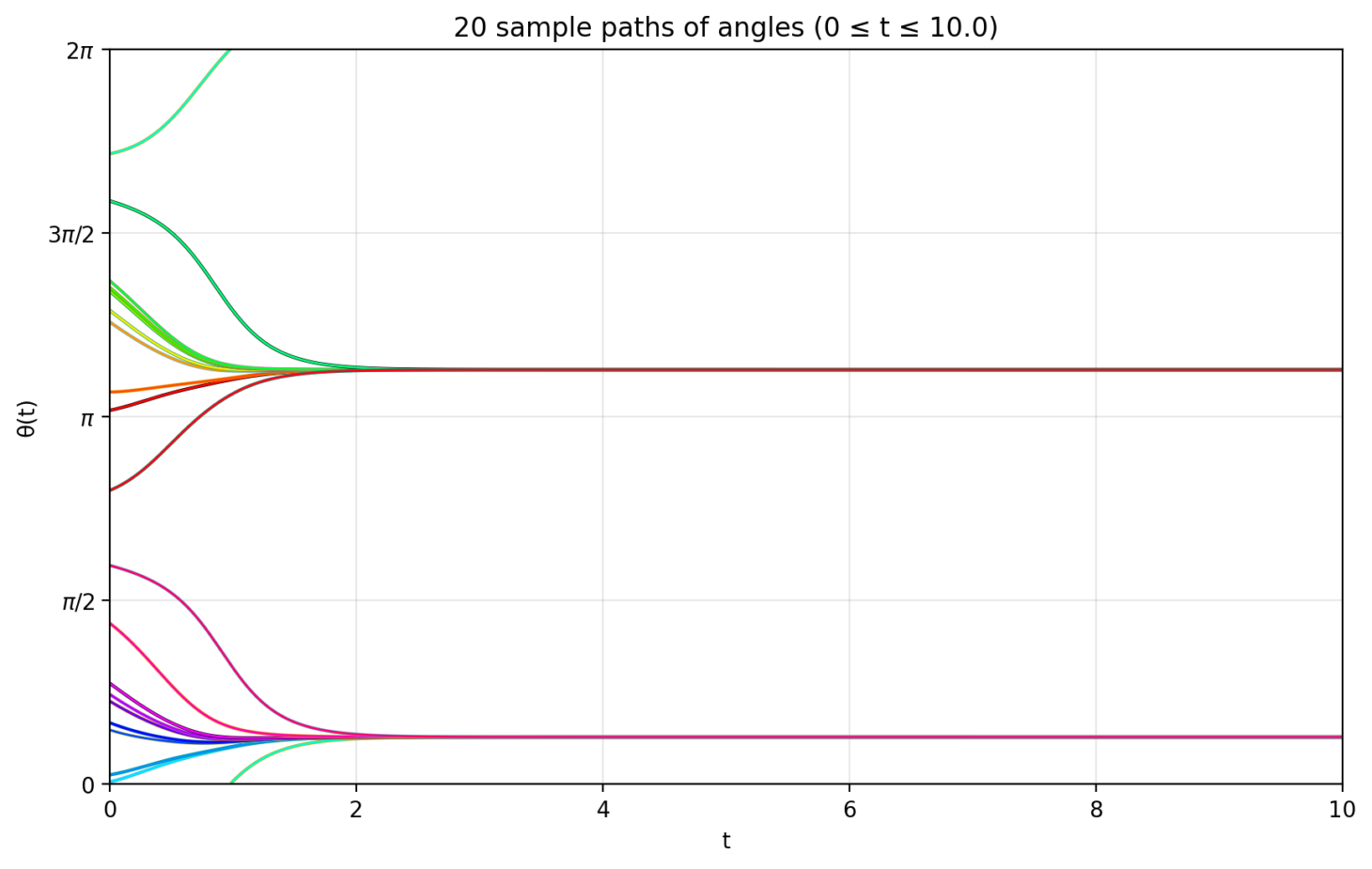}
  \subcaption{Particle dynamics.}
  \label{fig:case_1_1_b}
\end{subfigure}
\caption{Cartesian and complex phase portraits of $(\rho, \phi)$-dynamics \eqref{eq:pc_dyn_case_1} and the trajectories of particles evolving under the transformer dynamics \eqref{eq:finite_sys} for $V = I$, $A = \left(\begin{smallmatrix} 4 & -1 \\ 1 & 1 \end{smallmatrix}\right)$, which satisfies condition~\eqref{eq:case1_condition_E1}.  In the phase portraits, red flow lines or arrows indicate a fast velocity of $(\rho, \phi)$, and blue indicates a slow velocity.  The coloring in the cartesian and complex portraits do not align, due to the fast velocity near degenerate point $\rho = 0$.}
\label{fig:case_1_1}
\end{figure}
\begin{figure}[!htb]
\centering
\begin{subfigure}{.33\textwidth}
  \centering
  \includegraphics[trim={0cm 0cm 0cm 1.6cm},clip,width=\textwidth]{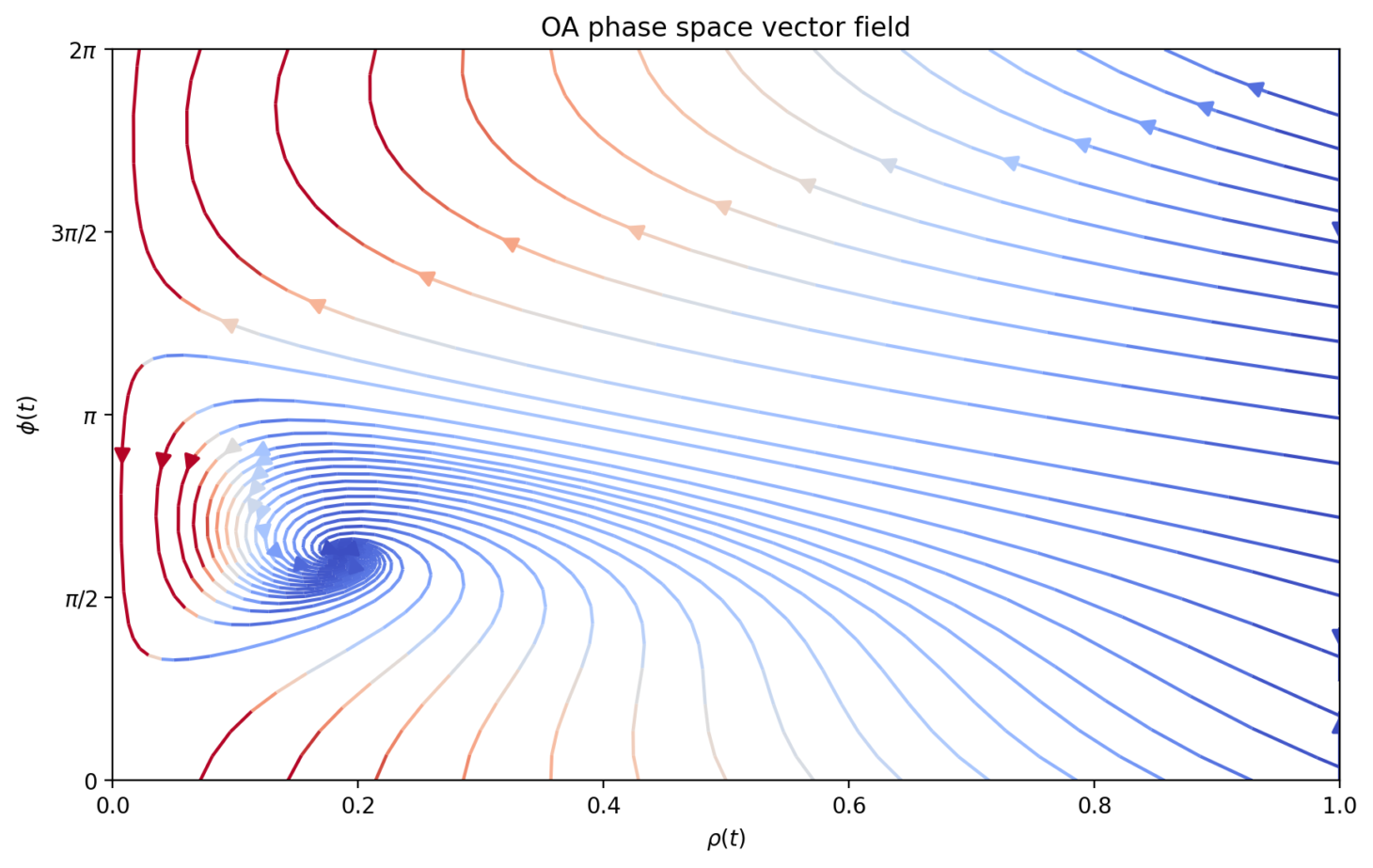}
  \subcaption{Phase portrait.}
  \label{fig:case_1_2_a}
\end{subfigure}%
\begin{subfigure}{.33\textwidth}
  \centering
  \includegraphics[trim={0cm 0cm 0cm 1.7cm},clip,width=\textwidth]{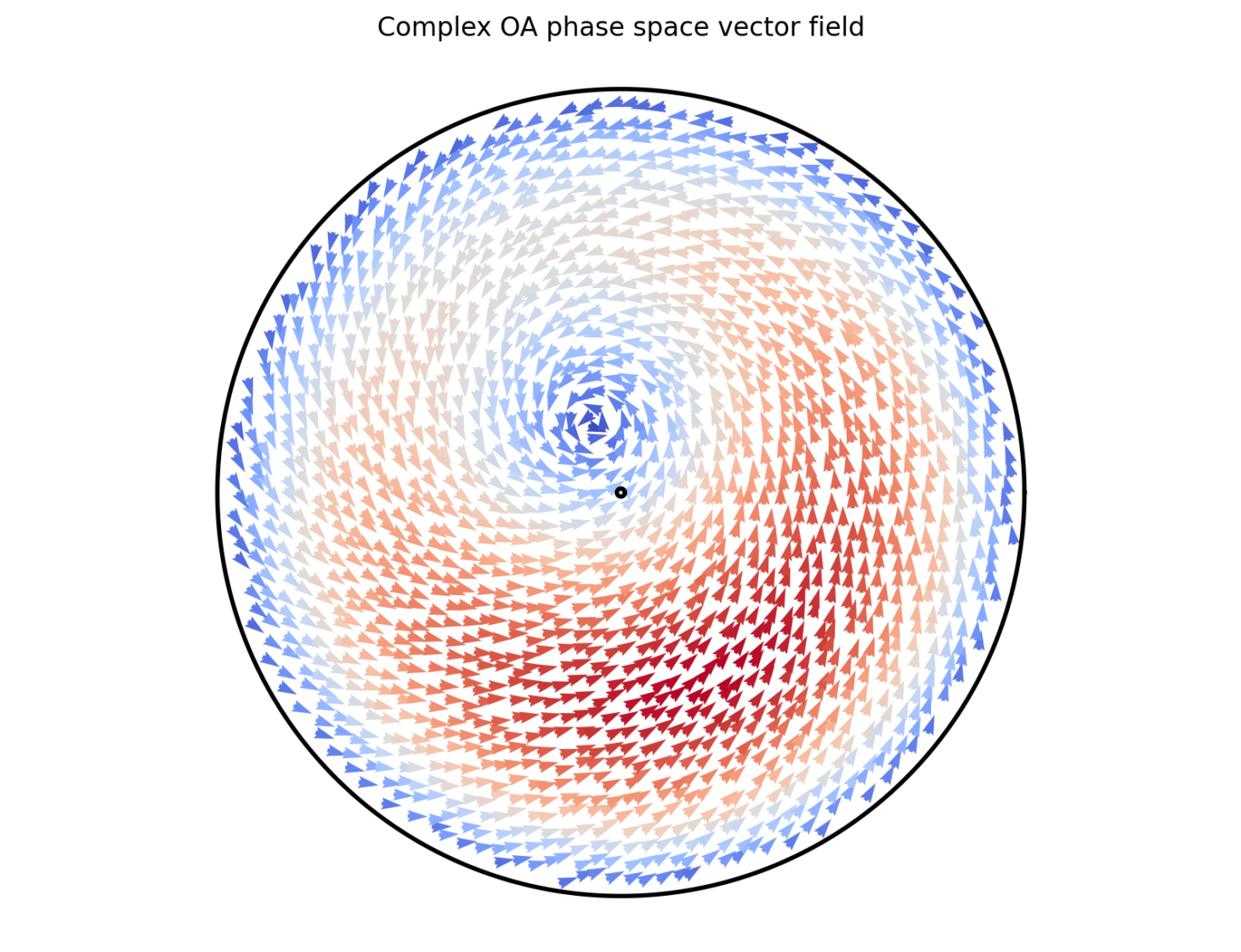}
  \subcaption{Complex phase portrait.}
  \label{fig:case_1_2_c}
\end{subfigure}%
\begin{subfigure}{.33\textwidth}
  \centering
  \includegraphics[trim={0cm 0cm 0cm 1.6cm},clip,width=\textwidth]{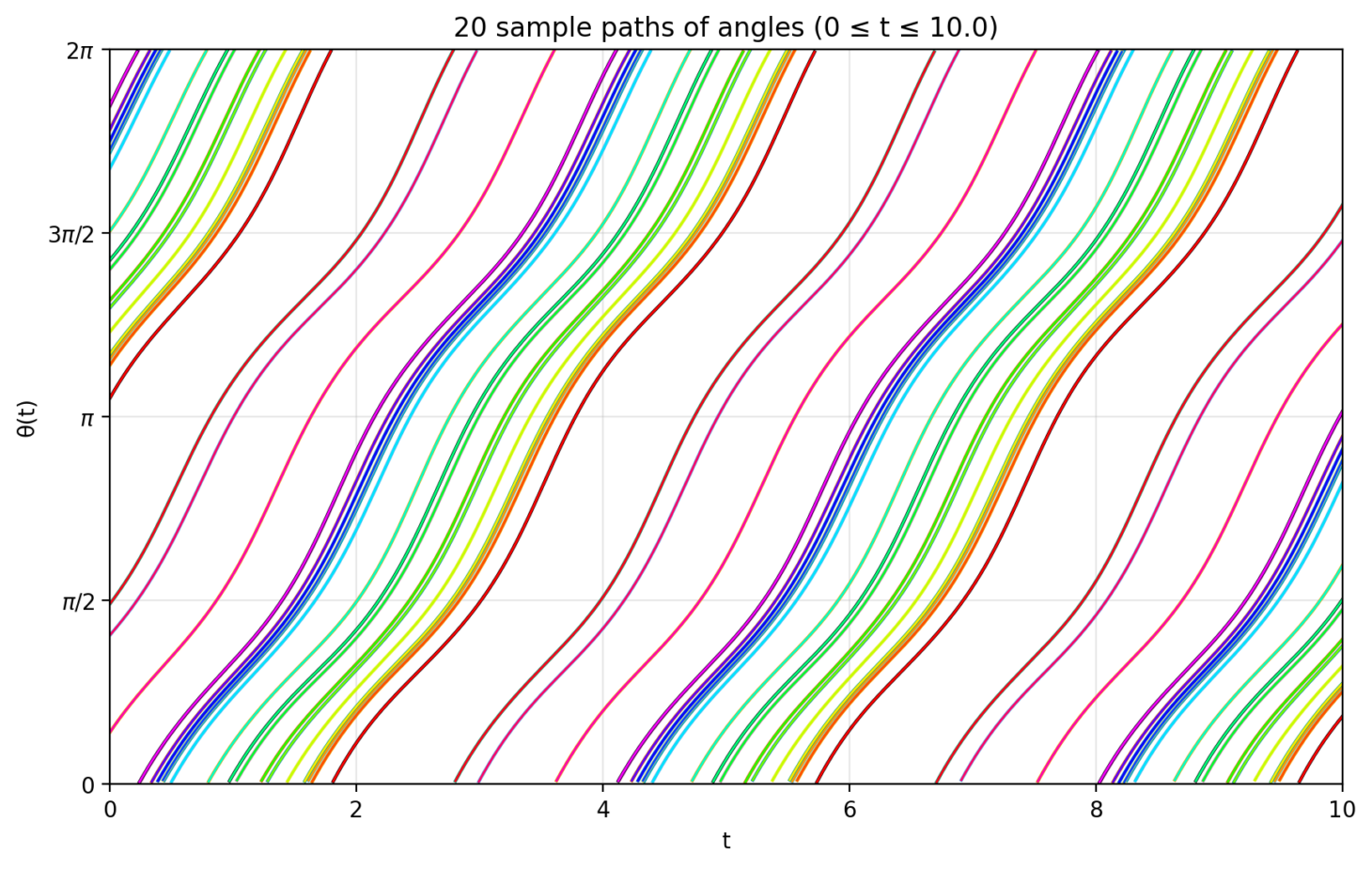}
  \subcaption{Particle dynamics.}
  \label{fig:case_1_2_b}
\end{subfigure}
\caption{Cartesian and complex phase portraits of $(\rho, \phi)$-dynamics \eqref{eq:pc_dyn_case_1} and the trajectories of particle evolving under the transformer dynamics \eqref{eq:finite_sys} for $V = I$, $A = \left(\begin{smallmatrix} -1 & 4 \\ -3 & -2 \end{smallmatrix}\right)$, which satisfies condition~\eqref{eq:case1_condition_E2}.  Again, red indicates fast flow and blue indicates slow flow in the phase diagrams.}
\label{fig:case_1_2}
\end{figure}
Theorem \ref{thm:case_1} states that, when $V = I$, if the matrix $A$ satisfies \eqref{eq:case1_condition_E1}, then the order parameter $\CR_2(t)$ converges to a fixed point on $\partial D$.
Through the relation between $\CR_2(t)$ and the distribution $f$ of the mean-field angle $\Theta$ in \eqref{eq:mf_order_param}, this further implies that $f$ converges to a stationary distribution supported on two antipodal points.

Figure~\ref{fig:case_1_1} illustrates an example in this regime.
The left panel shows the phase portrait of the $(\rho, \phi)$-dynamics in \eqref{eq:pc_dyn_case_1}, 
where the vector field points toward the boundary $\rho = 1$ throughout the phase space.
The right panel shows the trajectories of $20$ angle particles $\{\theta_k\}$ evolving under the finite-particle transformer dynamics \eqref{eq:finite_sys}, and these particles converge to two antipodal points.

On the other hand, when the symmetric part of $A$ is negative definite, the order parameter $\CR_2$ converges asymptotically to an interior point of the unit disk $D$.
In this case, since the distribution $g$ of the double angle $\Xi = 2\Theta$ remains a wrapped Cauchy distribution with parameter $\CR_2(t)$, it follows that $g$ converges to a stationary distribution in the wrapped Cauchy family.
However, as explained in Remark \ref{rem:g_and_f_wrapped_Cauchy}, the distribution $f$ of the original angle $\Theta$ does not necessary remain in the wrapped Cauchy family, nor does it necessarily converge. Figure~\ref{fig:case_1_2} illustrates an example in this regime.
In the left panel, the phase portrait shows that the vector field of the $(\rho, \phi)$-dynamics points toward the interior equilibrium, again consistent with the theoretical prediction.
Meanwhile, the right panel shows that the corresponding particle dynamics does not converge, in agreement with the theoretical insight that the distribution $f$ does not necessarily converge.


\begin{remark}
The union of the regimes described by conditions \eqref{eq:case1_condition_E1} and \eqref{eq:case1_condition_E2} does not cover the entire space $\R^{2\times 2}$.
In the remaining regimes, the long-time behavior of the order parameter $\CR_2$ depends on the initial data.
In particular, $\CR_2(t)$ may converge to a fixed point on the boundary $\partial D$, converge to the interior equilibrium in $D$, or approach a periodic orbit encircling the interior equilibrium. We refer to the proof of Theorem \ref{thm:case_1} in Appendix \ref{app:case_study} for a detailed discussion.
\end{remark}

\subsection{Case 2: \texorpdfstring{$A = I$, $V$}{AV} Symmetric}
\label{sec:th_case2}

Let $A$ be the identity matrix and $V$ be a real-valued symmetric matrix,
\begin{equation}
    \label{eq:matrices_A_V_case_2}
    A := \left(\begin{array}{ll}
        1 & 0 \\
        0 & 1
    \end{array} \right),
    \qquad
    V := \left( \begin{array}{ll}
        v_{11} & v_{12}  \\
        v_{12} & v_{22} 
    \end{array} \right) .
\end{equation}
In this case, the dynamics \eqref{eq:dyn_r2} of the order parameter $\CR_2$ and its corresponding polar-coordinate system \eqref{eq:pc_dyn_general} simplify to
\begin{equation}
\Dot{\CR}_2 = \frac{1}{4} \left( \vDPlus \left( 1 - |\CR_2|^2 \right) \CR_2 + 2\left( -\vOPlus + i \vDMinus \right) (\CR_2)^2 + \left( \vOPlus + i\vDMinus \right) \left(1 + |\CR_2|^2 \right) \right),
\end{equation}
and
\begin{equation}
    \label{eq:pc_dyn_case_2}
\begin{split}
    \Drho = \frac{1}{4} (1-\rho^2) \left(\vDPlus \rho + \vDMinus \cos \phi + \vOPlus \sin \phi \right), \qquad 
    \Dot{\phi} = - \frac{3\rho^2 + 1}{4\rho} \left( \vDMinus \sin \phi - \vOPlus \cos \phi \right).
\end{split}
\end{equation}
The long-time behavior of the dynamical system \eqref{eq:pc_dyn_case_2} is summarized as follows.



\begin{theorem}\label{thm:case_2}
Consider the dynamical system \eqref{eq:pc_dyn_case_2} and assume that $V$ is symmetric.
Let $\lambdaMax(V)$ denote the largest eigenvalue of $V$.
\begin{itemize}
    \item[1.] If $\lambdaMax (V) \geq 0$, then for almost every initial condition $(\rho(0), \phi(0)) \in (0,1] \times \BBT^1$, the solution of \eqref{eq:pc_dyn_case_2} converges to a boundary equilibrium $(1, \phiV)$ as $t \rightarrow \infty$.
    Here, $\phiV$ is defined in \eqref{eq:case2_phiV}.

    \item[2.] If $\lambdaMax(V) < 0$, then for almost every initial condition $(\rho(0), \phi(0)) \in (0,1) \times \BBT^1$, the solution of \eqref{eq:pc_dyn_case_2} converges to the interior equilibrium $(\rho_{\eq}, \phiV)$ as $t \rightarrow \infty$. 
    Here, $\rho_{\eq}$ is given in \eqref{eq:case2_rhoEq}.
\end{itemize}
\end{theorem}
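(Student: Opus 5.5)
The plan is to decouple the angular equation from the radial one by a phase shift, and then treat the radial equation as an asymptotically autonomous scalar ODE. Set $R:=\sqrt{(\vDMinus)^2+(\vOPlus)^2}$. When $R>0$, define $\phiV$ by $\vDMinus = R\cos\phiV$ and $\vOPlus = R\sin\phiV$; this should agree with \eqref{eq:case2_phiV}. Then
\begin{equation*}
\vDMinus\cos\phi+\vOPlus\sin\phi = R\cos(\phi-\phiV), \qquad \vDMinus\sin\phi-\vOPlus\cos\phi = R\sin(\phi-\phiV).
\end{equation*}
With $\psi:=\phi-\phiV$, the system \eqref{eq:pc_dyn_case_2} becomes
\begin{equation*}
\Drho=\tfrac14(1-\rho^2)\big(\vDPlus\rho+R\cos\psi\big),\qquad \Dot\psi=-\tfrac{3\rho^2+1}{4\rho}\,R\sin\psi .
\end{equation*}
The eigenvalues of the symmetric matrix $V$ are $(\vDPlus\pm R)/2$, so $\lambdaMax(V)\ge 0$ if and only if $\vDPlus+R\ge 0$. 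The degenerate case $R=0$ means $V=cI$. There $\phi$ is frozen and $\Drho=\tfrac14 c\rho(1-\rho^2)$, which is explicit, with $c=0$ the trivial static case. I will treat it separately and assume $R>0$ from now on.

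The first step is to show that the angle converges. The coefficient $c(\rho)=(3\rho^2+1)/(4\rho)$ satisfies $c(\rho)\ge\sqrt3/2$ on $(0,1]$. Away from $\psi\equiv\pi$, differentiating gives
\begin{equation*}
\tfrac{d}{dt}\log|\tan(\psi/2)| = -c(\rho)R\le -\tfrac{\sqrt3}{2}R .
\end{equation*}
Hence $\psi(t)\to 0$ exponentially, and $1-\cos\psi(t)\le Ce^{-\sqrt3 Rt}$, for every initial phase with $\psi(0)\ne\pi$. The excluded set $\{\phi(0)=\phiV+\pi\}$ is a null set of initial data.

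I also need to rule out $\rho$ hitting $0$. Near $\rho=0$ the complex equation for $\CR_2$ is smooth, so a solution reaching $\CR_2=0$ lies on the single trajectory through the origin. That trajectory is a curve in the disk, so it is another null set of initial conditions. I will exclude it, and for all other data $\rho(t)>0$ for all $t$. The sets $\{\rho=1\}$ and $\{\rho<1\}$ are invariant because of the factor $1-\rho^2$.

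The second step studies the radial equation, which is a perturbation of $\Drho=\tfrac14(1-\rho^2)h(\rho)$ with $h(\rho):=\vDPlus\rho+R$ and an exponentially small error $\tfrac14 R(1-\rho^2)(\cos\psi-1)$.

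In the case $\vDPlus+R\ge0$, the function $h$ is affine with $h(0)=R>0$ and $h(1)\ge 0$. So $h>0$ on $[0,1)$ and $h$ is bounded below by a positive constant on each $[0,1-\delta]$. Fix $\varepsilon$ and take $T$ with $R(1-\cos\psi(t))<\varepsilon$ for $t\ge T$. For such $t$ we get $\Drho\ge\tfrac14(1-\rho^2)(h(\rho)-\varepsilon)$. A comparison argument then gives $\liminf_t\rho\ge 1-\delta(\varepsilon)$ with $\delta(\varepsilon)\to0$. Together with $\rho\le1$ this yields $\rho\to1$ and $\phi\to\phiV$.

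In the case $\vDPlus+R<0$, we have $\vDPlus<0$ and $h$ has a unique zero $\rho_{\eq}=R/(-\vDPlus)\in(0,1)$, matching \eqref{eq:case2_rhoEq}. Moreover $h>0$ below $\rho_{\eq}$ and $h<0$ above it. The same two-sided comparison, an upper bound using $\cos\psi\le1$ and a lower bound using $\cos\psi\ge1-\varepsilon/R$, squeezes $\limsup$ and $\liminf$ of $\rho$ to within $O(\varepsilon)$ of $\rho_{\eq}$. This uses that $\rho(0)<1$ keeps $\rho<1$ for all time.

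I expect the main obstacle to be making this asymptotically autonomous comparison rigorous and uniform. The difficulties are near $\rho=0$, where the angular speed blows up and the perturbation is not small relative to $h$ until $\cos\psi>0$; near the boundary in the borderline case $\vDPlus+R=0$, where $h(1)=0$ and convergence to $1$ is only algebraic; and at the $\rho=0$ null-set exclusion. My first attempt would use explicit sub- and supersolutions on $[T,\infty)$. If that became messy, I would fall back on Thieme's theorem on asymptotically autonomous semiflows applied to the limiting scalar equation, whose $\omega$-limit sets are equilibria.
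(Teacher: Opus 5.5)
Your proposal is correct and follows essentially the same route as the paper. You rotate by $\phiV$, use the half-angle variable $\tan(\psi/2)$ to get exponential decay of $\psi$ whenever $\psi(0)\neq\pi$, and then treat the $\rho$-equation as the field $\tfrac14(1-\rho^2)(\vDPlus\rho+R)$ plus a vanishing perturbation. Your trapping/comparison argument is exactly what the paper packages as Lemma~\ref{lem:pt_lem}, so the obstacles you anticipate do not materialize; in particular, in the borderline case $\vDPlus+R=0$ only convergence is claimed, not a rate.

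The one difference is at the origin. You exclude initial data reaching $\rho=0$ as a null orbit of the smooth complex ODE. The paper instead shows, via $x=\rho\cos\psi$ and $y=\rho\sin\psi$ (where $\dot y$ is proportional to $y$), that $\rho$ never vanishes once $\psi(0)\neq\pi$. Your extra excluded set is therefore already contained in $\{\psi(0)=\pi\}$, which is harmless.
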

\begin{figure}[!htb]
\centering
\begin{subfigure}{.33\textwidth}
  \centering
  \includegraphics[trim={0cm 0cm 0cm 1.6cm},clip,width=\textwidth]{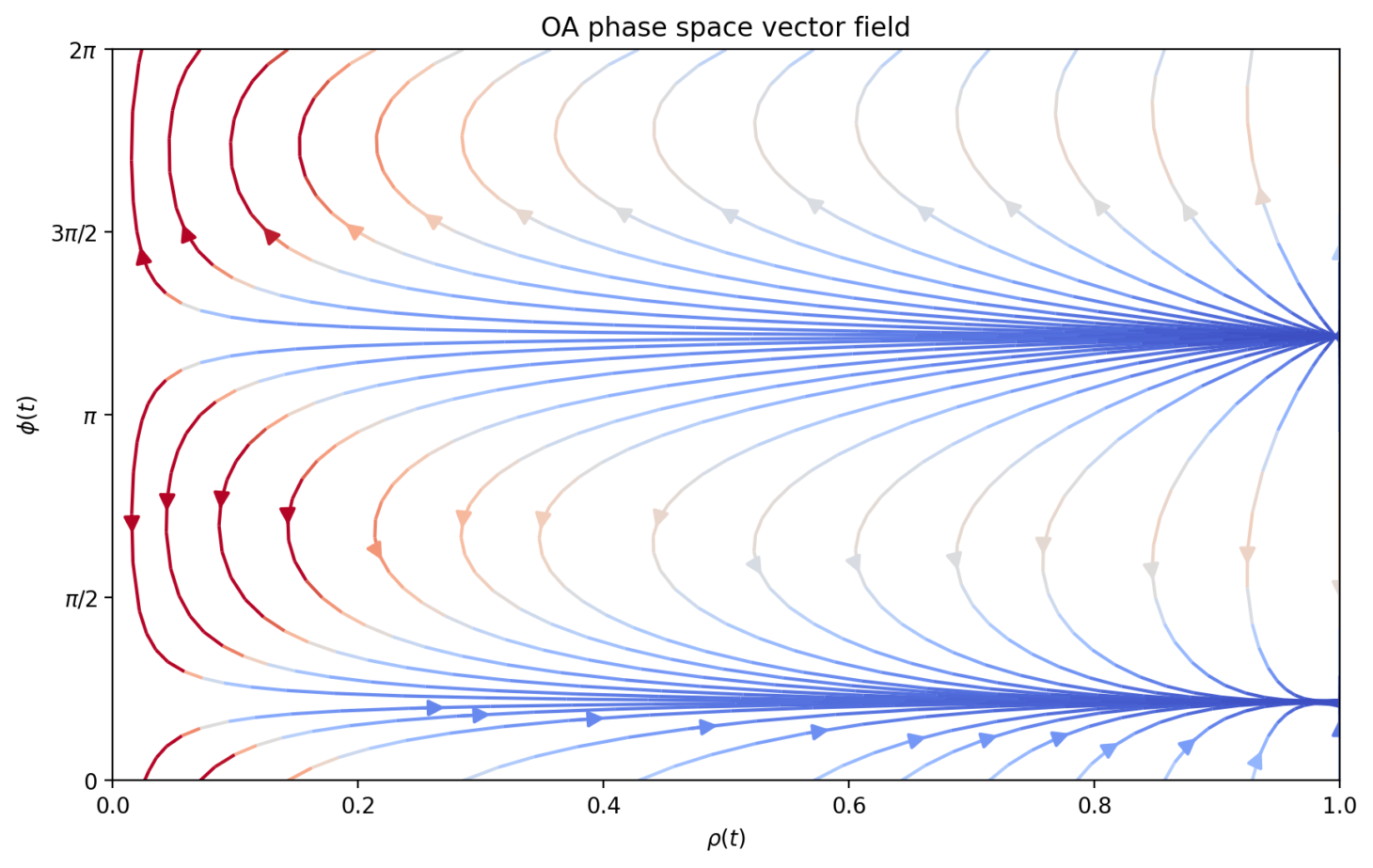}
  \subcaption{Phase portrait.}
  \label{fig:case_2_1_a}
\end{subfigure}%
\begin{subfigure}{.33\textwidth}
  \centering
  \includegraphics[trim={0cm 0cm 0cm 1.7cm},clip,width=\textwidth]{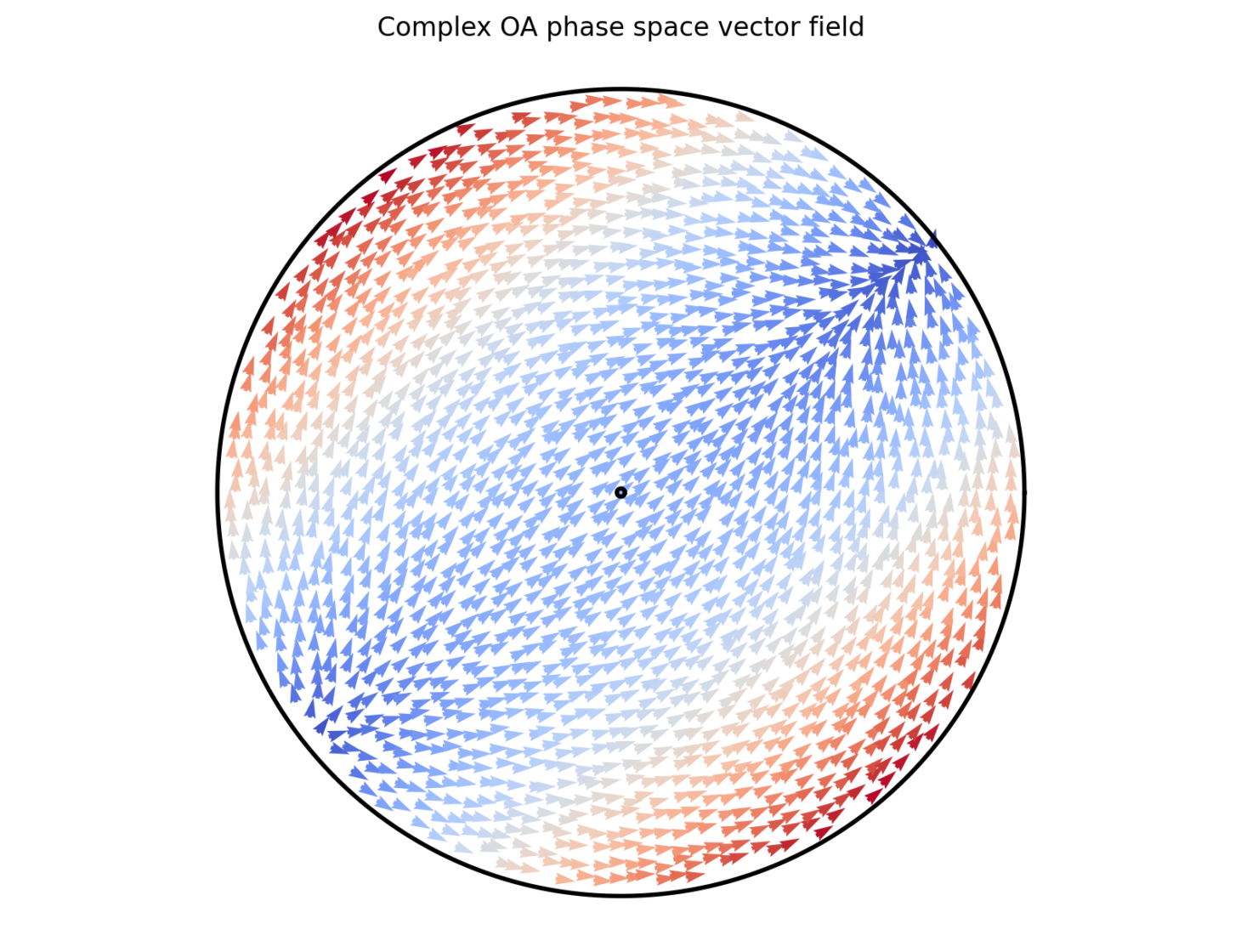}
  \subcaption{Complex phase portrait.}
  \label{fig:case_2_1_c}
\end{subfigure}%
\begin{subfigure}{.33\textwidth}
  \centering
  \includegraphics[trim={0cm 0cm 0cm 1.6cm},clip,width=\textwidth]{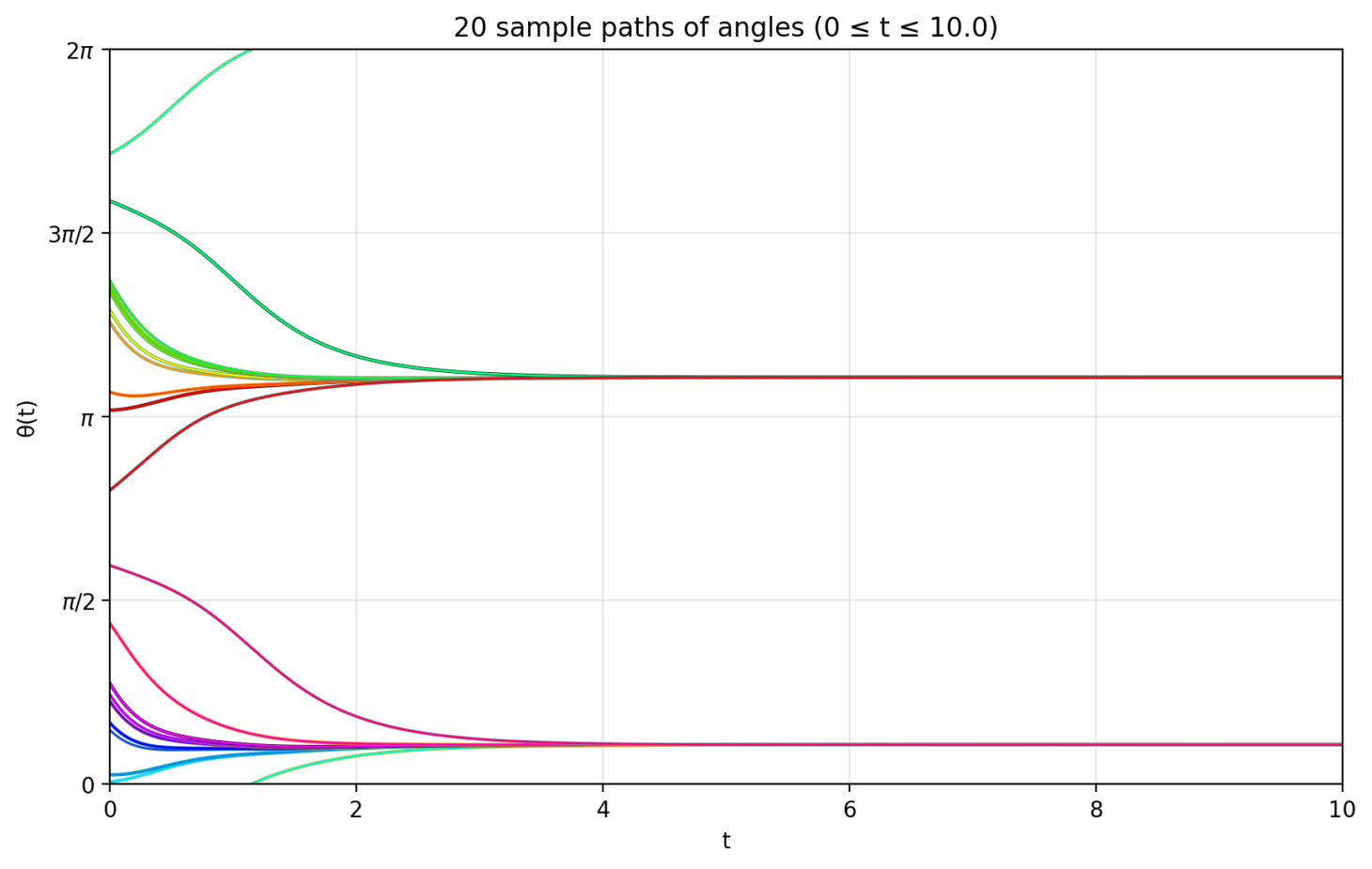}
  \subcaption{Particle dynamics.}
  \label{fig:case_2_1_b}
\end{subfigure}
\caption{Phase portrait of $(\rho, \phi)$-dynamics \eqref{eq:pc_dyn_case_2} and the trajectories of particle evolving under the transformer dynamics \eqref{eq:finite_sys} for $A = I$, $V = \left(\begin{smallmatrix}
    1 & 2 \\
    2 & -4
\end{smallmatrix}\right)$, which satisfies $\lambdaMax(V) \geq 0$.}
\label{fig:case_2_1}
\end{figure}
\begin{figure}[!htb]
\centering
\begin{subfigure}{.33\textwidth}
  \centering
  \includegraphics[trim={0cm 0cm 0cm 1.6cm},clip,width=\textwidth]{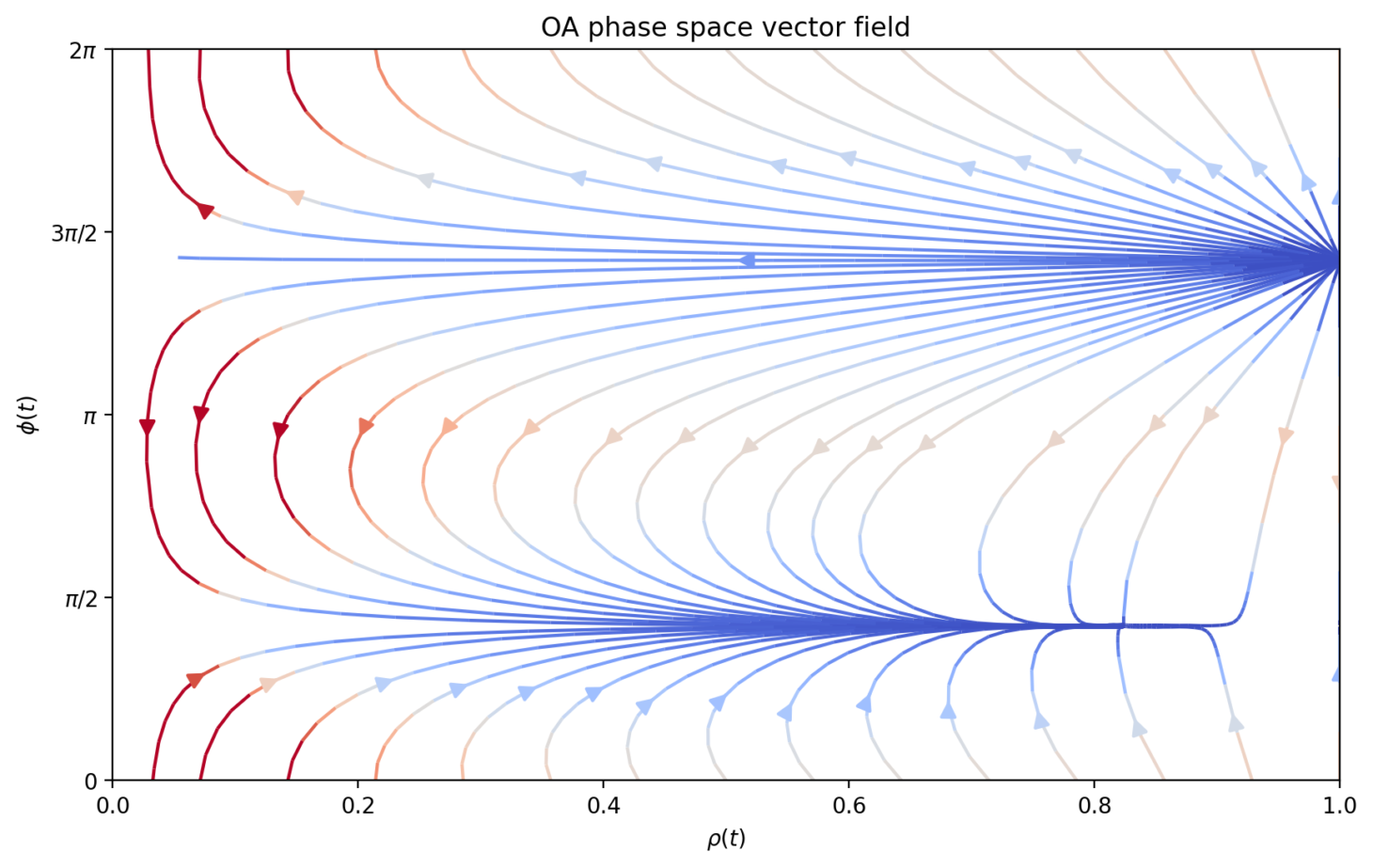}
  \subcaption{Phase portrait.}
  \label{fig:case_2_2_a}
\end{subfigure}%
\begin{subfigure}{.33\textwidth}
  \centering
  \includegraphics[trim={0cm 0cm 0cm 1.7cm},clip,width=\textwidth]{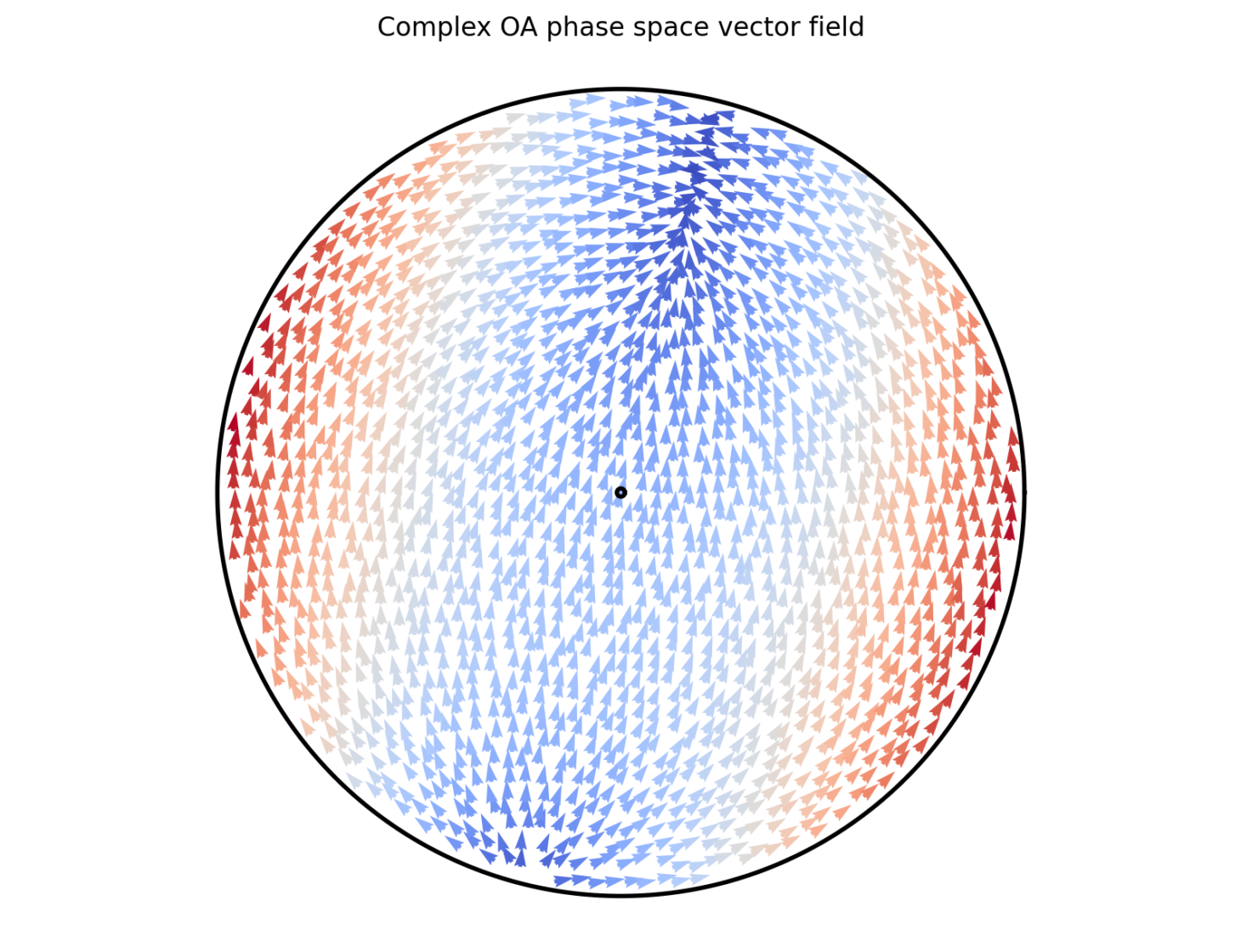}
  \subcaption{Complex phase portrait.}
  \label{fig:case_2_2_c}
\end{subfigure}%
\begin{subfigure}{.33\textwidth}
  \centering
  \includegraphics[trim={0cm 0cm 0cm 1.6cm},clip,width=\textwidth]{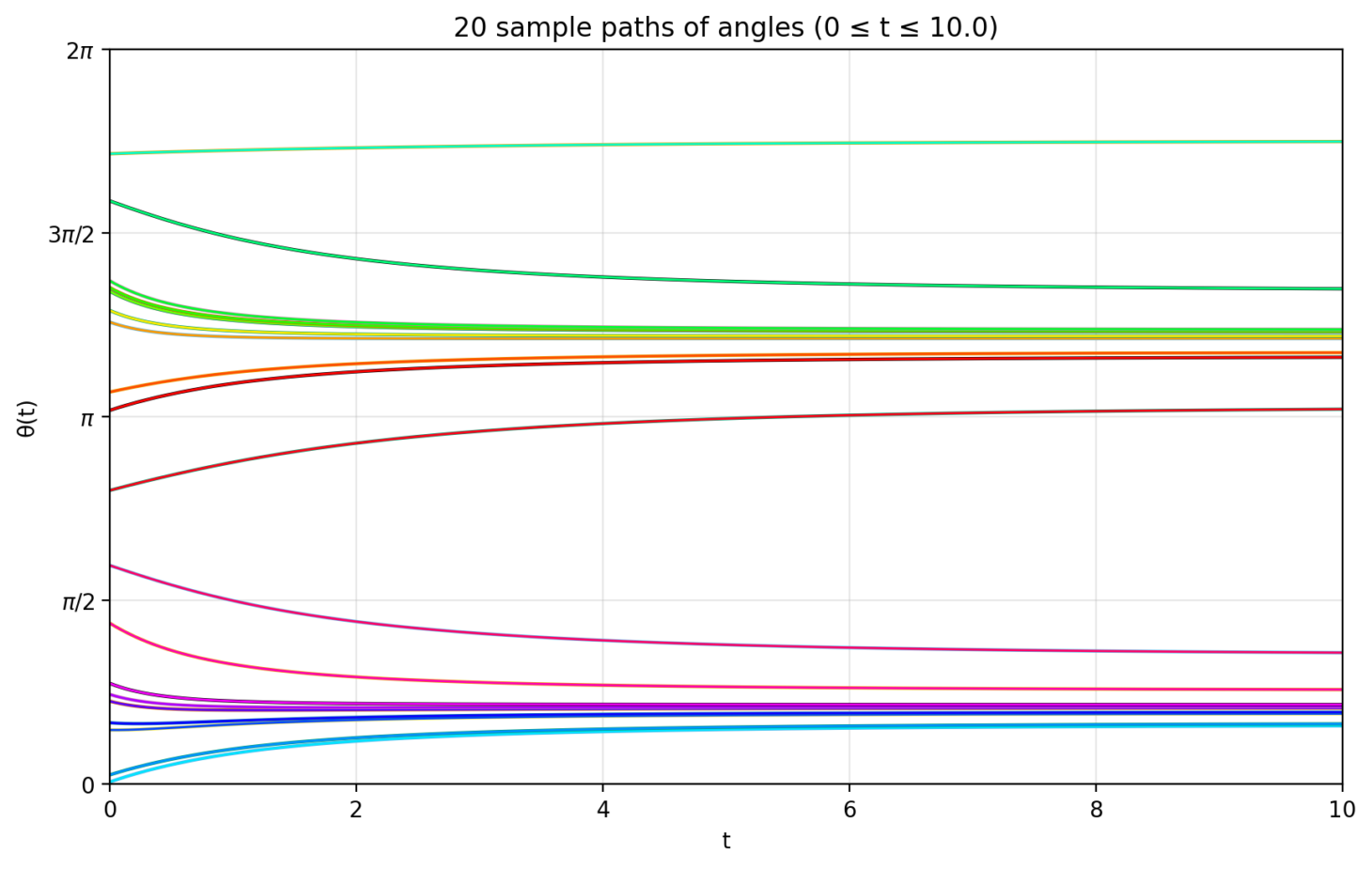}
  \subcaption{Particle dynamics.}
  \label{fig:case_2_2_b}
\end{subfigure}
\caption{Phase portrait of $(\rho, \phi)$-dynamics \eqref{eq:pc_dyn_case_2} and the trajectories of particle evolving under the transformer dynamics \eqref{eq:finite_sys} for $A = I$, $V = \left(\begin{smallmatrix}
    -2 & 2 \\
    2 & -3
\end{smallmatrix}\right)$, which satisfies $\lambdaMax(V) < 0$. 
}
\label{fig:case_2_2}
\end{figure}

Theorem \ref{thm:case_2} states that, when $A = I$ and the matrix $V$ is symmetric, the long-term behavior of dynamics \eqref{eq:pc_dyn_case_2} is entirely determined by the sign of the largest eigenvalue of $V$.
More precisely, if $\lambdaMax(V) \geq 0$, then the order parameter $\CR_2(t)$ converges to a fixed point on the boundary $\partial D$.
This in turn implies that the distribution $f$ converges to a stationary distribution supported on two antipodal points.
Figure~\ref{fig:case_2_1} illustrate this regime: the phase portrait shows that the vector field points toward a boundary fixed point on $\partial D$, and the particle dynamics converges to two antipodal points.

By contrast, if $\lambdaMax(V) < 0$, then the order parameter $\CR_2(t)$ converges to an interior equilibrium in $D$.
Figure~\ref{fig:case_2_2} illustrates an example in this regime.
The right panel shows that the particles converge, but not to a fully synchronized configuration.

\begin{remark}[Closest results available in the literature]
The closest results in the literature concerning the role of the matrix $V$ in the cluster formation of the transformer dynamics appear in \cite{abella2024asymptotic,abella2025consensus}.
In those works, the authors analyze the \emph{causal-attention} model and show that, when $A = I$ and $V$ is symmetric with $\lambdaMax(V) > 0$, the associated dynamics collapse to a single point.
This condition coincides with the one identified in our Theorem \ref{thm:case_2}. We emphasize, however, that our setting differs from theirs in two important respects.
First, we study the \emph{self-attention} model, whose dynamical properties differ from those of the \emph{causal-attention} model.
Second, we focus on the linear attention kernel $h(y)= y$, whereas \cite{abella2024asymptotic,abella2025consensus} consider the softmax kernel $h(y) = \exp(y)$.
\end{remark}

\subsection{Case 3: Non-commuting \texorpdfstring{$A$}{A} and \texorpdfstring{$V$}{V}}
\label{sec:th_case3}

Let $A$ and $V$ be given by
\begin{equation}\label{eq:matrices_A_V_case_3}
A := \left(\begin{array}{ll}
    a_{11} & 0 \\
    0 & a_{22}
\end{array} \right), \qquad  V := \left( \begin{array}{cc}
    v_{11} & v_{12}  \\[4pt]
    -\frac{a_{22}}{a_{11}} v_{12} & \frac{a_{11}}{a_{22}} v_{11} 
\end{array} \right),
\end{equation}
with $a_{11}, a_{22} \neq 0$. 

\begin{remark}
Suppose $A$ and $V$ are as in \eqref{eq:matrices_A_V_case_3}. A direct computation reveals that if $a_{22} \not = a_{11} $ and $v_{12} \not =0$, then the matrices $A, V$ do not commute. 
\end{remark}

In this case, the polar-coordinate system \eqref{eq:pc_dyn_general} simplifies to\footnote{We don't present the corresponding dynamics of $\CR_2$ in this case, since the resulting expression is rather complicated and offers limited insight.}

\begin{equation}\label{eq:pc_dyn_case_3}
\begin{aligned}
\Dphi = K(\phi) + (\rho-1) Q(\rho, \phi),
\end{aligned}
\end{equation}
where the functions $H, K$ and $Q$ depend on the matrices $A$ and $V$; see \eqref{eq:case3_pc_dyn_gen} in the Appendix for their explicit expressions.

In this example, our goal is to identify a parameter regime for which $\rho(t) \rightarrow 1$ while $\phi(t)$ does not converge.
This behavior is guaranteed under the sufficient conditions
\begin{equation*}
H(\phi) > 0, \qquad |K(\phi)| > 0, \qquad |Q(\rho, \phi)| < \infty,
\end{equation*}
for all $(\rho, \phi) \in [0,1] \times \BBT^1$.
The corresponding result is stated in the following theorem.

\begin{theorem}\label{thm:case_3}
Consider the dynamical system \eqref{eq:pc_dyn_case_3}.
If the matrices $A$ and $V$ defined in \eqref{eq:matrices_A_V_case_3} satisfy:
\begin{equation}\label{eq:case3_cond}
 \frac{v_{11}}{a_{22}} > 0, \qquad \frac{1}{8} \frac{a_{22}}{a_{11}} \frac{(a_{11} - a_{22})^4}{a_{11}^2 (a_{11}^2 + a_{22}^2)} < \frac{v_{11}^2}{v_{12}^2} < 4 \frac{a_{22}}{a_{11}} \frac{a_{22}^2}{(a_{11} - a_{22})^2},
\end{equation}
then for every initial condition $(\rho(0), \phi(0)) \in (0,1] \times \BBT^1$, the solution of \eqref{eq:pc_dyn_case_3} satisfies that $ \rho(t) \rightarrow 1$ as $t \rightarrow \infty$ while $\phi(t)$ never converges.
\end{theorem}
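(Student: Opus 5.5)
The plan is to work directly with the polar system \eqref{eq:pc_dyn_case_3}. Its radial equation (presumably of the form $\Drho = (1-\rho^2)\rho H(\phi)$ or similar; see \eqref{eq:case3_pc_dyn_gen}) has the factor $(1-\rho^2)$ in common with Cases 1 and 2. This structure comes from the OA system: for $|\CR_2|=1$ the distribution is a Dirac pair, and $\partial D$ is invariant. First I would recompute $B$ and $C$ from \eqref{eq:coeff_B_C} for the matrices \eqref{eq:matrices_A_V_case_3}, using the paper's computer-algebra route. The aim is to confirm that $\Drho = (1-\rho^2)H(\rho,\phi)$, with $H$ at least bounded below by a positive constant on $[0,1]\times\BBT^1$, and to read off $K(\phi)$, the restriction of $\Dphi$ to $\rho=1$, together with the remainder $Q$. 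The special choice $v_{21} = -\tfrac{a_{22}}{a_{11}}v_{12}$, $v_{22}=\tfrac{a_{11}}{a_{22}}v_{11}$ is presumably what cancels the $\rho$-independent terms in $\Drho$ that would otherwise create interior equilibria. I expect that the radial equation then factors as $(1-\rho^2)$ times a trigonometric polynomial in $\phi$, affine in $\rho$.

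\textbf{Step 1 (radial convergence).} I would show $H>0$ on $[0,1]\times\BBT^1$ under the first inequality $v_{11}/a_{22}>0$ together with the upper bound on $v_{11}^2/v_{12}^2$. After writing $H$ as $c_0 + c_1\cos 2\phi + c_2\sin 2\phi$ (plus a $\rho$-term with favorable sign), positivity reduces to $c_0 > \sqrt{c_1^2+c_2^2}$. Squaring this should reproduce the right-hand inequality of \eqref{eq:case3_cond}. With $H\ge h_0>0$, the quantity $u=1-\rho^2$ satisfies $\dot u \le -2\rho^2 h_0 u$. Near $\rho=0$ the polar coordinates degenerate, so I would argue in the complex variable $\CR_2$. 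Since $\Drho>0$ whenever $\rho<1$, $\rho$ is increasing, and for $\rho(0)>0$ it stays above $\rho(0)$. Hence $u(t)\le u(0)e^{-2\rho(0)^2h_0t}\to0$ exponentially, and the boundary is invariant when $\rho(0)=1$.

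\textbf{Step 2 (non-convergence of $\phi$).} On $\rho=1$ the angle obeys $\Dphi=K(\phi)$. I would show that $K$ has no zero under the left inequality in \eqref{eq:case3_cond}. Writing $K(\phi)=k_0+k_1\cos2\phi+k_2\sin2\phi$ (possibly after an expression rational in trig functions; clear denominators first), nonvanishing becomes $k_0^2 > k_1^2+k_2^2$, which should be the lower bound in \eqref{eq:case3_cond}. Then $|K|\ge\kappa>0$. Next, $Q$ is bounded on $[\rho(0),1]\times\BBT^1$, since any $1/\rho$ singularity is harmless once $\rho\ge\rho(0)$. Combined with the exponential decay of $1-\rho$ from Step 1, this gives $|\Dphi - K(\phi)|\le M(1-\rho(t))\to0$. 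Hence for $t\ge T$, $|\Dphi|\ge\kappa/2$ with constant sign. Thus $\phi$ winds monotonically with speed bounded below, $\phi(t)\to\pm\infty$ in the lift, and $\phi$ does not converge in $\BBT^1$.

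\textbf{Main obstacle.} The main obstacle is the algebra in Steps 1 and 2: obtaining explicit forms of $H$, $K$, $Q$ and verifying that the two-sided condition \eqref{eq:case3_cond} really implies both $\min H>0$ and $\min|K|>0$. The factors such as $(a_{11}-a_{22})^4/(a_{11}^2(a_{11}^2+a_{22}^2))$ suggest that $K$ is not a pure first Fourier mode in $2\phi$. My first attempt would be to bound $|k_1\cos2\phi+k_2\sin2\phi|$ by its amplitude and to compare it crudely with $k_0$, accepting that the stated condition may be sufficient rather than sharp. Uniform boundedness of $Q$ also needs a check, namely that no denominator vanishes for $\rho\in(0,1]$.
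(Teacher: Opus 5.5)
The overall skeleton of your proposal is the paper's, but the two halves of \eqref{eq:case3_cond} are attached to the wrong properties, and the planned treatment of $K$ cannot reach the stated hypothesis.

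\textbf{What matches.} On the family \eqref{eq:matrices_A_V_case_3}, the choice of $v_{21},v_{22}$ kills the $\rho$-independent term $S_2$ in the radial equation, as you guessed. So $\Drho$ is a positive multiple of $\rho(1-\rho^2)H(\phi)$, with $H$ a constant plus a pure second harmonic. Then $\Dphi=K(\phi)+(\rho-1)Q(\rho,\phi)$ with $Q$ bounded, and $K$ bounded away from zero forces eventual winding. Your Step 1 decay estimate is fine. So is your conclusion that the speed is eventually at least $\kappa/2$ with fixed sign; it is slightly more careful than the paper's bare $|\Dphi|>0$.

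\textbf{The swapped attribution.} Here
\[
H(\phi)=\tfrac{v_{11}}{a_{22}}(\aDPlus)^2+(\aDMinus)^2\bigl(-\tfrac{v_{11}}{a_{22}}\cos2\phi+\tfrac{v_{12}}{a_{11}}\sin2\phi\bigr).
\]
Squaring $c_0>\sqrt{c_1^2+c_2^2}$ and using $(\aDPlus)^4-(\aDMinus)^4=8a_{11}a_{22}(a_{11}^2+a_{22}^2)$ gives the \emph{lower} bound, not the upper one. The factor $(a_{11}-a_{22})^4/(a_{11}^2(a_{11}^2+a_{22}^2))$ therefore comes from $H$, not from $K$. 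Each step as you wrote it fails on its own:
\begin{itemize}
\item With $A=\mathrm{diag}(1,2)$, $v_{11}=0.1$, $v_{12}=1$, the upper bound holds but $H$ changes sign.
\item With the same $A$ and $v_{11}=10$, $v_{12}=1$, the lower bound holds but $K$ vanishes somewhere.
\end{itemize}

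\textbf{The missing idea in Step 2.} $K$ is not of the form $k_0+k_1\cos2\phi+k_2\sin2\phi$; it contains both first and second harmonics. What you need is the factorization
\[
K(\phi)=\tfrac12\bigl(\aDPlus+\aDMinus\cos\phi\bigr)P(\phi),\qquad P(\phi)=\tfrac{v_{11}}{a_{22}}\aDMinus\sin\phi+\tfrac{v_{12}}{a_{11}}\aDMinus\cos\phi-\tfrac{v_{12}}{a_{11}}\aDPlus .
\]
The first factor equals $2\bigl(a_{11}\cos^2\tfrac{\phi}{2}+a_{22}\sin^2\tfrac{\phi}{2}\bigr)$. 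It is bounded away from zero because $a_{11}a_{22}>0$, which \eqref{eq:case3_cond} forces. $P$ is a constant plus a first harmonic, so $\min|P|>0$ if and only if $\bigl|\tfrac{v_{12}}{a_{11}}\aDPlus\bigr|>|\aDMinus|\sqrt{(v_{11}/a_{22})^2+(v_{12}/a_{11})^2}$. Squaring and using $(\aDPlus)^2-(\aDMinus)^2=4a_{11}a_{22}$ gives exactly the upper bound.

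Your fallback of crudely comparing the constant term of the expanded $K$ with its harmonic amplitudes does not work. It is a strictly stronger sufficient condition that the hypothesis does not imply. Take $A=\mathrm{diag}(1,2)$, $v_{11}=5.5$, $v_{12}=1$, which satisfies \eqref{eq:case3_cond}. Then $2K=-8.5-8.25\sin\phi+1.375\sin2\phi+0.5\cos2\phi$, whose constant is smaller than the sum of the amplitudes, even though $K$ never vanishes. With the factorization and the two inequalities reassigned, the rest of your argument goes through as in the paper.
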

\begin{figure}[!htb]
\centering
\begin{subfigure}{.33\textwidth}
  \centering
  \includegraphics[trim={0cm 0cm 0cm 1.6cm},clip,width=\textwidth]{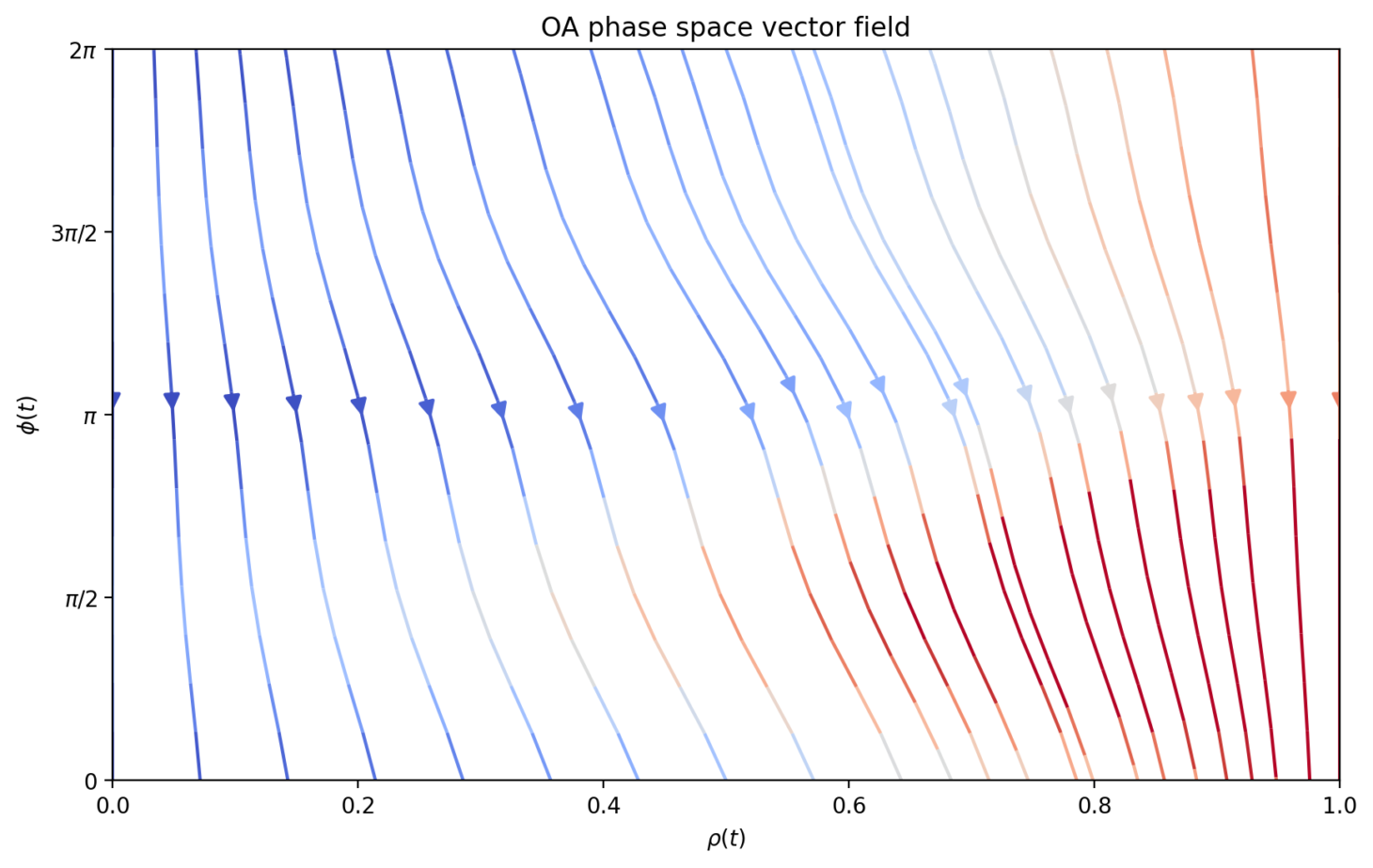}
  \subcaption{Phase portrait.}
  \label{fig:case_3_a}
\end{subfigure}%
\begin{subfigure}{.33\textwidth}
  \centering
  \includegraphics[trim={0cm 0cm 0cm 1.7cm},clip,width=\textwidth]{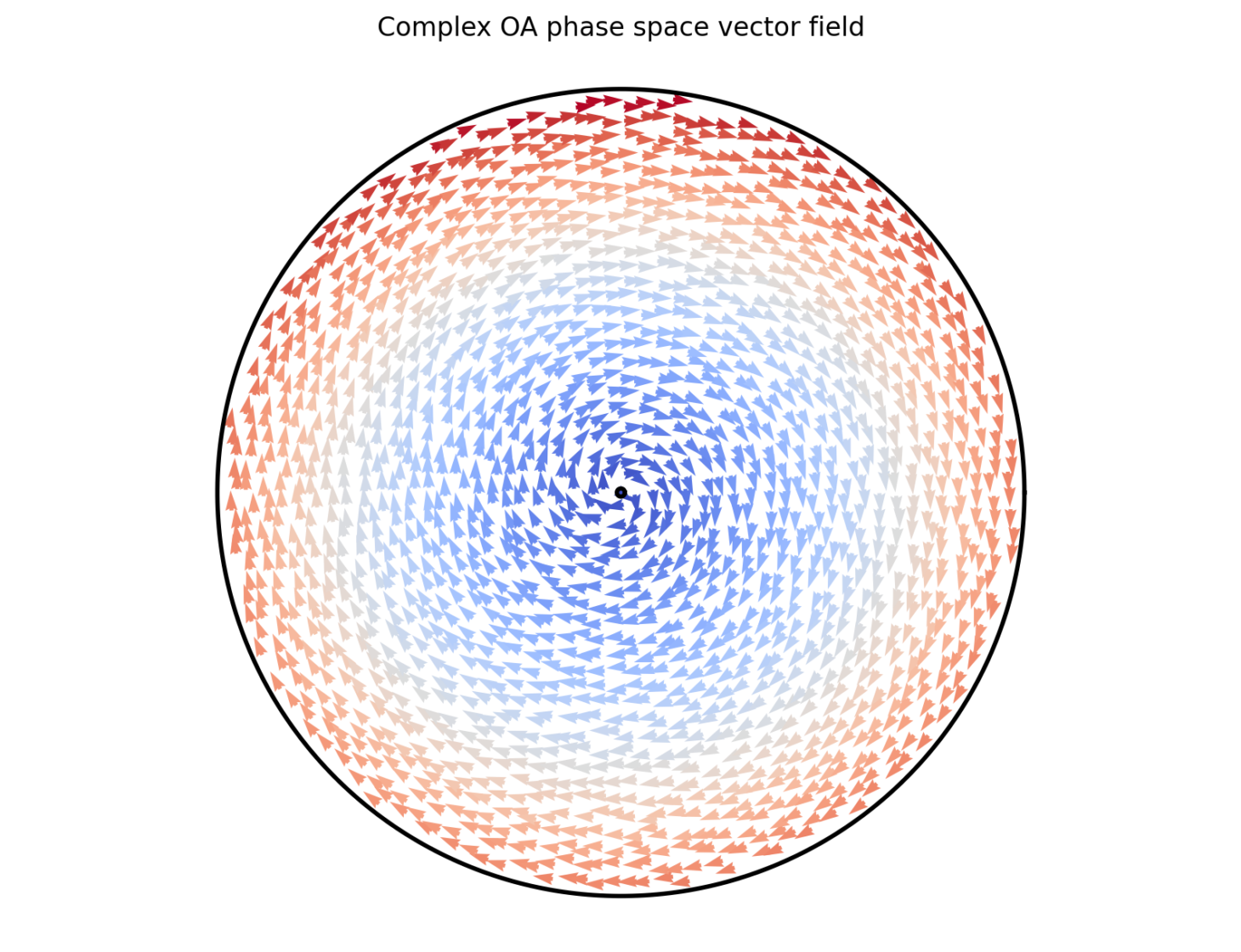}
  \subcaption{Complex phase portrait.}
  \label{fig:case_3_b}
\end{subfigure}%
\begin{subfigure}{.33\textwidth}
  \centering
  \includegraphics[trim={0cm 0cm 0cm 1.6cm},clip,width=\textwidth]{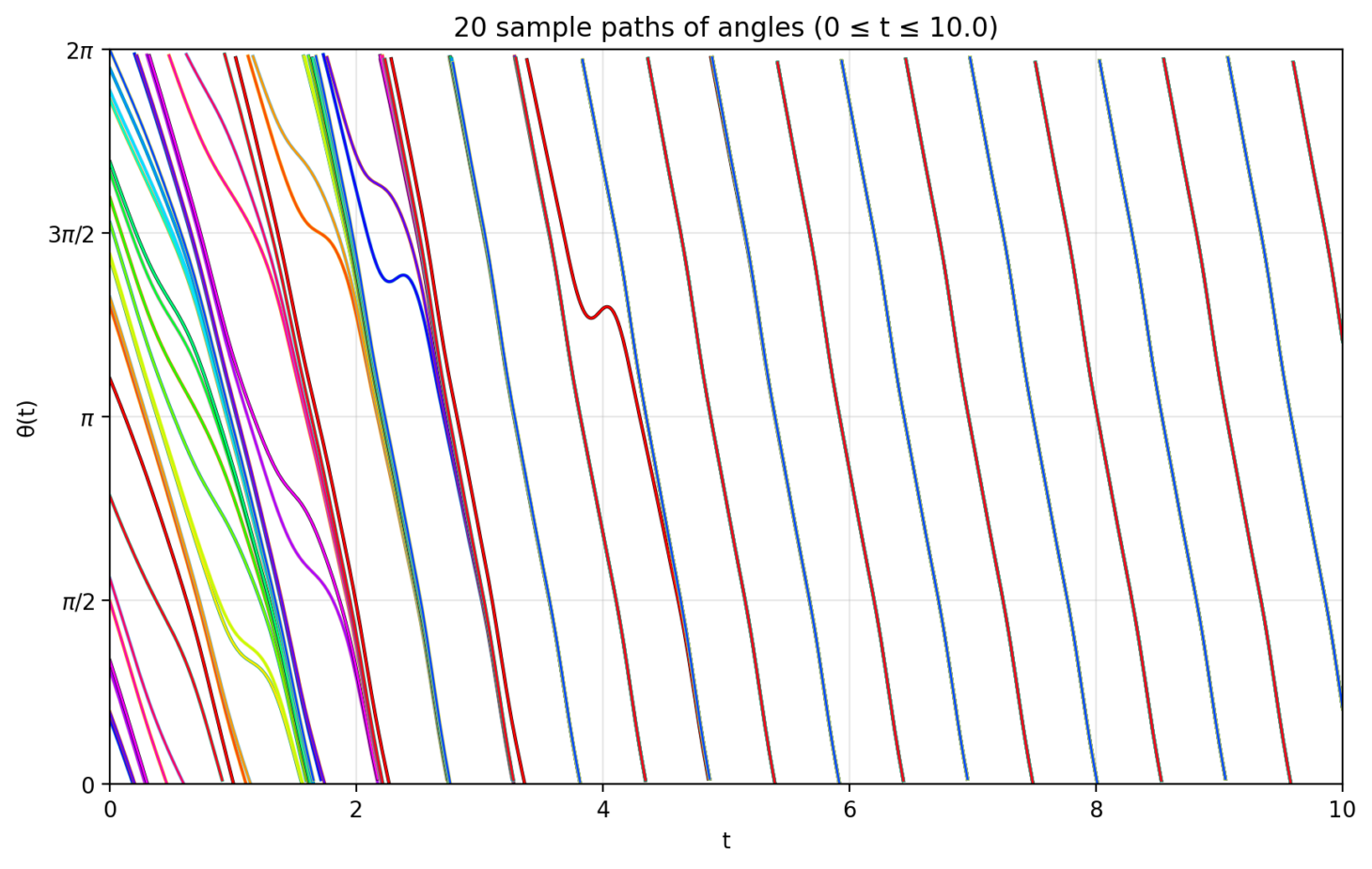}
  \subcaption{Particle dynamics.}
  \label{fig:case_3_c}
\end{subfigure}
\caption{Phase portrait of $(\rho, \phi)$-dynamics \eqref{eq:pc_dyn_case_3} and the trajectories of particle evolving under the transformer dynamics \eqref{eq:finite_sys} for $A =\left(\begin{smallmatrix} 1 & 0 \\ 0 & 2 \end{smallmatrix}\right)$, $V = \left(\begin{smallmatrix} 2 & 3 \\ -6 & 1 \end{smallmatrix}\right)$, which satisfies condition \eqref{eq:case3_cond}. }
\label{fig:case_3}
\end{figure}
Theorem \ref{thm:case_3} states that, when the matrices $A$ and $V$ are given by \eqref{eq:matrices_A_V_case_3}\footnote{The construction of the matrices $A$ and $V$ is explained in the proof of Theorem \ref{thm:case_3}.}, and their entries satisfy condition \eqref{eq:case3_cond}, the magnitude $\rho$ of the order parameter $\CR_2$ converges to $1$, while its argument $\phi$ fails to converge.
Through the relation between $\CR_2$ and the distribution $f$ of the mean-field particle $\Theta$ in \eqref{eq:mf_order_param}, this implies that, for large times, the support of $f$ is concentrated on two antipodal points.
However, these two support points continue to rotate, and hence do not converge to fixed locations.
Figure~\ref{fig:case_3} illustrates a numerical example in this regime.
In particular, the right panel shows the particle trajectories clustering around the trajectories of two antipodal points that continue to revolve around each other.

To the best of our knowledge, this is the first theoretical characterization of the long-time behavior of the transformer dynamics in a regime where both $A$ and $V$ are non-identity and non-commuting. 
The asymptotic behavior identified here is also new: the token embeddings asymptotically form two clusters, but the cluster locations continue to rotate and hence do not converge.

\begin{remark}[Connection to the practical behavior of LLMs]
We note that the clustering-without-convergence long-time behavior identified in Theorem \ref{thm:case_3} may not be merely an artifact of the mathematical model studied here. Indeed, the numerical results of \cite{blayney2026mechanistic} show that in Looped Transformers \cite{giannou2023looped,yang2024looped,geiping2026scaling}, where a trained LLM is recurrently applied, the output of the recurrent block follows a consistent cyclic trajectory.
This observation is qualitatively reminiscent of the clustering-without-convergence phenomenon identified in our analysis, and suggests interesting directions of future investigation in practical settings. 
\end{remark}

\begin{remark}\label{rem:case3prime}
The condition \eqref{eq:case3_cond} is somewhat complicated and not immediately transparent.
We therefore present here a simple family of matrices that falls into this parameter regime.
In particular, consider matrices $A$ and $V$ of the form,

\begin{equation}\label{eq:matrices_A_V_case3_special}
    A := \begin{pmatrix}
        1 & 0 \\
        0 & 1
    \end{pmatrix},  \qquad V := \begin{pmatrix}
        v_{11} & v_{12} \\
        -v_{12} & v_{11}
    \end{pmatrix}
\end{equation}
with $v_{11} > 0$ and $v_{12} \neq 0$. In this particular regime the matrices $A$ and $V$ commute, but we can more easily explain the mechanism behind the validity of Theorem \ref{thm:case_3}. Indeed, in this regime the coupled ODE system \eqref{eq:pc_dyn_general} simplifies to
\begin{equation}
\label{eq:pc_dyn_case_3_prime}
\begin{aligned}
\Drho = \frac{v_{11}}{2} (1 - \rho^2) \rho, \qquad \Dphi = \frac{v_{12}}{2} (\rho^2 + 3) .
\end{aligned}
\end{equation}
Since $v_{11} > 0$, $\rho \in (0,1]$, by Lemma \ref{lem:gamma-integral}, we know that $\rho(t) \rightarrow 1$ for all initial condition $\rho(0) \in (0,1]$. 
And since $|\Dphi(t)| > 0$ for $t \geq 0$, we infer that $|\phi|$ does not converge in $\BBT$ as $t \rightarrow \infty$. 
\end{remark}

\subsection{Case 4: Hidden Hamiltonian Dynamics}
\label{sec:th_case4}

Let $A$ and $V$ be of the form,
\begin{equation}\label{eq:matrices_A_V_case_4}
    A := \begin{pmatrix}
        1 & 0 \\
        0 & 1
    \end{pmatrix},  \qquad V := \begin{pmatrix}
        v_{11} & v_{12} \\
        -v_{12} & -v_{11}
    \end{pmatrix}
\end{equation}
with $v_{11}, v_{12} > 0$.  

In this case, the dynamics \eqref{eq:dyn_r2} of the order parameter $\CR_2$ and its corresponding polar-coordinate system \eqref{eq:pc_dyn_general} simplify to
\begin{equation}
\dot{\CR}_2 = \frac{v_{11}}{2} \left( 1 + |\CR_2|^2 - 2 (\CR_2)^2 \right) - \frac{iv_{12}}{2} \left( |\CR_2|^2 + 3 \right) \CR_2,
\end{equation}
and
\begin{equation}
\label{eq:pc_dyn_case_4}
\begin{aligned} 
        \dot \rho = \frac{v_{11}}{2} (1- \rho^2) \cos(\phi), \qquad        \dot \phi = -\frac{1}{2} \left( v_{12} (\rho^2 + 3) + v_{11} \left(3 \rho + \frac{1}{\rho} \right) \sin \phi \right) .
\end{aligned}
\end{equation}
Interestingly, the system~\eqref{eq:pc_dyn_case_4} turns out to be a Hamiltonian flow on the open cylinder $(0,1) \times \BBT^1$ after an appropriate time rescaling.  
More precisely, define the Hamiltonian
\[H(\rho, \phi) = -\frac{v_{11}\rho}{(1-\rho^2)^2}\sin(\phi) - \frac{v_{12}(\rho^2 + 1)}{2(1-\rho^2)^2} .
\]
Then, under the time change 
\begin{equation}
\frac{d\tau}{dt}=\frac{(1-\rho^2)^3}{2\rho},
\end{equation}
the dynamics \eqref{eq:pc_dyn_case_4} on the domain $(0,1) \times \BBT^1$ can be rewritten in Hamiltonian form as
\begin{equation}
\label{eq:hamiltonian_form}
\frac{d\rho}{d\tau} = - \partial_{\phi} H, \qquad \frac{d\phi}{d\tau} = \partial_{\rho} H .
\end{equation}
Moreover, this system exhibits a saddle-node bifurcation \cite{strogatz2024nonlinear}, indicating that even a small perturbation of $V$ may drastically alter the long-time behavior of the system.
The long-time clustering behavior of \eqref{eq:pc_dyn_case_4} is described in the following proposition.

\begin{theorem}\label{thm:case_4}
    Consider the dynamical system~\eqref{eq:pc_dyn_case_4}, and assume that $v_{11}, v_{12} > 0$.
    \begin{enumerate}
        \item If $v_{11} < v_{12}$, then for almost every initial condition $(\rho(0), \phi(0)) \in (0,1) \times \BBT^1$ the associated solution is bounded away from the boundary $\rho \in \{0,1\}$. Furthermore, the trajectory is periodic, and hence forms a closed orbit in $(0,1) \times \BBT^1$.
        \item If $v_{11} \geq v_{12}$, then for almost every initial condition $(\rho(0), \phi(0)) \in (0,1] \times \BBT^1$ the associated solution converges to a boundary equilibrium $(1, \phi_{\infty})$ as $t \to \infty$,  
        where $\phi_{\infty} := 2\pi - \arcsin(v_{12}/v_{11})$.
    \end{enumerate}
\end{theorem}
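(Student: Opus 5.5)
The plan is to exploit the conserved quantity $H$, working in the complex coordinate $z=\CR_2=\rho e^{i\phi}$ instead of polar coordinates. This removes the artificial singularity at $\rho=0$. Since $\rho\sin\phi=\mathrm{Im}\,z$ and $\rho^2=|z|^2$, we have
\[
H(z)=-\frac{v_{11}\,\mathrm{Im}\,z+\tfrac{v_{12}}{2}(|z|^2+1)}{(1-|z|^2)^2},
\]
which is smooth on the open disk, and the vector field of $\dot{\CR}_2$ is polynomial. The first step is to check $\tfrac{d}{dt}H(z(t))=0$ on the open disk. Away from $z=0$ this follows from \eqref{eq:hamiltonian_form}, because the time change is positive there; at $z=0$ it follows by continuity. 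Hence interior trajectories lie on level sets of $H$.

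Next, locate the equilibria. Interior equilibria require $\cos\phi=0$. The choice $\sin\phi=1$ is impossible, since both terms of $\dot\phi$ are then positive. With $\sin\phi=-1$ the condition becomes $v_{11}/v_{12}=g(\rho):=\rho(\rho^2+3)/(3\rho^2+1)$. Since $g'(\rho)=3(1-\rho^2)^2/(3\rho^2+1)^2>0$, $g(0)=0$ and $g(1)=1$, there is a unique interior equilibrium $(\rho_*,3\pi/2)$ if $v_{11}<v_{12}$, and none otherwise. On the invariant circle $\rho=1$ the flow is $\dot\phi=-2(v_{12}+v_{11}\sin\phi)$. For $v_{11}\ge v_{12}$ its zeros are $\phi_\infty=2\pi-\arcsin(v_{12}/v_{11})$ and $\psi=\pi+\arcsin(v_{12}/v_{11})$, which coincide at $3\pi/2$ when $v_{11}=v_{12}$.

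For part 1 ($v_{11}<v_{12}$), the numerator of $H$ at $|z|=1$ is $v_{11}\sin\phi+v_{12}\ge v_{12}-v_{11}>0$. Hence $H\to-\infty$ uniformly as $|z|\to1$, so the superlevel sets $\{H\ge c\}$ are compact subsets of the open disk. Every trajectory is therefore trapped in a compact set away from $\rho=1$. By the equilibrium analysis above, the only equilibrium of the flow in the open disk is the interior equilibrium $z_*=\rho_* e^{i3\pi/2}$, and it is an extremum of $H$, hence a center. Any other trajectory lies on a compact level curve of $H$ that contains no equilibrium and is traversed with speed bounded below, so by a standard argument (or Poincaré--Bendixson plus conservation) it is a closed orbit. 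To stay away from $\rho=0$, I will exclude the single level set $\{H=H(0)=-v_{12}/2\}$, which is a curve and has measure zero. All other orbits are compact and avoid $0$, so they are bounded away from $\rho=0$ as well. This gives the almost-every statement.

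For part 2 ($v_{11}\ge v_{12}$) there is no interior equilibrium, so by index theory the open disk contains no periodic orbit. I will apply Poincaré--Bendixson on the closed disk. An $\omega$-limit point in the interior would, by conservation of $H$, force the orbit to be a closed level curve without equilibria, i.e.\ a periodic orbit, which is excluded. Hence the $\omega$-limit set lies on $\rho=1$.

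Suppose first $v_{11}>v_{12}$. Linearizing at the boundary equilibria, $\dot\rho\approx v_{11}(1-\rho)\cos\phi$. Since $\cos\phi_\infty>0$ and the boundary derivative $-2v_{11}\cos\phi_\infty$ is negative, $\phi_\infty$ is a hyperbolic sink. Since $\cos\psi<0$, $\psi$ is a source, so only the constant solution converges to it. The boundary circle consists of two heteroclinic arcs from $\psi$ to $\phi_\infty$ and is not a cycle. Therefore the $\omega$-limit set of any non-equilibrium initial condition must be $\{\phi_\infty\}$. Boundary initial data are handled by the one-dimensional flow.

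The main obstacle is the degenerate case $v_{11}=v_{12}$, where a saddle-node sits at $(1,3\pi/2)$ and linearization fails. On the boundary, $\dot\phi=-2v_{11}(1+\sin\phi)\le0$, so every boundary point flows to $3\pi/2=\phi_\infty$. For interior points, the $\omega$-limit is a compact invariant subset of the circle containing an equilibrium, which again forces $\{3\pi/2\}$. To be safe, I would supplement this with a local analysis near $(1,3\pi/2)$ using $H$: its numerator vanishes only at that point on the circle, so level sets of $H$ can reach the boundary only there.
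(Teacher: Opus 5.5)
Your proof is correct, and it takes a different route from the paper.

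\textbf{The paper's route.} The paper stays on the open cylinder $(0,1)\times\BBT^1$ and writes each level set as two branches $\sin\phi=W(\rho)$. A monotonicity analysis of $W$ (Lemma~\ref{lem:case4_H_level_set}) shows that every generic level set is connected. For $v_{11}<v_{12}$ it then gets compactness from $H\to-\infty$ at $\rho=1$ and from the fact that only $\{H=-v_{12}/2\}$ reaches $\rho=0$. For $v_{11}\ge v_{12}$ it gets non-compactness, since closures of level sets meet $\rho=1$, and concludes with Poincar\'e--Bendixson and linearization at $E_1,E_2$.

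\textbf{What your route buys.} Passing to $z=\CR_2$ removes the coordinate singularity at the origin: $H(0)=-v_{12}/2$, $\nabla H(0)\neq 0$, and $\dot z(0)=v_{11}/2\neq 0$. Together with $(\dot\rho,\dot\phi)=\mu^{-1}J\nabla H$ for $\rho>0$, this makes the critical points of $H$ in the open disk exactly the equilibria. Several things follow.
\begin{itemize}
\item You get regular level sets with no analysis of $W$.
\item You do not need connectedness: each component of a compact regular level set is a circle carrying a nonvanishing flow.
\item Index theory in the simply connected disk excludes periodic orbits when $v_{11}\ge v_{12}$. This is not available on the annulus, where orbits can wrap around the cylinder.
\item For $v_{11}\ge v_{12}$ you obtain convergence for every interior initial condition of the $z$-system, not just almost every one.
\end{itemize}
In exchange, the paper's route gives finer geometry: connectedness of level sets and the libration/rotation split across $\{H=-v_{12}/2\}$.

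\textbf{One step to tighten.} In the degenerate case $v_{11}=v_{12}$, the sentence ``a compact invariant subset of the circle containing an equilibrium forces $\{3\pi/2\}$'' does not suffice. The whole circle, made of the equilibrium plus its homoclinic arc, is also such a set. Your ``supplement'' is therefore required, not optional. Along $\{H=c\}$ the numerator equals $-c(1-|z|^2)^2\to 0$, so $\omega(z_0)\subseteq\overline{\{H=c\}}\cap\partial D$ can only contain points where $v_{11}\sin\phi+v_{12}=0$. This is the same fact the paper asserts: closures of level sets meet $\rho=1$ precisely at $E_1,E_2$.

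The same observation also replaces the informal ``not a cycle'' when $v_{11}>v_{12}$:
\begin{itemize}
\item it gives $\omega(z_0)\subseteq\{E_1,E_2\}$;
\item connectedness of $\omega(z_0)$ forces a single point;
\item the hyperbolic source $E_1$ cannot lie in the $\omega$-limit set of any other orbit.
\end{itemize}
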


\if{false}{
\begin{proposition} \label{prop:OA_cycle_old}
    Let $a,b > 0$.  The dynamical system~\eqref{eq:pc_dyn_case_4} exhibits a bifurcation of a nonlinear stable center to a stable-unstable fixed point pair,
    \begin{itemize}
        \item When $a < b$, there is a nonlinear center at $\left(\rho^*, \frac{3\pi}{2}\right)$, with $\rho^*$ the unique solution to \begin{equation}\label{eq:OA_cycle_constraint}
            -b \rho^3 + 3a \rho^2 - 3b\rho + a = 0, \quad \rho \in (0,1).
        \end{equation}
        Furthermore, almost all trajectories form closed periodic orbits.    If $E(\rho_0, \phi_0) > -\frac{b}{2}$ and $(\rho_0, \phi_0) \not = (\rho^*, \frac{3\pi}{2})$, the trajectory of $(\rho_0, \phi_0)$ is a libration about the fixed point $(\rho, \frac{3\pi}{2})$. If $E(\rho_0, \phi_0) < -\frac{b}{2}$, the trajectory of $(\rho_0, \phi_0)$ is a rotation around the cylinder.  If $E(\rho_0, \phi_0) = -\frac{b}{2}$, the trajectory of $(\rho_0, \phi_0)$ is heteroclinic terminating at $(0, \pi)$.
        \SX{Based on my understanding from the proof, the different behaviors of the dynamics when $E(\rho_0, \phi_0) > -b/2$ and $E(\rho_0, \phi_0) < -b/2$ are not justified explicitly. Did I miss something here?}
        \item When $a = b$, all initial conditions satisfying $E(\rho_0, \phi_0) \not= -\frac{b}{2}$  converge to  the saddle point $\left( 1, \frac{3\pi}{2}\right)$.
        \SX{When $a = b$, $(1, \frac{3\pi}{2})$ seems not be a saddle point as the Jacobian at this point is a zero matrix. So it is nonhyperbolic.
        It seems that this particular fixed is called semistable fixed point in the literature.}
        \item When $a > b$, there is a stable-unstable fixed point pair $\{(1, 2\pi - r), (1, \pi + r)\}$ with $r = \arcsin \left(\frac{b}{a}\right)$.  All initial conditions satisfying $E(\rho_0, \phi_0) \not= -\frac{b}{2}$ converge to $(1, 2\pi - r)$.
    \end{itemize}
\end{proposition}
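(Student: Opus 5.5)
The plan is to work throughout with the Hamiltonian structure \eqref{eq:hamiltonian_form}, writing $a=v_{11}$, $b=v_{12}$ and taking $E:=H$. The key preliminary observation is that $H(\rho,\phi)\to -b/2$ as $\rho\to0$ uniformly in $\phi$. More precisely, $H=-\tfrac b2-a\rho\sin\phi+O(\rho^2)$, so the level $-b/2$ is exactly the one that can touch the degenerate circle $\rho=0$.

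\textbf{Interior equilibria.} The equation $\dot\rho=0$ with $\rho<1$ forces $\cos\phi=0$. If $\sin\phi=1$, then $\dot\phi<0$, so no equilibrium lies on $\phi=\pi/2$. If $\sin\phi=-1$, then $\dot\phi=0$ reduces to the cubic \eqref{eq:OA_cycle_constraint}. To show it has a unique root, I use the identities $1+3\rho^2=\tfrac12((1+\rho)^3+(1-\rho)^3)$ and $\rho^3+3\rho=\tfrac12((1+\rho)^3-(1-\rho)^3)$. With them the cubic becomes $\bigl(\tfrac{1+\rho}{1-\rho}\bigr)^3=\tfrac{b+a}{b-a}$. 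The left side is strictly increasing from $1$ to $\infty$ on $[0,1)$. Hence there is a unique root $\rho^*\in(0,1)$ when $a<b$, and no interior equilibrium when $a\ge b$. When $a<b$, $H$ is conserved and $(\rho^*,3\pi/2)$ is a nondegenerate critical point of $H$; I will check the Hessian there, expecting a strict local maximum. A nondegenerate extremum of a conserved function is a nonlinear center.

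\textbf{Case $a<b$.} Near $\rho=1$ we have $H\approx -(a\sin\phi+b)/(1-\rho^2)^2$. Since $a\sin\phi+b\ge b-a>0$, this gives $H\to-\infty$. Together with the expansion at $\rho=0$, every level set $\{H=E\}$ with $E\neq -b/2$ is compact in $(0,1)\times\BBT^1$. Each component is therefore a closed regular curve, which is a periodic orbit, or the center itself. To decide between librations and rotations I use two test lines.

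\begin{itemize}
\item On $\phi=\pi/2$, we have $H(\rho,\pi/2)<-b/2$, because $a\rho+\tfrac b2(\rho^2+1)>\tfrac b2(1-\rho^2)^2$. A non-contractible loop must cross this line, so every orbit with $E>-b/2$ is contractible. By index theory it encloses the unique equilibrium, so it is a libration.
\item On the ray $\phi=3\pi/2$, $\rho\in(0,\rho^*)$, I will show that $H$ increases from $-b/2$ to $H(\rho^*,3\pi/2)$. A contractible loop around the center must cross this segment, which is impossible when $E<-b/2$. Such orbits are therefore rotations.
\end{itemize}

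For $E=-b/2$, the expansion $H+\tfrac b2\approx-a\rho\sin\phi$ shows that the level set reaches $\rho=0$ only at $\phi\in\{0,\pi\}$. Since $\dot\rho\approx\tfrac a2\cos\phi$ there, the orbit leaves $(0,0)$ and terminates at $(0,\pi)$, so it is heteroclinic. This level set has measure zero, which gives the ``almost all'' statement.

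\textbf{Cases $a=b$ and $a>b$.} On the circle $\rho=1$ the flow is $\dot\phi=-2(b+a\sin\phi)$. For $a>b$ this has equilibria where $\sin\phi=-b/a$: the point $2\pi-r$ is attracting and $\pi+r$ is repelling. For $a=b$ they merge into the semistable point $3\pi/2$. Since there is no interior equilibrium, there are no contractible periodic orbits. It remains to exclude non-contractible periodic orbits, and then Poincaré--Bendixson on the compactified cylinder shows that every $\omega$-limit lies on $\rho=1$, hence equals the sink for $a>b$ or the point $(1,3\pi/2)$ for $a=b$.

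The hardest step is excluding non-contractible periodic orbits. My first attempt will use the conservation of $H$. On such an orbit, integrating $\tfrac{d}{dt}\operatorname{artanh}\rho=\tfrac a2\cos\phi$ over one period gives zero. Combining this with the fact that $\dot\phi$ keeps one sign along the orbit, I will try to derive a contradiction. If that fails, I will compare $H$ on the line $\phi=\pi/2$ with its values near the attracting branch, to show that any compact level set must be contractible.
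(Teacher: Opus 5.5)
Your treatment of $a<b$ is correct, and in one respect it goes beyond the paper. The two test lines ($H<-b/2$ on $\phi=\pi/2$, and $H$ increasing from $-b/2$ to $c^*$ along $\{\phi=3\pi/2,\ 0<\rho<\rho^*\}$) actually prove the libration/rotation dichotomy, which the paper only asserts in a remark.

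\textbf{First gap.} It lies in the regime $a\ge b$, at the step you yourself flag as hardest and leave open. Your first idea cannot close it. Along any closed orbit, $\oint\tfrac a2\cos\phi\,dt=\oint d(\operatorname{artanh}\rho)=0$ holds automatically, since it merely restates that $\rho$ returns to its initial value. The one-signedness of $\dot\phi$ is neither established nor enough to produce a contradiction.

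The paper handles this step through Lemma~\ref{lem:case4_H_level_set}. Each level set $\{H=c\}$ with $c\neq-b/2,c^*$ is a single connected curve, and for $a\ge b$ its closure meets $\{\rho=1\}$. Hence it has no compact component and carries no periodic orbit.

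Your second idea does work once made precise. For $a\ge b$ one has $H(\rho,3\pi/2)+\tfrac b2=\rho\bigl(a-\tfrac b2\rho(3-\rho^2)\bigr)/(1-\rho^2)^2>0$ on $(0,1)$, because $\rho(3-\rho^2)<2$. Meanwhile $H<-b/2$ on $\phi=\pi/2$, and a non-contractible closed level curve would have to cross both rays.

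Simplest of all is to work in the variable $\CR_2$. There the field is polynomial and $\dot{\CR}_2=a/2\neq0$ at the origin, so any periodic orbit in the open disk bounds a disk without equilibria, contradicting index theory. This removes contractible and non-contractible orbits at once. It is also the right setting for Poincar\'e--Bendixson, since $\{\rho=0\}$ is a singular circle of the polar chart rather than part of a ``compactified cylinder''. It further shows that the $E=-b/2$ orbit simply passes through the regular point $0$ in finite time.

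\textbf{Second gap.} The conclusion ``every $\omega$-limit lies on $\rho=1$, hence equals $(1,3\pi/2)$ for $a=b$'' does not follow. Poincar\'e--Bendixson only yields equilibria plus connecting orbits. For $a=b$ the circle $\{\rho=1\}$ is itself such a set: the semistable point together with the homoclinic orbit of $\dot\phi=-2a(1+\sin\phi)\le0$. Interior connecting orbits are not excluded by that theorem either.

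You need the first integral here. On $\{H=E\}$ one has $\sin\phi=W(\rho)=-\bigl(E(1-\rho^2)^2+\tfrac a2(1+\rho^2)\bigr)/(a\rho)\to-1$ as $\rho\to1$. So the closure of the level set meets $\{\rho=1\}$ only at $(1,3\pi/2)$. For $E\neq-b/2$ the level set also stays away from $\rho=0$. The orbit therefore runs monotonically along a non-compact level curve without equilibria, and must converge to that point.

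For $a>b$ you should at least note that the Jacobian is triangular at $\rho=1$. This makes $(1,\pi+r)$ a hyperbolic source and $(1,2\pi-r)$ a hyperbolic sink. Together with the absence of periodic orbits, this forces the $\omega$-limit of every point other than $(1,\pi+r)$ to be the sink. Without that observation, ``lies on $\rho=1$'' does not single it out.
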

}\fi

\begin{figure}[!htb]
\centering
\begin{subfigure}{.33\textwidth}
  \centering
  \includegraphics[trim={0cm 0cm 0cm 1.6cm},clip,width=\textwidth]{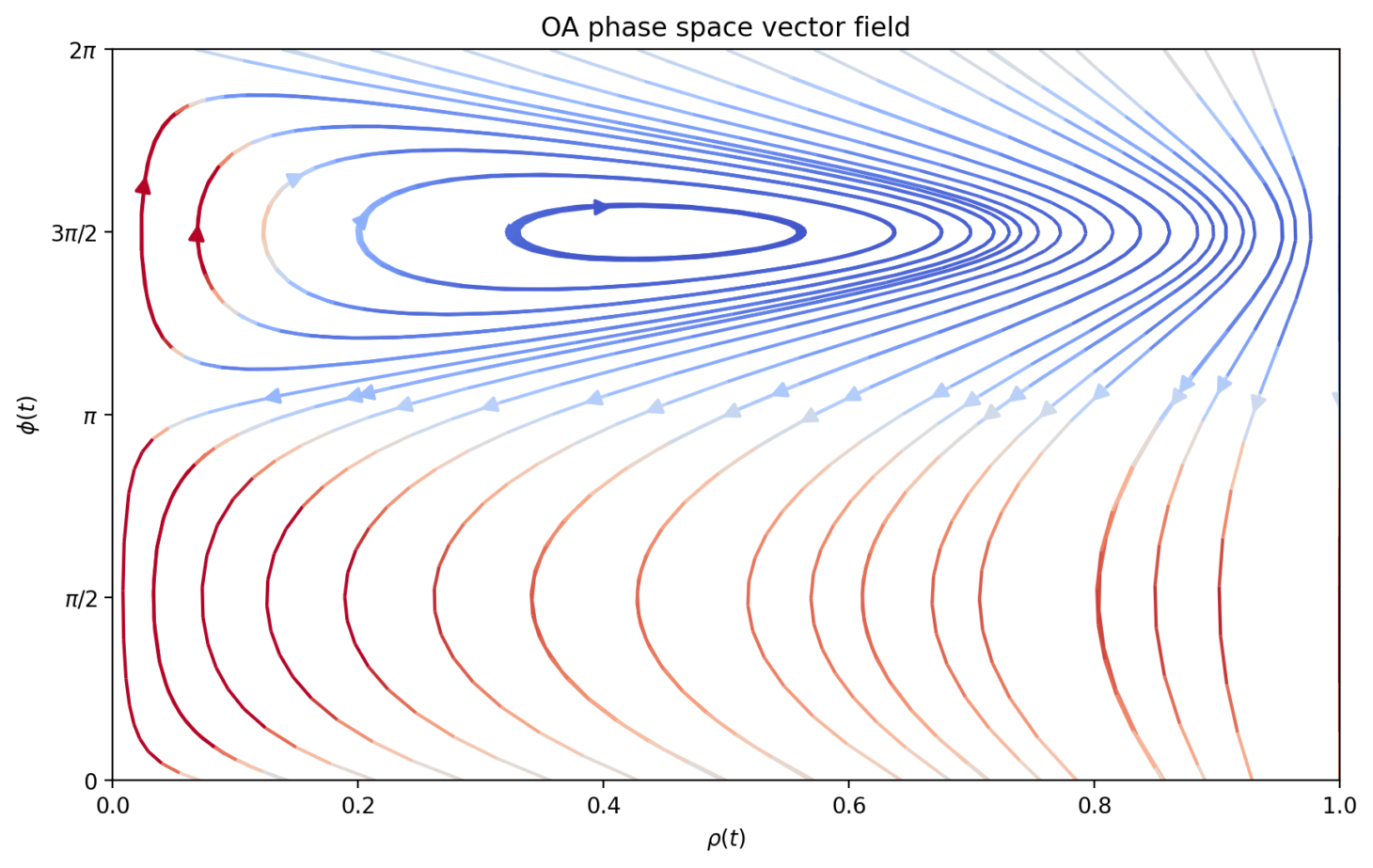}
  \label{fig:case_4_1_a_phase}
\end{subfigure}%
\begin{subfigure}{.33\textwidth}
  \centering
  \includegraphics[trim={0cm 0cm 0cm 1.7cm},clip,width=\textwidth]{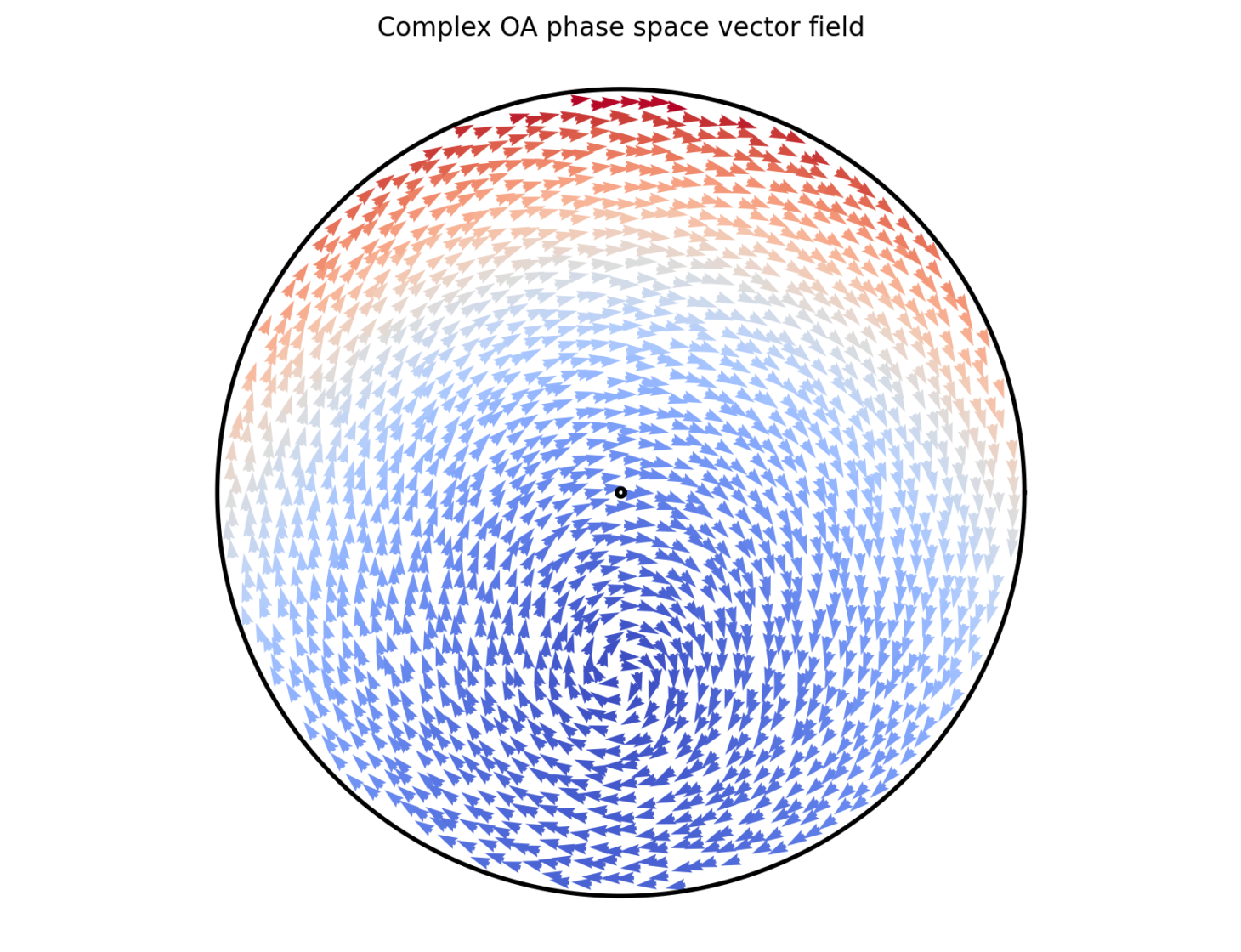}
  \label{fig:case_4_1_a_cphase}
\end{subfigure}%
\begin{subfigure}{.33\textwidth}
  \centering
  \includegraphics[trim={0cm 0cm 0cm 1.6cm},clip,width=\textwidth]{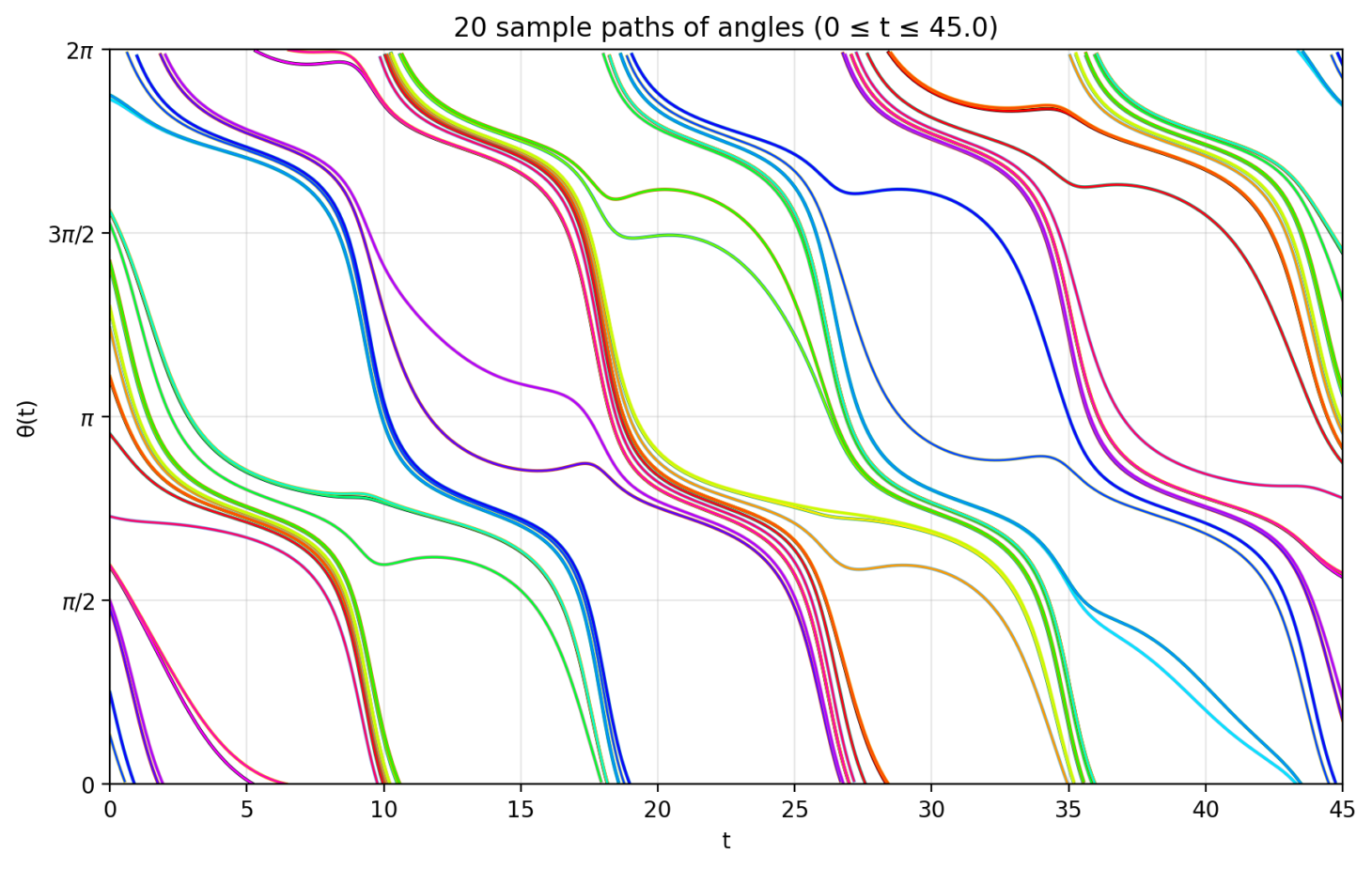}
  \label{fig:case_4_1_a_particle}
\end{subfigure}%

\medskip
\begin{subfigure}{.33\textwidth}
  \centering
  \includegraphics[trim={0cm 0cm 0cm 1.6cm},clip,width=\textwidth]{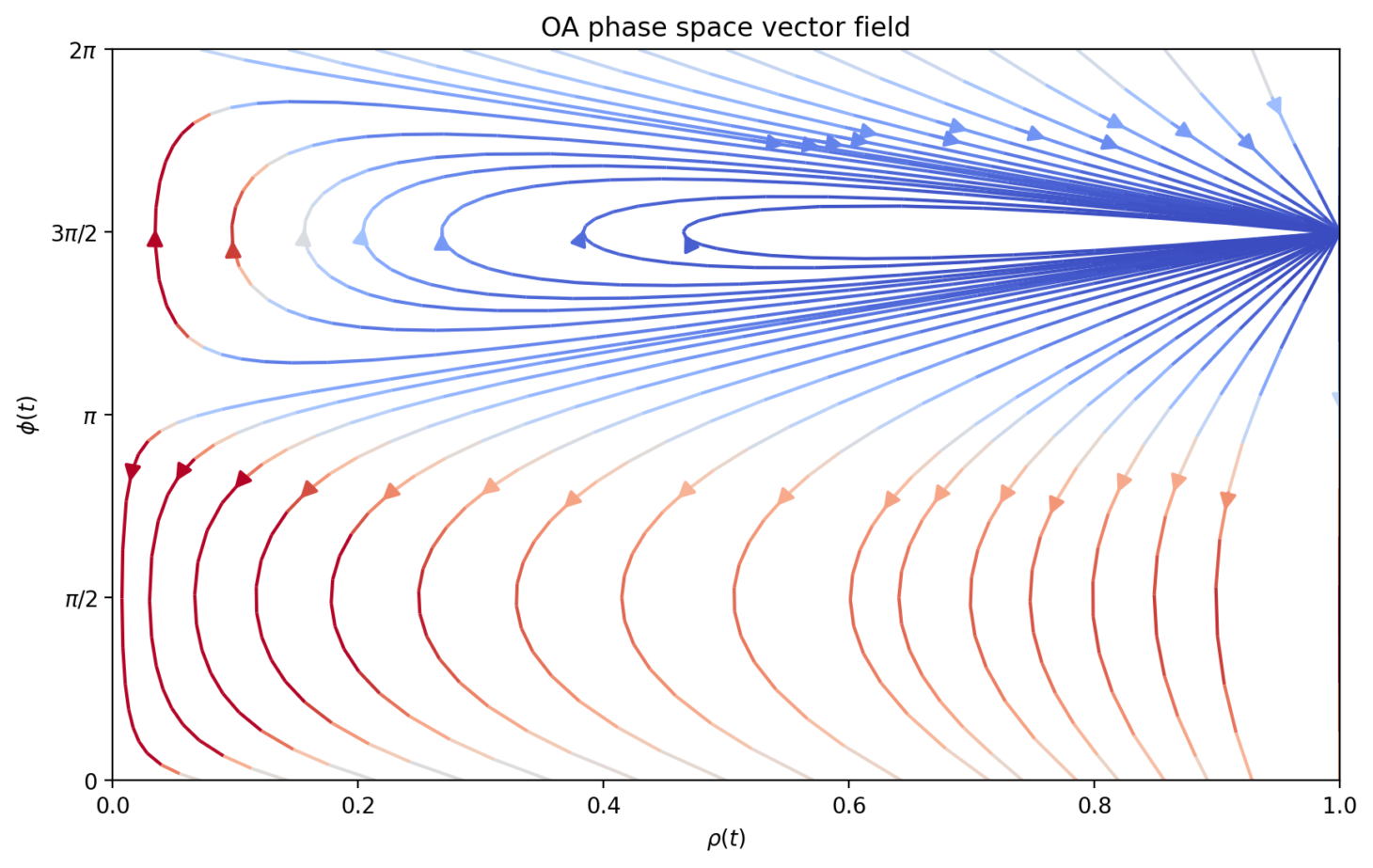}
  \label{fig:case_4_1_b_phase}
\end{subfigure}%
\begin{subfigure}{.33\textwidth}
  \centering
  \includegraphics[trim={0cm 0cm 0cm 1.7cm},clip,width=\textwidth]{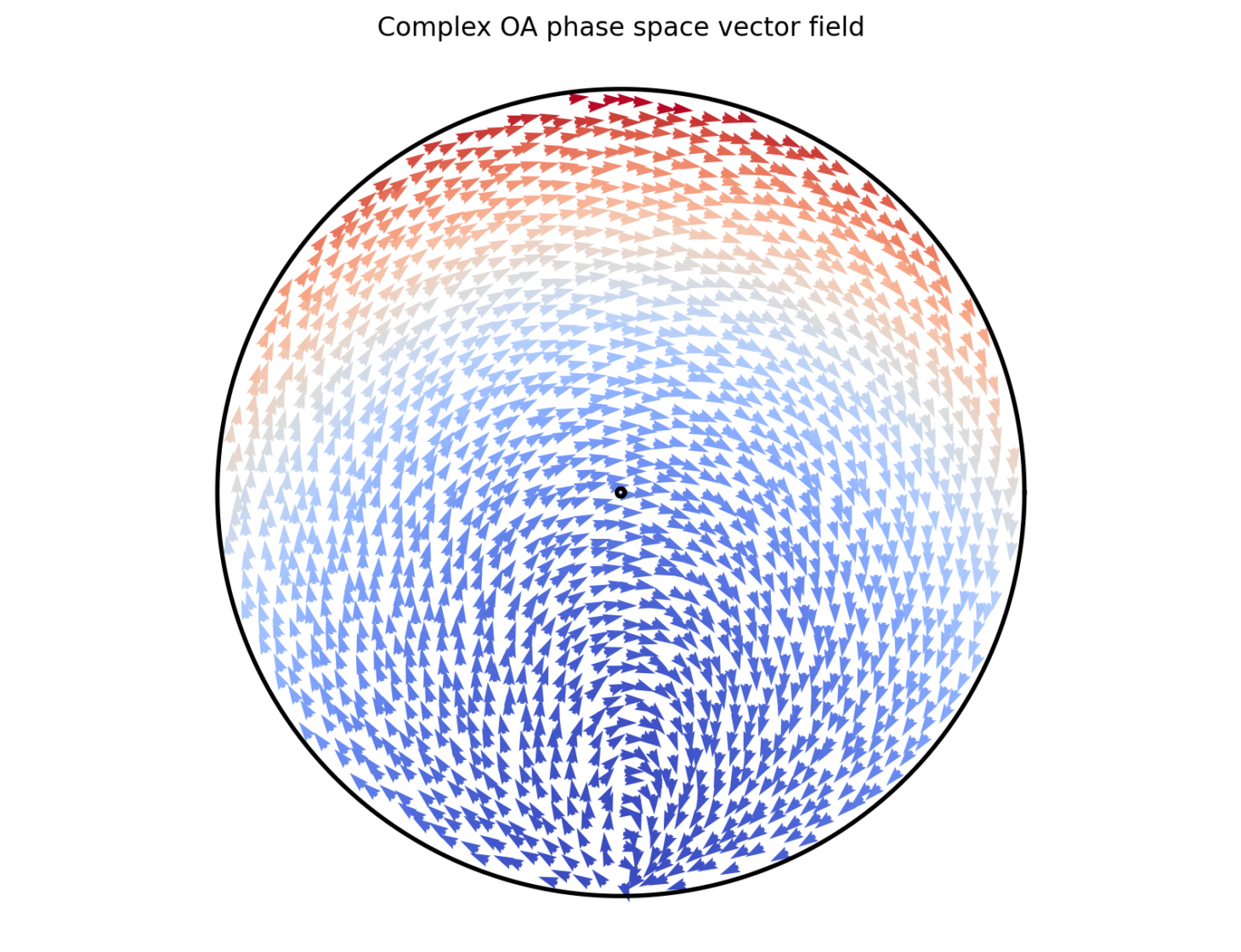}
  \label{fig:case_4_1_b_cphase}
\end{subfigure}%
\begin{subfigure}{.33\textwidth}
  \centering
  \includegraphics[trim={0cm 0cm 0cm 1.6cm},clip,width=\textwidth]{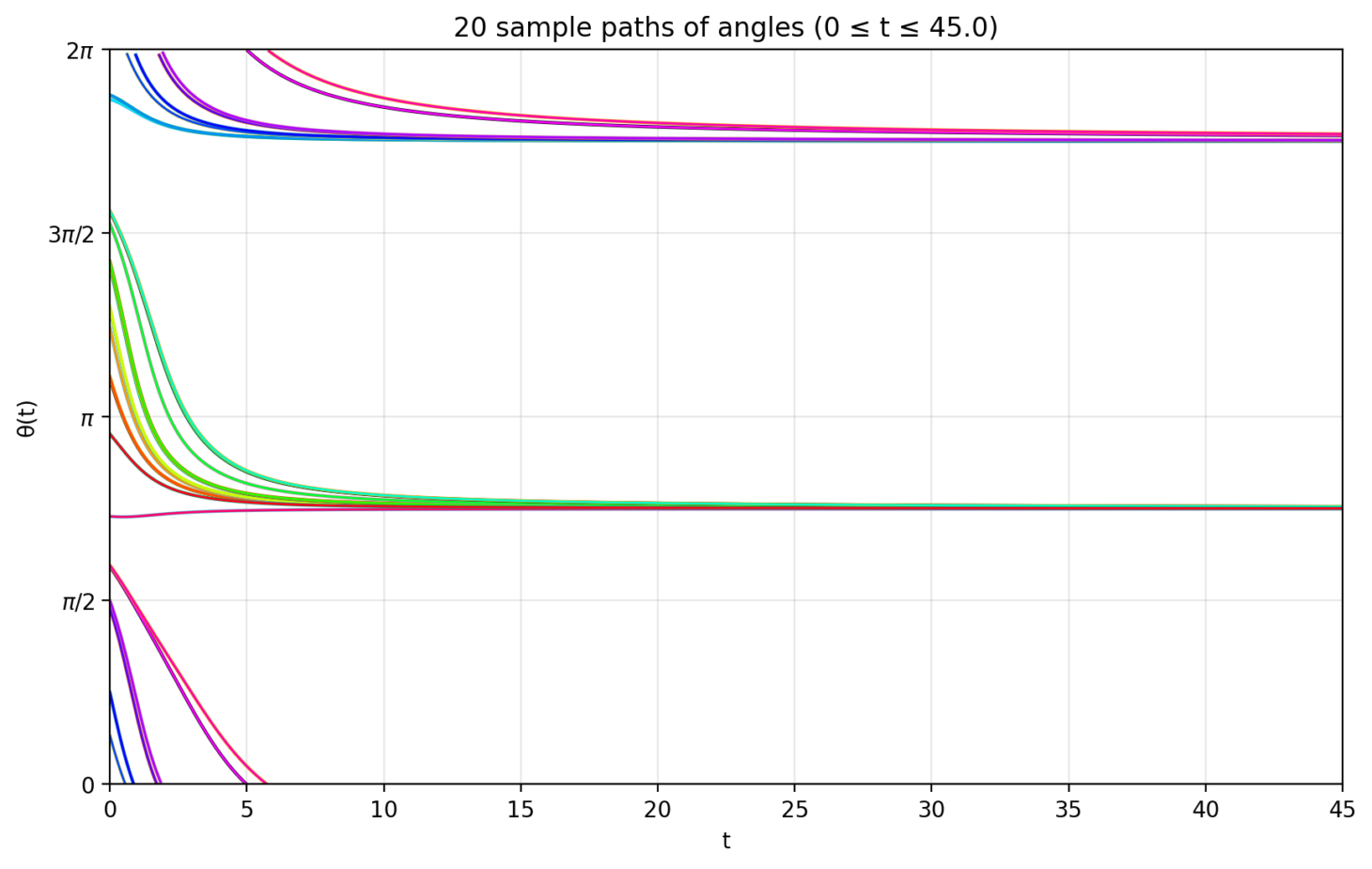}
  \label{fig:case_4_1_b_particle}
\end{subfigure}

\medskip
\begin{subfigure}{.33\textwidth}
  \centering
  \includegraphics[trim={0cm 0cm 0cm 1.6cm},clip,width=\textwidth]{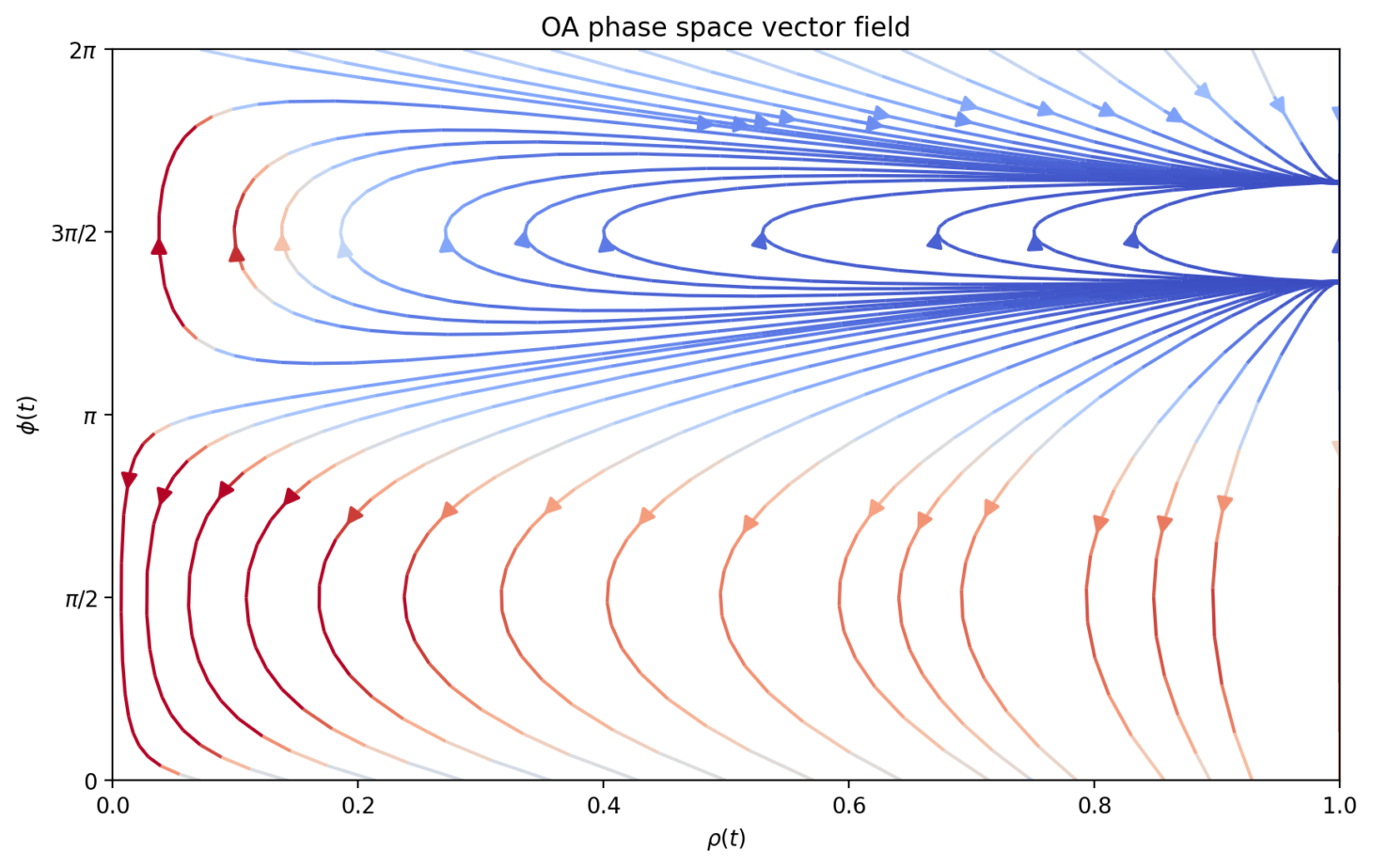}
  \label{fig:case_4_1_c_phase}
\end{subfigure}
\begin{subfigure}{.33\textwidth}
  \centering
  \includegraphics[trim={0cm 0cm 0cm 1.7cm},clip,width=\textwidth]{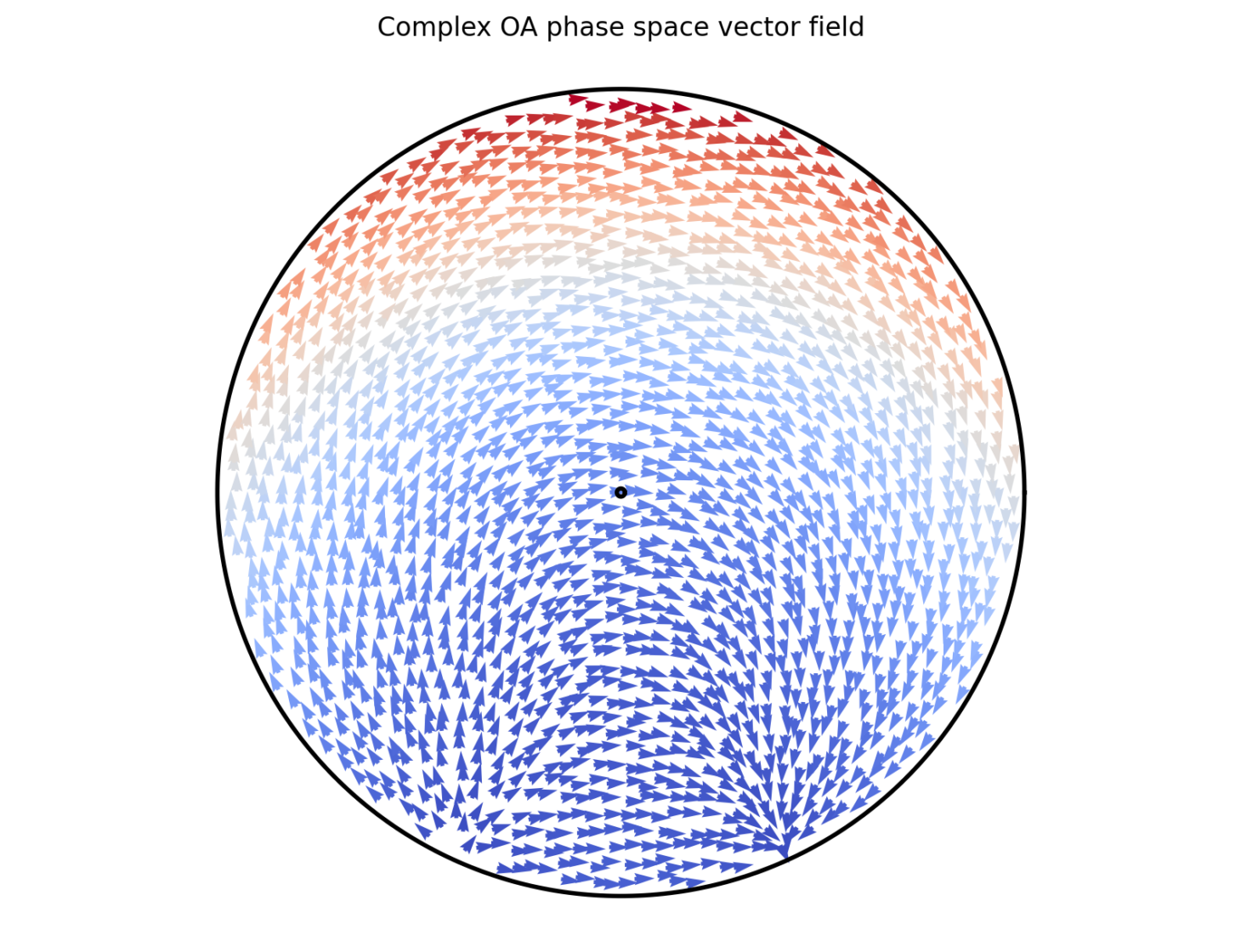}
  \label{fig:case_4_1_c_cphase}
\end{subfigure}%
\begin{subfigure}{.33\textwidth}
  \centering
  \includegraphics[trim={0cm 0cm 0cm 1.6cm},clip,width=\textwidth]{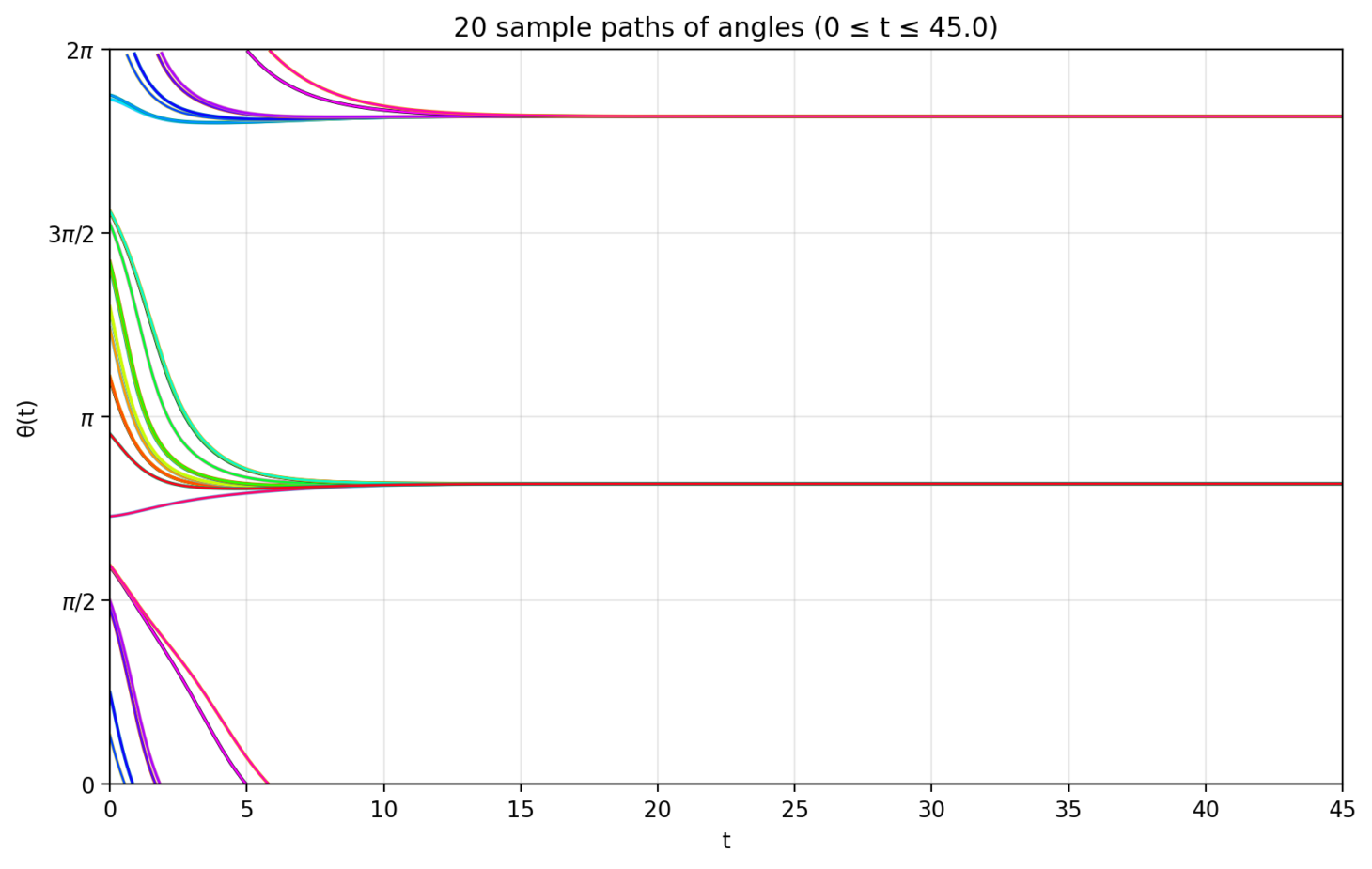}
  \label{fig:case_4_1_c_particle}
\end{subfigure}%
\caption{Figures showing the phase portraits and bifurcation phenomena in the dynamics \eqref{eq:finite_sys} with $A = I$ and $V = \left(\begin{smallmatrix} v_{11} & 1 \\ -1 & -v_{11} \end{smallmatrix}\right)$.
\textbf{Top:} $v_{11} = 0.9$. \textbf{Middle:} $v_{11} = 1.0$. \textbf{Bottom:} $v_{11} = 1.1$. The first column shows the cartesian phase portraits, the middle column shows the complex portraits, and the final column the particle trajectories.  Note that $T=45$ in the trajectories in order to highlight cyclic and slow convergence behavior. Moving from top to bottom, the panels illustrate the transition from a stable limit cycle, to a semistable fixed point, and then to a stable-unstable pair of equilibria. 
Note the slow convergence in the middle panel at the bifurcation point.}
\label{fig:case_4_1}
\end{figure}

Theorem \ref{thm:case_4} states that, when $A = I$ and the matrix $V$ is given by \eqref{eq:matrices_A_V_case_4}, the system \eqref{eq:pc_dyn_case_4} undergoes a saddle-node bifurcation and exhibits qualitatively distinct long-term behavior across different parameter regimes.
In particular, when $v_{11} < v_{12}$, almost any solution $(\rho, \phi)$ to \eqref{eq:pc_dyn_case_4} is periodic and traces out a closed orbit.
Since the distribution $g$ of the double-angle variable $\Xi = 2\Theta$ is always wrapped Cauchy and is completely determined by the order parameter $\CR_2$, it follows that $g$ is likewise periodic and therefore does not converge to a stationary limit.
Moreover, through the relationship between $\CR_2$ and the distribution $f$ of the original particle $\Theta$, this also implies that $f$ continues to oscillate in time, possibly aperiodically.
By contrast, when $v_{11} \geq v_{12}$, $\CR_2(t)$ converges to a boundary equilibrium, which in turn implies that $f$ converges to a stationary distribution supported on two antipodal points.
This bifurcation behavior is illustrated in Figure~\ref{fig:case_4_1}.

To the best of our knowledge, this is the first theoretical characterization of bifurcation phenomena in the transformer dynamics.
This result, together with Theorem \ref{thm:case_3}, goes beyond the clustering and synchronization behaviors that have been the primary focus of the existing literature, and highlights the richness of dynamical behaviors that can already arise in linear transformers in dimension $d=2$.
It also suggests that, for certain choices of the matrices $A$ and $V$, the transformer output may be sensitive even to small perturbations of these parameters.
This observation may be particularly relevant in the context of transformer model quantization \cite{xiao2023smoothquant,hooper2024kvquant}.

\begin{remark}
We note that in the case $v_{11} \geq v_{12}$ we prove convergence to a fixed point despite the fact that the system can be described in Hamiltonian form~\eqref{eq:hamiltonian_form}.  This is not a contradiction, as the Hamiltonian description is only valid on the \textit{open} cylinder $(0,1) \times \BBT^1$, whereas the full dynamics are well defined at $\rho = 1$.  It turns out that the energy degenerates in a convenient way in the limit $\rho \to 1$, where all level sets agree on the same phase $\phi_\infty$.
\end{remark}

\begin{remark}
The proof in Appendix~\ref{proof:case_4} provides a more detailed description of the cyclic behavior in the regime $v_{11} < v_{12}$.
In particular, we characterize the regions where dynamics flow \textit{around} the cylinder (so called `rotations') versus the regions where the dynamics flow in a closed region on the cylinder (so called `librations').  
These can been seen in the top left panel of Figure~\ref{fig:case_4_1}, where rotations are observed as flow lines completing a cycle inside the plot, and librations are flow lines leaving the bottom of the plot and wrapping around to the top. These two behaviors are separated by a special Hamiltonian level set $\{H = -v_{12}/2 \}$.
Level sets with values above this separatrix correspond to rotations, while those below correspond to librations; see Figure \ref{subfig:H_level_set_a_less_than_b} for an illustration.
\end{remark}

\begin{remark}
In typical dynamical systems, slow dynamics occur in parameter regimes around a bifurcation. We have empirically observed such a bottleneck in our system. In particular, we have observed that when $v_{11} \approx v_{12}$ and $v_{11} < v_{12}$ the system spends most of its time close to $(1, \phi_\infty)$ (this is the point where all flow lines coalesce in the middle left figure).
We view this phenomenon as an undesirable form of \textit{metastability}, which was studied in a different context for the self-attention dynamics~\cite{geshkovski2023mathematical}. Generalizing and making this observation rigorous is an interesting question that we leave for future work.
\end{remark}

\section{Long-Time Behavior Beyond the Ott--Antonsen Ansatz: Stability Analysis}\label{sec:stability}
In Section~\ref{sec:case_study}, we analyzed the long-time behavior of order parameter $\CR_2$ under the Ott--Antonsen ansatz, thereby characterizing the corresponding behavior of the particle dynamics \eqref{eq:mf_sys} for different choices of the matrices $A$ and $V$.
However, this analysis relies on the assumption that the double-angle particle $\Xi = 2\Theta$ is initialized within the wrapped Cauchy family \eqref{eq:WC_dis}.  
In this section, we go beyond the OA ansatz by studying the long-time behavior of $\CR_2$ in the WS system \eqref{eq:mf_WS_param}.

More specifically, recall that in the general WS system \eqref{eq:mf_WS_param}, the order parameter $\CR_2$ depends on both WS variables $\alpha$ and $\eta$ through \eqref{eq:R2_push_foward}, which makes its long-time behavior difficult to analyze directly.
On the other hand, as discussed in Section \ref{subsec:WS}, whenever $|\alpha| \rightarrow 1$ we also have $\CR_2 = \alpha$ and thus $|\CR_2| \rightarrow 1$, corresponding to the full synchronization of the particle dynamics.
Crucially, this conclusion holds \emph{independently} of the OA ansatz. 

Motivated by this observation, in Section \ref{subsec:structural-stability} we analyze the structural stability of the $\alpha$-dynamics \eqref{eq:alpha_dyn}.
We then apply this result in Section \ref{subsec:LTB_beyond_OA} to characterize the long-time behavior of $\alpha$ and identify the choices of the matrices $A$ and $V$ for which $|\alpha| \to 1$.
In turn, this yields the convergence $|\CR_2| \rightarrow 1$.

Before proceeding, we recall several equations from Section \ref{sec:2} that will be used later.
We consider the WS system in \eqref{eq:mf_WS_param}, which we rewrite here for convenience:
\begin{subequations}\label{eq:WS_sys_sec_4}
\begin{equation}\label{eq:WSshort-a}
    \Dalpha(t) = F(\alpha(t), \CR_2(t)),
\end{equation}
\begin{equation}\label{eq:WSshort-b}
    \Deta(t) = G(\alpha(t), \CR_2(t)) \, ,
\end{equation}
\end{subequations}
where the functions $F$ and $G$ are defined in \eqref{eq:F_and_G}. We also introduce the polar-coordinate form $\alpha = \gamma e^{i\Phi}$ with $\gamma: [0,\infty) \mapsto [0,1]$ and $\Phi: [0,\infty) \mapsto \BBT^1 $.
Recall that applying the M\"obius transform~\eqref{eq:Mobius_tf} to the double-angle variable $\Xi$ defined in \eqref{eq:xi_mf_sys} gives
\begin{equation}\label{eq:WS-mobius}
e^{i\Xi(t;\varphi)} = M_{\alpha(t),\eta(t)} (e^{i\varphi}) = \frac{\alpha(t)+e^{i(\varphi+\eta(t))}}{1+\overline{\alpha}(t)e^{i(\varphi+\eta(t))}}.
\end{equation}
The constant of motion $\varphi$ is time-independent, and hence its associated distribution $\nu$ is preserved over time. 
Moreover, we have the relation $\Xi = T_{\alpha, \eta} (\varphi)$, where the map $T_{\alpha, \eta}$ is defined in \eqref{eq:map_T}. 
We also recall from \eqref{eq:R2_push_foward} that the 
order parameter can be expressed as the pushforward average of the distribution $\nu$ induced by the WS variables $\alpha$ and $\eta$:
\begin{equation}\label{eq:r2-WS}
\CR_2(t)=\int_0^{2\pi} e^{i\xi}\,(T_{\alpha(t), \eta(t)})_{\sharp} \nu (d\xi)
= \int_0^{2\pi} M_{\alpha(t), \eta(t)} (e^{i\varphi}) \,\nu (d\varphi).
\end{equation}

Equations \eqref{eq:WS_sys_sec_4}--\eqref{eq:r2-WS} provide a closed WS representation of the mean-field dynamics \eqref{eq:mf_sys} in terms of $(\alpha,\eta)$ and the fixed distribution $\nu$ of the constant of motion $\varphi$.

\subsection{Stability of the WS Dynamics Beyond the OA-manifold}\label{subsec:structural-stability}
Recall from Section \ref{subsec:OA} that the $\alpha$-dynamics \eqref{eq:WSshort-a} reduces to the OA dynamics \eqref{eq:dyn_r2} when the distribution of the double-angle variable $\Xi$ is initialized within the wrapped Cauchy family \eqref{eq:WC_dis}. 
This condition on $\Xi$ is equivalent to the distribution $\nu$ being uniform (see Lemma \ref{lem:WS_to_OA}).
In this section, we perform a stability analysis of the $\alpha$-dynamics \eqref{eq:WSshort-a} under perturbations away from the OA regime by quantifying its deviation from the OA dynamics \eqref{eq:dyn_r2} when $\nu$ is not the uniform measure.

We begin by outlining the main idea of the argument.
Our analysis is based on a comparison between the $\alpha$-dynamics \eqref{eq:WSshort-a} with a OA-closure counterpart
\begin{equation}\label{eq:OA_counterpart}
\dot{\alpha}_{\OA}(t)=F(\alpha_{\OA}(t),\alpha_{\OA}(t)) \, .
\end{equation}
To this end, we rewrite \eqref{eq:WSshort-a} in the form of a perturbation of the OA dynamics \eqref{eq:OA_counterpart}:
\begin{equation}\label{eq:alpha_rewrite}
\Dot{\alpha}(t) = F(\alpha(t), \CR_2(t)) = F(\alpha(t), \alpha(t)) + \Delta(t),
\end{equation}
where the perturbation term $\Delta: [0, \infty) \mapsto \BBC$ is defined by
\begin{equation}\label{eq:Delta}
    \Delta(t) := F(\alpha(t), \CR_2(t)) - F(\alpha(t), \alpha(t)) \, .
\end{equation}
From \eqref{eq:F_and_G}, the function $F$ appearing in \eqref{eq:WSshort-a} is smooth and globally Lipschitz on 
$D \times D$, where $D$ is the closed complex unit disc.
We denote the corresponding Lipschitz constant by $L_F$.
Assuming that \eqref{eq:OA_counterpart} and \eqref{eq:alpha_rewrite} share the same initial condition $\alpha(0) = \alpha_{\OA} (0)$,
then a standard Grönwall argument comparing the two equations gives, for every $T > 0$,
\begin{equation*}
\sup_{t \in [0,T]} \left| \alpha(t) - \alpha_{\OA}(t) \right| \leq \int_0^T e^{L_F(T-s)} |\Delta(s)|ds \, .
\end{equation*}
Moreover, again by the Lipchitz continuity of $F$, we have
\begin{equation}\label{eq:Delta_bound_gen}
    |\Delta(t)| = \left| F(\alpha(t), \CR_2(t)) - F(\alpha(t), \alpha(t)) \right| \leq L_F \left| \CR_2 (t) - \alpha(t) \right| \, .
\end{equation}
Therefore, the problem is reduced to estimating the discrepancy between the order parameter $\CR_2$ and the WS parameter $\alpha$ within the WS system \eqref{eq:WS_sys_sec_4}.

In the sequel, we first establish upper bounds for $|\CR_2(t) - \alpha(t)|$ in Proposition \ref{prop:diff_r2_alpha}, and then use these estimates to derive a stability bound for $\sup_{t \in [0,T]} |\alpha(t) - \alpha_{\OA}(t)|$ in Proposition \ref{prop:structural_stability}.

Before stating the main results, we recall the bounded-Lipschitz distance between measures $\mu, \tilde \mu$ over $[0,2\pi)$, which is given by
\begin{equation}\label{eq:BL_metric}
d_{\BL}(\mu,\widetilde{\mu}) = \sup\left\{\left|\int_0^{2\pi}g\, d\mu - \int_0^{2\pi}g\, d\widetilde{\mu} \right|:\ \|g\|_\infty\leq 1,\ {\rm Lip}(g)\leq 1\right\}.
\end{equation}
We also clarify a potential source of confusion concerning the notation.
In the WS system, the order parameter $\CR_2(t)$ does not, in general, evolve according to the OA dynamics \eqref{eq:dyn_r2} studied in Section~\ref{sec:case_study}, which is only valid under the OA ansatz. To avoid ambiguity, we denote by $\alpha_{\OA}$ the variable satisfying the OA dynamics, as we wrote in \eqref{eq:OA_counterpart}, which coincides with \eqref{eq:dyn_r2}, while reserving the notation $\CR_2$ for the actual order parameter.


\begin{proposition}\label{prop:diff_r2_alpha} 
Let $\Unif$ denote the uniform probability measure on $[0,2\pi)$, and let $\nu$ be a probability measure on $[0,2\pi)$.
Let $\alpha(t)=\gamma(t)e^{i\Phi(t)}$ be a solution of the dynamics
\eqref{eq:WSshort-a}, and let $\CR_2(t)$ denote the corresponding order parameter defined by \eqref{eq:r2-WS} with respect to $\nu$.
Then, for every $t \geq 0$, the difference between $\CR_2(t)$ and $\alpha(t)$ admits the following two upper bounds:
\begin{equation}\label{eq:diff_r2_alpha_one_minus_gamma}
|\CR_2(t)-\alpha(t)|\leq  C_{\nu}(\gamma(t)) (1-\gamma(t)),
\end{equation}
where
\begin{equation}\label{eq:C_nu_condition}
    C_{\nu}(\gamma) :=\sup_{\chi\in[0,2\pi)}\int_0^{2\pi} \frac{2}{|1+\gamma e^{i(\varphi+\chi)}|}\,\nu(d\varphi),
\end{equation}
and
\begin{equation}\label{eq:diff_r2_alpha_initialization}
|\CR_2(t)-\alpha(t)| \leq \left( 1 + \frac{2}{1 - \gamma(t)} \right) d_{\BL} (\nu, \Unif) \, .
\end{equation}
\end{proposition}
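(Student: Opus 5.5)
The plan is to compute the integrand in \eqref{eq:r2-WS} explicitly after subtracting $\alpha$, and then estimate the resulting integral in two ways: pointwise for \eqref{eq:diff_r2_alpha_one_minus_gamma}, and as a test-function pairing against $\nu-\Unif$ for \eqref{eq:diff_r2_alpha_initialization}. Fix $t$ and write $\alpha=\gamma e^{i\Phi}$, $w:=e^{i(\varphi+\eta)}$. From \eqref{eq:Mobius_tf},
\begin{equation*}
M_{\alpha,\eta}(e^{i\varphi})-\alpha=\frac{\alpha+w-\alpha-|\alpha|^2 w}{1+\alphaBar w}=\frac{(1-\gamma^2)\,w}{1+\alphaBar w}.
\end{equation*}
Since $\nu$ is a probability measure, \eqref{eq:r2-WS} gives
\begin{equation*}
\CR_2-\alpha=\int_0^{2\pi}\frac{(1-\gamma^2)\,e^{i(\varphi+\eta)}}{1+\gamma e^{i(\varphi+\eta-\Phi)}}\,\nu(d\varphi).
\end{equation*}

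\emph{First bound.} Take absolute values inside the integral and use $1-\gamma^2=(1-\gamma)(1+\gamma)\le 2(1-\gamma)$. Setting $\chi:=\eta-\Phi$, we get
\begin{equation*}
|\CR_2-\alpha|\le(1-\gamma)\int_0^{2\pi}\frac{2}{|1+\gamma e^{i(\varphi+\chi)}|}\,\nu(d\varphi)\le C_\nu(\gamma)(1-\gamma),
\end{equation*}
where the last step takes the supremum over $\chi$ as in \eqref{eq:C_nu_condition}. This holds for every $\gamma\in[0,1]$; at $\gamma=1$ both sides are zero (when $C_\nu$ is finite), consistent with \eqref{eqn:MobiusUnitNormAlpha}.

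\emph{Second bound (assume $\gamma<1$).}
\begin{itemize}
\item First observe that $\int_0^{2\pi} M_{\alpha,\eta}(e^{i\varphi})\,\Unif(d\varphi)=\alpha$. One way to see it: by Lemma~\ref{lem:WS_to_OA} the pushforward of $\Unif$ is $\WC(\cdot;\alpha)$, whose first Fourier coefficient is $\alpha$. Alternatively, $u\mapsto (1-\gamma^2)u/(1+\alphaBar u)$ is holomorphic on a neighborhood of the closed disk and vanishes at $0$, so its circle average is zero.
\item Hence $\CR_2-\alpha=\int_0^{2\pi} m\,d(\nu-\Unif)$, where $m(\varphi):=M_{\alpha,\eta}(e^{i\varphi})$.
\item The function $m$ is unimodular, so $\|m\|_\infty\le1$. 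Its derivative is
\begin{equation*}
m'(\varphi)=\frac{i(1-\gamma^2)\,w}{(1+\alphaBar w)^2},
\end{equation*}
so $\mathrm{Lip}(m)\le(1-\gamma^2)/(1-\gamma)^2=(1+\gamma)/(1-\gamma)\le 2/(1-\gamma)$.
\item Since $m$ is complex-valued and \eqref{eq:BL_metric} uses real test functions, choose a phase $\psi$ with $\int m\,d(\nu-\Unif)=e^{i\psi}\bigl|\int m\,d(\nu-\Unif)\bigr|$. Then apply the metric to the real function $g:=\mathrm{Re}(e^{-i\psi}m)$, which satisfies $\|g\|_\infty\le1$ and $\mathrm{Lip}(g)\le L:=2/(1-\gamma)$.
\item The rescaled function $g/\max(1,L)$ is admissible in \eqref{eq:BL_metric}. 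Therefore $|\CR_2-\alpha|\le\max(1,L)\,d_{\BL}(\nu,\Unif)\le\bigl(1+\tfrac{2}{1-\gamma}\bigr)d_{\BL}(\nu,\Unif)$.
\end{itemize}

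The only delicate points are the mean-zero identity under $\Unif$ and the reduction from the complex-valued integrand to a real test function; both are handled above. The remaining steps are routine calculus.
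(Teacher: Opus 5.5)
Your proof is correct and follows the paper's argument. The identity $M_{\alpha,\eta}(e^{i\varphi})-\alpha=(1-\gamma^2)e^{i(\varphi+\eta)}/(1+\alphaBar e^{i(\varphi+\eta)})$ with the triangle inequality gives the first bound, and the mean-zero identity $\int M_{\alpha,\eta}\,d\Unif=\alpha$ together with $\|M_{\alpha,\eta}\|_\infty\le 1$ and $\mathrm{Lip}(M_{\alpha,\eta})\le 2/(1-\gamma)$ gives the second; your phase-rotation reduction to a real test function is extra care the paper leaves implicit (it applies the bounded-Lipschitz duality directly to the complex-valued $M_{\alpha,\eta}$), and it costs nothing in the constant.
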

Before turning to the proof, we briefly discuss the interpretation of Proposition \ref{prop:diff_r2_alpha}.
The proposition shows that the discrepancy between the order parameter $\CR_2$ and the WS parameter $\alpha$ within the WS system \eqref{eq:WS_sys_sec_4} can be small in two different ways, provided we can control $C_\nu(\gamma)$ and $(1 - \gamma)$.
On the one hand, inequality \eqref{eq:diff_r2_alpha_one_minus_gamma} shows that the discrepancy is controlled by the factor $1 - \gamma$, and is therefore small when the modulus $\gamma$ of $\alpha$ is close to $1$, assuming that the prefactor $C_{\nu}(\gamma)$ remains controlled.
On the other hand, inequality \eqref{eq:diff_r2_alpha_initialization} shows that the discrepancy is small when the probability measure $\nu$ is close to the uniform measure $\Unif$ in bounded-Lipschitz distance, as long as $\gamma$ remains uniformly bounded away from $1$.


\begin{proof}[Proof of Proposition \ref{prop:diff_r2_alpha}]
We show that the difference $|\CR_2(t) - \alpha(t)|$ admits two different upper bounds.
For notational simplicity, we suppress the time dependence throughout the proof.

We first verify inequality \eqref{eq:diff_r2_alpha_one_minus_gamma}.
Using the form of $\CR_2$ in \eqref{eq:r2-WS}, we compute
\begin{equation}\label{eq:diff_r2_alpha_one_minus_gamma_computation}
\begin{aligned}
\left| \CR_2 - \alpha \right| & = \left| \int_0^{2\pi} \frac{\alpha + e^{i(\varphi + \eta)}}{1 + \alphaBar e^{i(\varphi + \eta)}} d\nu(\varphi) - \alpha \right| = \left| \int_0^{2\pi} \frac{e^{i(\varphi + \eta)}}{1 + \alphaBar e^{i(\varphi + \eta)}} d\nu (\varphi) \right| (1 - |\alpha|^2) \\
&\leq \int_0^{2\pi} \frac{1}{|1 + \gamma e^{i(\varphi + \eta - \Phi)}|} d\nu (\varphi) (1 - \gamma^2) \leq C_{\nu}(\gamma) (1-\gamma) \, .
\end{aligned}
\end{equation}
Here, the first inequality applies the integral triangle inequality and uses the polar-representation $\alpha = \gamma e^{i\Phi}$, and the second inequality makes use of $(1+\gamma)\leq 2$ and the definition of $C_{\nu}(\gamma)$. 

Next we prove inequality \eqref{eq:diff_r2_alpha_initialization}.
Recall the Möbius transform from \eqref{eq:Mobius_tf}. 
By Lemma \ref{lem:Mobius_tf_properties}, we have
$$ \alpha =\int_0^{2\pi} M_{\alpha,\eta}(e^{i\varphi})\, d\Unif(\varphi) \, .$$
Combining this identity with \eqref{eq:r2-WS}, we find that
\[
\CR_2-\alpha=\int_0^{2\pi}M_{\alpha,\eta}(e^{i\varphi})\,d(\nu-\Unif)(\varphi).
\]
By the definition of $d_{\BL}$ in \eqref{eq:BL_metric} and Lemma \ref{lem:Mobius_tf_properties}, we have
\[
|\CR_2-\alpha|
\le \bigl(\|M_{\alpha,\eta}\|_\infty+\mathrm{Lip}(M_{\alpha,\eta})\bigr)\, d_{\BL}(\nu,\Unif)
\le \Bigl(1+\frac{2}{1-\gamma}\Bigr)d_{\BL}(\nu,\Unif) \, .
\]
\end{proof}

Using the polar-coordinate representations $\alpha = \gamma e^{i\Phi}$ and $\CR_2 = \rho e^{i\phi}$, together with elementary inequalities, we obtain the following corollary as a direct consequence of Proposition \ref{prop:diff_r2_alpha}.
The associated proof is deferred to Appendix \ref{app:stability}.

\begin{corollary}\label{cor:diff_r2_alpha_pc_bound}
Let $\alpha(t)=\gamma(t)e^{i\Phi(t)}$ be a solution of the dynamics
\eqref{eq:WSshort-a}, and let $\CR_2(t) = \rho(t) e^{i\phi(t)}$ be defined by \eqref{eq:r2-WS}.
Then the following inequalities hold
\begin{equation}\label{eq:diff_r2_alpha_pc_bound}
|\rho(t)-\gamma(t)|\leq C_\nu(\gamma)(1-\gamma(t)),\quad |e^{i\Phi(t)} - e^{i\phi(t)}|\leq 2C_\nu(\gamma)\frac{1-\gamma(t)}{\gamma(t)},\quad |\Phi(t)-\phi(t)|\leq 2\pi C_\nu(\gamma)\frac{1-\gamma(t)}{\gamma(t)},
\end{equation}
where $C_{\nu}(\gamma)$ is defined in \eqref{eq:C_nu_condition}.
\end{corollary}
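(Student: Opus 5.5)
The whole corollary will follow from the single estimate \eqref{eq:diff_r2_alpha_one_minus_gamma} of Proposition~\ref{prop:diff_r2_alpha}, namely $|\CR_2-\alpha|\le C_\nu(\gamma)(1-\gamma)$. We then convert this Cartesian bound into bounds on the modulus and on the argument using elementary inequalities for complex numbers. Throughout, we suppress time dependence and abbreviate $\varepsilon := C_\nu(\gamma)(1-\gamma)$.

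\textbf{Modulus.} For the first inequality we apply the reverse triangle inequality:
\[
|\rho-\gamma| = \bigl||\CR_2|-|\alpha|\bigr| \le |\CR_2-\alpha| \le \varepsilon .
\]

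\textbf{Phases on the circle.} For the second inequality (assuming $\gamma>0$) we compare the unit vectors $e^{i\Phi}=\alpha/\gamma$ and $e^{i\phi}=\CR_2/\rho$. We write
\[
e^{i\Phi}-e^{i\phi} = \frac{\alpha-\CR_2}{\gamma} + \CR_2\Bigl(\frac1\gamma-\frac1\rho\Bigr) = \frac{\alpha-\CR_2}{\gamma} + e^{i\phi}\,\frac{\rho-\gamma}{\gamma}.
\]
Taking absolute values and applying the two bounds already obtained gives
\[
|e^{i\Phi}-e^{i\phi}| \le \frac{\varepsilon}{\gamma}+\frac{\varepsilon}{\gamma} = 2C_\nu(\gamma)\frac{1-\gamma}{\gamma}.
\]
This is exactly the claim. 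If $\rho=0$, the angle $\phi$ is arbitrary and the statement should be read with any choice of $\phi$; the bound is then handled by noting that $\gamma=|\gamma-\rho|\le\varepsilon$, so the right-hand side is at least $2$, which trivially dominates the left-hand side.

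\textbf{Angles.} For the third inequality we pass from chord length to angle. With the angular difference taken as a representative in $(-\pi,\pi]$, the standard inequality is
\[
|x| \le \frac{\pi}{2}\,|e^{ix}-1| \qquad \text{for } |x|\le\pi,
\]
which follows from $|e^{ix}-1| = 2|\sin(x/2)|$ together with concavity of $\sin$ on $[0,\pi/2]$. This yields $|\Phi-\phi| \le \frac{\pi}{2}\cdot 2C_\nu(\gamma)\frac{1-\gamma}{\gamma}$, which is even stronger than the stated constant $2\pi$, so the claim follows.

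The only genuine subtlety is this last step: $\Phi-\phi$ must be interpreted on $\BBT^1$, i.e.\ as the minimal representative of the difference, since otherwise no such bound can hold. Apart from that and the degenerate case $\rho=0$, everything is routine.
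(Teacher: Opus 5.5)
Your proof is correct and follows essentially the same route as the paper. It uses the reverse triangle inequality for the modulus, the same splitting of $\CR_2-\alpha$ into a radial part $(\rho-\gamma)e^{i\phi}$ and an angular part (you divide by $\gamma$ before rather than after), and the chord-to-angle conversion via $|e^{ix}-1|=2|\sin(x/2)|$; Jordan's inequality gives you the sharper constant $\pi$ in place of the paper's $2\pi$, and the separate treatment of $\rho=0$ is harmless but unnecessary, since $e^{i\Phi}-e^{i\phi}=(\alpha-\CR_2)/\gamma+e^{i\phi}(\rho-\gamma)/\gamma$ holds for any choice of $\phi$ when $\rho=0$.
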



Using the discrepancy estimates between $\CR_2(t)$ and $\alpha(t)$ derived in Proposition \ref{prop:diff_r2_alpha}, together with the discussion at the beginning of this section, we are now in a position to establish the stability of the $\alpha$-dynamics \eqref{eq:WSshort-a} with respect to its OA counterpart \eqref{eq:OA_counterpart}.

\begin{proposition}[Stability of the $\alpha$-dynamics]\label{prop:structural_stability}
Let $\nu$ be a probability measure on $[0,2\pi)$ and $\Unif$ denote the uniform
probability measure on $[0,2\pi)$.
Let $(\alpha, \eta)$ be a solution of the dynamics \eqref{eq:WS_sys_sec_4} with initial data $(\alpha(0), \eta(0))$, and $\alpha_{\OA}$ be a solution of the dynamics \eqref{eq:OA_counterpart} satisfying $\alpha_{\OA}(0)=\alpha(0)$.
Then, for arbitrary $\eta(0) \in \BBT^1$ and every $T > 0$,
\begin{equation*}
\sup_{t\in[0,T]}|\alpha(t)-\alpha_{\OA}(t)|
\;\le\;
L_{F}\,e^{L_{F} T} \int_0^T \min \left\{ C_{\nu} (\gamma(t)) (1 - \gamma(t)), \ \left( 1 + \frac{2}{1 - \gamma(t)} \right)
 d_{\BL}(\nu,\Unif) \right\} dt \, .
\end{equation*}
\end{proposition}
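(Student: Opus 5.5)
The plan is to compare the two ODEs directly through a Grönwall argument, using the perturbation decomposition \eqref{eq:alpha_rewrite} and the discrepancy bounds of Proposition~\ref{prop:diff_r2_alpha}. Set $e(t) := \alpha(t) - \alpha_{\OA}(t)$, so that $e(0)=0$. Subtracting \eqref{eq:OA_counterpart} from \eqref{eq:alpha_rewrite} and integrating in time gives
\begin{equation*}
e(t) = \int_0^t \bigl( F(\alpha(s),\alpha(s)) - F(\alpha_{\OA}(s),\alpha_{\OA}(s)) \bigr)\, ds + \int_0^t \Delta(s)\, ds .
\end{equation*}

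Both trajectories remain in the closed disc $D$. For $\alpha$, the Möbius maps preserve $D$ and $\CR_2 \in D$, being an average of points of $\partial D$. For $\alpha_{\OA}$, the same holds because the OA flow is the special case $\nu=\Unif$ of the WS flow, and $|\alpha|=1$ is invariant since the map is then constant. We may therefore use the global Lipschitz constant $L_F$ of $F$ on $D\times D$. This gives
\begin{equation*}
|F(\alpha,\alpha)-F(\alpha_{\OA},\alpha_{\OA})| \le L_F\,|(\alpha,\alpha)-(\alpha_{\OA},\alpha_{\OA})|.
\end{equation*}
Depending on the norm used on $D\times D$, this is at most $L_F|e|$ after a harmless redefinition of $L_F$ (for instance the max-norm on $\C^2$), which is what the statement implicitly uses. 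We then have $|e(t)| \le L_F\int_0^t |e(s)|ds + \int_0^t|\Delta(s)|ds$.

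Next comes the bound on $\Delta$. By \eqref{eq:Delta_bound_gen}, $|\Delta(s)| \le L_F|\CR_2(s)-\alpha(s)|$. Proposition~\ref{prop:diff_r2_alpha} supplies two bounds for $|\CR_2(s)-\alpha(s)|$, both valid at every time, so their pointwise minimum is also a valid bound. Hence
\begin{equation*}
|\Delta(s)| \le L_F\, m(s), \qquad m(s) := \min\Bigl\{ C_\nu(\gamma(s))(1-\gamma(s)),\ \Bigl(1+\tfrac{2}{1-\gamma(s)}\Bigr) d_{\BL}(\nu,\Unif)\Bigr\}.
\end{equation*}
When $\gamma=1$, the second entry is read as $+\infty$ and the first as $0$. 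The function $m$ is measurable, being a minimum of measurable functions of the continuous function $\gamma$, and it is bounded by the first entry whenever $C_\nu$ is finite. If the integral is infinite, the claim is trivial.

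The final step is Grönwall's inequality in integral form. For $t\le T$, set $\Psi(T) := L_F\int_0^T m(s)\,ds$, which is nondecreasing in $T$. Then $|e(t)|\le \Psi(T) + L_F\int_0^t|e(s)|ds$, and Grönwall yields $|e(t)|\le \Psi(T)e^{L_F t}\le \Psi(T)e^{L_F T}$. Taking the supremum over $t\in[0,T]$ gives exactly the stated bound. The independence of $\eta(0)$ is automatic, because neither bound in Proposition~\ref{prop:diff_r2_alpha} depends on $\eta$: the first takes a supremum over the rotation angle $\chi$, and $d_{\BL}$ involves only $\nu$.

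The only genuine subtlety is confirming that both trajectories stay in $D$, so that the global Lipschitz constant applies, together with the norm convention needed to bound $|F(\alpha,\alpha)-F(\alpha_{\OA},\alpha_{\OA})|$ by $L_F|e|$. The remaining steps are routine.
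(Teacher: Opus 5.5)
Your proof is correct and is the same Gr\"onwall comparison as in the paper. You set $e=\alpha-\alpha_{\OA}$, bound $|F(\alpha,\alpha)-F(\alpha_{\OA},\alpha_{\OA})|\le L_F|e|$, and control $|\Delta|\le L_F|\CR_2-\alpha|$ by the pointwise minimum of the two estimates in Proposition~\ref{prop:diff_r2_alpha}. Your remarks on the norm convention behind a single $L_F$, on the case $\gamma=1$, and on invariance of $D$ cover points the paper leaves implicit; for invariance, the cleanest justification is the identity $\frac{d}{dt}|\alpha|^2=4(1-|\alpha|^2)\,\im{B(r)\,\alpha}$ for $\dot\alpha=F(\alpha,r(t))$ (take $r=\CR_2$ or $r=\alpha_{\OA}$), which holds because $C(r)$ is real and shows that $D$ and $\partial D$ are invariant for both flows, rather than an appeal to the M\"obius map being constant on $\partial D$.
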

Proposition \ref{prop:structural_stability} follows a standard Grönwall argument applied to the comparison between the two ODEs \eqref{eq:OA_counterpart} and \eqref{eq:alpha_rewrite}, together with the two upper bound estimates on $|\CR_2 - \alpha|$ established in Proposition~\ref{prop:diff_r2_alpha}.
The resulting estimate is therefore in the same spirit as Proposition~\ref{prop:diff_r2_alpha}.
In particular, it states that, on any finite time interval $[0,T]$, the deviation of the $\alpha$-dynamics from its OA counterpart $\alpha_{\OA}$ is controlled in two ways: first, by the factor $1 - \gamma(t)$, which measures the degree of synchronization of the particle dynamics; and second, by the proximity of the measure $\nu$ to the uniform distribution $\Unif$, both up to a multiplicative constant. This estimate will serve as one of the key ingredients in identifying different choices of the matrices $A$ and $V$ such that $|\alpha| \rightarrow 1$.
We develop these applications in Section \ref{subsec:LTB_beyond_OA}.

Although the proof of Proposition \ref{prop:structural_stability} is straightforward, we include it for completeness.

\begin{proof}[Proof of Proposition \ref{prop:structural_stability}]
Let $e(t) = \alpha(t) - \alpha_{\OA}(t)$.
Subtracting \eqref{eq:OA_counterpart} and \eqref{eq:alpha_rewrite}, and using the Lipschitz continuity of $F$, we obtain
\begin{equation*}
|\dot{e}(t)| \leq \left| F(\alpha(t), \alpha(t)) - F(\alpha_{\OA}(t), \alpha_{\OA}(t)) \right| + |\Delta(t)| \leq L_F|e(t)| + |\Delta(t)| \, .
\end{equation*}
Since $\alpha(0) = \alpha_{\OA}(0)$, we have $e(0) = 0$.
Integrating the above inequality over $[0,t]$ with $0 \leq t \leq T$, we therefore obtain
\begin{equation*}
|e(t)| \leq L_F \int_0^t |e(s)|ds + \int_0^t |\Delta(s)| ds \, .
\end{equation*}
Applying Grönwall inequality yields
\begin{equation*}
|e(t)| \leq \int_0^t e^{L_F(t-s)} |\Delta(s)| ds \leq e^{L_F T} \int_0^T |\Delta(s)| ds \, .
\end{equation*}
Using again the Lipschitz continuity of $F$ and the Proposition \ref{prop:diff_r2_alpha}, we obtain that
\begin{equation*}
\begin{aligned}
|\Delta(t)| &= \left| F(\alpha(t), \CR_2(t)) - F(\alpha(t), \alpha(t)) \right| \\
&\leq L_F |\CR_2(t) - \alpha(t)| \leq L_F \min \left\{ C_{\nu} (\gamma(t)) (1 - \gamma(t)),  \left( 1 + \frac{2}{1 - \gamma(t)} \right) d_{\BL} (\nu, \Unif) \right\} \, .
\end{aligned}
\end{equation*}
We conclude the proof by combining the above estimates and taking the supremum over $t \in [0,T]$.
\end{proof}

\subsection{Long-Time Behavior}\label{subsec:LTB_beyond_OA}
In this section, we revisit Cases 1 and 2 studied in Section \ref{sec:case_study} and focus on the matrices $A$ and $V$ for which the associated OA dynamics \eqref{eq:OA_counterpart} converges to full synchronization, namely, $|\alpha_{\OA}| \rightarrow 1$.
Building on the stability of the $\alpha$-dynamics \eqref{eq:alpha_rewrite} in the WS system established in Section \ref{subsec:structural-stability}, we prove that the $\alpha$-dynamics likewise satisfies $|\alpha(t)| \rightarrow 1$ as $t \rightarrow \infty$.
This requires slightly stronger assumptions on $A$ and $V$, together with the assumption that the distribution $\nu$ satisfies an $L^p$ regularity condition and is sufficiently close to the uniform measure $\Unif$. 
The corresponding main results for Cases 1 and 2 are presented in Theorem \ref{thm:case_1stab} and Theorem \ref{thm:case_2stab}, respectively.

Before turning to the case studies, we formally state the $L^p$ regularity assumption on $\nu$ and briefly explain its role.

\begin{assumption}[Bounded $L^p$ density]\label{asm:Lp}
For some $p > 1$ and $M > 0$, 
the probability measure $\nu$ has a density, still denoted by $\nu$, satisfying
\begin{equation*}
\|\nu\|_{L^p([0,2\pi))} \leq M \, .
\end{equation*}
\end{assumption}

\begin{remark}[Relation to the initial condition $f(0)$]
The regularity assumption $\nu \in L^p([0,2\pi))$ is equivalent to the condition $f(0) \in L^p([0,2\pi))$, where $f(0)$ is the initial distribution of the physical variable $\Theta$ in the mean-field linear self-attention model \eqref{eq:mf_sys}.

In the rest of this remark, all quantities are evaluated at $t=0$.
Recall from \eqref{eq:g_and_nu} that the distribution $g$ of the double-angle variable $\Xi = 2\Theta$ is given by $g = (T_{\alpha, \eta})_{\sharp} \nu$, where $T_{\alpha,\eta}$ is defined in \eqref{eq:map_T}.
As discussed in Section \ref{subsec:WS}, whenever $|\alpha| \neq 1$, the map $T_{\alpha, \eta}$ is a diffeomorphism.
It follows that $\nu \in L^p([0,2\pi))$ if and only if $g \in L^p([0,2\pi))$.
By the relationship between $f$ and $g$ given in \eqref{eq:relation_f_and_g}, it is equivalent to require $f(0) \in L^p([0,2\pi))$.
\end{remark}

\begin{remark}[The role of Assumption \ref{asm:Lp}]
Assumption \ref{asm:Lp} is motivated by the upper bound \eqref{eq:diff_r2_alpha_one_minus_gamma} on the discrepancy between the order parameter $\CR_2$ and the WS variable $\alpha$ established in Proposition \ref{prop:diff_r2_alpha}.
In particular, to prove convergence of the $\alpha$-dynamics to full synchronization in Theorems \ref{thm:case_1stab} and \ref{thm:case_2stab}, one needs to ensure that $|\CR_2 - \alpha| \rightarrow 0$ as the modulus $\gamma = |\alpha| \rightarrow 1$.  
In view of \eqref{eq:diff_r2_alpha_one_minus_gamma}, it is enough to impose the following abstract regularity condition on $\nu$:
\begin{equation}
\label{eq:ConsistencyCnu}
        C_{\nu}(\gamma) (1 - \gamma) \rightarrow 0 \qquad \text{as } \gamma \rightarrow 1,
    \end{equation}
where $C_{\nu}(\gamma)$ is defined in \eqref{eq:C_nu_condition}.
Assumption \ref{asm:Lp} provides a sufficient condition that ensures that \eqref{eq:ConsistencyCnu} holds; see Lemma \ref{lem:nu-lp-unified}. 

We note that $\nu \in L^p([0,2\pi))$ does not cover the atomic distributions.
Identifying more general assumptions on $\nu$ that also allow for atomic components requires a further analysis of the WS system \eqref{eq:WS_sys_sec_4}. We leave this question for future work. 
\end{remark}

We are now ready to revisit the case studies and identify the parameter regimes of the matrices $A$ and $V$ for which the $\alpha$-dynamics converges to full synchronization.
Our analysis is carried out using the polar-coordinate representation $\alpha = \gamma e^{i\Phi}$.
In particular, the $\alpha$-dynamics in \eqref{eq:alpha_rewrite} can be rewritten in terms of $(\gamma,\Phi)$ as
\begin{subequations}\label{eq:gamma_Phi_dyn}
\begin{equation}\label{eq:gamma_dyn}
\Dgamma = \re{e^{-i\Phi} F(\alpha, \alpha)} + \Delta_{\gamma},
\end{equation}
\begin{equation}\label{eq:Phi_dyn}
\DPhi = \frac{1}{\gamma} \im{e^{-i\Phi} F(\alpha, \alpha)} + \Delta_{\Phi}
\end{equation}
\end{subequations}
where
\begin{equation}\label{eq:Delta_gamma_Phi}
\begin{aligned}
\Delta_{\gamma}(t) := \re{e^{-i\Phi} \Delta(t)}, \qquad \Delta_{\Phi}(t) &:= \frac{1}{\gamma}\im{e^{-i\Phi} \Delta(t)}
\end{aligned}
\end{equation}
with $\Delta(t)$ defined in \eqref{eq:Delta}.

\subsubsection*{Case 1 Revisited.}
We consider the case where $A$ is a real-valued matrix and $V$ is the identity matrix, as defined in \eqref{eq:matrices_A_V_case_1}.
In this setting, the polar-coordinate system \eqref{eq:gamma_Phi_dyn} simplifies to

\begin{equation}\label{eq:alpha_pc_dyn_case1}
    \begin{aligned}
    \dot{\gamma} &= \frac{1}{4} \left(1 - \gamma^2 \right) \left( \aDPlus \gamma + \aDMinus \cos (\Phi) + \aOPlus \sin (\Phi) \right) + \Delta_\gamma,\\[4pt]
    \dot{\Phi} &= \frac{1}{4} \left(1 - \gamma^2 \right) \left( \aOMinus - \frac{\aDMinus \sin (\Phi) - \aOPlus \cos (\Phi)}{\gamma} \right) + \Delta_{\Phi},
    \end{aligned}
\end{equation}
where the explicit forms of $\DeltaGamma$ and $\DeltaPhi$ are provided in \eqref{eq:Delta_gamma_Phi_case1}.

The long-term behavior of \eqref{eq:alpha_pc_dyn_case1} is stated in the following theorem.
\begin{theorem}\label{thm:case_1stab}
Fix any $\iota > 0$, and consider the dynamical system~\eqref{eq:alpha_pc_dyn_case1} with initial data
\begin{equation}\label{eq:case1_stab_initial_data}
(\gamma(0), \Phi(0)) \in \left\{ (\gamma, \Phi) \in (0,1] \times \BBT^1: |(\gamma, \Phi) - (\rho_{\eq}, \phi_{\eq})| \geq \iota \right\},
\end{equation}
where $(\rho_{\eq}, \phi_{\eq})$ is defined in \eqref{eq:rho*_phi*}.
Assume that the matrix $A$ satisfies 
\begin{equation}\label{eq:case_1_stab_A_condi}
\frac{A + A^{\top}}{2} \succ 0 \, .
\end{equation}
Moreover, assume that the probability measure $\nu$ satisfies Assumption \ref{asm:Lp} and
\begin{equation}\label{eq:case_1_stab_close_to_unif}
 d_{\BL} (\nu, \Unif) < \varepsilon(\iota, p, M, A),
\end{equation}
where $\varepsilon(\iota, p, M, A) > 0$ is a sufficiently small constant depending only on $\iota$, the constants $p$ and $M$ in Assumption \ref{asm:Lp}, and the matrix $A$.
Then the solution of \eqref{eq:alpha_pc_dyn_case1} converges to $(1, \Phi_{\infty})$ as $t \rightarrow \infty$ for some $\Phi_{\infty} \in \BBT^1$.
\end{theorem}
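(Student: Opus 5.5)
The plan is a two-phase argument. First, a finite-time comparison with the OA dynamics, via Proposition~\ref{prop:structural_stability}, brings $\gamma$ into a region near $\partial D$ where the OA drift pushes outward. Second, an infinite-horizon argument shows that this region is forward invariant and that $\gamma\to1$ exponentially, which in turn makes $\Phi$ converge.

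\textbf{Preliminary geometry.} The eigenvalues of $(A+A^\top)/2$ are $\tfrac12\bigl(\aDPlus\pm s\bigr)$ with $s:=\sqrt{(\aDMinus)^2+(\aOPlus)^2}$. Hence \eqref{eq:case_1_stab_A_condi} is equivalent to $\aDPlus>s$, and in particular $\tr{A}>0$. Set $\rho_c:=s/\aDPlus<1$. On $\{\gamma\ge\rho_c+\delta\}$ the bracket in the $\gamma$-equation of \eqref{eq:alpha_pc_dyn_case1} is at least $\aDPlus\delta$. So the OA part satisfies $\dot\gamma\ge c_\delta(1-\gamma)$ there, with $c_\delta=\aDPlus\delta/4$.

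\textbf{Key structural observation.} For any $r$, the quantity $e^{-i\Phi}F(\alpha,r)=2i\bigl(B(r)\alpha+C(r)+\overline{B}(r)\overline{\alpha}\bigr)$ is purely imaginary when $|\alpha|=1$. Therefore $\re{e^{-i\Phi}F(\alpha,r)}$ vanishes at $\gamma=1$ for every $r$. Since $F$ is polynomial, hence smooth on $D\times D$, I expect a bound
\begin{equation*}
|\DeltaGamma|\le K_A(1-\gamma)\,|\CR_2-\alpha|.
\end{equation*}
This should be checked directly from the explicit formula \eqref{eq:Delta_gamma_Phi_case1}.

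\textbf{Uniform smallness of $|\CR_2-\alpha|$.} By Assumption~\ref{asm:Lp} and Lemma~\ref{lem:nu-lp-unified}, $C_\nu(\gamma)(1-\gamma)\le \omega_{p,M}(1-\gamma)$ for some modulus $\omega_{p,M}$ with $\omega_{p,M}(0^+)=0$ (I expect a power $(1-\gamma)^{1-1/p}$ up to constants). Fix $\sigma>0$ and choose $\delta'$ with $\omega_{p,M}(\delta')\le\sigma$. Then Proposition~\ref{prop:diff_r2_alpha} gives
\begin{equation*}
|\CR_2-\alpha|\le \max\bigl\{\sigma,\ (1+2/\delta')\,d_{\BL}(\nu,\Unif)\bigr\}.
\end{equation*}
This bound is uniform in time, so taking $\varepsilon$ small makes $|\CR_2(t)-\alpha(t)|\le\sigma$ for all $t\ge0$.

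\textbf{Phase 1 (main obstacle).} I need a time $T=T(\iota,A)$ such that every OA trajectory starting in the set \eqref{eq:case1_stab_initial_data} satisfies $\gamma_{\OA}(T)\ge\rho_c+2\delta$. The only interior equilibrium is $(\rho_{\eq},\phi_{\eq})$; by Theorem~\ref{thm:case_1} and the condition $\tr{A}>0$ it should be a source. Its linearization has trace $\propto\tr{A}>0$, which I would verify from \eqref{eq:rho*_phi*}. Theorem~\ref{thm:case_1}, item 1, gives convergence to the boundary for almost every initial datum. I would upgrade this to every initial datum outside the $\iota$-ball by excluding periodic orbits, for instance by a Dulac/Bendixson argument using the divergence structure of \eqref{eq:OAcase1}. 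Near $\gamma=0$ I would work in the Cartesian form, where the field is approximately the constant $\tfrac14(\aDMinus+i\aOPlus)\ne0$, so the polar singularity is harmless. Compactness of the closed initial set then yields a uniform $T$. The Grönwall bound of Proposition~\ref{prop:structural_stability}, with integrand at most $\sigma$, gives $|\alpha(T)-\alpha_{\OA}(T)|\le L_Fe^{L_FT}T\sigma$. Choosing $\sigma$, and hence $\varepsilon$, small then gives $\gamma(T)\ge\rho_c+\delta$.

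\textbf{Phase 2.} For $t\ge T$ with $\gamma\ge\rho_c+\delta$, combining the OA bound with the bound on $\DeltaGamma$ gives
\begin{equation*}
\dot\gamma\ge\bigl(c_\delta-K_A\sigma\bigr)(1-\gamma)\ge \tfrac{c_\delta}{2}(1-\gamma),
\end{equation*}
provided $\sigma\le c_\delta/(2K_A)$. So the region is forward invariant and $1-\gamma(t)\le e^{-c_\delta(t-T)/2}$. For the phase, the OA part of $\dot\Phi$ carries the factor $(1-\gamma^2)$, and $|\DeltaPhi|\le L_F\,C_\nu(\gamma)(1-\gamma)/\gamma\lesssim\omega_{p,M}(1-\gamma)$. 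If $\omega_{p,M}$ is a power, both terms decay exponentially in $t$, so $\dot\Phi\in L^1(T,\infty)$ and $\Phi(t)\to\Phi_\infty$.
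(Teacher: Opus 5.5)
Your proposal is correct and follows essentially the same route as the paper. Both arguments use a finite-horizon Grönwall comparison with the OA solution via Proposition~\ref{prop:structural_stability} to enter a region where the OA radial drift dominates $\DeltaGamma$, then forward invariance and exponential convergence of $\gamma$, and finally $\dot\Phi\in L^1$. The differences are technical refinements: you use a time-uniform bound $|\CR_2-\alpha|\le\sigma$ and the sharper estimate $|\DeltaGamma|\le K_A(1-\gamma)|\CR_2-\alpha|$, which does hold because $\re{e^{-i\Phi}F(\alpha,r)}=2(1-\gamma^2)\,\im{B(r)e^{i\Phi}}$; this lets your trapping region be $\{\gamma\ge\rho_c+\delta\}$ instead of a threshold $\gammaTH$ near $1$, and you obtain the uniform entry time by compactness rather than from the logarithmic bound of Proposition~\ref{prop:case1_quantitative}.
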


Theorem \ref{thm:case_1stab} shows that the WS variable $\alpha$ converges to a fixed point on the boundary $\partial D$, provided that $V = I$, the symmetric part of $A$ is positive definite, and the distribution $\nu \in L^p([0,2\pi))$ is sufficiently close to the uniform measure $\Unif$ on $[0,2\pi)$. 
As discussed in Section \ref{subsec:WS}, one has $\CR_2 = \alpha$ whenever $|\alpha| = 1$.
Consequently, the order parameter converges to the same boundary fixed point as $t \rightarrow \infty$. 
This implies that the distribution $f$ of the physical variable $\Theta$, induced by the linear self-attention model \eqref{eq:mf_sys}, concentrates at two a pair of fixed antipodal points. 

We also briefly comment on the assumption on the initial condition in \eqref{eq:case1_stab_initial_data}.
This assumption merely excludes the case in which the initial datum $(\gamma(0), \Phi(0))$ coincides with $(\rhoEq, \phi_{\eq})$, the interior equilibrium of the OA dynamics \eqref{eq:OA_counterpart} in this parameter regime; see Figure \ref{fig:case_1_1_a} for an illustration. By Theorem \ref{thm:case_1}, the condition \eqref{eq:case1_stab_initial_data} ensures that the corresponding solution of the OA dynamics \eqref{eq:OA_counterpart} with this initialization converges to full synchronization, namely, $|\alpha_{\OA}| \rightarrow 1$. This convergence is a key ingredient in establishing the corresponding convergence of the WS variable $\alpha$.

\subsubsection*{Case 2 Revisited.}
We revisit now the case where $A$ is the identity matrix and $V$ is a symmetric real-valued matrix, as in \eqref{eq:matrices_A_V_case_2}.
In this setting, the polar-coordinate system \eqref{eq:gamma_Phi_dyn} simplifies to
\begin{equation}\label{eq:alpha_pc_dyn_case2}
\begin{aligned}
\dot{\gamma} &= \frac{1}{4} \left( 1 - \gamma^2 \right) \left( \vDPlus \gamma + \vDMinus \cos (\Phi) + \vOPlus \sin (\Phi) \right) + \DeltaGamma\\[4pt]
\dot{\Phi} &= -\frac{1}{4} \frac{3\gamma^2 + 1}{\gamma} \left( \vDMinus \sin (\Phi) - \vOPlus \cos (\Phi) \right) + \DeltaPhi,
\end{aligned}
\end{equation}
where the explicit forms of $\DeltaGamma$ and $\DeltaPhi$ are provided in \eqref{eq:Delta_gamma_Phi_case2}.

The long-term behavior of \eqref{eq:alpha_pc_dyn_case2} is stated in the following theorem.
\begin{theorem}\label{thm:case_2stab}
Fix any $\iota > 0$, and consider the dynamical system~\eqref{eq:alpha_pc_dyn_case2} with initial data
\begin{equation}\label{eq:case2_stab_initial_data}
(\gamma(0), \Phi(0)) \in \left\{ (\gamma, \Phi) \in (0,1] \times \BBT^1: |\Phi - ( \PhiV + \pi)| \geq \iota \right\},
\end{equation}
where $\PhiV$ is defined in \eqref{eq:case2_PhiV}.
Assume that the matrix $V$ satisfies 
\begin{equation}\label{eq:case_2_stab_V_condi}
\lambdaMax(V) > 0 \, .
\end{equation}
Moreover, assume that the probability measure $\nu$ satisfies Assumption \ref{asm:Lp} and
\begin{equation}\label{eq:case_2_stab_close_to_unif}
 d_{\BL} (\nu, \Unif) < \varepsilon(\iota, p, M, V)
\end{equation}
for a sufficiently small constant $\varepsilon(\iota, p, M, V) > 0$ depending only on $\iota$, the constants $p$ and $M$ in Assumption \ref{asm:Lp}, and on the matrix $V$.
Then the solution of \eqref{eq:alpha_pc_dyn_case2} converges to $(1, \Phi_{\infty})$ as $t \rightarrow \infty$ for some $\Phi_{\infty} \in \BBT^1$.
\end{theorem}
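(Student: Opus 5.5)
We follow the two-phase strategy suggested by Proposition~\ref{prop:structural_stability}: a finite-horizon comparison with the OA dynamics brings $\gamma$ close to $1$, and then an infinite-horizon argument near $\partial D$ keeps it there. Both phases use the OA analysis of Theorem~\ref{thm:case_2}. In the OA system \eqref{eq:pc_dyn_case_2} with $\lambdaMax(V)>0$, the $\phi$-equation does not involve $\rho$ except through the positive factor $(3\rho^2+1)/(4\rho)$. It is a gradient-like flow on $\BBT^1$, driven by $-c\sin(\phi-\PhiV)$ with $c=\sqrt{(\vDMinus)^2+(\vOPlus)^2}/1$ up to a positive factor. Its stable point is $\PhiV$ and its unstable point is $\PhiV+\pi$.

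\textbf{Phase 1 (finite horizon).} The initial set \eqref{eq:case2_stab_initial_data} is compact away from the unstable direction $\Phi=\PhiV+\pi$. By Theorem~\ref{thm:case_2} and continuity of the OA flow, there is a uniform time $T=T(\iota,V)$ with two properties. First, every OA trajectory from this set satisfies $|\Phi_{\OA}(T)-\PhiV|\le\delta$. Second, $\gamma_{\OA}(T)\ge 1-\delta$, for a small $\delta$ fixed in Phase~2. Uniformity near $\gamma(0)\approx 0$ needs care, since $\dot\phi$ blows up like $1/\rho$ there; in the original complex variable $\CR_2$ the OA field is smooth, so we argue in those coordinates. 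On $[0,T]$ we apply Proposition~\ref{prop:structural_stability} using the $d_{\BL}$ branch of the minimum. This requires $\gamma$ to stay below $1-\delta/2$; we enforce it by a bootstrap on the first time this fails. If $\gamma$ reaches $1-\delta/2$ early, we pass directly to Phase~2, which is favourable anyway. Choosing $\varepsilon$ small then gives $|\alpha(T)-\alpha_{\OA}(T)|\le\delta$. Hence $(\gamma(T),\Phi(T))$ lies in the neighbourhood $U_\delta=\{\gamma\ge1-2\delta,\ |\Phi-\PhiV|\le2\delta\}$.

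\textbf{Phase 2 (infinite horizon, main obstacle).} Here Grönwall is useless, so we exploit the structure of the error terms. By Proposition~\ref{prop:diff_r2_alpha} and Lemma~\ref{lem:nu-lp-unified}, Assumption~\ref{asm:Lp} gives $|\Delta|\le L_F\,\omega(1-\gamma)$ with $\omega(s)=C_\nu(1-s)\,s\to0$. This bound is uniform in $\eta$, so the $\eta$-coupling is harmless.
\begin{itemize}
\item[(a)] \emph{Radial equation.} In $U_{\delta}$ we have $\vDPlus\gamma+\vDMinus\cos\Phi+\vOPlus\sin\Phi\ge \lambdaMax(V)\kappa>0$ for some $\kappa>0$. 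This holds because at $\gamma=1,\Phi=\PhiV$ the expression equals $\vDPlus+\sqrt{(\vDMinus)^2+(\vOPlus)^2}=2\lambdaMax(V)$. Hence $\dot\gamma\ge c_1(1-\gamma)-L_F\omega(1-\gamma)$. The difficulty is that the perturbation is only $o(1)$ in $1-\gamma$, not $O(1-\gamma)$. We therefore want the explicit form \eqref{eq:Delta_gamma_Phi_case2} to show that $\DeltaGamma$ carries a factor comparable to $(1-\gamma)$ times $|\CR_2-\alpha|$, or equivalently $\DeltaGamma=O((1-\gamma)\,\omega(1-\gamma))$. We expect this because $F(\alpha,r)$ has $\re{e^{-i\Phi}\cdot}$ vanishing on $|\alpha|=1$ for every $r$, as boundary invariance of the WS flow requires. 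This is the step we expect to be hardest, and we would check it first, via the computer-algebra expressions. With it, $\dot\gamma\ge \tfrac{c_1}{2}(1-\gamma)$ in $U_\delta$ for $\delta$ small, and $1-\gamma$ decays exponentially.
\item[(b)] \emph{Angular equation.} We have $\dot\Phi=-(\text{positive})\,c\sin(\Phi-\PhiV)+\DeltaPhi$, with $|\DeltaPhi|\lesssim\omega(1-\gamma)/\gamma$ small in $U_\delta$. A standard trapping argument then shows that $U_\delta$ is forward invariant, so (a) applies for all $t\ge T$.
\end{itemize}

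\textbf{Convergence of $\Phi$.} By Corollary~\ref{cor:diff_r2_alpha_pc_bound} and the exponential decay of $1-\gamma$, $\DeltaPhi$ is integrable in time. The angular equation is an asymptotically autonomous flow with an exponentially stable point $\PhiV$, perturbed by an integrable term. Hence $\Phi(t)\to\Phi_\infty$, and we expect $\Phi_\infty=\PhiV$. This gives convergence to $(1,\Phi_\infty)$.

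Finally, we fix the constants in order: first $\delta$ from Phase~2, then $T$, then $\varepsilon(\iota,p,M,V)$.
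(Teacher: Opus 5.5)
Your architecture matches the paper's: a finite-horizon comparison with the OA flow via Proposition~\ref{prop:structural_stability}, a trapping region near $(1,\PhiV)$, then integrability of $\DeltaPhi$. The step you flag as hardest is fine. Since $F(\alpha,r)$ is affine in $(r,\bar r)$ and $\mathrm{Re}(e^{-i\Phi}F(e^{i\Phi},r))=0$ for every $r$, one gets $|\DeltaGamma|\le C(1-\gamma)|\CR_2-\alpha|$, which Lemma~\ref{lem:stability_case1_case2} confirms by direct computation.

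The main gap is in Phase~2(b). Your region $U_\delta$ uses the same $\delta$ radially and angularly, and with the bound you invoke there, $|\DeltaPhi|\lesssim\omega(1-\gamma)/\gamma$, the trap does not close. Write $R=\sqrt{(\vDMinus)^2+(\vOPlus)^2}$. At $|\Phi-\PhiV|=2\delta$ the restoring speed is $\frac{R}{4}\frac{3\gamma^2+1}{\gamma}\sin(2\delta)\approx 2R\delta$. On $U_\delta$, however, $1-\gamma$ can be as large as $2\delta$, and Lemma~\ref{lem:nu-lp-unified} only guarantees $|\DeltaPhi|\lesssim M(2\delta)^{1-1/p}$ (or $M\delta\log\frac{1}{\delta}$ if $p=\infty$). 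That bound is much larger than $\delta$ as $\delta\to0$. This is the same $o(1)$ versus $O(1-\gamma)$ obstruction you spotted in the radial equation, but $\DeltaPhi$ has no extra factor of $(1-\gamma)$ to rescue it.

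The missing idea is to decouple the two scales:
first fix an angular window $\PhiTH$ depending only on $V$, small enough that $\vDPlus\gamma+R\cos(\Phi-\PhiV)$ stays bounded below by a positive constant for $|\Phi-\PhiV|\le\PhiTH$ and $\gamma$ near $1$. This is possible because $\vDPlus+R=2\lambdaMax(V)>0$. Only then choose $\gammaTH$ so close to $1$ that $C_{A,V}C_\nu(\gamma)\frac{1-\gamma}{\gamma}<\frac{R}{4}\frac{3\gamma^2+1}{\gamma}\sin\PhiTH$ for all $\gamma\ge\gammaTH$. Since $\gamma$ increases while the angle stays in the window, the perturbation bound only improves, and a first-exit-time argument gives forward invariance. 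This is the paper's condition \eqref{eq:case2_claim_condi_3}, with constants fixed in the order $\PhiTH,\gammaTH,T,\varepsilon$ rather than through a single $\delta$.

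Phase~1 has a second hole. Using only the $d_{\BL}$ branch forces your bootstrap, and its exit clause is not justified. If $\gamma$ reaches $1-\delta/2$ at some $\tau<T$, nothing places $\Phi(\tau)$ near $\PhiV$, so you are not in the Phase~2 region. This already happens at $\tau=0$ whenever $\gamma(0)\ge 1-\delta/2$, which the hypothesis allows with $\Phi(0)$ anywhere outside the $\iota$-neighbourhood of $\PhiV+\pi$. Nor is this case ``favourable anyway'': for $\Phi$ near $\PhiV+\pi$ the radial drift is close to $\vDPlus-R=2\lambda_{\min}(V)$, which is negative for indefinite $V$, so $\gamma$ can fall back.

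The remedy is to keep both branches of the minimum in Proposition~\ref{prop:structural_stability}. By Assumption~\ref{asm:Lp}, $C_\nu(\gamma)(1-\gamma)$ is uniformly small for $\gamma$ near $1$. Splitting the time integral according to whether $\gamma(t)$ lies above a threshold $\widetilde\gamma$ then controls $\sup_{[0,T]}|\alpha-\alpha_{\OA}|$ for every trajectory, with no bootstrap; this is Lemma~\ref{lem:stability_estimate_control}.

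Your compactness route to a uniform $T(\iota,V)$ in the complex variable is an acceptable substitute for the paper's explicit bound in Proposition~\ref{prop:case2_quantitative}. You need to check that $(1,\PhiV)$ is asymptotically stable for the OA flow, and that the closure of the initial set, including $\alpha=0$ and the arc $|\alpha|=1$, lies in its basin.
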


Theorem \ref{thm:case_2stab} establishes a similar qualitative result as Theorem \ref{thm:case_2}: the WS variable $\alpha$ converges globally to a boundary fixed point on $\partial D$, provided that $A = I$ and the largest eigenvalue of the symmetric matrix $V$ is strictly positive. It follows that the order parameter $\CR_2(t)$ also converges to a boundary fixed point, and the associated distribution $f$ converges to a stationary distribution supported on two antipodal points.

As with condition \eqref{eq:case1_stab_initial_data} in Theorem \ref{thm:case_1stab}, the assumption on the initial condition in stated in \eqref{eq:case2_stab_initial_data}  excludes an exceptional equilibrium of the OA dynamics. More precisely, this assumption rules out that $\Phi(0) = \PhiV + \pi$,
which is an equilibrium of the angular dynamics in the OA system \eqref{eq:OA_counterpart} in this parameter regime. By Theorem \ref{thm:case_2}, condition \eqref{eq:case2_stab_initial_data} ensures that the corresponding solution of the OA dynamics converges to full synchronization. 
This convergence again serves as a key ingredient in establishing the corresponding convergence of the WS variable $\alpha$.

\medskip 

The overall proof strategies for Theorems \ref{thm:case_1stab} and \ref{thm:case_2stab} are similar.
We summarize the main ideas below and defer the full details to Appendix \ref{app:stability}.

\begin{proof}[Proof sketch for Theorems \ref{thm:case_1stab} and \ref{thm:case_2stab}]

In Section~\ref{sec:case_study}, we identified parameter regimes for the matrices $A$ and $V$ under which the OA dynamics \eqref{eq:OA_counterpart} converges to a full synchronization state; that is, $|\alpha_{\OA}(t)| \rightarrow 1$ for almost every initial condition. We now explain why, in these regimes, the associated $\alpha$-dynamics \eqref{eq:alpha_rewrite} also converge to full synchronization states, provided that the distribution $\nu$ is close enough to the uniform measure over $[0,2\pi)$. 
The argument consists of two key steps, which show the convergence of the modulus $\gamma(t) \rightarrow 1$.

\begin{itemize}
    \item \textbf{Step 1.}
    We first show that, there exists a threshold value $\gammaTH \in (0,1]$ so that, if $\gamma(t_0) \in [\gammaTH, 1]$, then $\gamma(t)$ converges to $1$ exponentially fast for $t \geq t_0$.
    This follows from the fact that, whenever $\gamma(t) \in [\gammaTH, 1]$, the first term in \eqref{eq:gamma_dyn} is positive and strictly dominate the perturbation term $\Delta_{\gamma}$.

    \item \textbf{Step 2.}
    It remains to show that, for almost every initial condition $\gamma(0) \notin [\gammaTH, 1]$, the dynamics \eqref{eq:gamma_dyn} enters this regime in finite time.
    This follows from the the stability of the $\alpha$-dynamics \eqref{eq:alpha_rewrite} with respect to the perturbation of the OA dynamics \eqref{eq:OA_counterpart}, as established in Proposition~\ref{prop:structural_stability}, together with the asymptotic convergence of the OA dynamics to the fully synchronized state.
    Indeed, since $|\alpha_{\OA} (t)| \rightarrow 1$, then there exists a time $t_0$ so that $|\alpha_{\OA} (t_0)| > \gammaTH$.
    Proposition~\ref{prop:structural_stability} then implies that, if $\nu$ is sufficiently close to the uniform measure $\Unif$, then
    $$\sup_{t \in [0,t_0]} \left| \alpha(t) - \alpha_{\OA}(t) \right|$$
    is sufficiently small, and hence $\gamma(t_0) = |\alpha (t_0)| \geq \gammaTH$.
\end{itemize}

Lastly, by using the convergence of the modulus $\gamma(t)$, together with the properties of the $\Phi$-dynamics, we further show that the angular variable $\Phi(t)$ converges to a fixed point in $\BBT^1$. 
\end{proof}

We note that our proof does not rely solely on a standard Grönwall-type estimate established in Proposition~\ref{prop:structural_stability}, which by itself is insufficient to conclude the asymptotic convergence of the $\alpha$-dynamics \eqref{eq:alpha_rewrite} without stronger assumptions such as $\nu\to \Unif$ in $d_{\BL}$.
Indeed, we additionally need to exploit the structural properties of the dynamics induced by the linear self-attention model for the particular choices of the matrices $A$ and $V$, as reflected in Step 1 of the proof sketch above. This additional structure allows us to extend the Grönwall-type argument to the infinite time horizon.

We close this section by comparing Theorems \ref{thm:case_1stab} and \ref{thm:case_2stab} with their OA counterparts, Theorems \ref{thm:case_1} and \ref{thm:case_2}.
The conditions on the matrices $A$ and $V$ in Theorems \ref{thm:case_1stab} and \ref{thm:case_2stab}, which ensure the desired convergence of the $\alpha$-dynamics \eqref{eq:alpha_rewrite} in the WS system, are slightly stronger than the corresponding conditions used to establish the convergence of the OA dynamics \eqref{eq:OA_counterpart} in Theorems \ref{thm:case_1} and \ref{thm:case_2}.
This is natural since the general $\alpha$-dynamics \eqref{eq:alpha_rewrite} involves an additional perturbation term $\Delta$ that is not present in the OA dynamics.
As the proofs of Theorems \ref{thm:case_1stab} and \ref{thm:case_2stab} will show, establishing convergence of the $\alpha$-dynamics requires controlling $\Delta$ by the first term in \eqref{eq:alpha_rewrite}, and this leads to the stronger conditions on $A$ and $V$ that we impose.

Nevertheless, Theorems \ref{thm:case_1stab} and \ref{thm:case_2stab} characterize the long-term behaviors of the general WS system \eqref{eq:WS_sys_sec_4} for certain choices $A$ and $V$. In turn, they extend the analysis of the mean-field dynamics \eqref{eq:mf_sys} induced by the linear self-attention model beyond the special class of wrapped Cauchy initializations.
Specifically, they allow for initial distributions that are close to, but not necessarily exactly on, the OA manifold.

\section{Experiments}
\label{sec:experiments}
In this section, we present numerical simulations of the continuous-time self-attention dynamics
\begin{equation}
    \dot{x}_k(t) = P^\perp_{x_k(t)}\!\left(\frac{1}{n}\sum_{j=1}^n h\!\left(\beta\langle Ax_k(t), x_j(t)\rangle\right) V x_j(t)\right), \quad k = 1,\dots,n,
\end{equation}
on the unit sphere $\BBS^{d-{1}}$,
considering both linear self-attention \eqref{eq:LSA}, corresponding to $h(y) = y$, and
unnormalized softmax attention ($\operatorname{USA}$), corresponding to $h(y) = \exp(y)$.
The latter model has
been analyzed for some specific choices of the parameter matrices $A$ and $V$ in \cite{geshkovski2023mathematical,geshkovski2024dynamic,chen2025quantitative,abella2024asymptotic,abella2025consensus,altafini2026multistability}.
 
We conduct simulations in higher dimensions $d \gg 2$ under parameter regimes analogous to those analytically identified in Section~\ref{sec:case_study}.
The experimental results suggest that the theoretical characterizations of the long-time behavior of the $2$-dimensional linear self-attention model, presented in Sections~\ref{sec:th_case1}--\ref{sec:th_case4}, persist in higher-dimensional settings. 
In addition, we observe that, within the identified parameter regimes, the USA model exhibits similar qualitative behaviors such as clustering and cyclicality, while also displaying some differences from the LSA model.


Before presenting our experimental results, let us first specify the diagnostics used to quantify clustering and cyclic behaviors, as well as the general experimental setup.

\paragraph{Clustering and cyclicality diagnostics.}
We track three quantities to measure clustering behavior and one quantity to measure cyclic behavior.

For clustering,
we use the mean-squared cosine similarity
\begin{equation}
\widehat{\CR}_2(t) := \frac{1}{\binom{n}{2}} \sum_{j < k} \langle x_j(t),\, x_k(t)\rangle^2 \in [0,1]
\end{equation}
as the primary diagnostic. 
This quantity is equal to $1$ whenever all tokens collapse to a pair of antipodal points
$\{x^*, -x^*\}$, allowing the two clusters to contain different numbers of tokens; this includes the case in which all tokens collapse to a single point $x^*$.
We note that the clustering diagnostic $\widehat{\CR}_2(t)$ is directly related to the second order parameter $\CR^n_2(t)$ defined in \eqref{eq:order_param_finite}.
More precisely, in dimension $d=2$, it holds
\begin{equation*}
|\CR^n_2(t)|^2 = \widehat{\CR}_2(t) +  \frac{n-2}{n} (\widehat{\CR}_2(t) - 1).
\end{equation*}
In particular, $|\CR_2^n(t)| = 1$ if and only if $\widehat{\CR}_2(t) = 1$.

We also track the mean cosine similarity
\begin{equation}
    \widehat{\CR}_1(t) :=  \frac{1}{\binom{n}{2}} \sum_{j < k} \langle x_j(t),\, x_k(t)\rangle \in [0,1],
\end{equation}
which equals 1 if and only if all tokens collapse to a single point $x^*$. 
This diagnostic is particularly useful for distinguishing consensus formation at a single point from antipodal clustering.
In particular, while $\widehat{\CR}_2(t)$ is insensitive to the distinction between $x^*$ and $-x^*$, the quantity $\widehat{\CR}_1(t)$ detects whether the tokens have collapsed to one cluster or split into two antipodal clusters.
Clearly, $\widehat{\CR}_1(t)=1$ implies $\widehat{\CR}_2(t)=1$

In addition to tracking these two scalar clustering diagnostics, we also plot the Gram matrix $G(t)$, whose entries are give by $G_{ij}(t) = \langle x_i(t), x_j(t)\rangle$, at $t=0$ and
$t=T$.
For Gram matrix plots, we sort the tokens according to their locations on $\mathbb{S}^{d-1}$ at time $t = T$.
The formation of two antipodal clusters 
is indicated by a $2\times 2$ block structure, with positive diagonal blocks and negative off-diagonal blocks, and where the block sizes reflect the corresponding cluster sizes.

Finally, to quantify cyclic behavior, we track the coordinate-wise mean of the token representation vectors.
For $l = 1,\dots, d$, writing $x_j = (x_j^{(1)}, x_j^{(2)}, \dots, x_j^{(d)} )^{\top}$, we define
\begin{equation}
m_l (t) := \frac{1}{n} \sum_{j=1}^n x_j^{(l)} (t) \, .
\end{equation}
If the clustering diagnostics indicate consensus while the trajectory $m(t) = (m_1(t), \dots, m_d(t))$ fails to converge, this suggests that the system has formed clusters that continue to rotate on $\mathbb{S}^{d-1}$.

\begin{remark}
To avoid presenting an overwhelming number of figures, we omit certain diagnostic plots when they do not provide substantial additional information or insight.
In particular, we will not present the Gram matrix for the $\operatorname{USA}$ model in regimes where the dynamics cluster to a single point, since in such cases the Gram matrix provides little additional information beyond the scalar diagnostics $\widehat{\CR}_1$ and $\widehat{\CR}_2$.
Moreover, in cases where the dynamics both cluster and converge, we present only the clustering diagnostics and omit the coordinate-wise mean plot, describing the convergence behavior in words instead.
\end{remark}

\paragraph{Experimental setup.}
In all experiments, we take the embedding dimension to be $d=100$ and use $n=200$ tokens. 
The choice $n > d$ is motivated not only by the mean-field nature of our analysis, but also by the fact that large-token regimes are common in modern large language models \cite{gpt,gemini2023,llama,liu2024deepseek}. 
We have observed similar phenomena in other embedding dimensions, but focus on this parameter setting for conciseness.
The initial tokens $\{x_k(0)\}_{k=1}^n \subset \mathbb{S}^{d-1}$
are drawn independently from the uniform distribution on the unit sphere. 


\subsection{Case 1: \texorpdfstring{$V = I$, $A$}{VA} Arbitrary}
\label{sec:exp_case1}
Our theoretical findings from Theorem~\ref{thm:case_1} characterize parameter regimes of the matrix $A$ associated with clustering and non-clustering behaviors for the $2$-dimensional LSA model.  
In particular, if $\tr{A} > 0$ or $\det(A) \leq 0$, then the LSA dynamics converge to a pair of antipodal points.  
On the other hand, if the symmetric part of $A$ satisfies $(A + A^\top)/2 \prec 0$, then the dynamics do not exhibit clustering.  




 
The corresponding numerical results are presented in Figures~\ref{fig:case1_gram_trA_pos_d100}--\ref{fig:case1_ps_d100}.
We observe that the qualitative behavior predicted by Theorem \ref{thm:case_1} for the $2$-dimensional LSA model persists in higher dimensions.
Specifically, when $A$ satisfies \eqref{eq:case1_condition_E1}, the LSA dynamics converge to a pair of antipodal points, as indicated by the clustering diagnostic $\widehat{\CR}_2(t)$ converging to $1$ while $\widehat{\CR}_1(t)$ remains close to $0$; see Figures~\ref{fig:case1_gram_trA_pos_d100} and \ref{fig:case1_gram_trA_detA_neg_d100}.
In contrast, when $(A + A^{\top})/2 \prec 0$, the LSA dynamics do not synchronize, as reflected by $\widehat{\CR}_1(t) = \widehat{\CR}_2(t) = 0$ in Figure~\ref{fig:case1_ps_d100}.
\begin{figure}[!htb]
\centering
\newlength{\subfigimgheight}
\setlength{\subfigimgheight}{0.2\textheight} 

\begin{subfigure}[t]{.48\textwidth}
  \centering
  \begin{minipage}[b][\subfigimgheight][b]{\linewidth}
    \centering    \includegraphics[trim={0cm 0cm 0cm 1cm}, clip,width=\linewidth,height=\subfigimgheight,keepaspectratio]{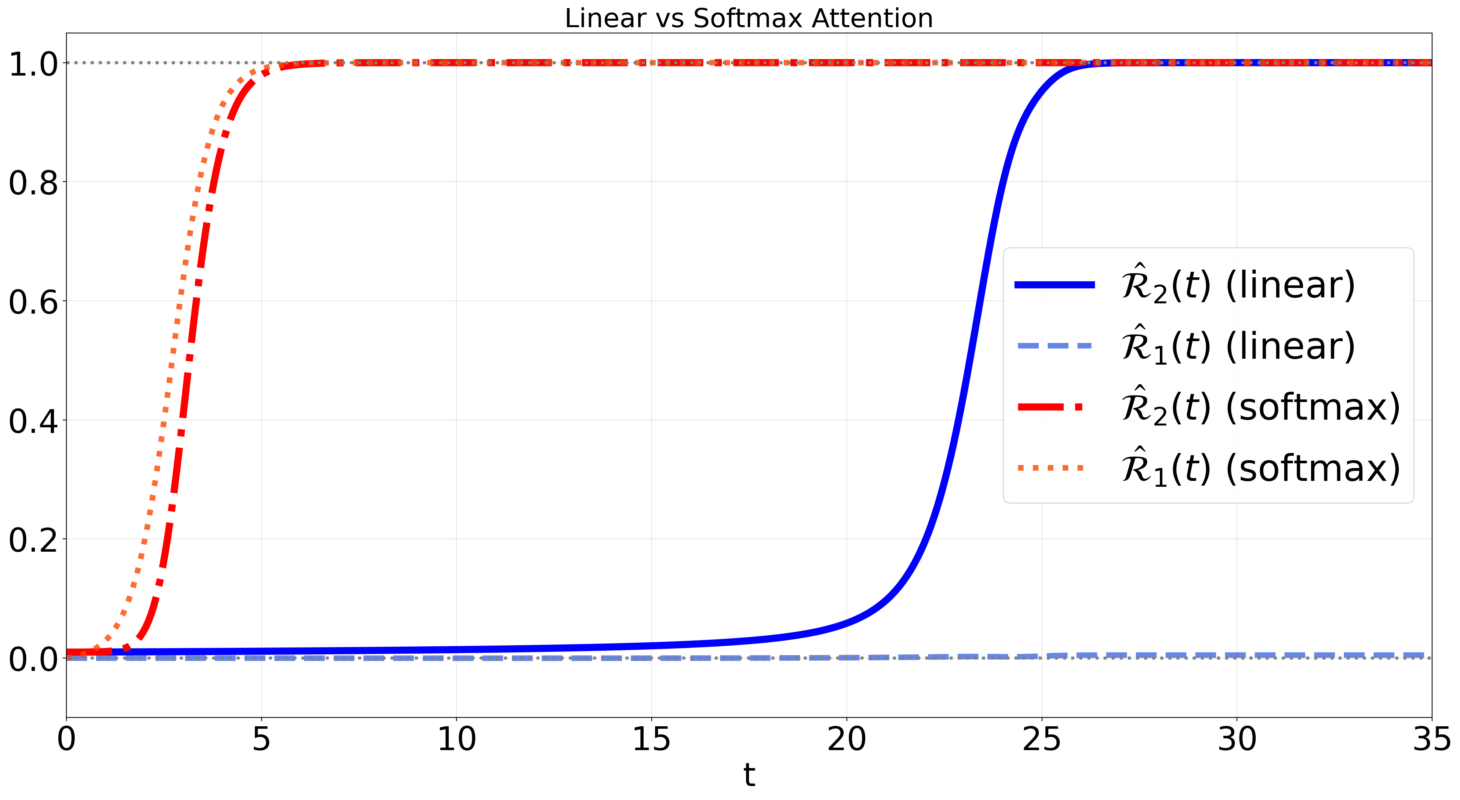}
  \end{minipage}
  \caption{Evolution of the clustering diagnostics $\widehat{\CR}_{1}(t)$ and $\widehat{\CR}_{2}(t)$ for the $\operatorname{LSA}$ (blue) and $\operatorname{USA}$ (red) models.}
\end{subfigure}
\hfill
\begin{subfigure}[t]{.48\textwidth}
  \centering
  \begin{minipage}[b][\subfigimgheight][b]{\linewidth}
    \centering
    \includegraphics[width=\linewidth,height=\subfigimgheight,keepaspectratio]{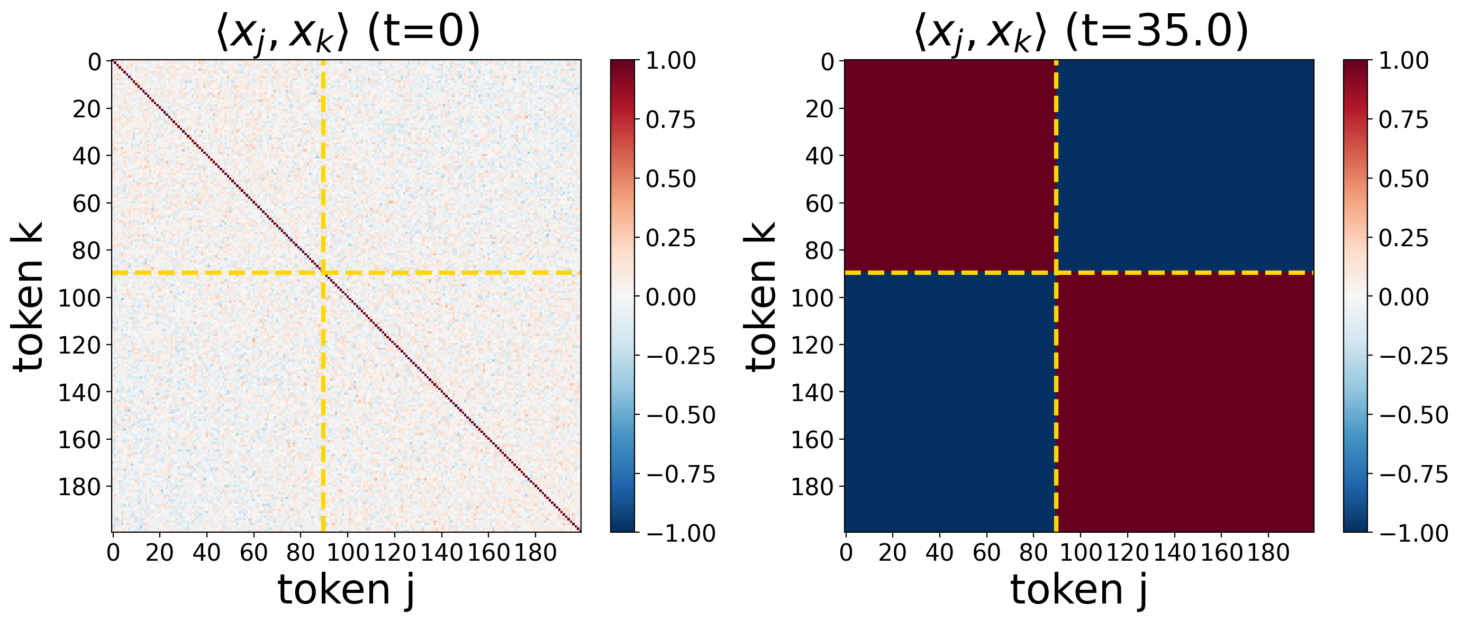}
  \end{minipage}
  \caption{The Gram matrices $G(t)$ for the LSA model at $t=0$ and $t=35$.}
\end{subfigure}%

\caption{
\textbf{Case $\mathbf 1$, parameter regime:} $\tr{A} > 0$ and $V=I$.
For both the LSA and USA models, 
the mean-squared cosine similarity $\widehat{\CR}_2(t)$ converges to $1$. 
However, the mean cosine similarity $\widehat{\CR}_1(t)$ converges to $1$ for the $\operatorname{USA}$ model, shown by the red dotted line, but to $0$ for the LSA model, shown by the blue dashed line. 
This indicates that the USA dynamics clusters at a single point, whereas the LSA dynamics converges to two antipodal clusters.  
The Gram matrices $G$ for the LSA model, shown on the right, suggest that the tokens approximately evenly split between the two clusters.
}
\label{fig:case1_gram_trA_pos_d100}
\end{figure}
\begin{figure}[!htb]
\centering
\newlength{\subfigimgheightA}
\setlength{\subfigimgheightA}{0.2\textheight} 
\begin{subfigure}[t]{.48\textwidth}
  \centering
  \begin{minipage}[b][\subfigimgheightA][b]{\linewidth}
    \centering
    \includegraphics[trim={0cm 0cm 0cm 1cm}, clip,width=\linewidth,height=\subfigimgheightA,keepaspectratio]{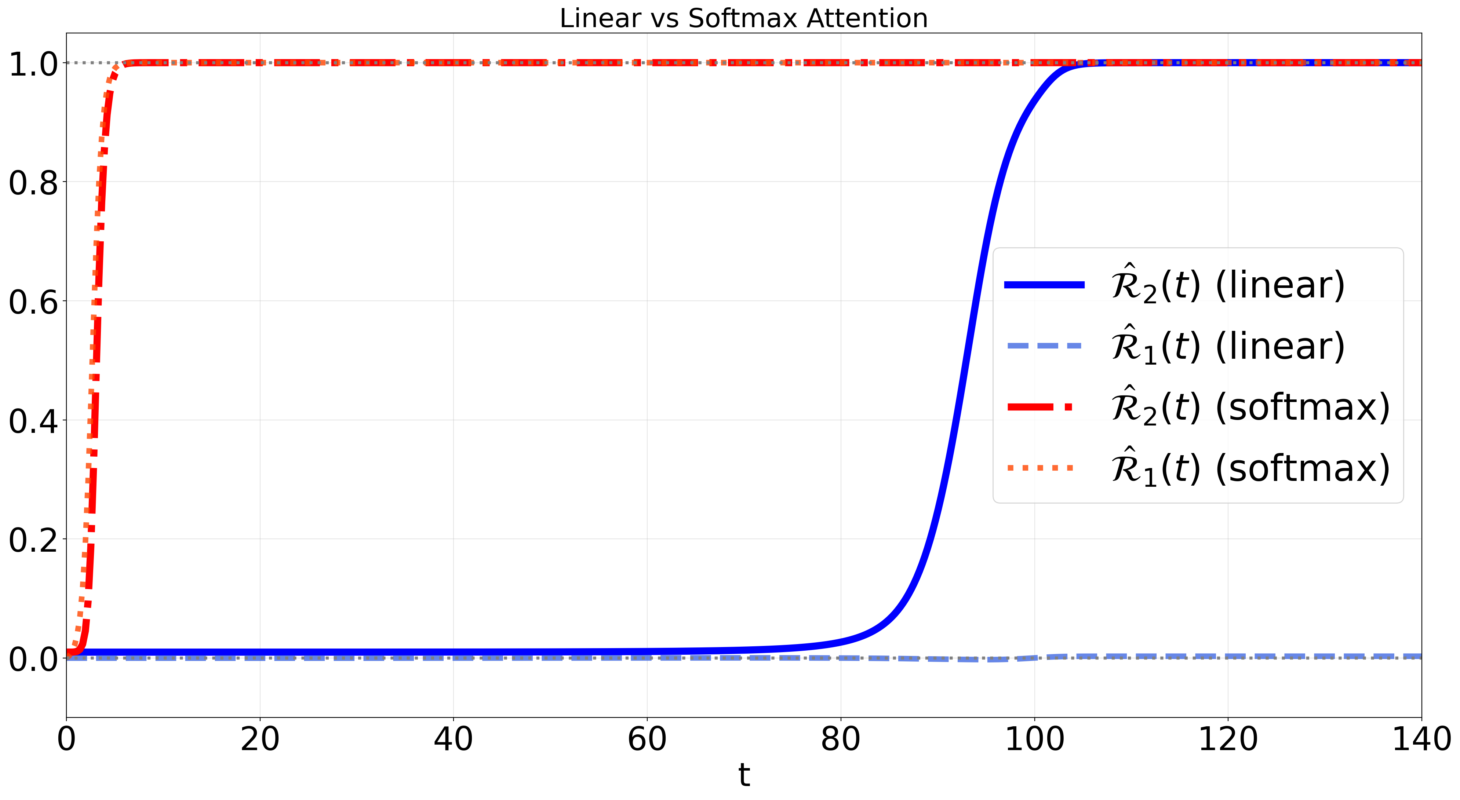}
  \end{minipage}
  \caption{Evolution of the clustering diagnostics $\widehat{\CR}_{1}(t)$ and $\widehat{\CR}_{2}(t)$ for the $\operatorname{LSA}$ (blue) and $\operatorname{USA}$ (red)  models.}
\end{subfigure}%
\hfill
\begin{subfigure}[t]{.48\textwidth}
  \centering
  \begin{minipage}[b][\subfigimgheightA][b]{\linewidth}
    \centering
    \includegraphics[width=\linewidth,height=\subfigimgheightA,keepaspectratio]{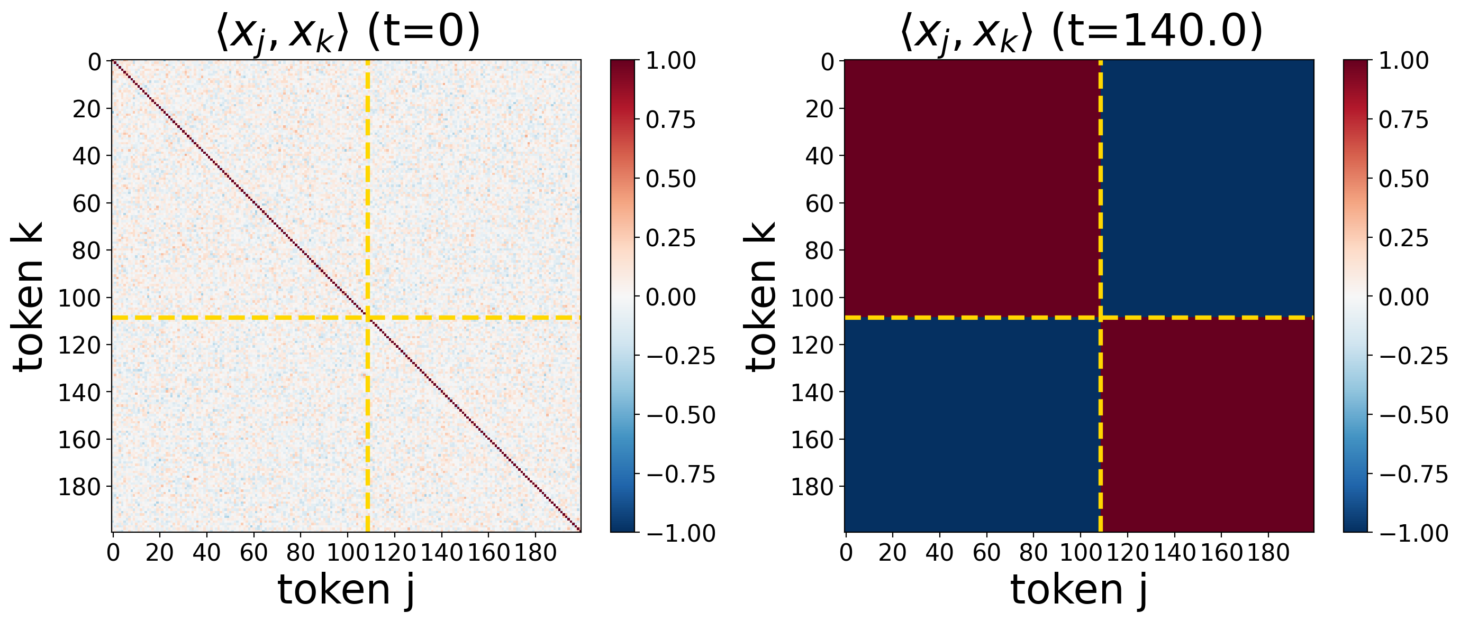}
  \end{minipage}
  \caption{The Gram matrices $G(t)$ for the LSA model at $t=0$ and $t=130$.}
\end{subfigure}%
\caption{
\textbf{Case $\mathbf 1$, parameter regime:} $\mathrm{tr}(A) \leq 0$, $\det(A) \leq 0$ and $V=I$.
The same qualitative behavior as Figure~\ref{fig:case1_gram_trA_pos_d100} is observed: the USA dynamics clusters at a single point, while the LSA dynamics converges to two approximately balanced antipodal clusters.
}
\label{fig:case1_gram_trA_detA_neg_d100}
\end{figure}
\begin{figure}[!htb]
\centering

\newlength{\subfigimgheightB}
\setlength{\subfigimgheightB}{0.2\textheight} 

\begin{subfigure}[t]{.48\textwidth}
  \centering
  \begin{minipage}[b][\subfigimgheightB][b]{\linewidth}
    \centering
    \includegraphics[trim={0cm 0cm 0cm 1cm}, clip,width=\linewidth,height=\subfigimgheightB,keepaspectratio]{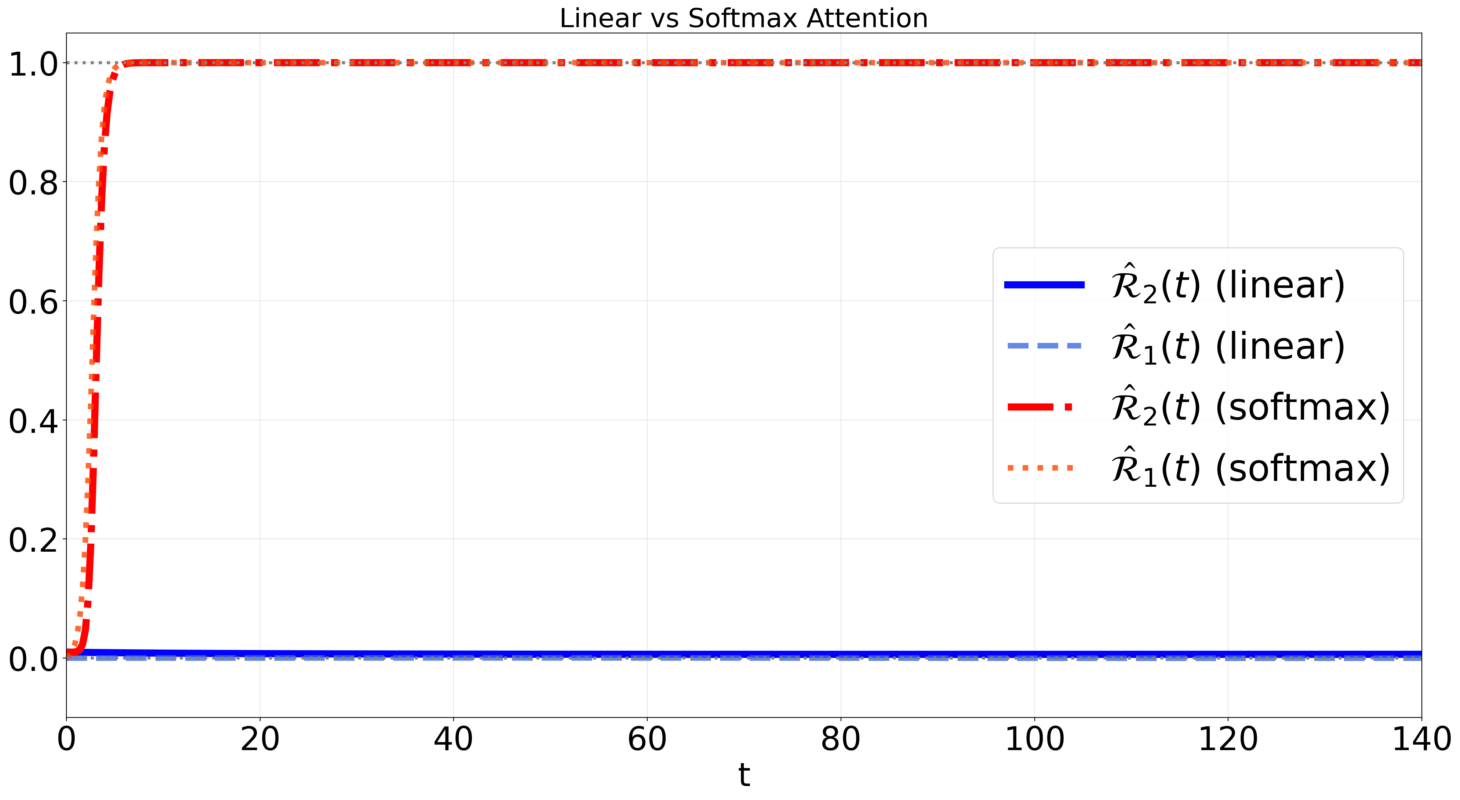}
  \end{minipage}
  \caption{Evolution of the clustering diagnostics $\widehat{\CR}_{1}(t)$ and $\widehat{\CR}_{2}(t)$ for the $\operatorname{LSA}$ (blue) and $\operatorname{USA}$ (red)  models.}
\end{subfigure}%
\hfill
\begin{subfigure}[t]{.48\textwidth}
  \centering
  \begin{minipage}[b][\subfigimgheightB][b]{\linewidth}
    \centering
    \includegraphics[width=\linewidth,height=\subfigimgheightB,keepaspectratio]{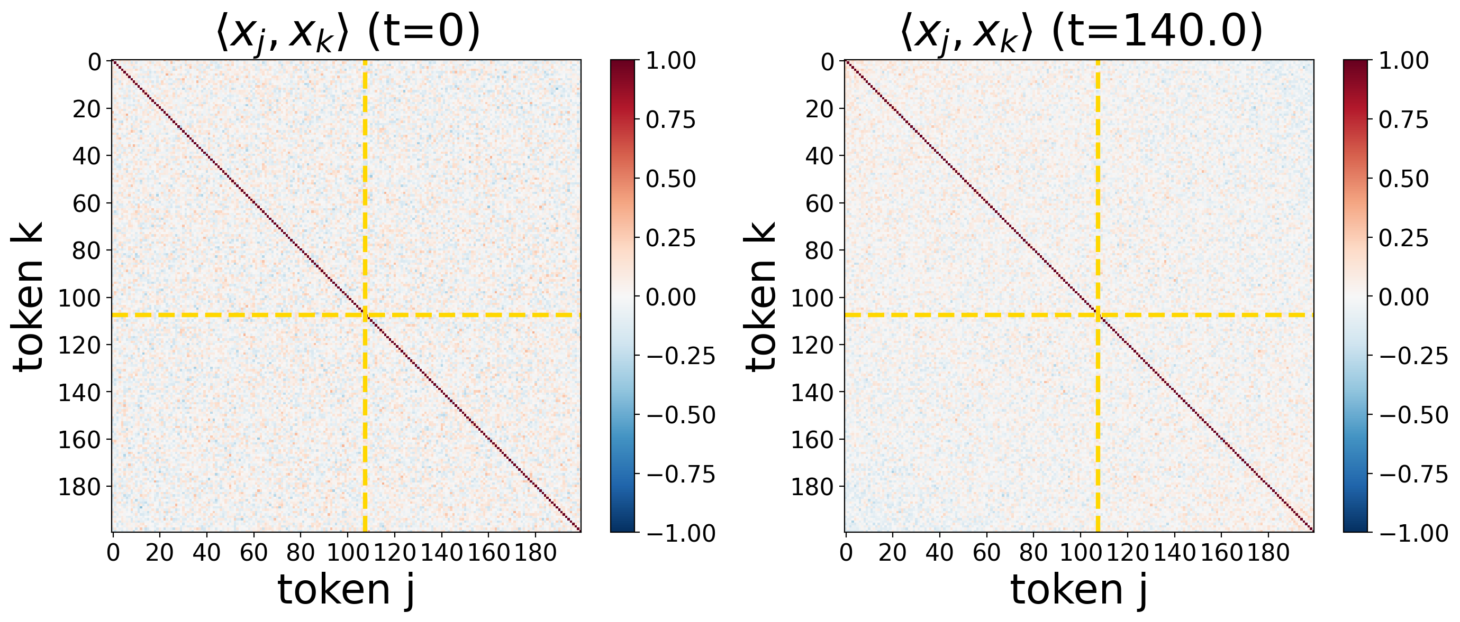}
  \end{minipage}
  \caption{The Gram matrices $G(t)$ for the LSA model at $t=0$ and $t=120$.}
\end{subfigure}%
\caption{
\textbf{Case $\mathbf 1$, parameter regime:} $(A+A^\top)/2 \prec 0$ and $V=I$.
Both clustering diagnostics $\widehat{\CR}_1(t)$ and $\widehat{\CR}_2(t)$ converge to almost $0$ for the LSA model and converge to $1$ for the USA model.
This implies that the LSA model does not synchronize whereas the USA model clusters at a single point.
}
\label{fig:case1_ps_d100}
\end{figure}%
For the USA model, the dynamics cluster to a single point in all three parameter regimes considered here, as indicated by the convergence of both $\widehat{\CR}_1(t)$ and $\widehat{\CR}_2(t)$ to $1$ in Figures~\ref{fig:case1_gram_trA_pos_d100}--\ref{fig:case1_ps_d100}.
Thus, while the USA dynamics exhibits clustering in regimes where LSA forms antipodal clusters, it also synchronizes in the regime where LSA does not cluster.
This highlights a qualitative difference between the two models: the USA dynamics tends to produce single-cluster synchronization, which is consistent with the existing theoretical findings in \cite{geshkovski2023mathematical,burger2025analysis}, whereas the LSA dynamics can either form antipodal clusters or fail to cluster, depending on the spectral structure of the matrix~$A$.

We also observe quantitative differences in convergence speed across models and parameter regimes.
Compared with the regime $\mathrm{tr}(A) > 0$ shown in Figure~\ref{fig:case1_gram_trA_pos_d100}, the convergence of the LSA model appears to be slower when $\tr{A} \leq 0$ and $\det(A) \leq 0$ as shown in Figure~\ref{fig:case1_gram_trA_detA_neg_d100}.
In contrast, the USA model exhibits more uniform convergence speeds across these regimes and generally converges faster than the LSA model.


\subsection{Case 2: \texorpdfstring{$A = I$, $V$}{AV} Symmetric}
\label{sec:exp_case2}

Our theoretical findings from Theorem~\ref{thm:case_2} characterize the role of the dominant eigenvalue $\lambdaMax(V)$ in the $2$-dimensional LSA model. 
In particular, if $\lambdaMax(V) \geq 0$, then the LSA dynamics converge to a pair of antipodal points.
If $\lambdaMax(V) < 0$, then the dynamics converge without forming clusters.

 
The corresponding numerical results are presented in Figures~\ref{fig:case2_eigpos_d100}--\ref{fig:case2_eigneg_d100}.
For the LSA model, the high-dimensional simulations agree with the $2$-dimensional theory: the sign of $\lambdaMax(V)$ determines whether clustering occurs. 
When $\lambda_{\max}(V) \geq 0$, the tokens converge to two antipodal points, as shown in Figures~\ref{fig:case2_eigpos_d100} and \ref{fig:case2_eig_d100}. 
When $\lambda_{\max}(V) < 0$, the tokens converge without cluster formation, as indicated by $\widehat{\CR}_1(t) = \widehat{\CR}_2(t) = 0$ together with the convergence of the coordinate-wise empirical mean of the tokens $m_l(t)$; see Figure~\ref{fig:case2_eigneg_d100}.  

For the USA model, the behavior also depends on $\lambda_{\max}(V)$, but the resulting cluster configuration can differ from that of the LSA model. 
When $\lambda_{\max}(V) > 0$, the USA dynamics converge to a single point, in contrast to the antipodal clustering of the LSA dynamics.
At the boundary regime $\lambda_{\max}(V) = 0$, Figure~\ref{fig:case2_eig_d100} shows that the USA model can also exhibit antipodal clustering similarly to the LSA model.
Lastly, when $\lambdaMax(V) < 0$, both models converge without forming clusters. 


For both the USA and LSA models, the boundary regime $\lambdaMax(V) = 0$ also displays a notable quantitative feature: clustering occurs much slower than in the positive-eigenvalue regime.
In particular, synchronization is rapid when $\lambdaMax(V) > 0$, whereas the boundary case requires orders of magnitude longer to approach the clustering state.


\begin{figure}[!htb]
\centering

\newlength{\subfigimgheightC}
\setlength{\subfigimgheightC}{0.2\textheight} 

\begin{subfigure}[t]{.48\textwidth}
  \centering
  \begin{minipage}[b][\subfigimgheightC][b]{\linewidth}
    \centering
    \includegraphics[trim={0cm 0cm 0cm 1cm}, clip,width=\linewidth,height=\subfigimgheightC,keepaspectratio]{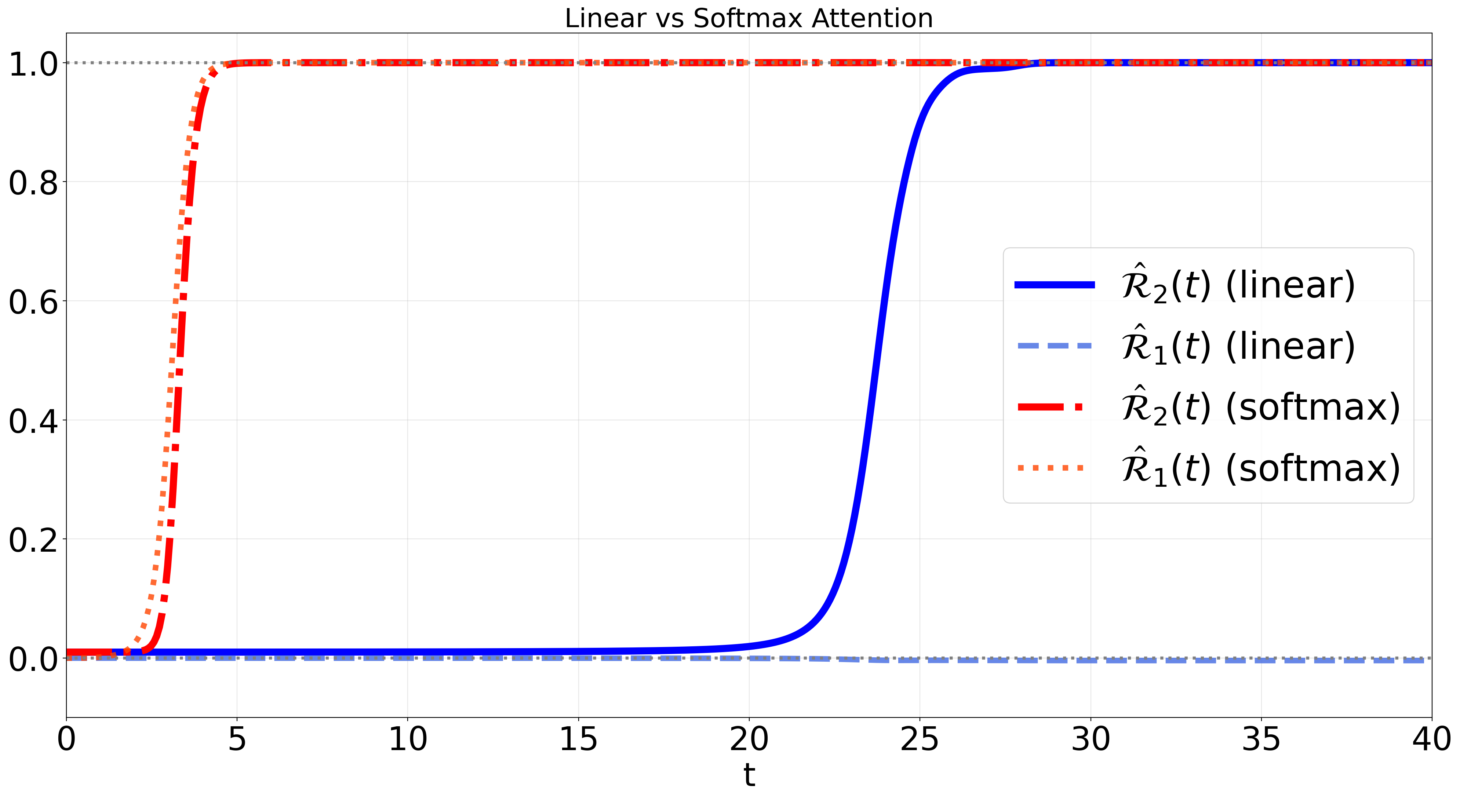}
  \end{minipage}
  \caption{Evolution of the clustering diagnostics $\widehat{\CR}_{1}(t)$ and $\widehat{\CR}_{2}(t)$ for the $\operatorname{LSA}$ (blue) and $\operatorname{USA}$ (red)  models.}
\end{subfigure}%
\hfill
\begin{subfigure}[t]{.48\textwidth}
  \centering
  \begin{minipage}[b][\subfigimgheightC][b]{\linewidth}
    \centering
    \includegraphics[width=\linewidth,height=\subfigimgheightC,keepaspectratio]{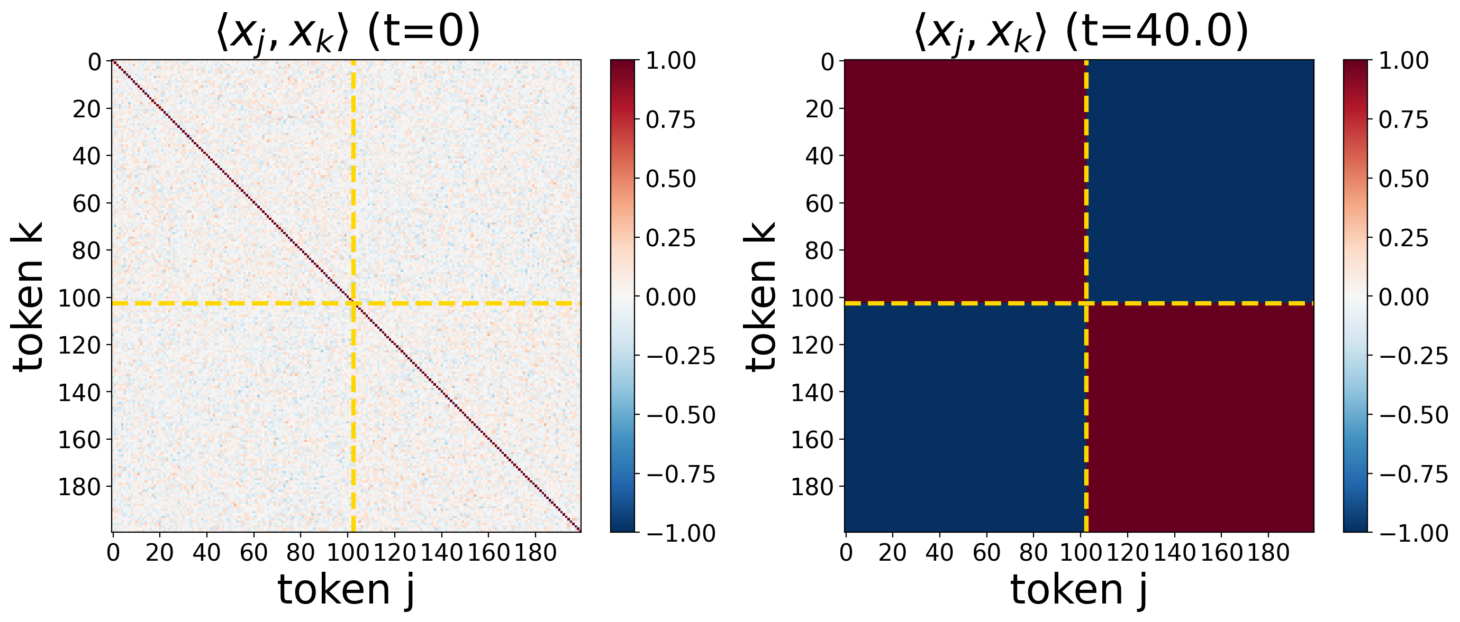}
  \end{minipage}
  \caption{The Gram matrices $G(t)$ for the LSA model at $t=0$ and $t=40$.}
\end{subfigure}
\caption{
\textbf{Case $\mathbf 2$, parameter regime:} $A=I$ and $\lambda_{\max}(V) > 0$.
For both the LSA and USA models, 
the mean-squared cosine similarity $\widehat{\CR}_2(t)$ converges to $1$. 
However, the mean cosine similarity $\widehat{\CR}_1(t)$ converges to $1$ for the $\operatorname{USA}$ model, shown by the red dotted line, but to $0$ for the LSA model, shown by the blue dashed line. 
This indicates that the USA dynamics clusters at a single point, whereas the LSA dynamics converges to two antipodal clusters.  
The Gram matrices $G$ for the LSA model, shown on the right, suggest that the tokens approximately evenly split between the two clusters.
}
\label{fig:case2_eigpos_d100}
\end{figure}


\begin{figure}[!htb]
\centering

\newlength{\subfigimgheightD}
\setlength{\subfigimgheightD}{0.2\textheight} 

\begin{subfigure}[t]{.48\textwidth}
  \centering
  \begin{minipage}[b][\subfigimgheightD][b]{\linewidth}
    \centering
    \includegraphics[trim={0cm 0cm 0cm 1cm}, clip,width=\linewidth,height=\subfigimgheightD,keepaspectratio]{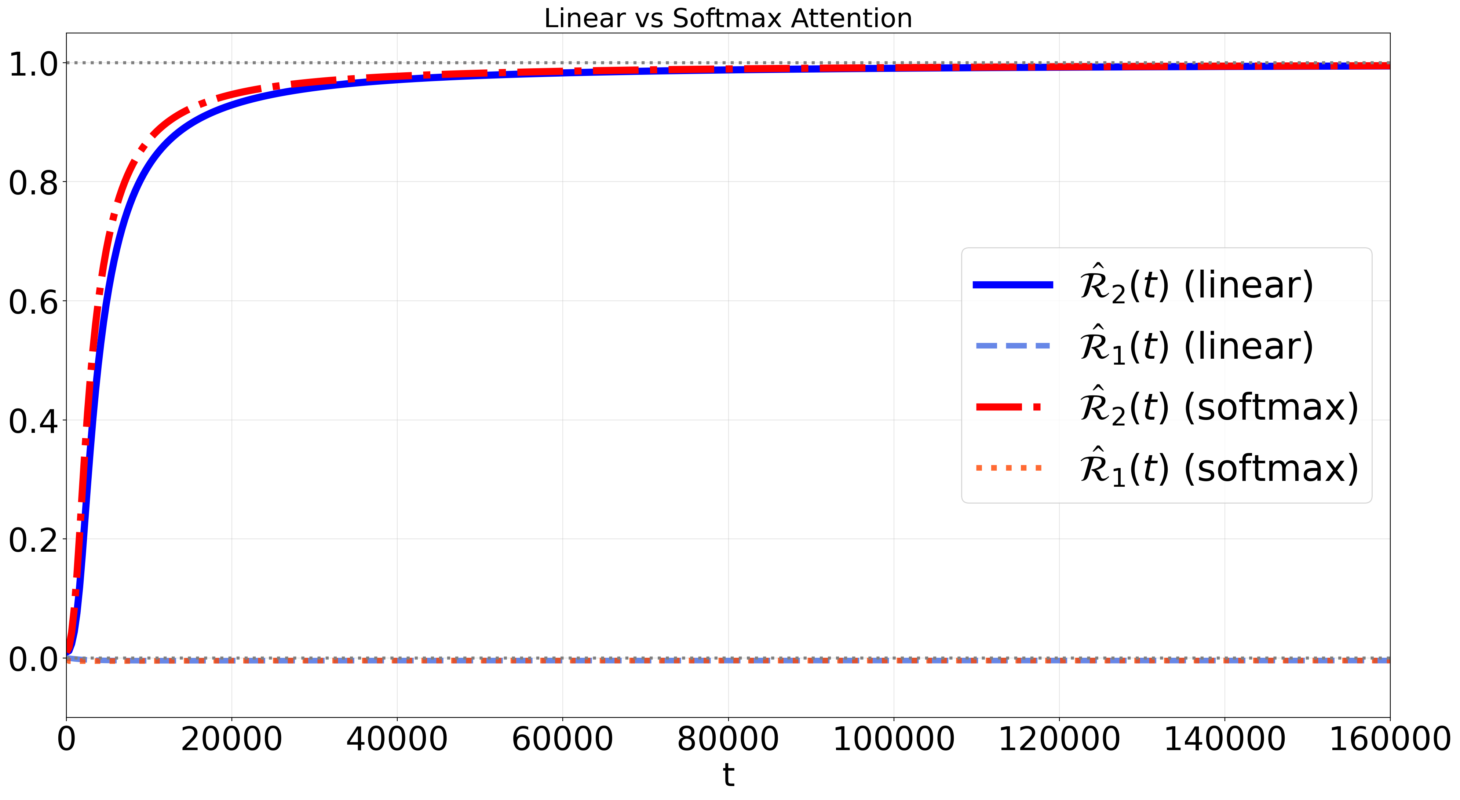}
  \end{minipage}
  \caption{Evolution of the clustering diagnostics $\widehat{\CR}_{1}(t)$ and $\widehat{\CR}_{2}(t)$ for the $\operatorname{LSA}$ (blue) and $\operatorname{USA}$ (red) models.}
\end{subfigure}%
\hfill
\begin{subfigure}[t]{.48\textwidth}
  \centering
  \begin{minipage}[b][\subfigimgheightD][b]{\linewidth}
    \centering
    \includegraphics[width=\linewidth,height=\subfigimgheightD,keepaspectratio]{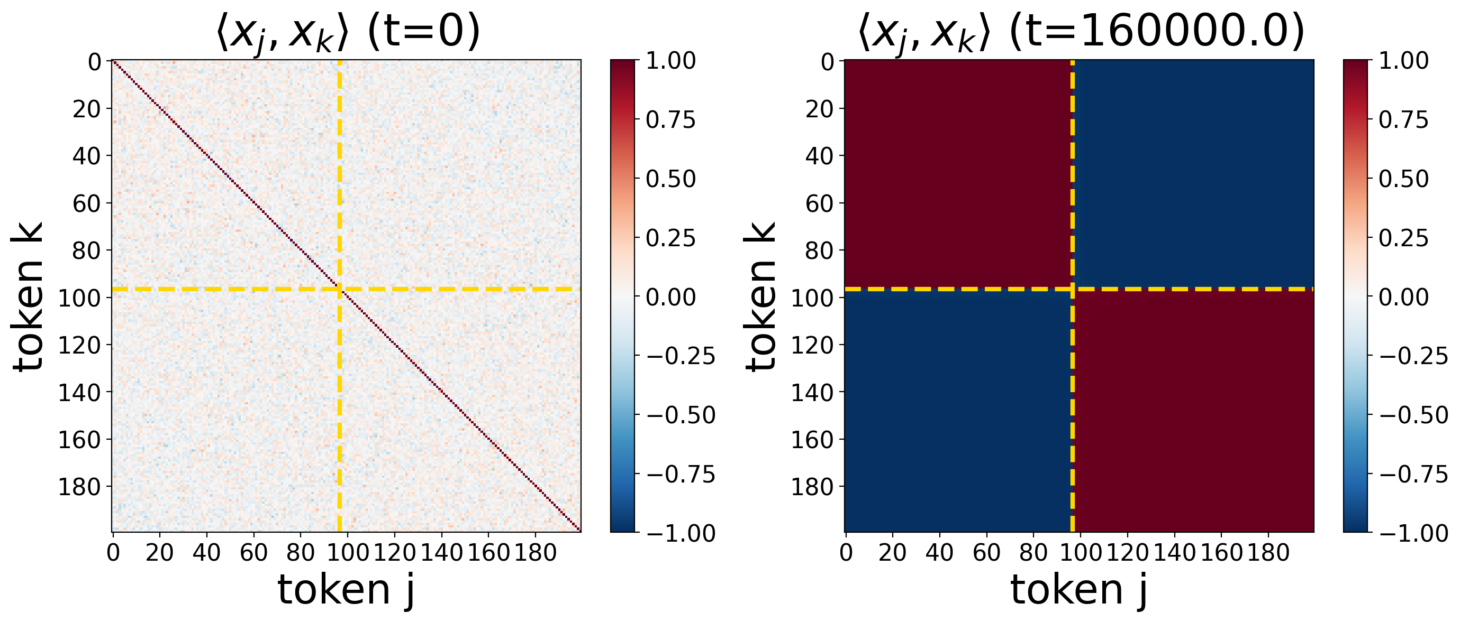}
  \end{minipage}
  \caption{The Gram matrices $G(t)$ for the LSA model at $t=0$ and $t=150{,}000$.}
\end{subfigure}%

\caption{
\textbf{Case $\mathbf 2$, parameter regime:} $A=I$ and $\lambda_{\max}(V) = 0$.
For both the LSA and USA models, the mean-squared cosine similarity $\widehat{\CR}_2(t)$ converges to $1$, while the mean cosine similarity $\widehat{\CR}_1(t)$ remains close to $0$.
This indicates that both dynamics converge to antipodal clustering configurations. 
Compared with the regime $\lambda_{\max}(V) > 0$, the convergence in this boundary regime is significantly slower for both models.
}

\label{fig:case2_eig_d100}
\end{figure}


\begin{figure}[!htb]
\centering

\newlength{\subfigimgheightE}
\setlength{\subfigimgheightE}{0.2\textheight} 

\begin{subfigure}[t]{.48\textwidth}
  \centering
  \begin{minipage}[b][\subfigimgheightE][b]{\linewidth}
    \centering
    \includegraphics[trim={0cm 0cm 0cm 1cm}, clip,width=\linewidth,height=\subfigimgheightE,keepaspectratio]{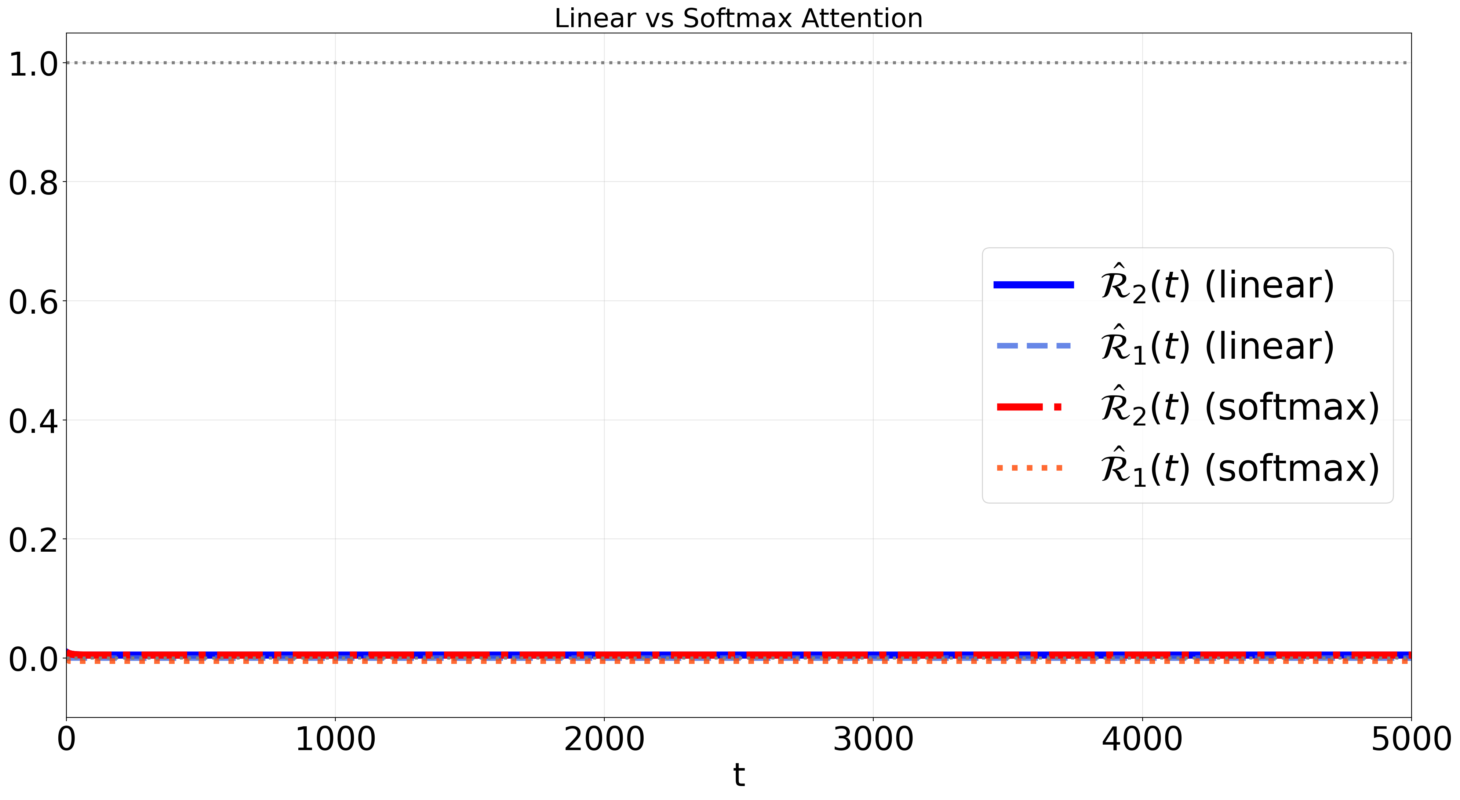}
  \end{minipage}
  \caption{Evolution of the clustering diagnostics $\widehat{\CR}_{1}(t)$ and $\widehat{\CR}_{2}(t)$ for the $\operatorname{LSA}$ (blue) and $\operatorname{USA}$ (red)  models.}
  \label{subfig:case_2_clustering_metrics}
\end{subfigure}
\hfill
\begin{subfigure}[t]{.48\textwidth}
  \centering
  \begin{minipage}[b][\subfigimgheightE][b]{\linewidth}
    \centering
    \includegraphics[width=\linewidth,height=\subfigimgheightE,keepaspectratio]{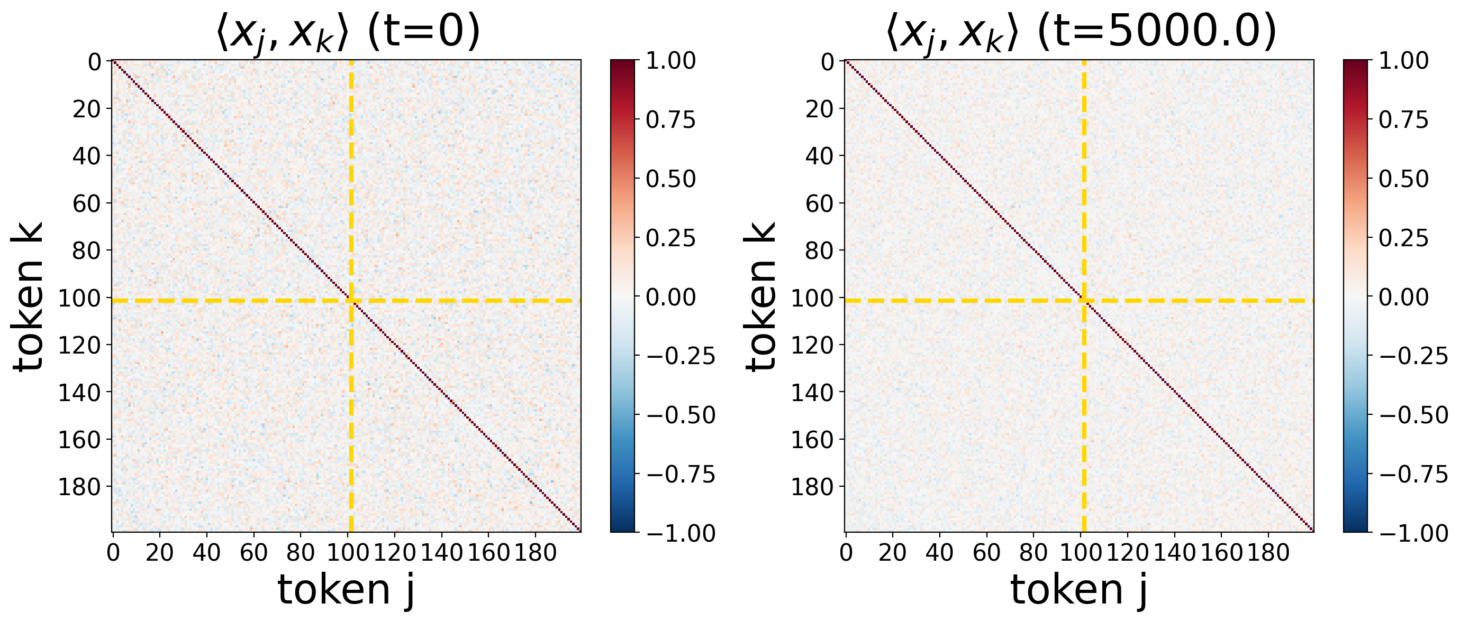}
  \end{minipage}
  \caption{The Gram matrices $G(t)$ for LSA model at $t=0$ and $t=5000$.}
\end{subfigure}
\\[1.0em]
\begin{subfigure}[t]{.48\textwidth}
  \centering
  \begin{minipage}[b][\subfigimgheightE][b]{\linewidth}
    \centering
    \includegraphics[trim={1.2cm 0cm 0cm 1cm}, clip,width=\linewidth,height=\subfigimgheightE,keepaspectratio]{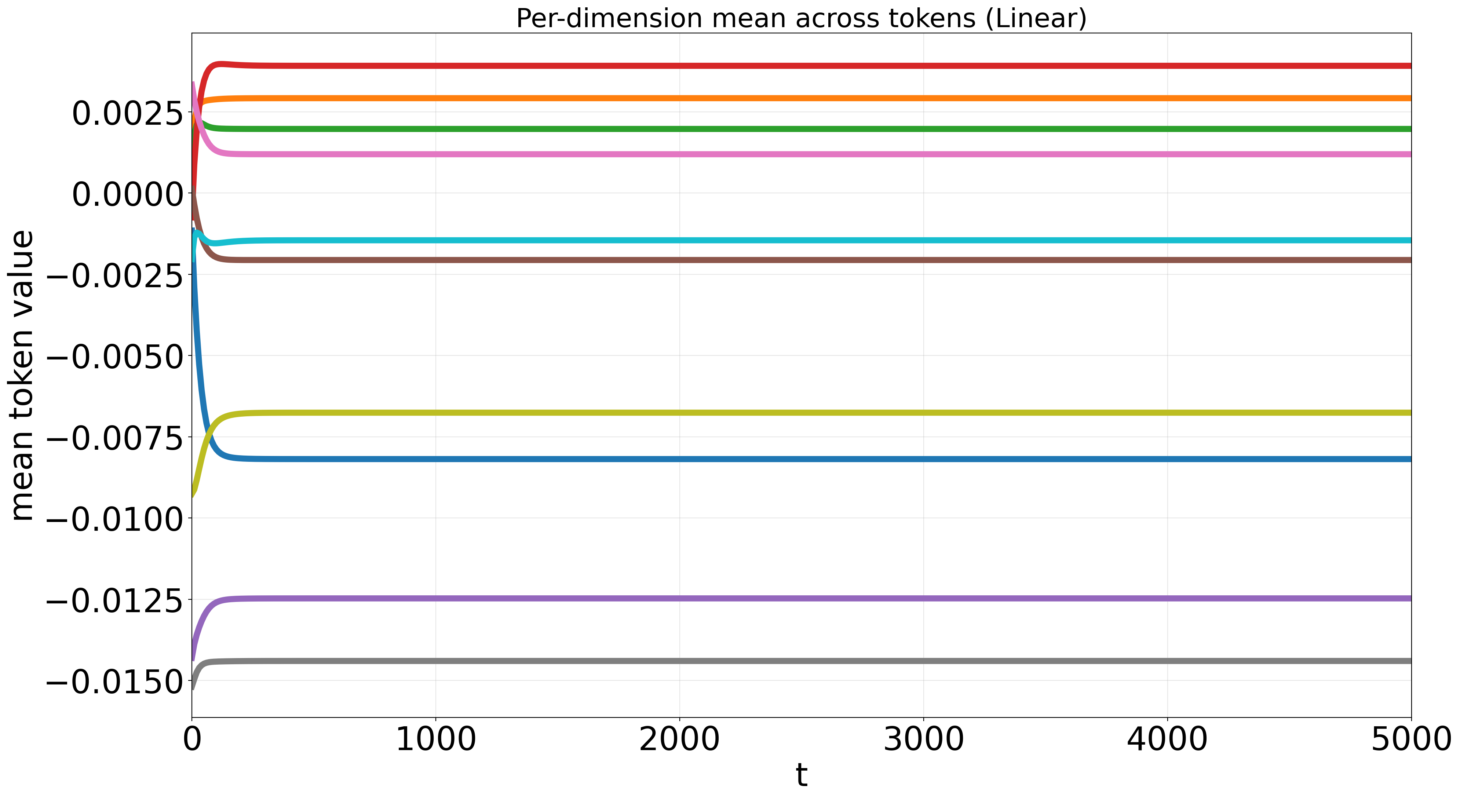}
  \end{minipage}
  \caption{Coordinate-wise mean $m_l(t)$ for the LSA model.}
  \label{subfig:case_2_LSA_per_dim_mean}
\end{subfigure}
\hfill
\begin{subfigure}[t]{.48\textwidth}
  \centering
  \begin{minipage}[b][\subfigimgheightE][b]{\linewidth}
    \centering
    \includegraphics[trim={1.2cm 0cm 0cm 1cm}, clip,width=\linewidth,height=\subfigimgheightE,keepaspectratio]{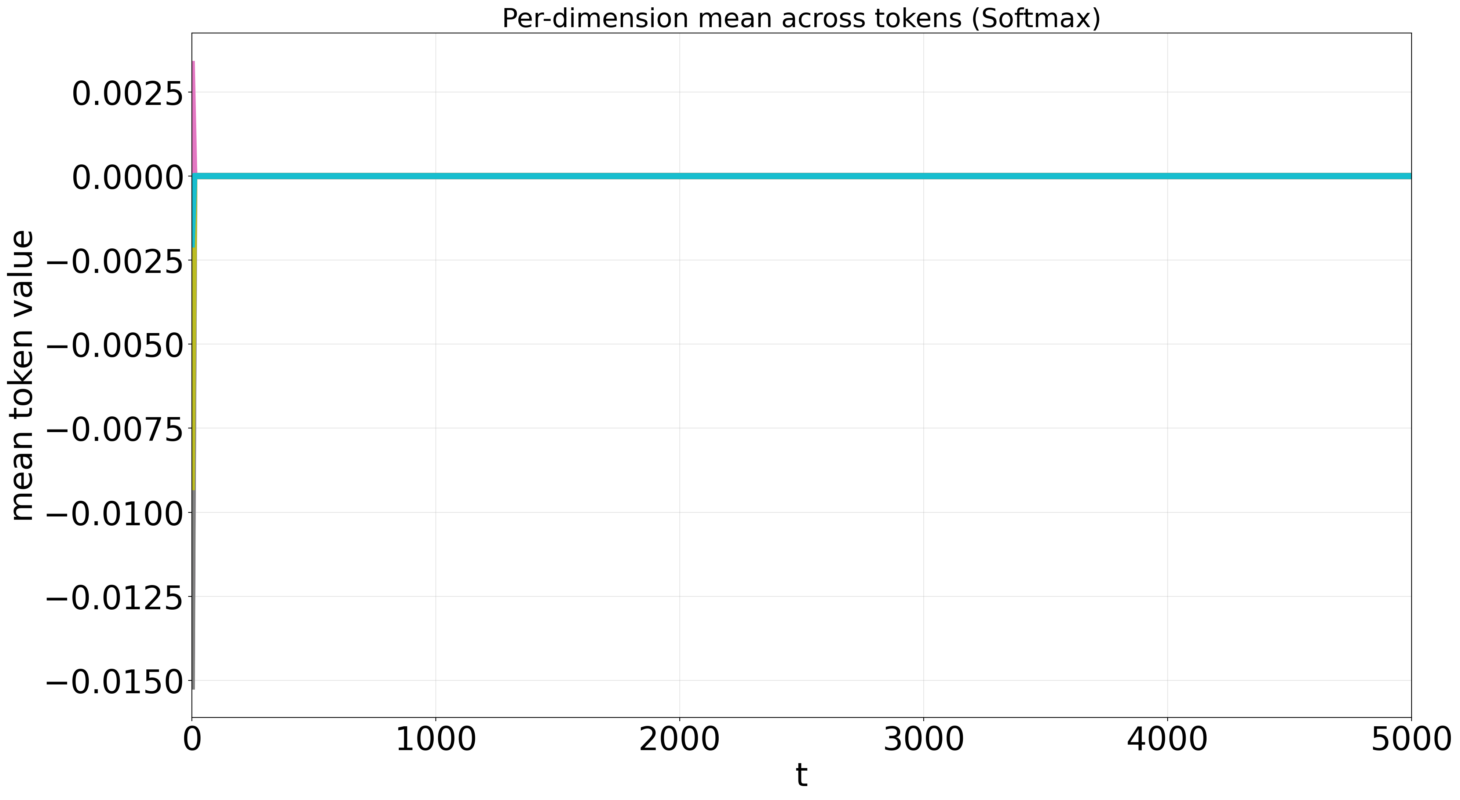}
  \end{minipage}
  \caption{Coordinate-wise mean $m_l(t)$ for the USA model.}
  \label{subfig:case_2_USA_per_dim_mean}
\end{subfigure}
%
\caption{
\textbf{Case $\mathbf 2$, parameter regime:} $A=I$ and $\lambda_{\max}(V) < 0$.
For both the LSA and USA models, the clustering diagnostics remain at $\widehat{\CR}_1(t) = \widehat{\CR}_2(t) = 0$, while the per-dimensional means $m_l(t)$ converge across dimensions.
These observations indicate that the dynamics converge, but without cluster formation.
}
\label{fig:case2_eigneg_d100}
\end{figure}

\subsection{Case 3: Non-commuting \texorpdfstring{$A$}{A} and \texorpdfstring{$V$}{V}}
\label{sec:exp_case3}
Our theoretical findings from Theorem \ref{thm:case_3} identify parameter regimes for the matrices $A$ and $V$ under which the associated $2$-dimensional LSA dynamics form antipodal clusters without converging.
As a representative example from Remark \ref{rem:case3prime}, we take $A = I$ and $V = \left(\begin{smallmatrix} v_{11} & v_{12} \\ -v_{12} & v_{11} \end{smallmatrix}\right)$ with $v_{11} > 0$ and $v_{12} \neq 0$.
In this regime, the LSA dynamics forms two antipodal clusters, but the cluster locations continue to oscillate periodically.

To extend this construction to higher dimensions, we set $A=I$ and $V$ to be a block-diagonal matrix whose blocks are $2\times 2$ blocks of the form $\left(\begin{smallmatrix} v_{11} & v_{12} \\ -v_{12} & v_{11} \end{smallmatrix}\right)$.

The corresponding numerical results are presented in Figure~\ref{fig:case3_d100}.
For the LSA model, the observed behavior of the higher-dimensional dynamics with $d > 2$ agrees with the $2$-dimensional theory from Theorem \ref{thm:case_3}.
Specifically, the tokens form two antipodal clusters, as shown by $\widehat{\CR}_2(t)$ converging to $1$ while $\widehat{\CR}_1(t)$ remains close to $0$ in Figure~\ref{subfig:case_3_clustering_metrics}.
However, the token trajectories do not converge to fixed locations; instead, they oscillate periodically, as indicated by the oscillatory behavior of the per-dimensional means $m_l(t)$ in Figure~\ref{subfig:case_3_LSA_per_dim_mean}.

For the USA model, the dynamics exhibit a similar combination of clustering and non-convergence.
The main difference lies in the cluster configuration: the USA dynamics collapses to a single cluster, whereas the LSA dynamics form two antipodal clusters.
Nevertheless, in both models, the oscillatory behavior of the per-dimensional means indicates that the token dynamics remain time-periodic rather than converging to a stationary configuration.

\begin{figure}[!htb]
\centering

\newlength{\caseThreeTopGraphicHeight}
\setlength{\caseThreeTopGraphicHeight}{0.2\textheight}

\newlength{\caseThreeBottomGraphicHeight}
\setlength{\caseThreeBottomGraphicHeight}{0.2\textheight}

\begin{subfigure}[t]{.48\textwidth}
  \centering
  \parbox[b][\caseThreeTopGraphicHeight][b]{\linewidth}{%
    \includegraphics[trim={0cm 0cm 0cm 1cm}, clip,width=\linewidth]{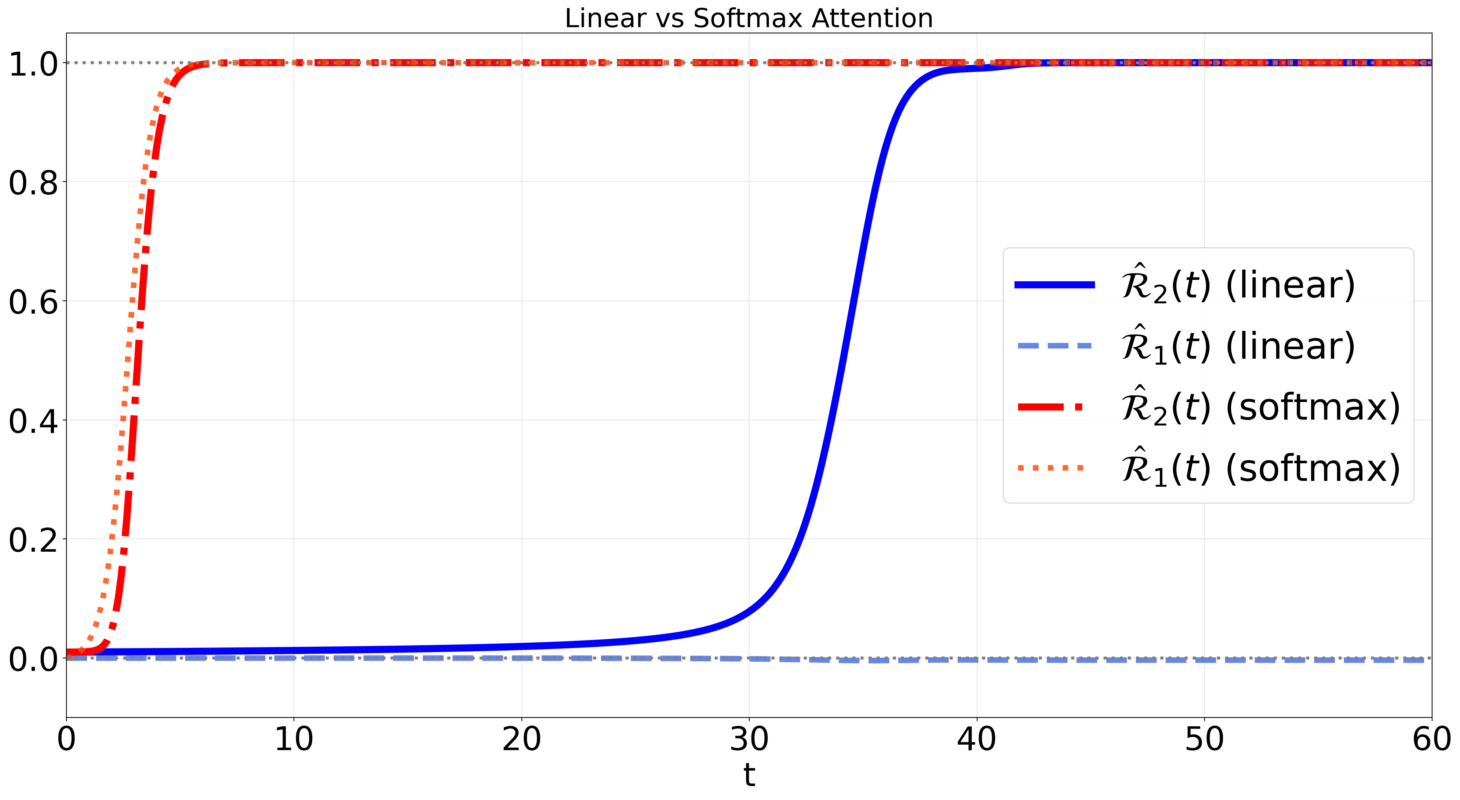}%
  }
  \caption{Evolution of the clustering diagnostics $\widehat{\CR}_{1}(t)$ and $\widehat{\CR}_{2}(t)$ for the $\operatorname{LSA}$ (blue) and $\operatorname{USA}$ (red)  models.}
  \label{subfig:case_3_clustering_metrics}
\end{subfigure}%
\hfill
\begin{subfigure}[t]{.48\textwidth}
  \centering
  \parbox[b][\caseThreeBottomGraphicHeight][b]{\linewidth}{%
    \includegraphics[width=\linewidth]{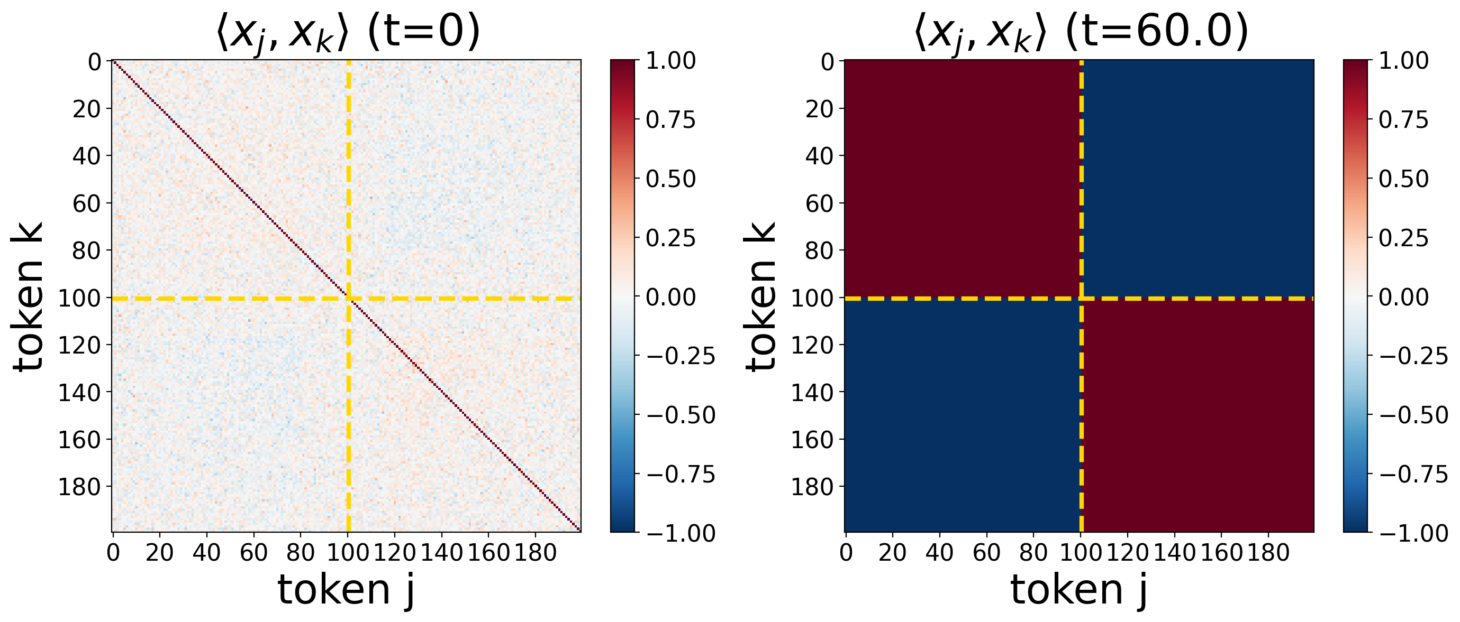}%
  }
  \caption{The Gram matrices $G(t)$ for the LSA model at $t=0$ and $t=60$.}
  \label{subfig:case_3_LSA_gram}
\end{subfigure}%

\begin{subfigure}[t]{.48\textwidth}
  \centering
  \parbox[b][\caseThreeTopGraphicHeight][b]{\linewidth}{%
    \includegraphics[trim={1.2cm 0cm 0cm 1cm}, clip,width=\linewidth]{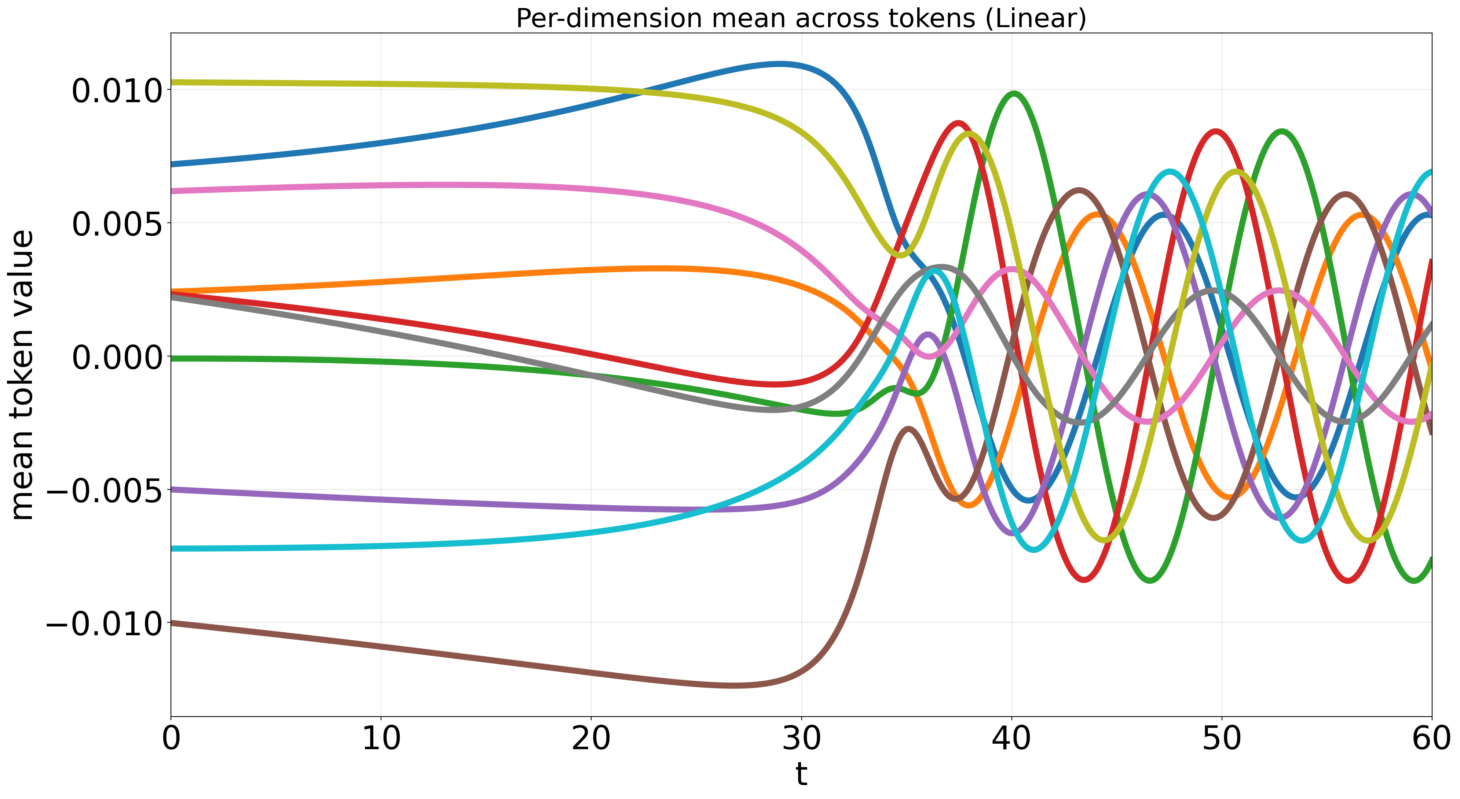}%
  }
  \caption{Coordinate-wise mean $m_l(t)$ for the LSA model.}
  \label{subfig:case_3_LSA_per_dim_mean}
\end{subfigure}%
\hfill
\begin{subfigure}[t]{.48\textwidth}
  \centering
  \parbox[b][\caseThreeTopGraphicHeight][b]{\linewidth}{%
    \includegraphics[trim={1.2cm 0cm 0cm 1cm}, clip,width=\linewidth]{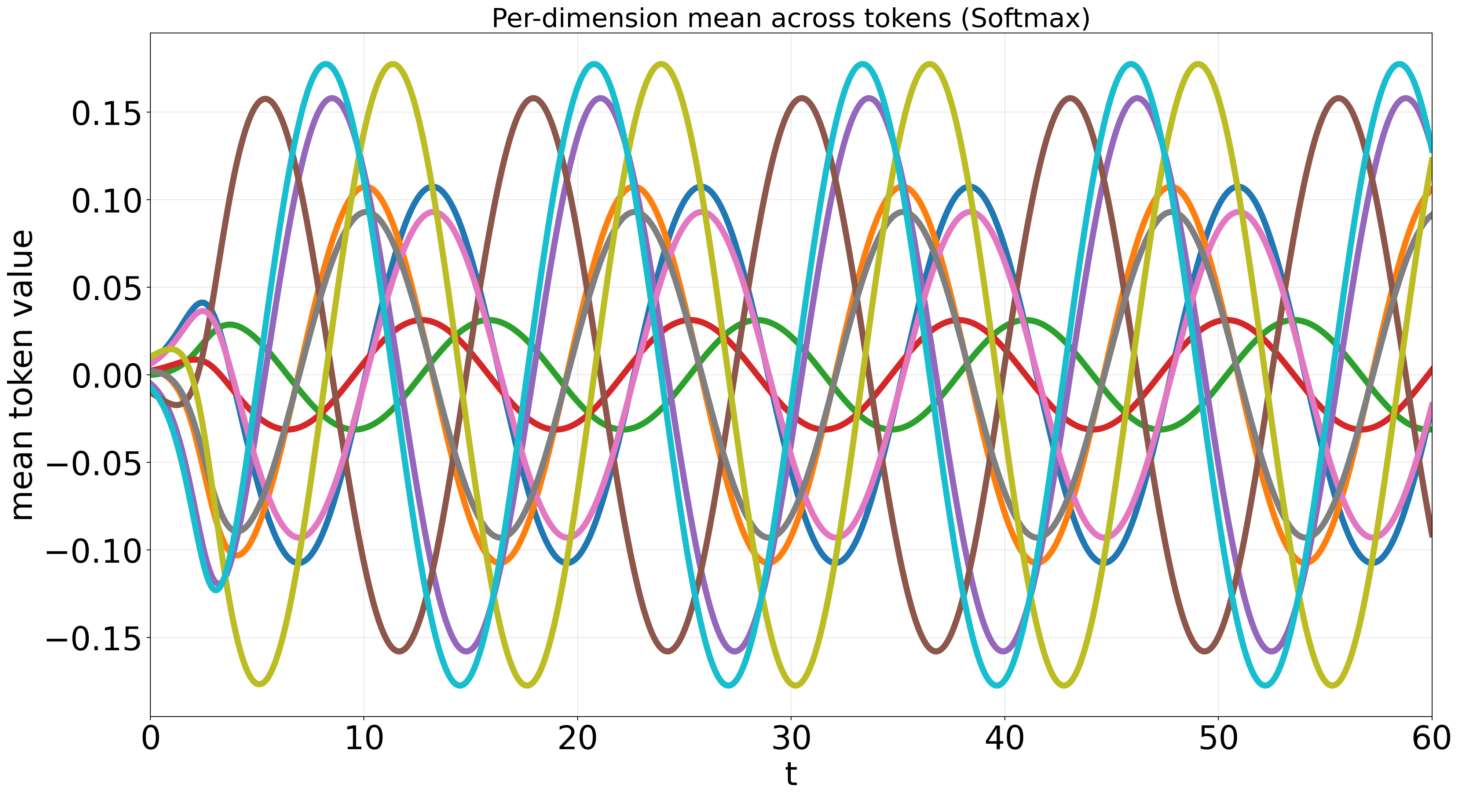}%
  }
  \caption{Coordinate-wise mean $m_l(t)$ for the USA model.}
  \label{subfig:case_3_USA_per_dim_mean}
\end{subfigure}
\caption{
\textbf{Case 3, parameter regime:} $A = I$ and $V = \left(\begin{smallmatrix} v_{11} & v_{12} \\ -v_{12} & v_{11} \end{smallmatrix}\right)$ with $v_{11} > 0$ and $v_{12} \neq 0$.
For both the LSA and USA models, 
the mean-squared cosine similarity $\widehat{\CR}_2(t)$ converges to $1$. 
However, the mean cosine similarity $\widehat{\CR}_1(t)$ converges to $1$ for the $\operatorname{USA}$ model, shown by the red dotted line, but to $0$ for the LSA model, shown by the blue dashed line; see Figure~\ref{subfig:case_3_clustering_metrics}. 
Thus, the USA dynamics clusters at a single point, whereas the LSA dynamics forms two antipodal clusters.
The periodic oscillations of the coordinate-wise means $m_l(t)$, shown in Figures~\ref{subfig:case_3_LSA_per_dim_mean} and \ref{subfig:case_3_USA_per_dim_mean}, indicate that the token dynamics remain oscillatory.
}
\label{fig:case3_d100}
\end{figure}


\subsection{Case 4: Hidden Hamiltonian Dynamics}
\label{sec:exp_case4}
Our theoretical findings from Theorem~\ref{thm:case_4} identify the parameter regimes under which the associated $2$-dimensional LSA model exhibits a ``hidden" Hamiltonian structure after suitable transformations.
Specifically, we take $A = I$ and $V = \left(\begin{smallmatrix} v_{11} & v_{12} \\ -v_{12} & -v_{11} \end{smallmatrix}\right)$, which depends on two parameters.
When $v_{11} < v_{12}$, the LSA dynamics exhibits persistent periodic oscillations.
In contrast, when $v_{11} \geq v_{12}$, the LSA dynamics converge to a pair of antipodal points.

To extend this construction to higher dimensions, we take $V$ to be a block-diagonal matrix whose blocks are $2\times 2$ blocks of the form
$\left(\begin{smallmatrix} v_{11} & v_{12} \\ -v_{12} & -v_{11} \end{smallmatrix}\right)$.

The corresponding numerical results are presented in Figures~\ref{fig:case4_agb_d100}--\ref{fig:case4_alb_d100}.
For the LSA model, the long-time behavior is governed by the relation between $v_{11}$ and $v_{12}$, in agreement with the $2$-dimensional theory from Theorem \ref{thm:case_4}. 
When $v_{11} \geq v_{12}$, the tokens cluster into a pair of antipodal points, as indicated
by $\widehat{\CR}_2(t)$ converging to $1$ while $\widehat{\CR}_1(t)$ remains close to $0$ in Figures~\ref{fig:case4_agb_d100} and \ref{fig:case4_aeb_d100}. 
When $v_{11} < v_{12}$, the token trajectories are periodic and do not form clusters. 
In this regime, the clustering diagnostics remain at $\widehat{\CR}_1(t) = \widehat{\CR}_2(t) = 0$, as shown in Figure~\ref{subfig:case_4_clustering_diagnostics}, while the per-dimensional means $m_l(t)$ exhibit sustained periodic oscillations, as shown in Figure~\ref{subfig:case_4_per_dim_mean_LSA}.
    
For the USA model, the same parameter threshold separates clustering from oscillatory non-clustering behavior. 
When $v_{11} \geq v_{12}$, the USA model converges to a single point, as indicated by $\widehat{\CR}_1(t) = \widehat{\CR}_2(t) = 1$ in Figures~\ref{fig:case4_agb_d100} and \ref{fig:case4_aeb_d100}.
This differs from the antipodal clustering observed in the LSA model. 
When $v_{11} < v_{12}$, the USA dynamics also fails to synchronize and instead exhibits persistent periodic oscillations, as shown by the oscillatory behavior of $m_l(t)$ in Figure~\ref{subfig:case_4_per_dim_mean_USA}.

Finally, we observe a quantitative slowdown near the transition.
In the regime $v_{11} > v_{12}$, as shown in Figure~\ref{fig:case4_agb_d100}, convergence is relatively rapid for both models. 
However, at the bifurcation point $v_{11} = v_{12}$, shown in Figure~\ref{fig:case4_aeb_d100}, convergence is substantially slower and takes orders of magnitude longer to stabilize. 
Consistent with the previous cases, the USA model generally converges faster than the LSA model in the regimes where clustering occurs.


\begin{figure}[!htb]
\centering

\newlength{\subfigimgheightF}
\setlength{\subfigimgheightF}{0.2\textheight} 

\begin{subfigure}[t]{.48\textwidth}
  \centering
  \begin{minipage}[b][\subfigimgheightF][b]{\linewidth}
    \centering
    \includegraphics[trim={0cm 0cm 0cm 1cm}, clip,width=\linewidth,height=\subfigimgheightF,keepaspectratio]{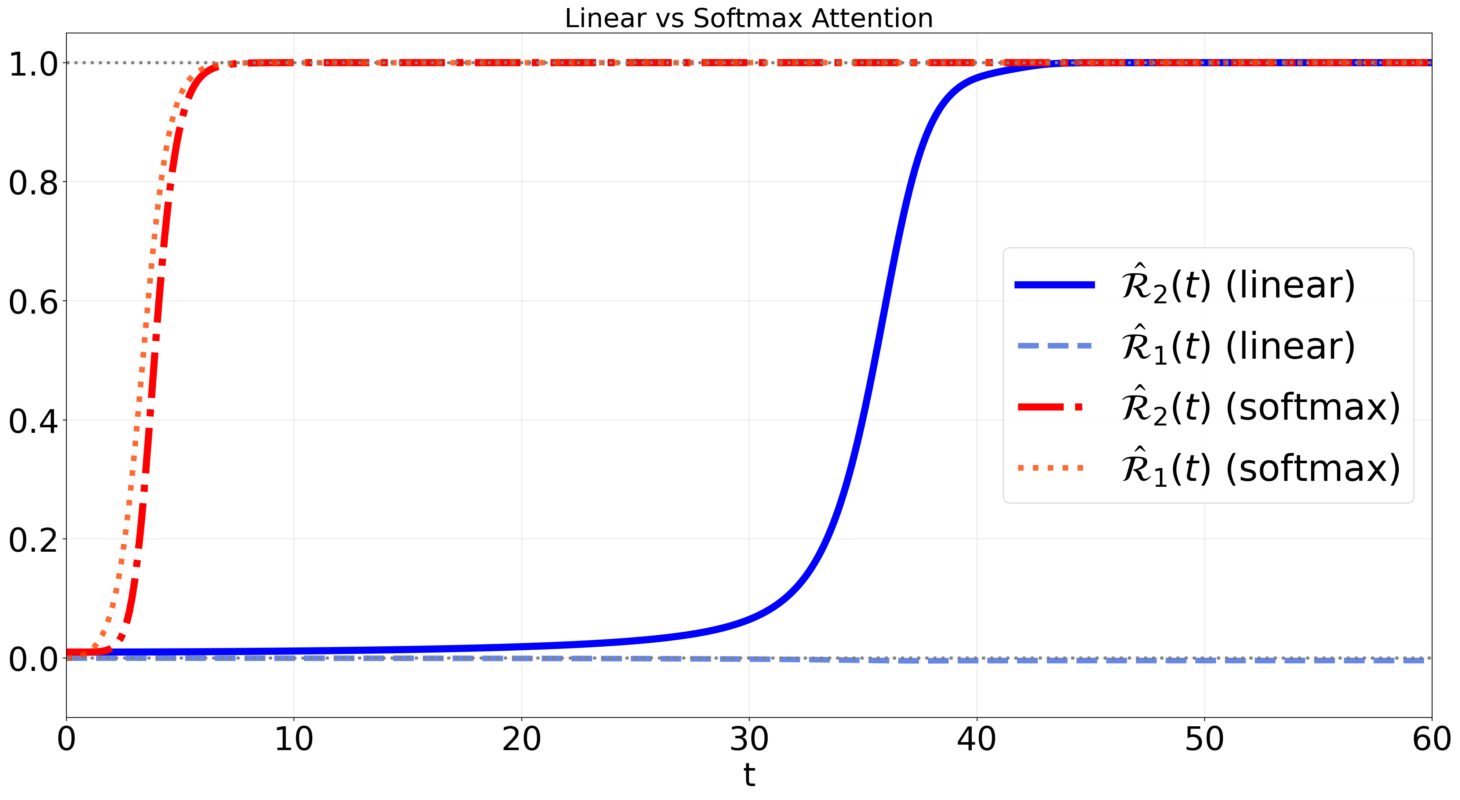}
  \end{minipage}
  \caption{Evolution of the clustering diagnostics $\widehat{\CR}_{1}(t)$ and $\widehat{\CR}_{2}(t)$ for the $\operatorname{LSA}$ (blue) and $\operatorname{USA}$ (red)  models.}
\end{subfigure}%
\hfill
\begin{subfigure}[t]{.48\textwidth}
  \centering
  \begin{minipage}[b][\subfigimgheightF][b]{\linewidth}
    \centering
    \includegraphics[width=\linewidth,height=\subfigimgheightF,keepaspectratio]{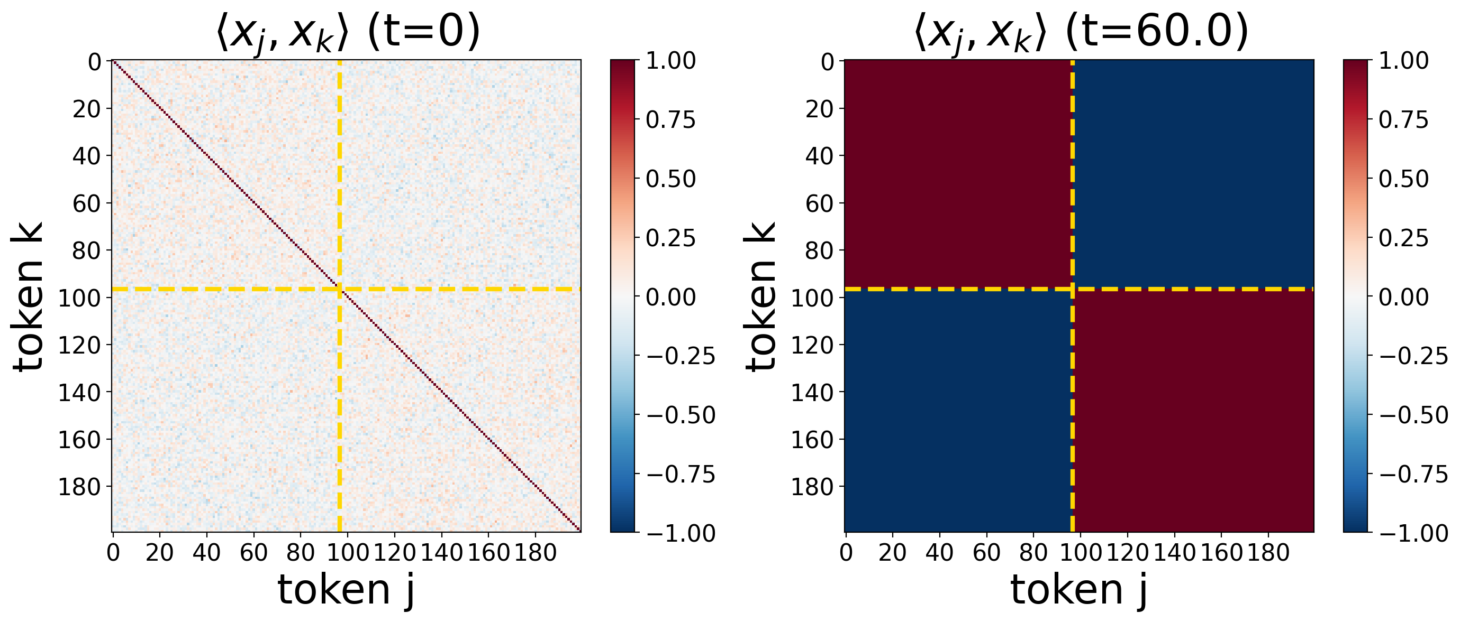}
  \end{minipage}
  \caption{The Gram matrices $G(t)$ for the LSA model at $t=0$ and $t=45$.}
\end{subfigure}

\caption{
\textbf{Case $\mathbf 4$, parameter regime:} $v_{11} > v_{12}$.
For both the LSA and USA models, 
the mean-squared cosine similarity $\widehat{\CR}_2(t)$ converges to $1$. 
However, the mean cosine similarity $\widehat{\CR}_1(t)$ converges to $1$ for the $\operatorname{USA}$ model, shown by the red dotted line, but to $0$ for the LSA model, shown by the blue dashed line. 
This indicates that the USA dynamics clusters at a single point, whereas the LSA dynamics converges to two antipodal clusters.  
The Gram matrices $G$ for the LSA model, shown on the right, suggest that the tokens are not entirely evenly split between the two clusters.}
\label{fig:case4_agb_d100}
\end{figure}


\begin{figure}[!htb]
\centering

\newlength{\subfigimgheightG}
\setlength{\subfigimgheightG}{0.2\textheight} 

\begin{subfigure}[t]{.48\textwidth}
  \centering
  \begin{minipage}[b][\subfigimgheightG][b]{\linewidth}
    \centering
    \includegraphics[trim={0cm 0cm 0cm 1cm}, clip,width=\linewidth,height=\subfigimgheightG,keepaspectratio]{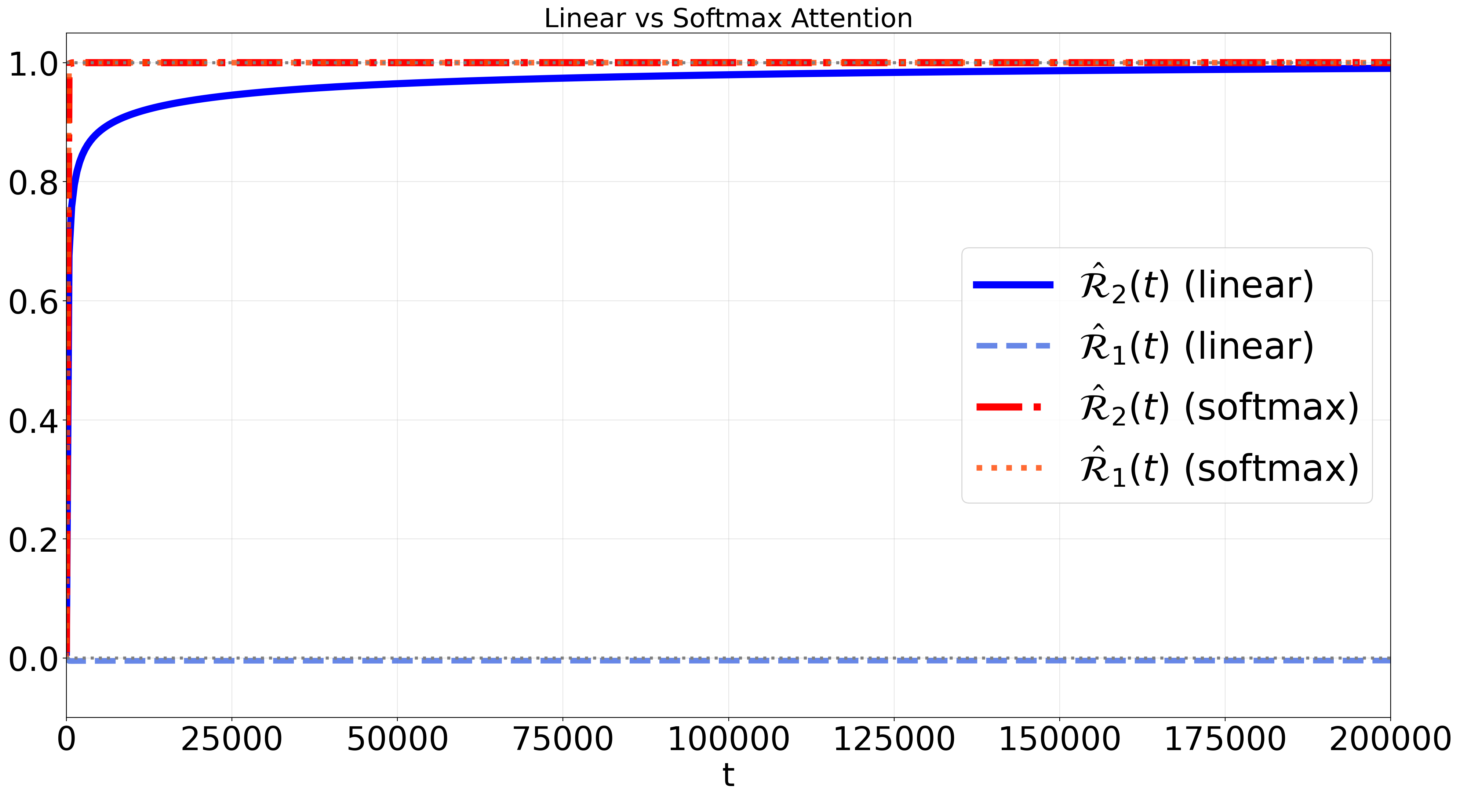}
  \end{minipage}
  \caption{Evolution of the clustering diagnostics $\widehat{\CR}_{1}(t)$ and $\widehat{\CR}_{2}(t)$ for the $\operatorname{LSA}$ (blue) and $\operatorname{USA}$ (red)  models.}
\end{subfigure}%
\hfill
\begin{subfigure}[t]{.48\textwidth}
  \centering
  \begin{minipage}[b][\subfigimgheightG][b]{\linewidth}
    \centering
    \includegraphics[width=\linewidth,height=\subfigimgheightG,keepaspectratio]{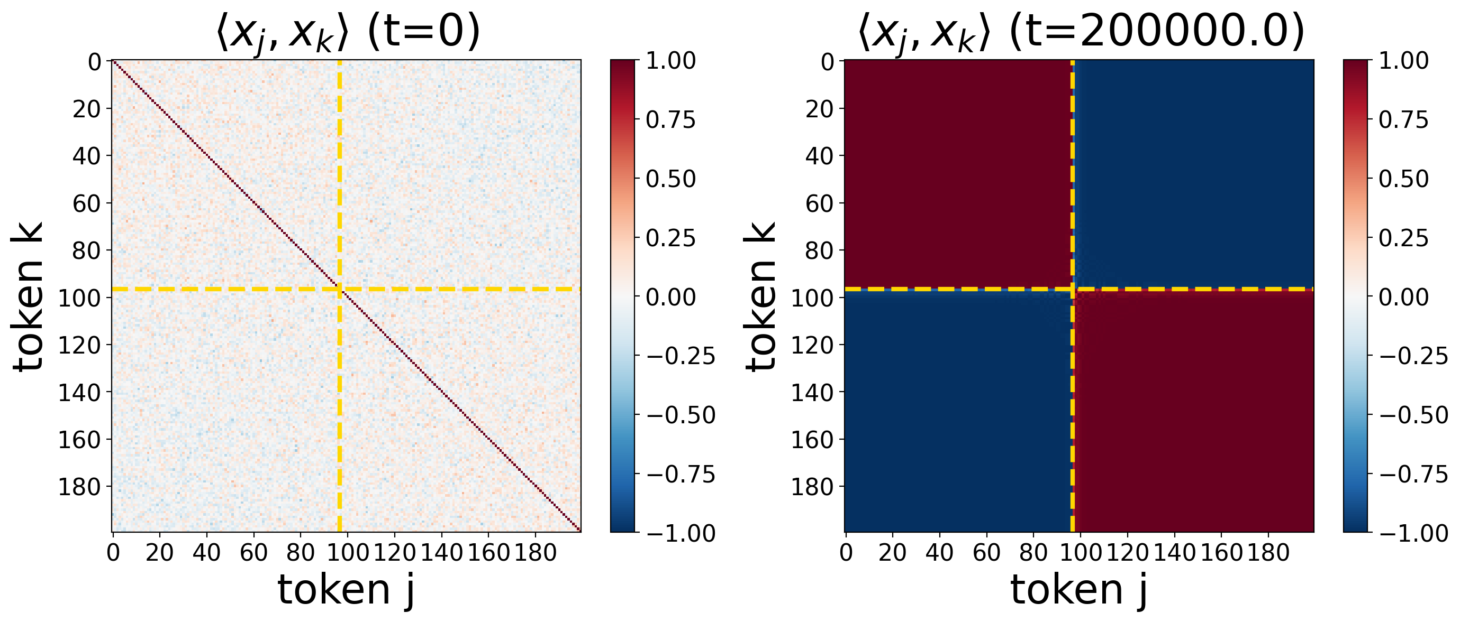}
  \end{minipage}
  \caption{The Gram matrices $G(t)$ for the LSA model at $t=0$ and $t=200{,}000$.}
\end{subfigure}

\caption{
\textbf{Case 4, parameter regime:} $v_{11} = v_{12}$.
The qualitative behavior is the same as in the regime $v_{11} > v_{12}$: for both the LSA and USA models, $\widehat{\CR}_2(t)$ converges to $1$, while $\widehat{\CR}_1(t)$ converges to $1$ for the $\operatorname{USA}$ model and to $0$ for the LSA model.
This indicates that the USA dynamics clusters at a single point, whereas the LSA dynamics converges to two antipodal clusters.
However, convergence is notably slower in this regime, taking approximately $t=150{,}000$ steps.
}

\label{fig:case4_aeb_d100}
\end{figure}


\begin{figure}[!htb]
\centering

\newlength{\subfigimgheightH}
\setlength{\subfigimgheightH}{0.2\textheight} 

\begin{subfigure}[t]{.48\textwidth}
  \centering
  \begin{minipage}[b][\subfigimgheightH][b]{\linewidth}
    \centering
    \includegraphics[trim={0cm 0cm 0cm 1cm}, clip,width=\linewidth,height=\subfigimgheightH,keepaspectratio]{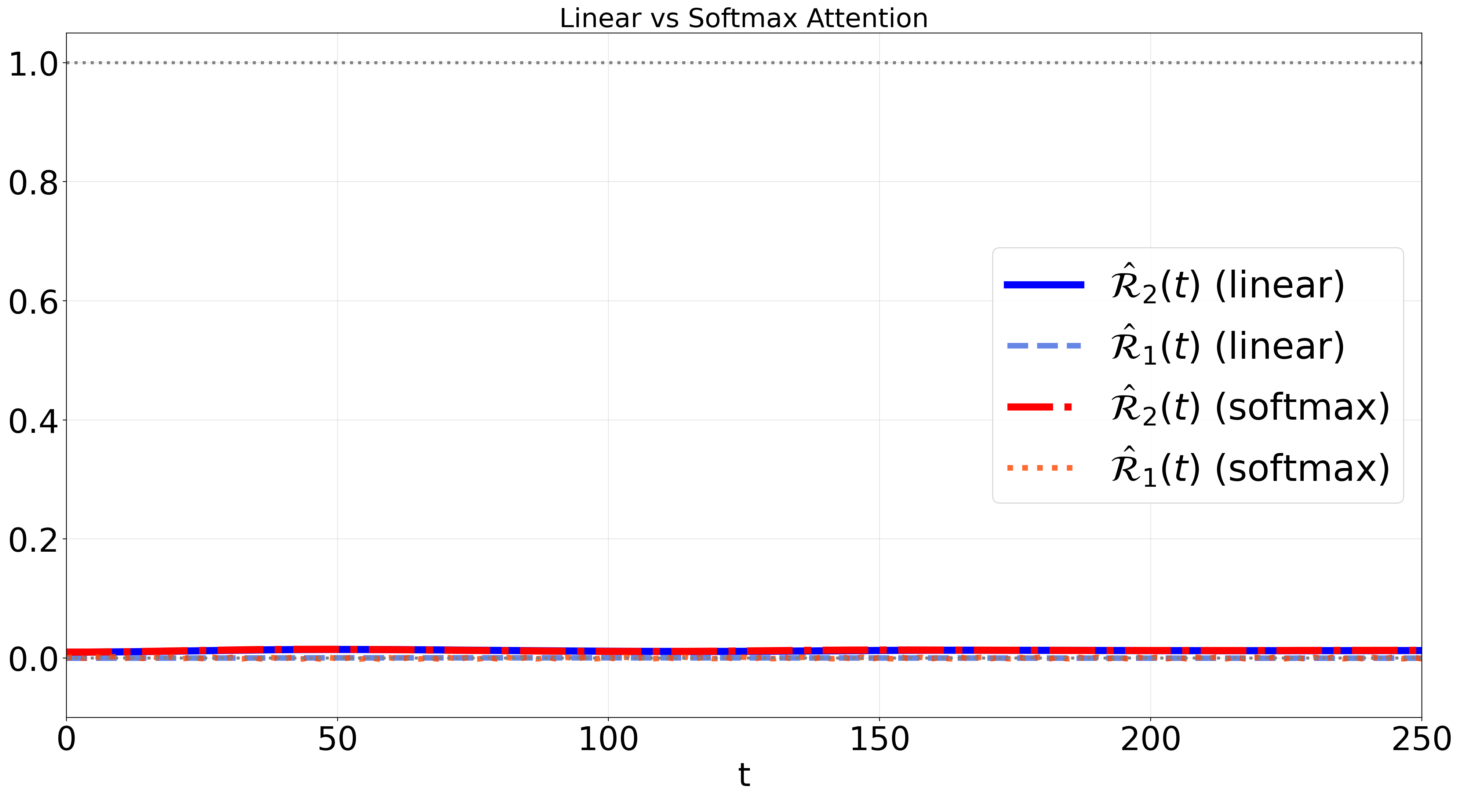}
  \end{minipage}
  \caption{Evolution of the clustering diagnostics $\widehat{\CR}_{1}(t)$ and $\widehat{\CR}_{2}(t)$ for the $\operatorname{LSA}$ (blue) and $\operatorname{USA}$ (red)  models.}
  \label{subfig:case_4_clustering_diagnostics}
\end{subfigure}
\hfill
\begin{subfigure}[t]{.48\textwidth}
  \centering
  \begin{minipage}[b][\subfigimgheightH][b]{\linewidth}
    \centering
    \includegraphics[width=\linewidth,height=\subfigimgheightH,keepaspectratio]{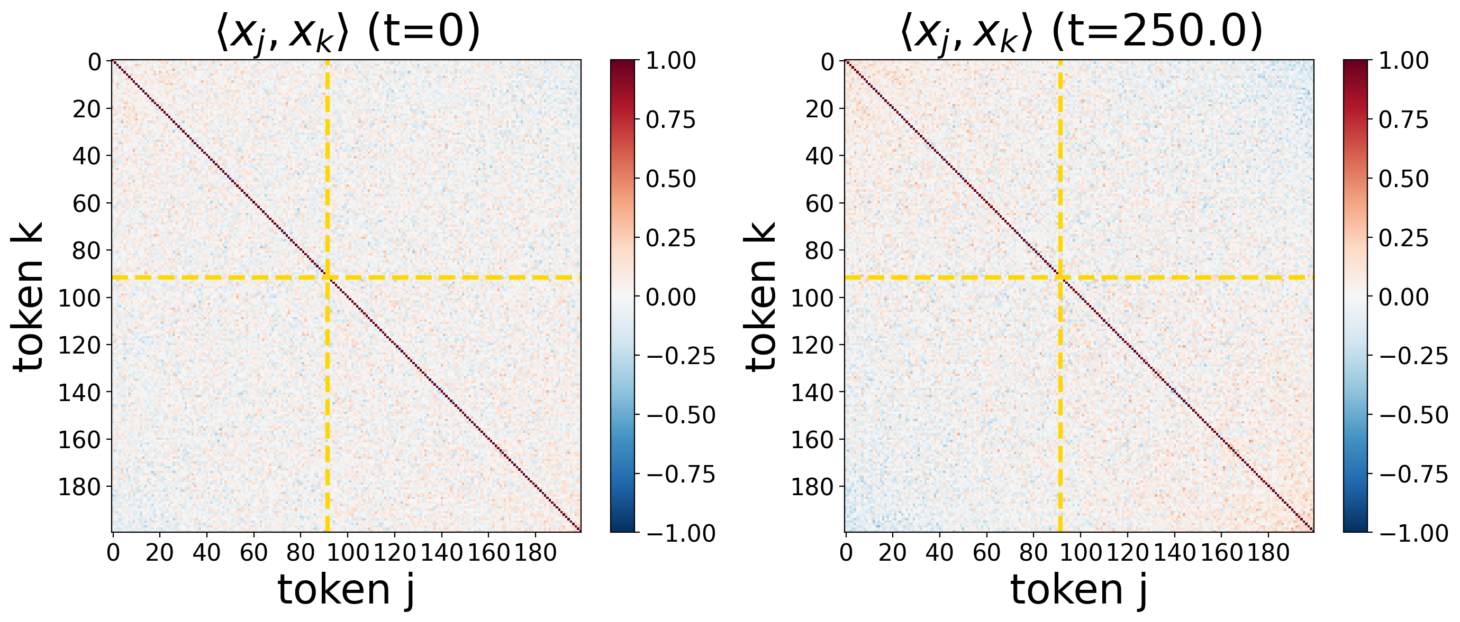}
  \end{minipage}
  \caption{Gram matrices $G(t)$ for the LSA model at $t=0$ and $t=250$.}
\end{subfigure}%
\\[1.0em]
\begin{subfigure}[t]{.48\textwidth}
  \centering
  \begin{minipage}[b][\subfigimgheightH][b]{\linewidth}
    \centering
    \includegraphics[trim={1.2cm 0cm 0cm 1cm}, clip,width=\linewidth,height=\subfigimgheightH,keepaspectratio]{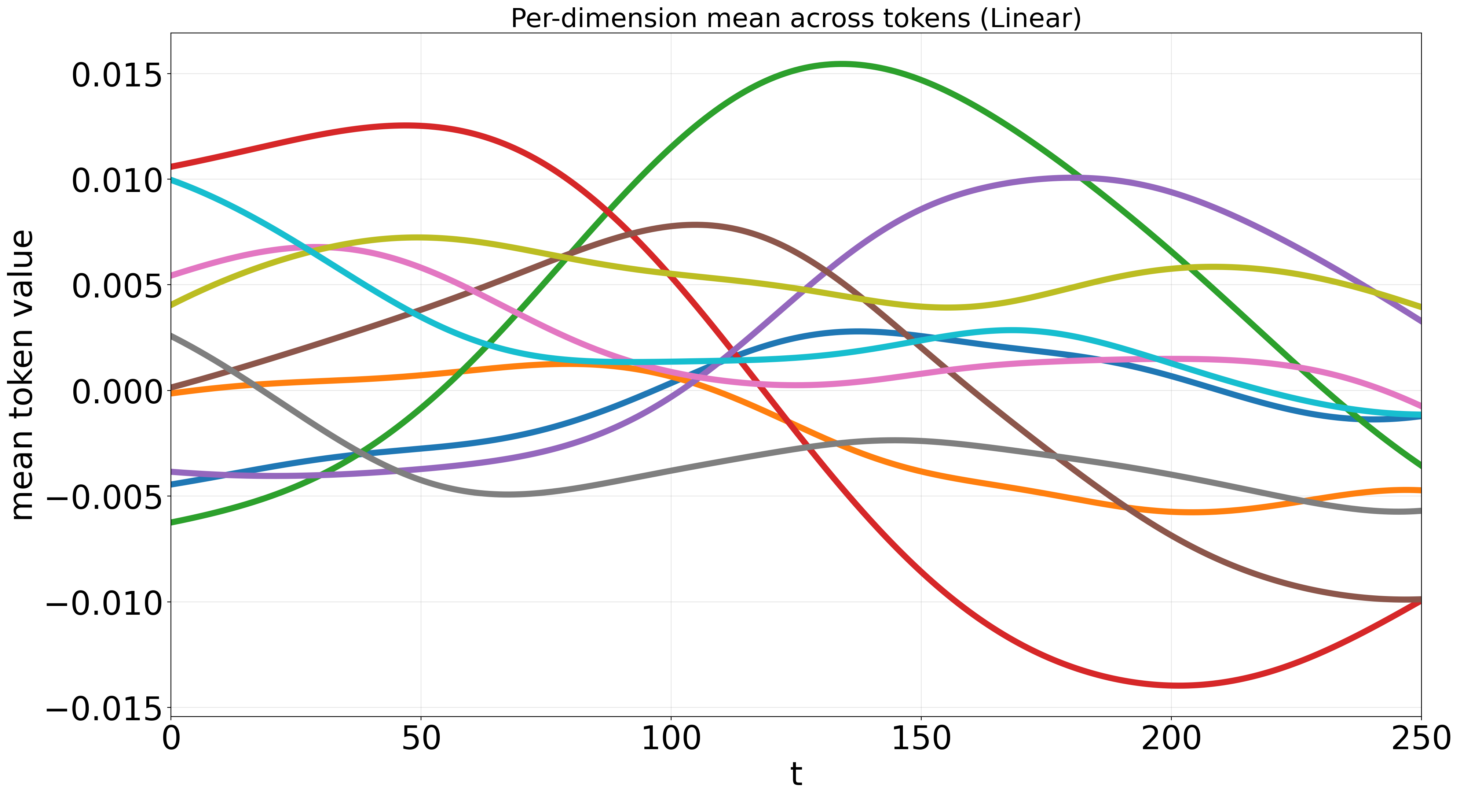}
  \end{minipage}
  \caption{Coordinate-wise mean $m_l(t)$ for the LSA model.}
  \label{subfig:case_4_per_dim_mean_LSA}
\end{subfigure}
\hfill
\begin{subfigure}[t]{.48\textwidth}
  \centering
  \begin{minipage}[b][\subfigimgheightH][b]{\linewidth}
    \centering
    \includegraphics[trim={1.2cm 0cm 0cm 1cm}, clip,width=\linewidth,height=\subfigimgheightH,keepaspectratio]{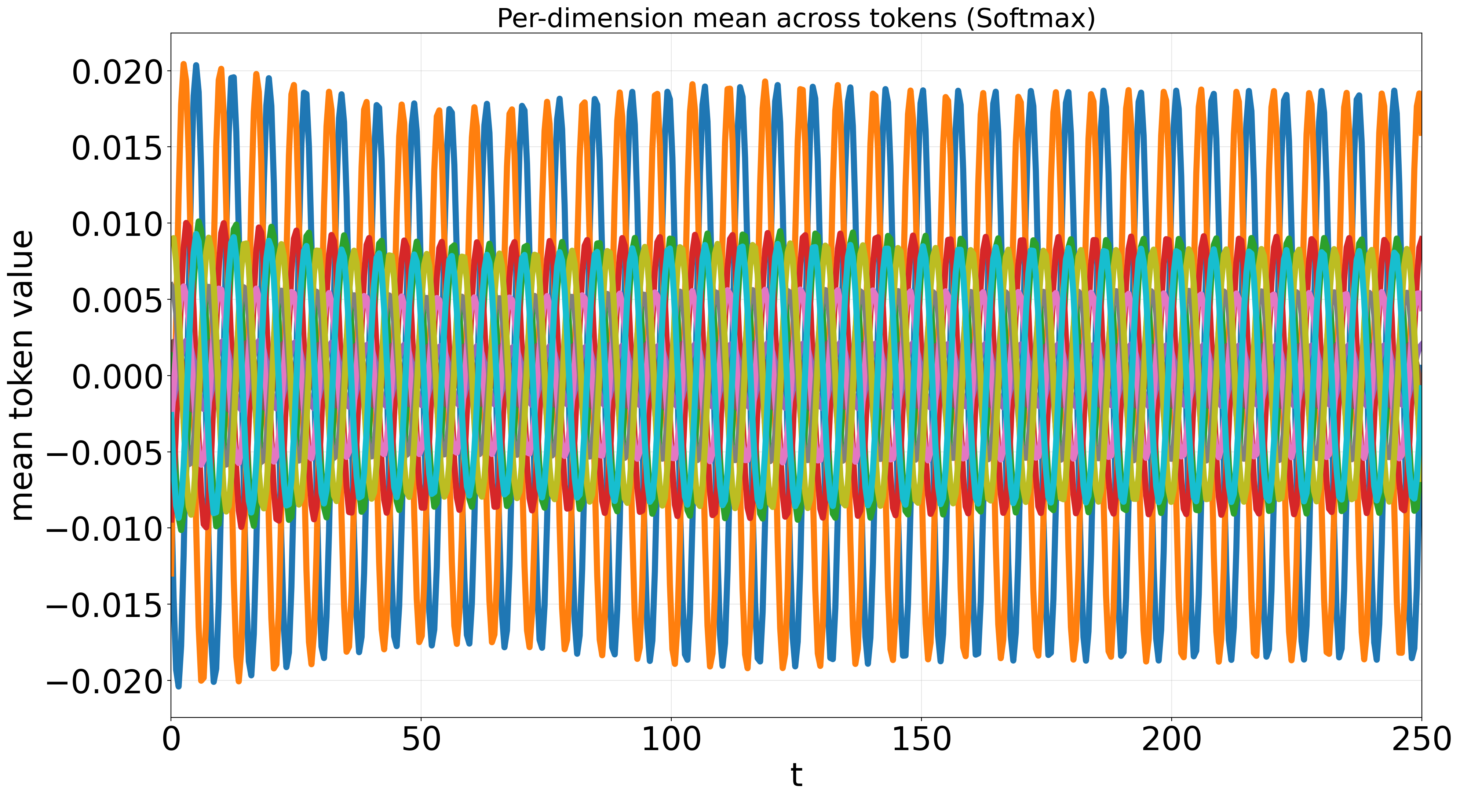}
  \end{minipage}
  \caption{Coordinate-wise mean $m_l(t)$ for the USA model.}
  \label{subfig:case_4_per_dim_mean_USA}
\end{subfigure}

\caption{
\textbf{Case 4, parameter regime:} $v_{11} < v_{12}$.
For both the LSA and USA models, the clustering diagnostics remain at $\widehat{\CR}_1(t) = \widehat{\CR}_2(t) = 0$, while the per-dimensional means $m_l(t)$ oscillate periodically.
This indicates persistent oscillatory dynamics without cluster formation.
}
\label{fig:case4_alb_d100}
\end{figure}

\if{false}
\begin{figure}[H]
\centering
\begin{subfigure}{.3\textwidth}
  \centering
  \includegraphics[width=\linewidth]{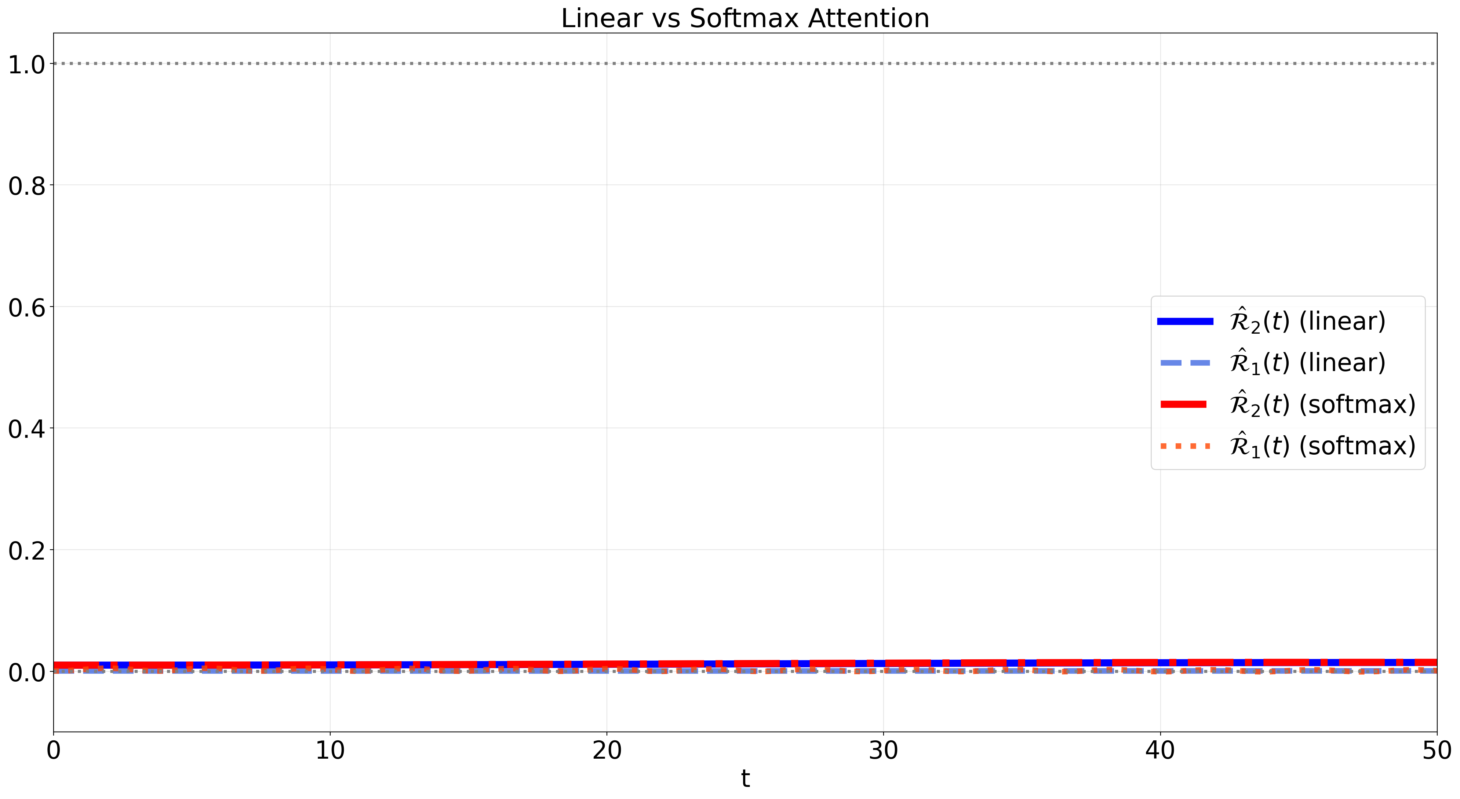}
  \caption{Linear and unnormalized softmax attention: clustering metric over time.}
\end{subfigure}
~
\begin{subfigure}{.3\textwidth}
  \centering
  \includegraphics[width=\linewidth]{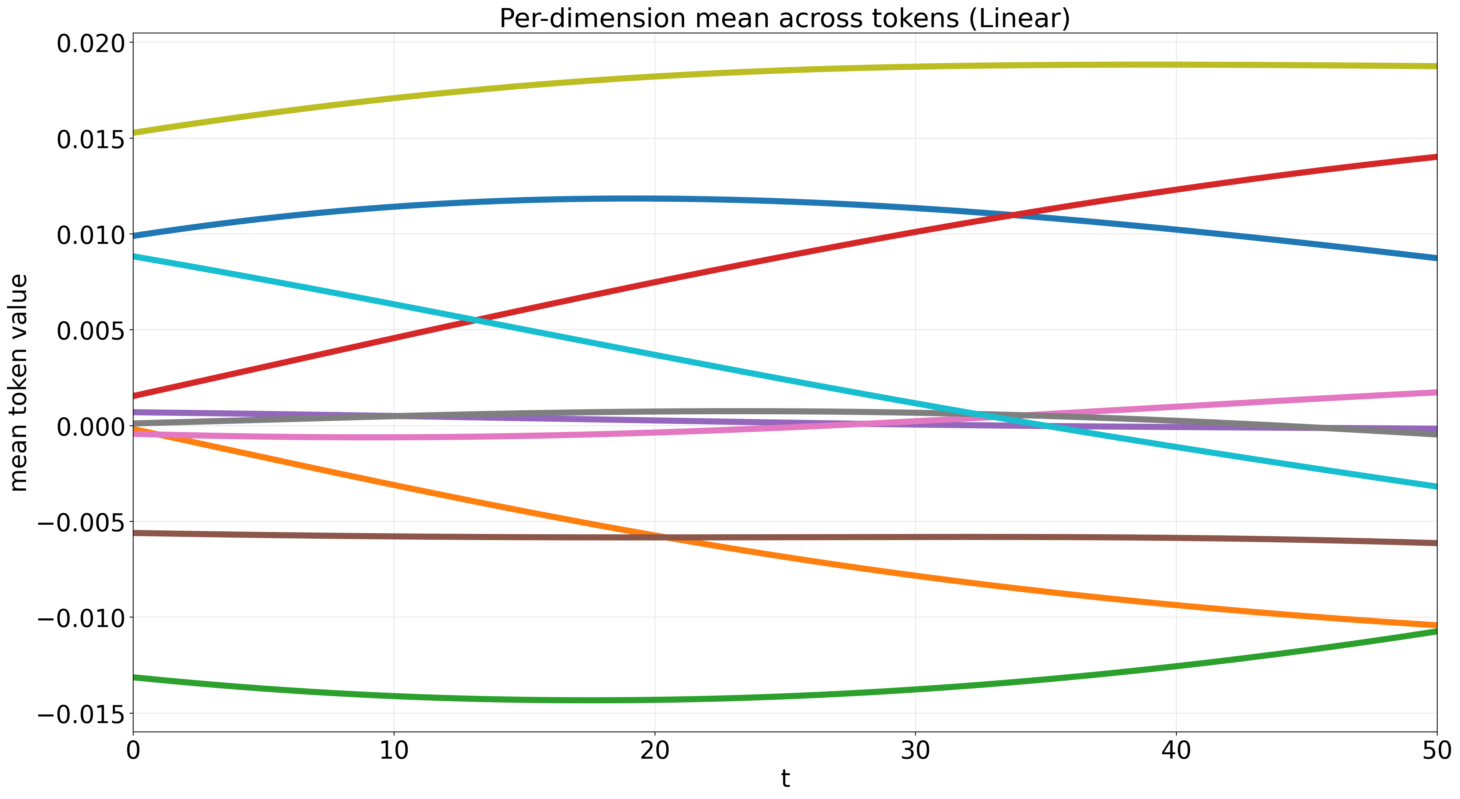}
  \caption{Per-dimension mean across tokens for linear attention figures.}
\end{subfigure}
~
\begin{subfigure}{.3\textwidth}
  \centering
  \includegraphics[width=\linewidth]{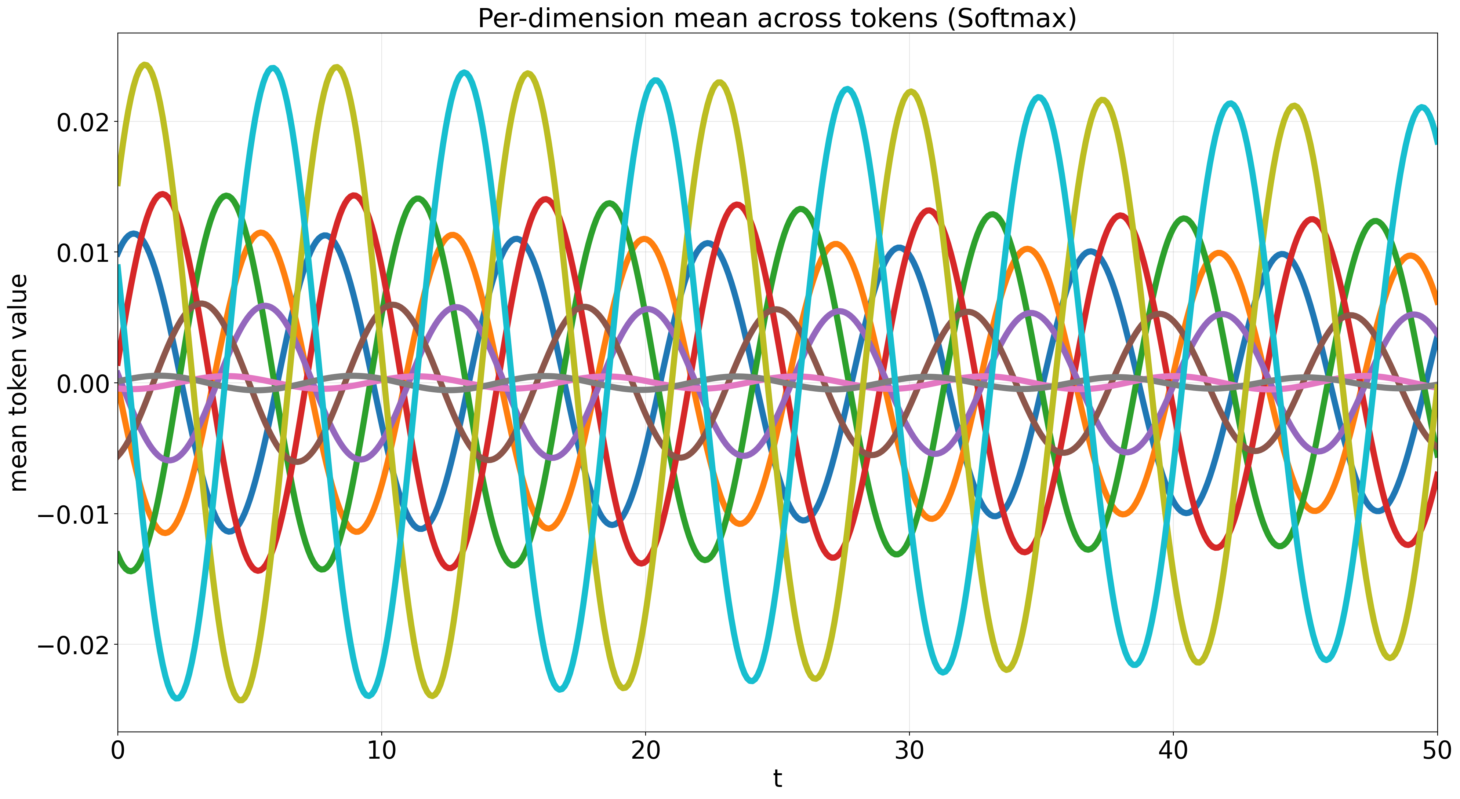}
  \caption{Per-dimension mean across tokens for unnormalized softmax attention.}
\end{subfigure}
\caption{
\textbf{Case 4, oscillatory regime} ($A = I$, $V$ Hamiltonian, $v_{11}= \frac{1}{2}, v_{22}=1$, $d=100$, $n=200$).
As expected we do not observe clustering in either model.  Oscillations are clearly visible in the per-dimension mean plots,.  Compare this to Figure~\ref{fig:xxx}, where there is no clustering but apparent convergence to a stable configuration.
}
\label{fig:case4_alb_d100}
\end{figure}
\fi 

\section{Conclusion}
\label{sec:Conclusions}
In this paper, we develop an interacting-particle-system-based approach to analyze the inference-time behavior of deep linear encoder-only transformers.
Our main results demonstrate that such dynamics exhibit a rich variety of dynamical behaviors beyond token synchronization and clustering. 
To reveal these behaviors, we establish a connection between two-dimensional linear self-attention dynamics and generalized Kuramoto models with pure second-harmonic coupling. 
This connection allows us to apply dimension-reduction techniques, including the Watanabe--Strogatz transformation and the Ott--Antonsen ansatz, to derive analytically tractable reduced dynamics. The resulting framework identifies the mechanisms underlying several distinct long-time behaviors, including clustering with convergence, clustering without convergence, oscillations, and bifurcations. 
In particular, our theory characterizes how specific choices of the key, query, and value matrices can lead to fundamentally different long-time behaviors, while our stability analysis shows that, under some parameter regimes, these behaviors persist beyond initializations exactly on the OA manifold.
Our numerical experiments confirm these theoretical observations, and further suggest that the mechanisms identified in dimension $d=2$ continue to appear in higher-dimensional linear self-attention dynamics and may also be relevant to softmax self-attention.


\paragraph{Open questions.}
Let us conclude by discussing several open questions and research directions that may broaden the scope of the presented analytical framework, deepen the connection between transformer dynamics and other models of collective dynamics, as well as contribute to a more comprehensive understanding of transformer architectures.
\begin{itemize}
    \item 
    A first natural question is whether the analytical framework developed here for dimension $d=2$ can be extended to higher-dimensional settings.
    Our analysis in dimension $d=2$ builds on the fact that the linear self-attention model \eqref{eq:LSA} possesses a low-dimensional structure, revealed through the WS transformation and the OA ansatz, that is substantially more amenable to analysis.
    It is therefore natural to ask whether also the higher-dimensional LSA model admits an analogous reduced description governing the evolution of the particle dynamics.
    
    Higher-dimensional generalizations of the classical Kuramoto model \eqref{eq:related_kuramoto} and the corresponding extensions of the WS transformation and the OA ansatz have been studied in~\cite{Lohe_2018,chandra2019complexity,Lipton_2021,park2022watanabe,lohe2025exact}.
    We leave the investigation of analogous reduced structures for higher-dimensional LSA models to future work.

    \item A second important question is whether our analytical framework can be extended to the transformer dynamics \eqref{eq:transformer} in case of more general attention kernels $h$.
    As shown in Theorem~\ref{thm:finite_sys}, the linear kernel $h(y) = y$ gives rise to a Kuramoto model with \emph{pure} second-harmonic coupling, placing the dynamics within a class for which powerful dimension-reduction techniques are available.
    
    By contrast, the nonlinear attention kernel $h(y) = \exp(y)$, corresponding to softmax self-attention \cite{vaswani2017attention}, produces dynamics with \emph{mixed-order} harmonics coupling.
    In this setting, the WS transformation and the OA ansatz are no longer directly applicable~\cite{Gong_2019}.
    It would therefore be interesting to develop alternative dimension-reduction techniques or identify approximate low-dimensional structures for such models~\cite{tonjes2020low,cestnik2022hierarchy,tokunaga2026low}.

    A related direction is to investigate regimes in which one harmonic dominates the remaining coupling terms.
    Under suitably prepared initializations, one may ask whether the resulting mixed-harmonic dynamics remain close to those of the corresponding pure-harmonic model over a finite time interval.
    A rigorous approximation result of this type could help explain the metastable clustering observed in the softmax self-attention model~\cite{geshkovski2024dynamic,bruno2024emergence,bruno2025multiscale}.
    In particular, the dominant harmonic may initially govern the dynamics and induce a transient multi-cluster state resembling that of the corresponding pure-harmonic model, before the cumulative influence of the remaining harmonics becomes significant.

    \item A third direction is to extend the present framework to linear transformer architectures incorporating additional model components. 
    For instance, the fully connected layer may contribute an active-rotator term, leading to an active-rotator model that generalizes the classical Kuramoto dynamics~\cite{shinomoto1986phase,giacomin2012transitions}, or may be viewed as an external control input~\cite{Caponigro2015}. Similarly, causal attention dynamics \cite{karagodin2024causal,karagodin2024clustering} is related to Kuramoto-type model with a directed and nonuniform interaction structure~\cite{ji2014low,rodrigues2016kuramoto}, while time-dependent parameter matrices lead naturally to Kuramoto models with time-varying coefficients~\cite{petkoski2012kuramoto,pietras2016ott}. 
    Investigating how these additional structures interact with the long-time behaviors and parameter regimes identified in the present work would constitute an important step toward understanding more realistic transformer architectures.


    \item A fourth direction is to clarify the possible connection between transformer dynamics and Vicsek-type models of collective alignment~\cite{vicsek1995novel,degond2008continuum,degond2013macroscopic}. 
    In two dimensions, continuous-time Vicsek-type orientation dynamics can be viewed as a spatially coupled form of Kuramoto alignment, in which the interaction network is determined by the evolving particle configuration. 
    The state-dependent interaction structure of the transformer dynamics \eqref{eq:transformer} is therefore reminiscent of such models, although general attention weights and value transformations need not represent standard attractive alignment. 
    It would be interesting to identify choices of attention kernels and parameter regimes for which transformer dynamics admit a precise interpretation as a Vicsek-type model, and to investigate whether analytical tools developed for collective alignment systems can be adapted to this setting.
\end{itemize}

\section*{Acknowledgements}
SL and NGT were supported by NSF-DMS grant 2236447. They also gratefully acknowledge support from the IFDS at UW-Madison through NSF TRIPODS
grant 2023239. JP was supported by the Polish National Science Centre’s Grant No. 2025/58/E/ST1/00482 (Sonata Bis).

For the purpose of Open Access, the authors have applied a CC BY public copyright license to any Author Accepted Manuscript (AAM) version arising from this submission. 

\bibliographystyle{abbrv}
\bibliography{biblio}

\newpage
\appendix
\clearpage
\phantomsection
\addcontentsline{toc}{section}{Appendix}
\addtocontents{toc}{\protect\setcounter{tocdepth}{0}}

\etocdepthtag.toc{appendix}

\begingroup
\etocsettagdepth{main}{none}
\etocsettagdepth{appendix}{subsection}
\etocsettocstyle{\section*{Appendix Contents}}{}
\tableofcontents
\endgroup

\section{Collected Notation and Quantities}
In Table~\ref{tab:notation}, we collect some notation used throughout the manuscript.
\begin{table*}[!htb]
\centering
\caption{Notation.}
\label{tab:notation}
{\small \renewcommand{\arraystretch}{1.15} \begin{tabularx}{\textwidth}{@{} l X @{\hspace{1.5em}} l X @{}}
\toprule
\textbf{Symbol} & \textbf{Meaning} 
& \textbf{Symbol} & \textbf{Meaning}\\
\midrule
$\BBS^1$ & Unit circle in $\mathbb{R}^2$.
& $\mathbb{T}^1$ & One-dimensional torus $\mathbb{R}/2\pi\mathbb{Z}$. \\

$x_k(t)$ & Representation of the $k$-th token.
& $A,V$ & Parameter matrices. \\

$\P_x^\perp(y)$ & Projection of $y$ onto the tangent space at $x$.
& $\theta_k(t)$ & Angular variable associated with $x_k(t)$. \\

$\Theta(t)$ & Mean-field angular variable.
& $f(t,\theta)$ & Law of $\Theta(t)$. \\

$\xi_k(t), \Xi(t)$ & Double-angle variables $2\theta_k(t)$ and $2\Theta(t)$.
& $g(t,\xi)$ & Law of $\Xi(t)$. \\

$\mathcal{R}_2^n(t), \CR_2(t)$ & Finite-particle and mean-field second-order parameters.
& $\rho(t), \phi(t)$ & Writing $\CR_2(t) = \rho(t) e^{i\phi(t)}$. \\

$\alpha(t),\eta(t)$ & Watanabe--Strogatz variables.
& $\gamma(t),\Phi(t)$ & Writing $\alpha(t) = \gamma(t) e^{i \Phi(t)}$. \\

$\alpha_{\OA}$ & Ott--Antonsen closure counterpart of $\alpha$~\eqref{eq:OA_counterpart}.
& $M_{\alpha,\eta}(\omega)$ & M\"{o}bius transform~\eqref{eq:Mobius_tf}. \\

$T_{\alpha,\eta}$ & Map induced by $M_{\alpha,\eta}$~\eqref{eq:map_T}.
& $\varphi_k,\varphi$ & Finite-particle and mean-field constants of motion. \\

$\nu$ & Law of $\varphi$.
& $\WC(\xi;\alpha)$ & Wrapped Cauchy distribution with parameter $\alpha$. \\
\bottomrule
\end{tabularx}
}
\end{table*}

We now collect some notation used in Section~\ref{sec:2}. 
Recall the parameter matrices $A$ and $V$ in dimension $d=2$,
\begin{equation}
A := \left(\begin{array}{ll}
    a_{11} & a_{12} \\
    a_{21} & a_{22} 
\end{array} \right)
    \qquad
    \text{and}
    \qquad
V := \left( \begin{array}{ll}
    v_{11} & v_{12}  \\
    v_{21} & v_{22} 
\end{array} \right).
\end{equation}
We define the following real-valued constants depending on the matrices $A$ and $V$:
\begin{equation}\label{eq:w_i}
    \begin{aligned}
        w_0 &:= v_{11}a_{21} +v_{12}a_{22} - v_{21}a_{11} - v_{22}a_{12}, \qquad 
        &&w_1 := v_{11}a_{11}+v_{12}a_{12} - v_{21}a_{21} - v_{22}a_{22},\\[4pt]
        w_2 &:= -v_{11}a_{21}- v_{12}a_{22} - v_{21}a_{11} - v_{22}a_{12}, \qquad
        &&w_3 := v_{11}a_{21} - v_{12}a_{22}-v_{21}a_{11} + v_{22}a_{12},\\[4pt]
        w_4 &:= v_{11}a_{22}+v_{12}a_{21}-v_{21}a_{12} - v_{22}a_{11}, \qquad
        &&w_5 := v_{11}a_{11}- v_{12}a_{12} - v_{21}a_{21} + v_{22}a_{22},\\[4pt]
        w_6 &:= v_{11}a_{12}+v_{12}a_{11}-v_{21}a_{22}-v_{22}a_{21}, \qquad
        &&w_7 := v_{11}a_{21} -v_{12}a_{22} + v_{21}a_{11}- v_{22}a_{12},\\[4pt]
        w_8 &:= v_{11}a_{22} + v_{12}a_{21} + v_{21}a_{12} + v_{22}a_{11}.
    \end{aligned}
\end{equation}
This enables us to compactly write the following complex-valued constants,
\begin{equation}\label{eq:coeff_b}
    \begin{gathered}
    b_1 := \frac{1}{16}(iw_5+w_6+w_7-iw_8), \qquad b_2 := \frac{1}{16}(iw_5-w_6+w_7+iw_8), \qquad b_3 := \frac{1}{8}(iw_1-w_2),\\
    b_4 := \frac{1}{8}(-w_3+iw_4), \qquad b_5 := -\frac{1}{4}w_0.
    \end{gathered}
\end{equation}
The functions $B$ and $C$, as introduced in Theorem \ref{thm:finite_sys}, are defined as,
\begin{equation}
    \label{eq:coeff_B_C}
         B(r)
        := b_1 r + b_2 \bar{r} + b_3
        \qquad
        \text{and}
        \qquad
        C(r)
        := b_4 r + \overline{b_4} \bar{r} + b_5.
    \end{equation}
Finally, the definitions of $F$ and $G$ are derived from $B$ and $C$ as in~\eqref{eq:F_and_G}
\begin{equation}
F(\alpha, r) := 2i \left( B(r) \alpha^2 + C (r) \alpha + \overline{B} (r)\right), \qquad G(\eta, r) := 2\left(2 \re{B(r) \eta} + C(r) \right),
\end{equation}

\section{Proof Details for Section~\ref{sec:2}}

In this appendix, we provide the technical proofs for the results presented in Section~\ref{sec:2}.

\subsection{Proof of Lemma \ref{lem:model_gen_h_WGF}}
\begin{lemma}\label{lem:model_gen_h_WGF}
Consider the model
\begin{equation}\label{eq:model_with_gen_h}
    \Dot{x}_k(t) = \P^{\perp}_{x_k(t)} \left( \frac{1}{n} \sum_{j=1}^n h(\beta \langle x_k(t), A x_j(t) \rangle) V x_j(t) \right), \qquad k=1, \dots, n \, .
\end{equation}
Suppose that the attention kernel $h$ admits an anti-derivative $H$, and that the matrices $A$ and $V$ satisfy
\begin{equation*}
    A^{\top} = A, \quad V = cA,
\end{equation*}
for some constant $c \in \R$.
Then \eqref{eq:model_with_gen_h} can be written as a Wasserstein gradient flow associated with the energy functional
\begin{equation*}
    E[\mu] := -\frac{c}{2\beta}\int_{\BBS^{d-1}} \int_{\BBS^{d-1}} H(\beta \langle x, A y \rangle) d\mu(x) d\mu(y) \, .
\end{equation*}
\end{lemma}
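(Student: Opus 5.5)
The plan is to compute the first variation of $E$ and its Riemannian gradient on $\BBS^{d-1}$, and then check that the resulting Wasserstein gradient flow $\partial_t \mu + \nabla\cdot(\mu\, v[\mu]) = 0$ with $v[\mu] = -\nabla_{\BBS^{d-1}} \frac{\delta E}{\delta \mu}$ has, for the empirical measure $\mu = \frac1n\sum_j \delta_{x_j}$, exactly the characteristic velocity prescribed by \eqref{eq:model_with_gen_h}.

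First, compute the first variation. Since $A^\top = A$, we have $\langle x, Ay\rangle = \langle y, Ax\rangle$, so the kernel $K(x,y) := H(\beta\langle x, Ay\rangle)$ is symmetric. The quadratic functional therefore has first variation
\begin{equation*}
\frac{\delta E}{\delta \mu}(x) = -\frac{c}{\beta}\int_{\BBS^{d-1}} H(\beta\langle x, Ay\rangle)\, d\mu(y),
\end{equation*}
where the factor $\tfrac12$ cancels against the two symmetric contributions. Without symmetry we would instead get $\tfrac12(K+K^\top)$, so this is where $A^\top=A$ is used.

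Next, take the spherical gradient. The Euclidean gradient in $x$ of $H(\beta\langle x, Ay\rangle)$ is $\beta h(\beta\langle x,Ay\rangle) Ay$, using $H' = h$. Projecting onto the tangent space gives
\begin{equation*}
\nabla_{\BBS^{d-1}}\frac{\delta E}{\delta\mu}(x) = -c\, \P^\perp_x\int h(\beta\langle x,Ay\rangle)\, Ay\, d\mu(y).
\end{equation*}
Hence $v[\mu](x) = \P^\perp_x \int h(\beta\langle x,Ay\rangle)\, (cA) y\, d\mu(y)$. Since $V = cA$, this equals $\P^\perp_x\int h(\beta\langle x, Ay\rangle) V y\, d\mu(y)$.

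Finally, evaluate at the empirical measure $\mu^n_t = \frac1n\sum_j\delta_{x_j(t)}$. This gives $v[\mu^n_t](x_k) = \dot x_k$ as in \eqref{eq:model_with_gen_h}. The standard fact that the empirical measure of an ODE system $\dot x_k = v[\mu^n](x_k)$ solves the continuity equation weakly then shows that \eqref{eq:model_with_gen_h} is the particle realization of the gradient flow of $E$.

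No step is a genuine obstacle. The only care needed is in the sign and constant bookkeeping, namely that $-\frac{c}{2\beta}$ times the factor $2$ from symmetry times the factor $\beta$ from the chain rule yields exactly $-c$. One should also note the interpretation: "Wasserstein gradient flow" is meant in the formal Otto-calculus sense on $\mathcal P(\BBS^{d-1})$ with the spherical metric, since $E$ is smooth but need not be geodesically convex.
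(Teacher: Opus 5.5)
Your proposal is correct and takes the same route as the paper's proof. Both arguments compute the first variation of the symmetric interaction energy, using $A^\top = A$ to get $-\frac{c}{\beta}\int H(\beta\langle x,Ay\rangle)\,d\mu(y)$. Both then take the tangential gradient via $H' = h$, substitute $V = cA$ into the resulting velocity $c\,\P^\perp_x\int h(\beta\langle x,Ay\rangle)Ay\,d\mu(y)$, and evaluate at the empirical measure to recover the particle system.
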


\begin{proof}
Let
\[
    W(x,y):=H(\beta\langle x,Ay\rangle).
\]
Since $A^\top=A$, one has $W(x,y)=W(y,x)$, which fits the classical framework of gradient flows of symmetric interaction energies. Hence, for
\[
    E[\mu]
    =
    -\frac{c}{2\beta}
    \iint W(x,y)\,d\mu(x)d\mu(y),
\]
the standard first-variation formula for symmetric interaction energies gives
\[
    \frac{\delta E}{\delta\mu}(x)
    =
    -\frac{c}{\beta}
    \int W(x,y)\,d\mu(y).
\]
Therefore, since $H'=h$,
\[
    \nabla_{\mathbb S^{d-1}}
    \frac{\delta E}{\delta\mu}(x)
    =
    -cP_x^\perp
    \int h(\beta\langle x,Ay\rangle)Ay\,d\mu(y).
\]
Consequently, the characteristic velocity of the Wasserstein gradient flow is
\[
    v(x)
    =
    -\nabla_{\mathbb S^{d-1}}
    \frac{\delta E}{\delta\mu}(x)
    =
    cP_x^\perp
    \int h(\beta\langle x,Ay\rangle)Ay\,d\mu(y)=P_x^\perp
    \int h(\beta\langle x,Ay\rangle)Vy\,d\mu(y).
\]
Taking $\mu_t^n=\frac1n\sum_{j=1}^n\delta_{x_j(t)}$ yields exactly \eqref{eq:model_with_gen_h}.
\end{proof}

\subsection{Proof of Theorem \ref{thm:finite_sys}}\label{app:finite_sys}

\begin{proof}[Proof of Theorem \ref{thm:finite_sys}]
For the matrices $A$ and $V$ as defined in \eqref{eq:matrices_A_V_general}, and
for each $k \in \{1,\dots,n\}$ we can first rewrite the dynamics~\eqref{eq:LSA:1d} satisfied by the angle $\theta_k(t)$ as
\begin{equation}\label{eq:dyn_theta_i_origin}
\begin{aligned}
    \Dot{\theta}_k &= -\frac{1}{n} \sum_{j=1}^n  \left(a_{11} \cos \theta_k \cos\theta_j + a_{12} \cos \theta_k \sin \theta_j + a_{21} \sin \theta_k\cos \theta_j + a_{22} \sin \theta_k \sin \theta_j\right) \\
    &\qquad \qquad \cdot \left( v_{11} \sin \theta_k \cos \theta_j + v_{12} \sin \theta_k \sin \theta_j - v_{21} \cos \theta_k \cos \theta_j - v_{22} \cos \theta_k \sin \theta_j \right).
\end{aligned}
\end{equation}
To obtain Equation~\eqref{eq:dyn_theta_i_origin}, recall that $x_k = (\cos\theta_k,\sin\theta_k)^\top$ and analogously for $x_j$.
Hence,
    \begin{equation*}
        \langle x_k, A x_j \rangle = a_{11} \cos \theta_k \cos\theta_j + a_{12} \cos \theta_k \sin \theta_j + a_{21} \sin \theta_k \cos \theta_j + a_{22} \sin \theta_k \sin \theta_j.
    \end{equation*}
    Since $V$ is real-valued, we can compute
    \begin{equation*}
    \begin{aligned}
        \frac{1}{\sin \theta_k} \langle V x_j, e_1-\langle x_k, e_1\rangle x_k \rangle
        &= v_{11} \sin \theta_k \cos \theta_j + v_{12} \sin \theta_k \sin \theta_j - v_{21} \cos \theta_k \cos \theta_j - v_{22} \cos \theta_k \sin \theta_j,
    \end{aligned}
    \end{equation*}
    which concludes the computations for Equation~\eqref{eq:dyn_theta_i_origin}.

Next we reorganize the different summands in~\eqref{eq:dyn_theta_i_origin},
which requires us to use the quantities defined in~\eqref{eq:w_i}.   After expanding the product on the right-hand side of~\eqref{eq:dyn_theta_i_origin} by using the trigonometric identities
$\sin \theta \cos \theta = \frac{1}{2}\sin 2\theta$, $(\sin \theta)^2 = \frac{1}{2}(1-\cos 2\theta)$, and $(\cos \theta)^2 = \frac{1}{2}(1+\cos 2\theta)$,
we arrive for~\eqref{eq:dyn_theta_i_origin} at
    \begin{align}
        \label{eq:dyn_theta_i_origin_expanded}
        \Dot{\theta}_k
        &=-\frac{1}{n} \sum_{j=1}^n \bigg(\Big(v_{11}a_{11} \sin \theta_k \cos \theta_k (\cos\theta_j)^2 + v_{11}a_{12} \sin \theta_k \cos \theta_k\cos \theta_j \sin \theta_j \notag\\
        &\qquad \qquad \qquad + v_{11}a_{21} (\sin \theta_k)^2 (\cos \theta_j)^2 + v_{11}a_{22} (\sin \theta_k)^2 \cos \theta_j \sin \theta_j \Big) \notag\\
        &\qquad \qquad + \Big(v_{12}a_{11} \cos \theta_k\sin \theta_k \sin \theta_j \cos\theta_j + v_{12}a_{12} \cos \theta_k \sin \theta_k (\sin \theta_j)^2 \notag \\
        &\qquad \qquad \qquad + v_{12}a_{21} (\sin \theta_k)^2 \sin \theta_j\cos \theta_j + v_{12}a_{22} (\sin \theta_k)^2 (\sin \theta_j)^2\Big) \notag \\
        &\qquad \qquad - \Big(v_{21}a_{11} (\cos \theta_k)^2 (\cos \theta_j)^2 + v_{21}a_{12} (\cos \theta_k)^2 \cos \theta_j\sin \theta_j \notag\\
        &\qquad \qquad \qquad + v_{21}a_{21} \sin \theta_k\cos \theta_k (\cos \theta_j)^2 + v_{21}a_{22} \sin \theta_k\cos \theta_k \cos \theta_j \sin \theta_j\Big) \notag\\
        &\qquad \qquad - \Big(v_{22}a_{11} (\cos \theta_k)^2 \sin \theta_j \cos\theta_j  + v_{22}a_{12} (\cos \theta_k)^2 (\sin \theta_j)^2 \notag \\
        &\qquad \qquad \qquad + v_{22}a_{21} \sin \theta_k\cos \theta_k \sin \theta_j\cos \theta_j + v_{22}a_{22} \sin \theta_k \cos \theta_k (\sin\theta_j)^2\Big) \bigg) \notag \\
        &=-\frac{1}{4}\Big(v_{11}a_{21} +v_{12}a_{22} - v_{21}a_{11} - v_{22}a_{12} \Big)
        \\ 
        &\quad\,-\frac{1}{4}\Big((v_{11}a_{11}+v_{12}a_{12} - v_{21}a_{21} - v_{22}a_{22}) \sin 2 \theta_k  + (-v_{11}a_{21}- v_{12}a_{22} - v_{21}a_{11} - v_{22}a_{12}) \cos 2\theta_k\Big) \notag \\
        &\quad-\frac{1}{4n} \sum_{j=1}^n \Big((v_{11}a_{21} - v_{12}a_{22}-v_{21}a_{11} + v_{22}a_{12}) \cos 2\theta_j + (v_{11}a_{22}+v_{12}a_{21}-v_{21}a_{12} - v_{22}a_{11}) \sin 2\theta_j \Big) \notag\\
        &\quad-\frac{1}{4n} \sum_{j=1}^n \left((v_{11}a_{11}- v_{12}a_{12} - v_{21}a_{21} + v_{22}a_{22}) \sin 2 \theta_k \cos 2\theta_j\right) \notag\\
        &\qquad \qquad + \left((v_{11}a_{12}+v_{12}a_{11}-v_{21}a_{22}-v_{22}a_{21}) \sin 2 \theta_k \sin 2\theta_j\right) \notag\\
        &\qquad \qquad - \left((v_{11}a_{21} -v_{12}a_{22} + v_{21}a_{11}- v_{22}a_{12}) \cos 2\theta_k \cos 2\theta_j\right) \notag \\
        &\qquad \qquad - \left((v_{11}a_{22} + v_{12}a_{21} + v_{21}a_{12} + v_{22}a_{11}) \cos 2\theta_k \sin 2\theta_j \right) \notag\\
        &= \bigg(-\frac{1}{4}\left(w_0 + w_1 \sin 2 \theta_k + w_2 \cos 2\theta_k\right) -\frac{1}{4n} \sum_{j=1}^n \left(w_3 \cos 2\theta_j + w_4 \sin 2\theta_j\right) \bigg) \notag\\
        &\quad-\frac{1}{4n} \sum_{j=1}^n \left(w_5 \sin 2 \theta_k \cos 2\theta_j + w_6 \sin 2 \theta_k \sin 2\theta_j - w_7 \cos 2\theta_k \cos 2\theta_j - w_8 \cos 2\theta_k \sin 2\theta_j \right) := T_1 + T_2. \notag
    \end{align}
    Here, $\{w_i\}_{i=0}^8$ are defined in~\eqref{eq:w_i}. Thanks to Euler's identity, we observe that
    \begin{equation*}\label{eq:order_para_finite_sum}
        \RE{\CR_2^n}
        = \frac{1}{n} \sum_{j=1}^n \cos 2\theta_j
        \qquad\text{and}\qquad
        \IM{\CR_2^n}
        = \frac{1}{n} \sum_{j=1}^n \sin 2\theta_j.
    \end{equation*}
    We now simplify each of the terms $T_1$ and $T_2$.
    
    \textbf{Term $T_1$.}
    For term $T_1$, we have
    \begin{equation*}
    \begin{split}
        T_1
        &= -\frac{1}{4}\left(w_0 + w_1 \sin 2 \theta_k + w_2 \cos 2\theta_k\right) -\frac{1}{4} \left(w_3 \left(\frac{1}{n} \sum_{j=1}^n\cos 2\theta_j\right) + w_4 \left(\frac{1}{n} \sum_{j=1}^n\sin 2\theta_j\right)\right)\\
        &= -\frac{1}{4}\left(w_0 + w_1 \im{e^{2i\theta_k}} + w_2 \re{e^{2i\theta_k}}\right) -\frac{1}{4} \left(w_3 \RE{\CR_2^n} + w_4 \IM{\CR_2^n}\right).
    \end{split}
    \end{equation*}
    Recalling that $\re{z} = \frac{1}{2}(z+\bar{z})$ and $\im{z} = \frac{1}{2i}(z-\bar{z})$,
    we obtain
    \begin{equation*}
    \begin{split}
        T_1
        &= -\frac{1}{4}\left(w_0 + w_1 \frac{e^{2i\theta_k}-e^{-2i\theta_k}}{2i} + w_2 \frac{e^{2i\theta_k}+e^{-2i\theta_k}}{2}\right) -\frac{1}{4} \left(w_3 \frac{\CR_2^n+\overline{\CR_2^n}}{2} + w_4 \frac{\CR_2^n-\overline{\CR_2^n}}{2i}\right) \\
        &= b_3e^{2i\theta_k} + \overline{b_3}e^{-2i\theta_k} + C(\CR_2^n),
    \end{split}
    \end{equation*}
    with $b_3$ and $C(r)$ as in~\eqref{eq:coeff_b} and~\eqref{eq:coeff_B_C}.
    
    \textbf{Term $T_2$.}
    For term $T_2$, we have
    \begin{equation*}
    \begin{split}
        T_2
        &= -\frac{1}{4n} \sum_{j=1}^n \left(w_5 \sin 2 \theta_k \cos 2\theta_j + w_6 \sin 2 \theta_k \sin 2\theta_j - w_7 \cos 2\theta_k \cos 2\theta_j - w_8 \cos 2\theta_k \sin 2\theta_j \right)\\
        &= -\frac{1}{4} \left( w_5 \im{e^{2i\theta_k}} \re{\CR_2^n} + w_6 \im{e^{2i\theta_k}} \im{\CR_2^n} - w_7 \re{e^{2i\theta_k}} \re{\CR_2^n} -w_8 \re{e^{2i\theta_k}} \im{\CR_2^n} \right) .
    \end{split}
    \end{equation*}
    By applying the real and complex identities, we obtain
    \begin{equation*}
    \begin{aligned}
    T_2 &= -\frac{1}{4} \bigg( w_5 \frac{e^{2i\theta_k} - e^{-2i\theta_k}}{2i} \frac{\CR_2^n + \overline{\CR_2^n}}{2} + w_6 \frac{e^{2i\theta_k} - e^{-2i\theta_k}}{2i} \frac{\CR_2^n - \overline{\CR_2^n}}{2i}\\
    &\qquad \quad \; \; - w_7 \frac{e^{2i\theta_k} + e^{-2i\theta_k}}{2} \frac{\CR_2^n + \overline{\CR_2^n}}{2} - w_8 \frac{e^{2i\theta_k} + e^{-2i\theta_k}}{2} \frac{\CR_2^n - \overline{\CR_2^n}}{2i}\bigg)\\
    &= \frac{1}{16} e^{2i\theta_k} \left( iw_5 \left( \CR_2^n + \overline{\CR_2^n} \right) + w_6 \left( \CR_2^n - \overline{\CR_2^n} \right) + w_7 \left( \CR_2^n + \overline{\CR_2^n} \right) - iw_8 \left( \CR_2^n - \overline{\CR_2^n} \right)\right)\\
    &\quad + \frac{1}{16} e^{-2i \theta_k} \left( iw_5 \left( \CR_2^n + \overline{\CR_2^n} \right) - w_6 \left( \CR_2^n - \overline{\CR_2^n} \right) - w_7 \left( \CR_2^n + \overline{\CR_2^n} \right) + iw_8 \left( \CR_2^n - \overline{\CR_2^n} \right)\right)\\
    &= B_2(\CR_2^n)e^{2i\theta_k} + \overline{B_2}(\CR_2^n)e^{-2i\theta_k} 
    \end{aligned}
    \end{equation*}
    with
    \begin{equation*}
        B_2(r)
        := b_1 r + b_2 \bar{r}.
    \end{equation*}
    \textbf{Conclusion.} Combining now the expressions for $T_1$ and $T_2$
    and inserting them back into \eqref{eq:dyn_theta_i_origin_expanded} yields for \eqref{eq:dyn_theta_i_origin} the equivalent formulation
    \begin{equation*}
        \dot \theta_k
        = B(\CR_2^n)e^{2i\theta_k} + \overline{B}(\CR_2^n)e^{-2i\theta_k} + C(\CR_2^n),
    \end{equation*}
    where $B(r)$ and $C(r)$ are defined in~\eqref{eq:coeff_B_C}.
\end{proof}

\subsection{Proof Details for Section \ref{subsec:WS_OA}}
\label{app:WS_OA}

\begin{proof}[Proof of Proposition \ref{prop:WS_transform}]
    The proof follows computations similar to those presented in \cite{pikovsky2015dynamics,Gong_2019}.
    For notational simplicity, let us abbreviate $B(\CR_2^n)$ and $C(\CR_2^n)$ as $B$ and $C$, respectively.
    
    Denoting $z_k := e^{i\xi_k}$, we compute with chain rule and using the dynamics \eqref{eq:xi_finite_sys} that
    \begin{equation}\label{eq:riccati_eq}
    \begin{aligned}
    \frac{d}{dt} z_k
    &= i \Dxi_k z_k = 2i \left( B z_k^2 + C z_k + \overline{B} \right)
    \end{aligned}
    \end{equation}
    for all $k \in \{1,\dots,n\}$.
    Observe that each $z_k$ solves a Riccati equation (with the same coefficients for all $k$) and satisfies the constraint $|z_k(t)| = 1$ for all $t \geq 0$.
    Hence, $z_k$ can be represented as
    \begin{equation}\label{eq:theta_vartheta_tf}
    e^{i\xi_k} = z_k = \frac{\alpha^n + e^{i\vartheta_k}}{1 + \overline{\alpha^n} e^{i\vartheta_k}} 
    \end{equation}
    with $\alpha^n: [0, \infty) \mapsto \BBC$ satisfying $|\alpha^n| < 1$ and $\vartheta_k \in \BBT$ for $k \in \{1,\dots,n\}$.
    For notational simplicity, we shorthand $\alpha^n$ by $\alpha$, and denote $u_k := e^{i \vartheta_k}$.
    
    Expanding the left-hand side of~\eqref{eq:riccati_eq}, by plugging in the transformation \eqref{eq:theta_vartheta_tf}, we get
    \begin{equation*}
    \begin{split}
    \frac{d}{dt} z_k
    &= \frac{d}{dt} \left( \frac{\alpha + u_k}{1 + \alphaBar u_k }\right) = \frac{\Dalpha + i \Dvartheta_k u_k}{1 + \alphaBar u_k} - \frac{\left(\alpha + u_k \right) \left( \DalphaBar u_k + i \Dvartheta_k u_k \alphaBar \right)}{\left( 1 + \alphaBar u_k \right)^2}\\
    &= \frac{1}{\left( 1 + \alphaBar u_k \right)^2} \left( \Dalpha + \left( \Dalpha \alphaBar - \alpha \DalphaBar + i \Dvartheta_k (1- |\alpha|^2)  \right) u_k - \DalphaBar u_k^2 \right).
    \end{split}
    \end{equation*}
    Likewise, the right-hand side of \eqref{eq:riccati_eq} becomes
    \begin{equation*}
    \begin{split}
    2i \left( B z_k^2 + C z_k + \overline{B} \right)
    &= 2i \left( B \left( \frac{\alpha + u_k}{1 + \alphaBar u_k} \right)^2 + C \frac{\alpha + u_k}{1 + \alphaBar u_k} + \overline{B} \right)\\
    &= \frac{2i}{\left( 1 + \overline{\alpha} u_k \right)^2} \bigg( \left( B \alpha^2 +  C \alpha + \overline{B} \right) + \left(4 \re{B \alpha} + C \left( 1 + |\alpha|^2 \right) \right) u_k + \left( B + C \alphaBar  + \overline{B} \alphaBar^2 \right) u_k^2 \bigg) .
    \end{split}
    \end{equation*}
    By combining the constant terms as well as the coefficients of $u_k, u_k^2$, we obtain the condition
    \begin{equation}\label{eq:LHS_eq_RHS}
    K_1 + K_2 u_k - \overline{K_1} u_k^2 = 0
    \end{equation}
    for all $k \in \{1,\dots,n\}$, where the coefficients are defined as
    \begin{equation*}
    K_1 := \Dalpha - 2i \left( B \alpha^2 + C \alpha + \overline{B} \right), \quad K_2 :=  \Dalpha \alphaBar - \alpha \DalphaBar + i \Dvartheta_k (1 - |\alpha|^2) - 2i \left( 4\re{B\alpha} + C(1 + |\alpha|^2) \right) .
    \end{equation*}

    We now choose $\alpha$ and $\vartheta_k$ so that the identity \eqref{eq:LHS_eq_RHS} holds for all $u_k$, for which it suffices to set the coefficients to zero, i.e., $K_1 = K_2 = 0$.
    
    From $K_1 = 0$, we obtain the dynamics that $\alpha$ needs to satisfy.
    Specifically,
    \begin{equation}
        \label{eq:proof:prop:WS_transform:aux18}
    \Dalpha = 2i \left( B \alpha^2 + C \alpha + \overline{B} \right) .
    \end{equation}
    From $K_2 = 0$, we deduce using \eqref{eq:proof:prop:WS_transform:aux18} that
    \begin{equation}
        \label{eq:proof:prop:WS_transform:aux20}
    \Dvartheta_k = 2 \left( 2\re{B \alpha} + C  \right)
    \end{equation}
    for all $k \in \{1,\dots,n\}$.
    Noticing that the right-hand side of \eqref{eq:proof:prop:WS_transform:aux20} is independent from the particle index $k$ implies that all angles $\{\vartheta_k\}_{k=1}^n$ rotate at the same speed.
    Therefore, we can introduce a new time-dependent but particle-independent parameter $\eta$ which has the same rotational speed as all $\{\vartheta_k\}$'s, i.e., $\Dvartheta_k=\Deta$, and define
    \begin{equation*}
        \eta(t) = \vartheta_k (t) - \varphi_k
    \end{equation*}
    for all $k \in \{1,\dots,n\}$,
    where $\{\varphi_k\}_{k=1}^n$ are the constants of motion.
    
    We thus arrive at the family of Möbius transformations as defined in \eqref{eq:Mobius_tf}, mapping constants $\varphi_k$ to phases $\xi_k$ according to
    \begin{equation*}
    M_{\alpha, \eta}: \varphi_k \mapsto \xi_k, \qquad e^{i\xi_k} = M_{\alpha,\eta} (e^{i\varphi_k}) = \frac{\alpha + e^{i(\varphi_k +\eta)}}{ 1 + \alphaBar e^{i(\varphi_k +\eta)}},
    \end{equation*}
    with the WS variables $\alpha$ and $\eta$ following the coupled ODE system given by
    \begin{equation*}
    \dot{\alpha} = 2 i \left( B \alpha^2 + C  \alpha + \overline{B} \right), \qquad
    \dot{\eta} = 2 \left(2 \re{B \alpha} + C \right) .
    \end{equation*}
    Moreover, when the parameter $\alpha$ has modulus less than one, the Möbius transformation is a one-to-one mapping and thus admits a corresponding inverse transformation, which can be explicitly written as in \eqref{eq:inverse_M}.
\end{proof}


\begin{proof}[Proof of Lemma~\ref{lem:WS_to_OA}]
We fix $t \geq 0$ and suppress the time dependence for notational simplicity.
Let $\alpha \in \C $ be such that $|\alpha|<1$.
Since $T_{\alpha, \eta}$ is an one-to-one map with $|\alpha| < 1$, it suffices to prove that the pushforward of the uniform distribution under $T_{\alpha,\eta}$ is the wrapped Cauchy distribution $\WC(\dummy;\alpha)$. 

Suppose $\nu = \mathrm{Unif}([0,2\pi))$, and let $g(\xi)$ denote the density of the distribution $T_{\alpha, \eta \sharp} \left(\mathrm{Unif}([0,2\pi)) \right)$.
Then, using the definitions of pushforward and
the of the map~$T_{\alpha, \eta}$, we obtain, for all $m \in \mathbb{Z}$,
\begin{equation*}
\begin{split}
    \int_{0}^{2\pi} g(\xi) e^{im \xi} d\xi
    &= \frac{1}{2\pi}\int_0^{2\pi} e^{im T_{\alpha, \eta}(\varphi)} d \varphi
    = \frac{1}{2\pi}\int_0^{2\pi} (e^{i T_{\alpha, \eta}(\varphi)})^m d \varphi
    =\frac{1}{2\pi}\int_0^{2\pi} \left( \frac{\alpha + e^{i(\varphi + \eta)}}{1 + \bar{\alpha} e^{i( \varphi + \eta)}} \right)^m d \varphi \\
    &= \frac{-i}{2\pi} \oint_{\S^1} \left( \frac{\alpha + z}{1 + \bar{\alpha} z} \right)^m \frac{1}{z} d z
    = \alpha^m,
\end{split}  
\end{equation*}
where we wrote the second to last integral as a line integral in the complex plane (with the change of variables $z=e^{i(\varphi + \eta)}$), and used the residue theorem to obtain the last equality. The above allows us to write $g$ in Fourier basis as
\begin{equation*}
\begin{split}
    g(\xi)
    & = \frac{1}{2\pi} \sum_{m=0}^{\infty} \left(\alpha^m e^{-im\xi}  + \bar{\alpha}^m e^{i m \xi }\right)
    = \frac{1}{2\pi} \bigg( 1+ 2\sum_{m=1}^\infty \mathrm{Re}\left( \bar{\alpha}^m e^{im\xi} \right)  \bigg) 
    = \frac{1}{2\pi} \mathrm{Re}\left( \frac{1+ \bar{\alpha} e^{i \xi} }{1-\bar{\alpha}e^{i\xi} } \right) = \frac{1}{2\pi} \frac{1 - |\alpha|^2}{|e^{i\xi} - \alpha|^2},
\end{split}
\end{equation*}
which is precisely the formula for the density of $\WC(\dummy; \alpha)$ stated in \eqref{eq:WC_dis}.
Note that to get the third equality we used the formula for the geometric series.
\end{proof}

\section{Proof Details for Section~\ref{sec:case_study}}\label{app:case_study}

In this appendix, we provide the technical details for the proofs of Section~\ref{sec:case_study}.

\subsection{Proofs of Main Results}

\subsubsection*{Case 1.}

\begin{proof}[Proof of Theorem \ref{thm:case_1}]
Recall from \eqref{eq:OAcase1} that the dynamics for the order parameter $\CR_2$ are given by
\begin{equation}\label{eq:case1_CR2_dyn_app}
\Dot{\CR}_2 = \frac{1}{4} \left( 1 - |\CR_2|^2 \right) \left( \left(\aDPlus + i\aOMinus \right) \CR_2 + \left( \aDMinus + i\aOPlus \right) \right).
\end{equation}
For notational convenience, we set
\begin{equation}\label{eq:case1_convenience_notation}
z(t) := \CR_2(t), \qquad a:=\aDPlus,\qquad b:=\aOMinus,\qquad c:=\aDMinus+i\aOPlus,\qquad R:=|c|, \qquad \PhiA := \Arg(c) .
\end{equation}
We set $\PhiA = 0$ if $R = 0$.
We also denote the closed unit disk and its boundary by
\[
    D := \{w : |w| \leq 1\}, \qquad \partial D := \{ w : |w| = 1 \}.
\]
We first consider the general case where $a \neq 0$ or $b \neq 0$. 
The equation of $z$ can be written as
\begin{equation}
\label{eqn:case1_complex}
\dot z=\frac{1}{4} (a+ib) \bigl(1-|z|^2\bigr) (z - z_\mathrm{eq}),\qquad z(0)=z_0, 
\end{equation}
where 
\begin{equation}\label{eq:zeq}
z_\mathrm{eq} := -\frac {c}{a + ib} .
\end{equation}

\paragraph{Two important quantities and associated observations.}
It is straightforward to verify that the equilibrium set of the dynamics~\eqref{eqn:case1_complex} is $\partial D \cup \{ z_{\mathrm{eq}} \}$.
To understand the asymptotic behavior of $z(t)$, we examine how the trajectory $z(t)$ moves relative to the interior equilibrium $\zeq$, and how it may interact with the boundary equilibrium set $\partial D$.

To this end, we introduce two quantities associated with the displacement $z(t) - \zeq$.
The first is the energy function
\begin{equation}\label{eq:case1_energy}
    \CE(t) = |z - z_\mathrm{eq}|^2,
\end{equation}
which measures the squared distance between $z(t)$ and $\zeq$.
A direct computation gives
\begin{equation}
    \label{eq:case1_lyapunov_derivative}
    \dot{\CE} = \frac 1 2 a(1 - |z|^2) \CE .
\end{equation}  
Consequently, integrating above equation yields
\begin{equation}\label{eq:energy_CE}
\CE(t) = \CE(0) \exp \left\{ \frac{a}{2} \int_0^t (1 - |z(s)|^2) \, ds \right \} \, .
\end{equation}

An important observation is that the sign of $a$ determines the monotonicity of the energy function $\CE(t)$, and hence describes how the trajectory $z(t)$ moves relative to $\zeq$.
More precisely,
\begin{itemize}
    \item when $a>0$, $\dot{\CE} > 0$, and thus the trajectory $z(t)$ is pushed away from $z_\mathrm{eq}$;

    \item when $a = 0$, $\dot{\CE} = 0$, and thus the distance between $z(t)$ and $z_\mathrm{eq}$ is preserved;

    \item when $a < 0$, $\dot{\CE} < 0$, and thus the trajectory $z(t)$ is drawn toward $z_\mathrm{eq}$.
\end{itemize}
Thus, in each of these three regimes, the long-term behavior of $z(t)$ is determined by how the trajectory moves relative to $\zeq$ and whether it reaches the boundary $\partial D$.
Roughly speaking, $z(t)$ converges to a boundary equilibrium point if the trajectory  is pushed away from $\zeq$, or if, while cycling around or moving toward $\zeq$, it reaches the boundary $\partial D$.
By contrast, if the trajectory remains entirely inside the unit disk $D$, then $z(t)$ either keeps cycling around $\zeq$ or converges to it; see Figure \ref{fig:z_trajectory} as an illustrative example.
A more detailed and rigorous characterization is provided in Claims $1,2$, and $3$ below.

\begin{figure}[!htb]
\centering
\newlength{\subfigimgheightL}
\setlength{\subfigimgheightL}{0.2\textheight}

\begin{subfigure}[t]{.33\textwidth}
  \centering
  \includegraphics[
    width=\linewidth,
    height=\subfigimgheightL,
    keepaspectratio,
    valign=b
  ]{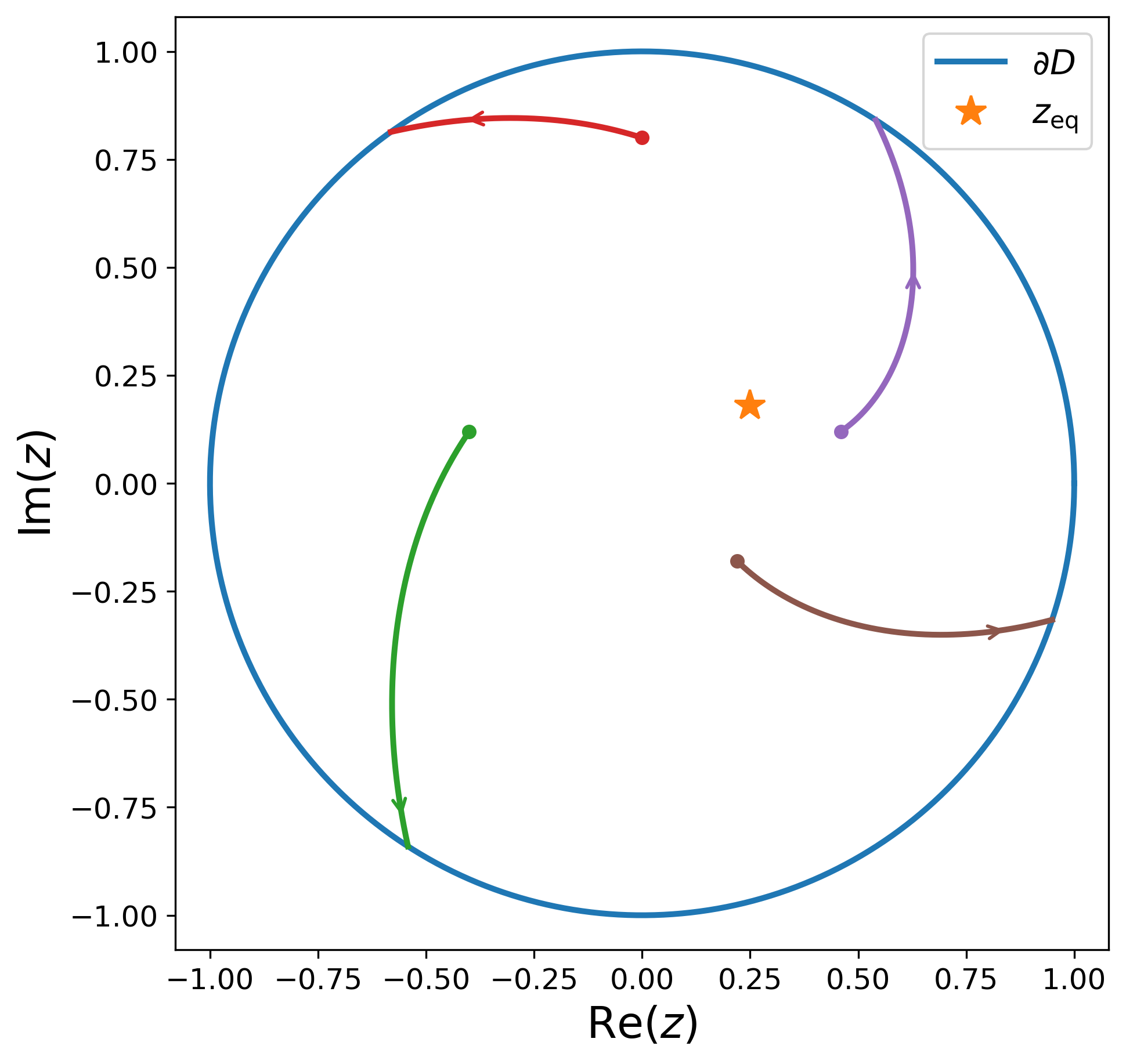}
  \caption{$z(t)$ is pushed away from $\zeq$.}
  \label{subfig:case1_a_positive}
\end{subfigure}%
\hfill
\begin{subfigure}[t]{.33\textwidth}
  \centering
  \includegraphics[
    width=\linewidth,
    height=\subfigimgheightL,
    keepaspectratio,
    valign=b
  ]{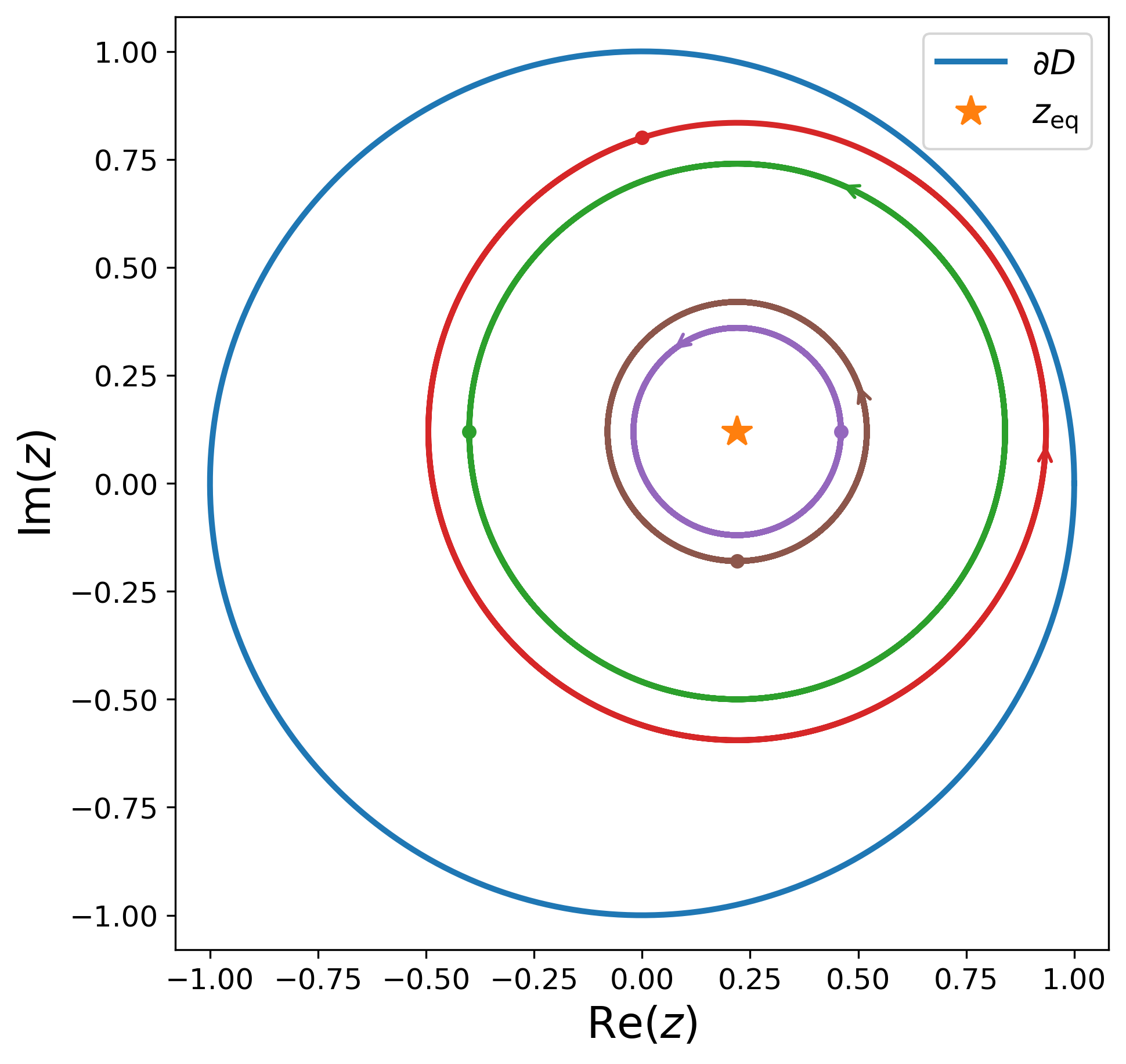}
  \caption{$z(t)$ cycles around $\zeq$.}
  \label{subfig:b}
\end{subfigure}%
\hfill
\begin{subfigure}[t]{.33\textwidth}
  \centering
  \includegraphics[
    width=\linewidth,
    height=\subfigimgheightL,
    keepaspectratio,
    valign=b
  ]{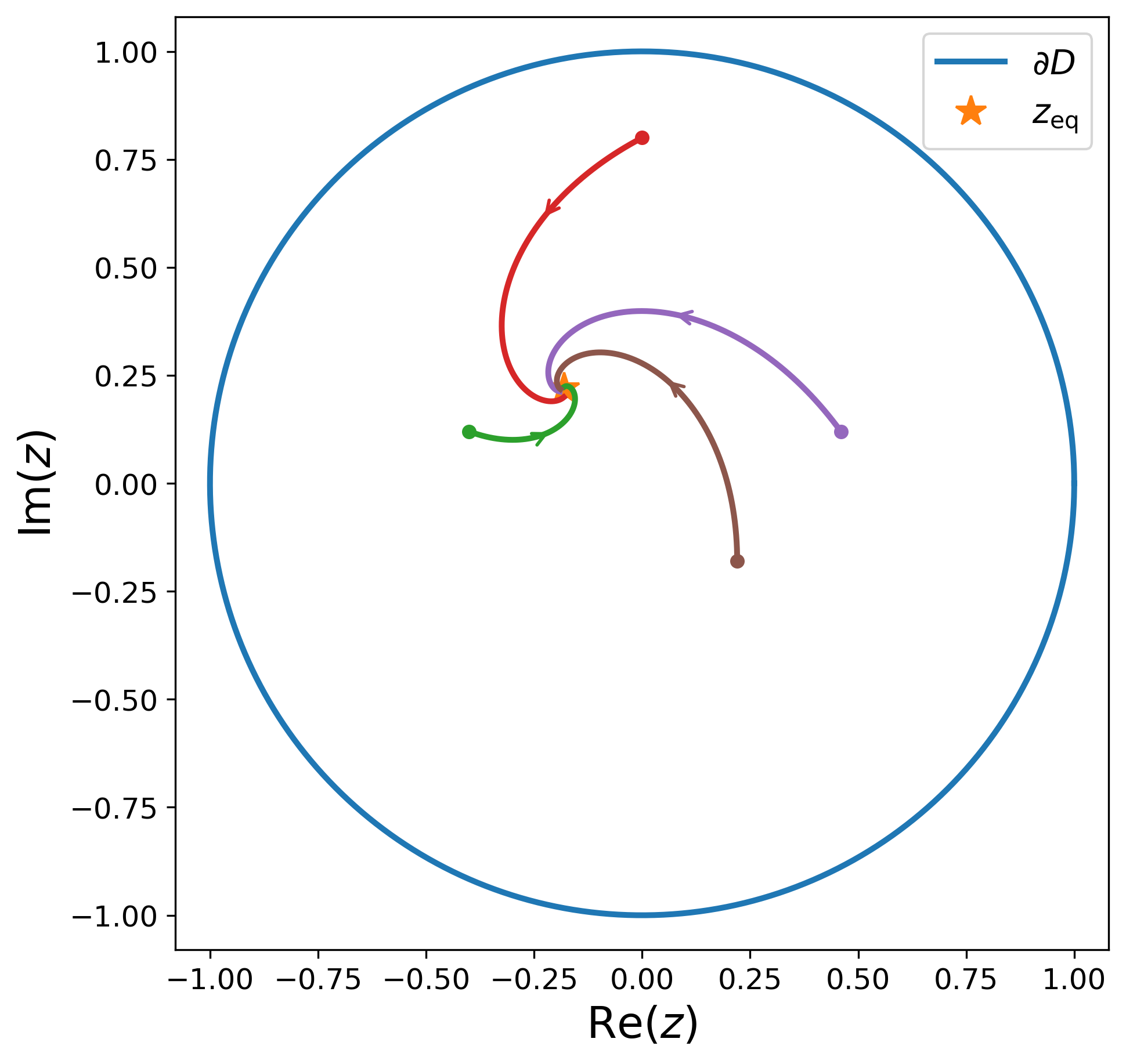}
  \caption{$z(t)$ moves toward to $\zeq$.}
  \label{subfig:a}
\end{subfigure}

\caption{The visualization of the trajectory $z(t)$ relative to the interior equilibrium $\zeq$ and the boundary equilibrium set $\partial D$.
The solid dot is the initialization point, while the arrow indicates the moving direction of $z(t)$.
}
    \label{fig:z_trajectory}
\end{figure}

The second important quantity is the angular motion of the displacement $z(t) - z_\mathrm{eq}$, that is,
\begin{equation}\label{eq:z_polar}
     \zeta(t) := \Arg(z(t) - \zeq) \, .
\end{equation} 
This quantity describes how the trajectory $z(t)$ rotates around $\zeq$, and will be useful for 
determining whether the angular component of the trajectory converges.

A direct computation shows that the angle $\zeta$ satisfies
\begin{equation}
    \label{eqn:case1-angle-dynamics}
        \dot{\zeta} = \frac{b} 4 (1 - |z|^2).
\end{equation}
Integrating in time yields
\begin{equation}
    \label{eqn:case1-angle-integral}
    \zeta(t) = \zeta(0) + \frac{b}{4}\int_0^t (1-|z(s)|^2)\,ds.
\end{equation}
Thus, the convergence of $\zeta$ is determined by the following condition:
\begin{equation}\label{eq:case1-finite-integral}
    \lim_{t \rightarrow \infty} \int_0^{t} (1 - |z(s)|^2) \, ds = C < \infty \, .
\end{equation}

We now formalize the above observations by analyzing the three parameter regimes: $a > 0$, $a = 0 \ \& \ b \neq 0$, and $a < 0$.

    \textbf{Claim 1 (regime $a>0$)}: when $a > 0$, for any initial condition $z_0 \neq z_\mathrm{eq}$ and $z_0 \in D$, the solution of \eqref{eqn:case1_complex} converges to a point $z_* \in \partial D$.

\begin{proof}[Proof of Claim 1.]
    Since $\partial D$ is an equilibrium set and $z_0 \in D$, we have $z(t) \in D$ and an estimate
    \begin{equation*}
    0 \leq \CE(t) \leq 2\left(|z|^2 + |\zeq|^2\right) \leq 2 + \frac{2R}{a^2 + b^2} < \infty, \qquad \text{for all } t \geq 0 .
    \end{equation*}
    On the other hand, from \eqref{eq:energy_CE}, $a > 0$ and $\CE(t) < \infty$, we know that the condition \eqref{eq:case1-finite-integral} must hold.
    Together with \eqref{eqn:case1-angle-integral}, this implies the convergence of the angle $\zeta$ to some $\zeta_{\infty} \in \BBT^1$. 
    By \eqref{eq:z_polar}, we thus know that $\Arg(z)$ also converges to a fixed value in $\BBT^1$.
    
    To conclude that $|z| \to 1$, we note that from the dynamics of $z$ in \eqref{eqn:case1_complex}, we know that $|z(t)|$ is Lipchitz, and thus the integrand $1 - |z(t)|^2$ is Lipchitz.  
    Together with \eqref{eq:case1-finite-integral} and Barb\u{a}lat's lemma \cite{Barbalat1959}, this implies that $1 - |z(t)|^2 \to 0$.

    Combining the convergence of both modulus and argument of $z$, we conclude the proof.
\end{proof}

Here, we record an additional quantitative convergence estimate in the sub-regime $a > R$, which will be used in the proof of Theorem \ref{thm:case_1stab}.
We defer the proof until after the proof of Theorem \ref{thm:case_1}.

\begin{proposition}\label{prop:case1_quantitative}
Let $z(t)$ be the solution to \eqref{eqn:case1_complex} with initial data $z(0) \neq \zeq$, and let $z_* \in \partial D$ denote the limiting point, so that $\lim_{t \rightarrow \infty} z(t) = z_*$.
For any prescribed $\epsilon > 0$, define
\begin{equation*}
    T_{\epsilon} := \inf \left\{ t \geq 0: |z(t) - z_*| \leq \epsilon \right\} \, .
\end{equation*}
Suppose that $a > R$. 
Then there exist constants $C_1, C_2 > 0$, depending only on the matrix $A$, such that
\begin{equation*}
T_{\epsilon} \leq C_1 \log \frac{1}{|z_0 - \zeq|} + C_2 \log \frac{1}{\epsilon} \, .
\end{equation*}
Here, $\zeq$ is defined in~\eqref{eq:zeq}.
\end{proposition}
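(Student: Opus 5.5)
The plan is to work directly with the two conserved-structure quantities already introduced: the energy $\CE(t)=|z-\zeq|^2$, obeying $\dot\CE=\tfrac a2(1-|z|^2)\CE$, and the modulus $|z|$. The key observation in the sub-regime $a>R$ is that $|\zeq| = R/\sqrt{a^2+b^2} < 1$. Moreover, the radial part of the vector field is uniformly outward once $|z|$ is bounded away from $0$. From \eqref{eq:pc_dyn_case_1}, with $\rho=|z|$,
\[
\dot\rho=\tfrac14(1-\rho^2)\bigl(a\rho+\aDMinus\cos\phi+\aOPlus\sin\phi\bigr)\ge \tfrac14(1-\rho^2)(a\rho-R).
\]
Hence on the set $\{\rho\ge \rho_1\}$, with $\rho_1:=(1+R/a)/2\in(R/a,1)$, we have $\dot\rho\ge c_0(1-\rho^2)$ for $c_0:=\tfrac14(a\rho_1-R)>0$. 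I would split the trajectory into two phases: an escape phase, in which $z$ leaves a neighbourhood of $\zeq$ and reaches $\{\rho\ge\rho_1\}$, and a boundary phase, in which $\rho\to1$ exponentially and $z\to z_*$.

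\textbf{Phase 1 (escape).} Whenever $\rho(t)\le\rho_1$, we have $1-\rho^2\ge 1-\rho_1^2=:c_1>0$, so $\dot\CE\ge \tfrac a2 c_1\CE$ and $\CE$ grows exponentially. Since $\CE\le (1+|\zeq|)^2$ on $D$, the total time spent in $\{\rho\le\rho_1\}$ is at most $\frac{2}{ac_1}\log\frac{(1+|\zeq|)^2}{|z_0-\zeq|^2}$, which is $\le C_1\log\frac{1}{|z_0-\zeq|}+C$. The set $\{\rho\ge\rho_1\}$ is forward invariant, because $\dot\rho>0$ there. Therefore once $\rho$ reaches $\rho_1$ it stays above $\rho_1$, and the first hitting time $t_1$ obeys the same bound.

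\textbf{Phase 2 (boundary convergence).} For $t\ge t_1$, from $\dot\rho\ge c_0(1-\rho^2)\ge c_0(1-\rho)$ we get $1-\rho(t)\le e^{-c_0(t-t_1)}$. To control the angle, I would use the original complex equation: $|\dot z|\le \tfrac14\sqrt{a^2+b^2}(1-|z|^2)|z-\zeq|\le C_3(1-\rho)$. Integrating from $t$ to $\infty$ gives
\[
|z(t)-z_*|\le \int_t^\infty|\dot z|\,ds\le \tfrac{C_3}{c_0}e^{-c_0(t-t_1)}.
\]
Consequently $|z(t)-z_*|\le\epsilon$ as soon as $t\ge t_1+\frac1{c_0}\log\frac{C_3}{c_0\epsilon}$. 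Combining this with Phase 1 yields $T_\epsilon\le C_1\log\frac1{|z_0-\zeq|}+C_2\log\frac1\epsilon+C$.

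\textbf{Main obstacle.} The delicate point is the additive constant $C$, since the statement allows no such term. For $\epsilon$ small, and $|z_0-\zeq|\le 1$, it can be absorbed into the two logarithms. The large-$\epsilon$ and large-$|z_0-\zeq|$ cases need a separate case check, noting that $|z-z_*|\le 2$ always, so $T_\epsilon=0$ when $\epsilon\ge2$. If this check fails in the intermediate range, I would restate the bound with $\log(e/\epsilon)$-type quantities, or otherwise enlarge $C_1,C_2$ using that $|z_0-\zeq|\le 1+|\zeq|$. A second point to verify is that all constants ($\rho_1,c_0,c_1,C_3$) depend only on $a,b,R$, i.e. only on $A$, which holds by construction.
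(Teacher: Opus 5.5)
Your proposal is correct and follows essentially the same two-phase argument as the paper. In the first phase, exponential growth of $|z-\zeq|$ while $|z|<r$, for some $r\in(R/a,1)$, bounds the escape time by a multiple of $\log\frac{1}{|z_0-\zeq|}$. In the second phase, $1-|z|^2$ decays exponentially on the forward-invariant set $\{|z|\ge r\}$, and integrating $|\dot z|\lesssim 1-|z|^2$ gives the $\log\frac1\epsilon$ term. The paper works with $q=1-|z|^2$ via $\dot q=-\kappa q$, and with $|z-\zeq|$, rather than your polar $\rho$-equation and $\CE$; this difference is only cosmetic.

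The additive constant you flag is also present in the paper's proof. Terms such as $\log(r+|\zeq|)$ and $\log\frac{K(1-r^2)}{\mu_r}$ are absorbed into $C_1,C_2$ without comment. So the bound with an explicit extra constant is what both arguments actually establish, and that weaker form is all that is used later in the proof of Theorem~\ref{thm:case_1stab}.
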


    \textbf{Claim 2 (regime $a = 0$, $b \neq 0$):} suppose $a = 0$, then  
    \[
        z(t)=z_{\mathrm{eq}}+|z_0-z_{\mathrm{eq}}|e^{i\zeta(t)}
        \qquad \text{for all } t\ge0,
    \]
    Furthermore, if 
    \begin{itemize}
        \item $|b| > R$ and $|z_0 - z_\mathrm{eq}| < \mathrm{dist}(z_\mathrm{eq}, \partial D)$, 
        then $z(t)$ keeps circling within $D \setminus \partial D$ for all $t \geq 0$.
        
        \item $0 < |b| \leq R$ or $|z_0 - z_\mathrm{eq}| \geq \mathrm{dist}(z_\mathrm{eq}, \partial D)$, then $\lim_{t \to \infty} z(t) = z_* \in \partial D$.
    \end{itemize}

    \begin{proof}[Proof of Claim 2.]
        Setting $a = 0$ in \eqref{eq:case1_lyapunov_derivative}, we obtain $\dot{\CE} = 0$ and hence the dynamics~\eqref{eqn:case1_complex} of $z$ remain on a circle $C_{\zeq}$ centered around $z_\mathrm{eq}$, that is,
        \begin{equation}\label{eq:circle_Cz}
             z(t) \in C_{\zeq} := \{ z: |z - z_{\mathrm{eq}}| = |z_0 - z_{\mathrm{eq}}| \}, \qquad \text{ for all } t \geq 0.
        \end{equation}
        Together with the polar representation~\eqref{eq:z_polar}, we have
        \begin{equation*}
        z(t) - \zeq = |z(t) - \zeq| e^{i\zeta(t)} = |z_0 - \zeq| e^{i\zeta(t)} \qquad \Longrightarrow \qquad z(t) = \zeq + |z_0 - \zeq| e^{i\zeta(t)} .
        \end{equation*}
        
        Next, we discuss the long-time behavior of $z$ under different parameter regimes of $A$ and initial condition of $z_0$.
        
        \begin{itemize}
            \item \textbf{Parameter regime:} $|b| > R$.
            
            Recall from \eqref{eq:zeq} that $\zeq = -\frac{c}{a + i b}$. 
            Together with $a=0$ and $|b| > R$, we know that $\zeq \in D \setminus \partial D$.

            In this parameter regime, we further consider two conditions on the initial data $z_0$.

            \begin{itemize}
                \item $|z_0 - z_{\mathrm{eq}}| < \mathrm{dist}(z_{\mathrm{eq}}, \partial D)$.

                Since $\zeq \in D$, we have
                \begin{equation*}
                    |z_0 - z_{\mathrm{eq}}| < \mathrm{dist}(z_{\mathrm{eq}}, \partial D) = 1 - |\zeq| .
                \end{equation*}
                Therefore, we know that there exists $\delta \in (0,1)$ such that

                \[
            |z(t)| - |z_\mathrm{eq}| \leq |z(t) - z_\mathrm{eq}| \leq 1 - \delta - |z_\mathrm{eq}|,
        \]
        and thus
        \[
            |z(t)| < 1 - \delta, \qquad \text{ for all } t \geq 0.
        \]

        Therefore, from the dynamics~\eqref{eqn:case1-angle-dynamics} and $\delta \in (0,1)$, we have for $b \neq 0$
        \[
            |\dot{\zeta}| = \frac{|b|}{4} (1 - |z|^2) \geq \frac{|b|}{4} (2 \delta - \delta^2) > 0,
        \]
        Thus, the angular velocity is bounded from away from zero for all time and in turn implies that $\zeta(t)$ does not converge.
        Together with the fact that $z(t)$ always lies on the circle $C_{\zeq}$ as defined in \eqref{eq:circle_Cz},
        and the initial condition $|z_0 - z_{\mathrm{eq}}| < \mathrm{dist}(z_{\mathrm{eq}}, \partial D)$,
        we know that the the circle $C_{\zeq}$ strictly contained in the unit disc $D$. 
        Therefore the trajectory of $z$ rotates around $z_{\mathrm{eq}}$ within $D \setminus \partial D$ perpetually. 

        \item $|z_0 - \zeq| \geq \mathrm{dist} (\zeq, \partial D)$.

        In this case, the circle $C_{\zeq}$ intersects with the boundary of the unit disc $\partial D$.
        Recall the dynamics of angle $\zeta$ in \eqref{eqn:case1-angle-dynamics}
        and $b \neq 0$,
        thus we know that $\zeta$ is strictly monotone when $z \in D \setminus \partial D$.
        Together with $z(0) \in D \setminus \partial D$,
        this implies that the trajectory of $z(t)$ must converge to one of the points in $C_{\zeq} \cap \partial D$.
        Equivalently, we have $\lim_{t \to \infty} z(t) = z_* \in \partial D$.
            
        \end{itemize}
        \item \textbf{Parameter regime:} $0 < |b| \leq R$.
        
        Recall from \eqref{eq:zeq} that $\zeq = - \frac{c}{a + ib}$.
        Together with $a = 0$ and $0 < |b| \leq R$, we know that $\zeq \in (D \setminus \partial D)^c$.
        In this case, the circle $C_{\zeq}$ intersects with the boundary of the unit disc $\partial D$.
        Recalling the dynamics of angle $\zeta$ in \eqref{eqn:case1-angle-dynamics}
        and $b \neq 0$, thus we know that $\zeta$ is strictly monotone when $z \in D \setminus \partial D$.
        Together with $z(0) \in D \setminus \partial D$,
        this implies that the trajectory of $z(t)$ must converge to one of the points in $C_{\zeq} \cap \partial D$.
        Equivalently, we have $\lim_{t \to \infty} z(t) = z_* \in \partial D$.
        \end{itemize}
\end{proof}
    
\textbf{Claim 3 (regime $a < 0$):} For $a < 0$ and $z_0 \in D \setminus \partial D$, 
    \begin{itemize}
    \item if $a^2+b^2 \leq R^2$, then $\lim_{t\to\infty} z(t) = z_* \in \partial D$.
    
    \item if $a < -R$ or ($a^2+b^2 > R^2$ $\&$ $|z_0-z_{\mathrm{eq}}| < \operatorname{dist}(z_{\mathrm{eq}},\partial D)$), then $\lim_{t\to\infty} z(t) = z_{\mathrm{eq}}$.
    \end{itemize}
    
    \begin{proof}[Proof of Claim 3.]
        When $a<0$, recall from \eqref{eq:case1_lyapunov_derivative} that $\dot{\CE} \leq 0$, and thus $\CE$ is a Lyaponov function.  
        Since $D$ is a positive invariant compact set of the dynamics~\eqref{eqn:case1_complex} and the zero derivative set of $\CE$ coincides with the equilibrium set of \eqref{eqn:case1_complex}:
        \begin{equation*}
        \left\{ z \in D: \Dot{z} = 0 \right\} = \left\{ z \in D: \Dot{\CE} = 0 \right\} = \partial D \cup \left\{ \zeq \right\},
        \end{equation*}
        then the LaSalle invariance Principle implies that for all $z_0 \in D$, $z(t)$ converges to $\partial D \cup \left\{ \zeq \right\}$ as $t \rightarrow \infty$.
        Next, we discuss three different parameter regimes, respectively.

        \begin{itemize}
            \item \textbf{Parameter regime: $a^2 + b^2 \leq R^2$.}

            In this case, recall the definition of $\zeq$ from \eqref{eq:zeq}, we thus know that $\zeq = - \frac{c}{a + i b} \in \left( D \setminus \partial D \right)^c$.
            This implies that for all $z_0 \in D$, $z(t)$ only converges to $\partial D$, that is, $|z(t)| \rightarrow 1$ as $t \rightarrow \infty$.
            
            Next, we show that $\Arg(z)$ also converges to a fixed point in $\BBT^1$.
            We discuss it in two sub-cases.
            \begin{itemize}
            \item[1.] $z \rightarrow \zeq$.
            In this case, we by default obtain that $\Arg(z) \rightarrow \Arg(\zeq) \in \BBT^1$ as $t \rightarrow \infty$.

            \item[2.] $z \nrightarrow \zeq$.
            In this case, from \eqref{eq:energy_CE}, we know that the energy function $\CE(t) > 0$ for all $t \geq 0$.
            Together with $a < 0$, this implies the condition \eqref{eq:case1-finite-integral} holds.
            Then again from the representation~\eqref{eqn:case1-angle-integral} of $\zeta(t)$, we know that $\zeta(t) \rightarrow \zeta_{\infty} \in \BBT^1$.
            From the relationship of $z(t)$ and $\zeta(t)$ in \eqref{eq:z_polar}, we thus conclude that $\Arg(z)$ converges in $\BBT^1$ as $t \rightarrow \infty$.
        \end{itemize}
        Based on the above discussions, we can conclude that when $a < 0$ and $a^2 + b^2 \leq R^2$, the dynamics of $z(t)$ satisfies $\lim_{t \rightarrow \infty} z(t) = z_* \in \partial D$.

        \item \textbf{Parameter regime: $a < -R$.}
        
        From the LaSalle invariance principle, we have already shown that $z(t) \rightarrow \partial D \cup \{\zeq\}$.
        We now show that $z(t)$ cannot converge to $\partial D$, that is, $|z(t)| \nrightarrow 1$, and therefore conclude that $z(t) \rightarrow \zeq$.
        To this end, it is convenient to work in polar coordinates.

        Plugging the polar representation of the order parameter $z(t) = \CR_2(t) = \rho(t) e^{i\phi(t)}$ in \eqref{eqn:case1_complex}, together with $\PhiA = \arg(c)$, we obtain the dynamics of $\rho$ as
        \begin{equation*}
        \Drho = \frac{1}{4} (1 - \rho^2) (a \rho + R \cos (\phi - \PhiA)) \,  .
        \end{equation*}
        Now fix any $\rho \in (\rhoTH, 1)$, where $\rhoTH := \frac{R}{|a|} \in (0,1)$.
        Since $a < 0$, we have
        \begin{equation*}
        a \rho + R \cos (\phi - \PhiA) \leq a \rho + R < a \rhoTH + R = 0 .
        \end{equation*}
        Moreover, for $\rho \in (\rhoTH,1)$, we have $1 - \rho^2 > 0$. 
        Hence $\Drho < 0$ whenever $\rho \in (\rhoTH,1)$.
        This shows that the trajectory of $\rho(t)$ cannot approach $\rho = 1$ within the interval $(0,1)$. Equivalently, for every initial condition $z_0 \notin \partial D$, we have $z(t) \nrightarrow \partial D$.
        Combining with the conclusion from the LaSalle invariance principle, we therefore obtain $\lim_{t \rightarrow \infty} z(t) = \zeq$.

        \item \textbf{Parameter regime: $a^2 + b^2 > R^2$ $\&$ $|z_0 - \zeq| < \mathrm{dist} (\zeq, \partial D)$.}

        Since $a^2 + b^2 > R^2$, it follows from \eqref{eq:zeq} that $\zeq = -\frac{c}{a + ib} \in D \setminus \partial D$.
        Together with the initial condition on $z_0$: $|z_0 - \zeq| < \mathrm{dist} (\zeq, \partial D)$, this implies that there exists a $\delta > 0$ such that
        \begin{equation*}
        |z_0 - \zeq| \leq \mathrm{dist} (\zeq, \partial D) - \delta = 1 - |\zeq| - \delta .
        \end{equation*}
        On the other hand, by $a<0$ and \eqref{eq:case1_lyapunov_derivative}, the energy function $\CE(t) = |z(t) - \zeq|^2$ is non-increasing.
        Therefore, for all $t \geq 0$,
        \begin{equation*}
        |z(t) - \zeq| \leq |z_0 - \zeq| \leq 1 - |\zeq| - \delta \,.
        \end{equation*}
        Using the inverse triangle inequality we obtain 
        \begin{equation*}
            |z(t)| \leq |\zeq| + |z(t) - \zeq| \leq 1 - \delta < 1, \qquad \text{for all } t \geq 0 \, .
        \end{equation*}
        Hence $z(t)$ cannot converge to the boundary $\partial D$.
        Combining this with the convergence conclusion from the LaSalle invariance principle, we deduce that $\lim_{t \rightarrow \infty} z(t) = \zeq$. 
        \end{itemize}


        
    \end{proof}

Lastly, we consider a boundary case $a = b = 0$, where dynamics \eqref{eq:case1_CR2_dyn_app} of $z$ take the form
\begin{equation}\label{eq:case1_special}
\dot{z} = \frac{c}{4} \left(1 - |z|^2 \right), \qquad z(0) = z_0 .
\end{equation}
Note that $R = |c| > 0$ unless $A$ is the zero matrix, which corresponds to a trivial situation.

\textbf{Claim 4 (regime $a = 0$ and $b = 0$):} For any initial condition $z_0 \in D$, the solution of~\eqref{eq:case1_special} converges to a boundary point $z_* \in \partial D$.

\begin{proof}[Proof of Claim 4.]
If $z_0 \in \partial D$, then $|z_0| = 1$, and hence $\dot{z} = 0$.
Thus the claim is immediate.
It remains to consider $z_0 \in D \setminus \partial D$.

Rotating the complex plane by $e^{-i\PhiA}$, set $\wz := e^{-i\PhiA} z$.
Then
\begin{equation*}
    \dot{\wz} = \frac{R}{4} (1 - |\wz|^2) \, .
\end{equation*}
Write $\wz(t) = x(t) + i y(t)$, and then the dynamics of $x$ and $y$ are given by
\begin{equation*}
\Dot{x} = \frac{R}{4} \left(1 - x^2 - y^2 \right), \qquad \Dot{y} = 0 \, .
\end{equation*}
with initial condition $x(0) = x_0 := \re{\wz_0}$ and $y(0) = y_0 := \im{\wz_0}$.
Thus $y(t) \equiv y_0$, and $x$ satisfies
\begin{equation*}
\Dot{x} = \frac{R}{4} \left( \alpha^2 - x^2 \right), \qquad \alpha := \sqrt{1 - y_0^2} \, .
\end{equation*}

Since $\wz_0 \in D \setminus \partial D$, we have $x_0 \in (-\alpha, \alpha)$. 
Moreover, when $x(t) \in (-\alpha, \alpha)$,
the condition $R > 0$ implies that $\Dot{x} > 0$.
Thus $x(t)$ is strictly increasing until it reaches the equilibrium $\alpha$.
Therefore, 
\begin{equation*}
\lim_{t \rightarrow \infty} z(t) = \lim_{t \rightarrow \infty} e^{i\PhiA} \wz(t) =  \lim_{t \rightarrow \infty} e^{i\PhiA} \left(  x(t) + i y_0 \right) = e^{i\PhiA} \left( \alpha + i y_0 \right) =: z_*,
\end{equation*}
and by construction $|z_*| = 1$, so $z_* \in \partial D$.
\end{proof}

\paragraph{Conclusion for the order parameter $\CR_2$.}
Recall that $\CR_2(t) = z(t)$.
We first summarize the long-term behavior of  $\CR_2(t)$ under different parameter regimes of the matrix $A$, by simply combining the four claims established above.
We include only those regimes where the asymptotic behavior holds for all initial conditions, and omit cases that require additional assumptions on the initial data.
\begin{itemize}
    \item If the matrix $A$ satisfies $a > 0$ or $a \leq 0 \ \& \ a^2 + b^2 \leq R^2$,
    then for every initial condition $\CR_2(0) \neq \zeq$, the solution of \eqref{eqn:case1_complex} satisfies $\lim_{t \rightarrow \infty} \CR_2 (t) =  z_* \in \partial D$.

\item If the matrix $A$ satisfies $a < - R$,
then for every initial condition $\CR_2(0) \in D \setminus \partial D$, the solution of \eqref{eqn:case1_complex} satisfies $\lim_{t \rightarrow \infty} \CR_2(t) = \zeq$, where $\zeq \in D \setminus \partial D$ is defined in \eqref{eq:zeq}.
\end{itemize}

Using the notation $a = \aDPlus = \tr{A}, \ b = \aOMinus, \ R = |c| = \sqrt{(\aDMinus)^2 + (\aOPlus)^2}$, together with basic algebraic identities, the above conditions on $A$ can be equivalently expressed as the following final results:
\begin{itemize}
    \item If the matrix $A$ satisfies:
    \begin{equation*}
        \tr{A} > 0 \quad \text{or} \quad  \det(A) \leq 0,
    \end{equation*}
    then the order parameter $\CR_2(t) = \rho(t) e^{i\phi(t)}$ satisfies $\rho(t) \rightarrow 1$ and $\phi(t) \rightarrow \phi_{\infty}$ for some $\phi_{\infty} \in \BBT^1$ as $t \rightarrow \infty$.
    

    \item If the matrix $A$ satisfies:
    \begin{equation*}
        \frac{A + A^{\top}}{2} \prec 0,
    \end{equation*}
    then the order parameter $\CR_2(t) = \rho(t) e^{i\phi(t)}$ satisfies
    $\rho(t) \rightarrow \rho_{\mathrm{eq}}$ and $\phi(t) \rightarrow \phi_{\mathrm{eq}}$ as $t \rightarrow \infty$, where
    \begin{equation}\label{eq:rho*_phi*}
    \rho_{\mathrm{eq}} := \frac{\sqrt{(\aDMinus)^2 + (\aOPlus)^2}}{\sqrt{(\aDPlus)^2 + (\aOMinus)^2}} \in (0,1), \qquad \phi_{\mathrm{eq}} := \pi + \Arg (\aDMinus + i \aOPlus) - \Arg (\aDPlus + i \aOMinus) \ \ (\mathrm{mod} \ 2\pi) .
    \end{equation}
\end{itemize}
\end{proof}

\begin{proof}[Proof of Proposition \ref{prop:case1_quantitative}]
For notation simplicity, we use the notation introduced in the proof of Theorem \ref{thm:case_1}; see the shorthand notation in~\eqref{eq:case1_convenience_notation}, the dynamics for $z(t)$ in~\eqref{eqn:case1_complex}, and the definition of $\zeq$ in~\eqref{eq:zeq}.
By Theorem \ref{thm:case_1}, when $a > R > 0$, for any initial condition $z(0) \neq \zeq$, the solution $z(t)$ converges to a boundary equilibrium $z_* \in \partial D$.

Denote $q(t) := 1 - |z(t)|^2$. 
Using \eqref{eqn:case1_complex}, a straightforward calculation gives
\begin{equation}\label{eq:case1_qt}
    \dot{q} (t) = - \kappa(t) q(t), \qquad \text{where } \; \kappa(t) := \frac{1}{2} \left(a |z(t)|^2 + \re{\overline{z}(t) c} \right) \, .
\end{equation}
Furthermore, by the elementary inequality, we obtain $\kappa(t) \geq \frac{1}{2} |z(t)| \left( a|z(t)| - R \right)$.
Thus, if we choose any number $r \in (R/a, 1)$, then whenever $|z(t)| \geq r$, we have
\begin{equation}\label{eq:case1_kappa_ineq}
\kappa(t) \geq \mu_r := \frac{1}{2} r(ar - R) > 0 \, .
\end{equation}
Define
\begin{equation}\label{eq:case1_Tr}
    T_r := \inf \left\{ t \geq 0: |z(t)| \geq r \right\} \, .
\end{equation}
Then for $t \geq T_r$, applying the inequality \eqref{eq:case1_kappa_ineq} in \eqref{eq:case1_qt} and then integrating the differential inequality over time gives
\begin{equation*}
q(t) \leq q(T_r) \exp \left( - \mu_r (t - T_r) \right) \leq (1 - r^2) \exp \left( - \mu_r (t - T_r) \right) \, .
\end{equation*}
On the other hand, from \eqref{eqn:case1_complex}, we have
\begin{equation*}
    |\dot{z} (t)| \leq \frac{1}{4} \left( 1 - |z(t)|^2 \right) \left( \sqrt{a^2 + b^2} + R \right) = K q(t), \qquad \text{where } K:= \frac{\sqrt{a^2 + b^2} + R}{4} \, .
\end{equation*}
Then we can compute that, for $t \geq T_r$,
\begin{equation*}
    |z(t) - z_*| \leq \int_t^{\infty} |\dot{z}(s)|ds \leq K \int_t^{\infty} q(s) ds \leq \frac{K}{\mu_r} (1-r^2) \exp \left( - \mu_r (t - T_r) \right) \, .
\end{equation*}
Letting the right-hand side of above inequality less or equal to $\epsilon$ gives
\begin{equation*}
    t \geq T_r + \frac{1}{\mu_r} \log \frac{K(1-r^2)}{\mu_r \epsilon} =: T_{\epsilon} \, .
\end{equation*}
Thus, it remains to estimate $T_r$ as defined in \eqref{eq:case1_Tr}.

Denote $w(t) := z(t) - \zeq$.
Using \eqref{eqn:case1_complex}, a direct computation gives
\begin{equation*}
\frac{d}{dt} |w(t)| = \frac{a}{4} (1 - |z(t)|^2) |w(t)| \, .
\end{equation*}
For $t < T_r$, we have $|z(t)| < r$, and therefore $\frac{d}{dt} |w(t)| \geq \frac{a}{4} (1 - r^2) |w(t)|$.
Integrating it over time gives
\begin{equation}\label{eq:case1_wt_ineq}
|w(t)| \geq |w(0)| \exp \left( \frac{a}{4} (1 - r^2) t \right), \qquad t < T_r \, .
\end{equation}
On the other hand, since $|z(t)| < r$, we have $|w(t)| = |z(t) - \zeq| \leq |z(t)| + |\zeq| < r + |\zeq|$.
Together with \eqref{eq:case1_wt_ineq}, this implies that
\begin{equation*}
    T_r \leq \frac{4}{a(1-r^2)} \log \left( \frac{r + |\zeq|}{|z(0) - \zeq|} \right) \, .
\end{equation*}
Therefore, we obtain the final estimate on $T_{\epsilon}$:
\begin{equation*}
\begin{aligned}
T_{\epsilon} &\leq \frac{4}{a(1-r^2)} \log \left( \frac{r + |\zeq|}{|z(0) - \zeq|} \right) + \frac{1}{\mu_r} \log \frac{K(1-r^2)}{\mu_r \epsilon} \qquad \text{for any } r \in (R/a, 1),\\
&\leq C_1 \log \frac{1}{|z(0) - \zeq|} + C_2 \log \frac{1}{\epsilon},
\end{aligned}
\end{equation*}
where the constants $C_1, C_2 > 0$ only depend on the matrix $A$.
\end{proof}

\subsubsection*{Case 2.}
\label{proof:case_2}
\begin{proof}[Proof of Theorem \ref{thm:case_2}]
We begin with a change of variables.
Denote $R := \sqrt{(\vDMinus)^2 + (\vOPlus)^2}$.
In the following, we assume $R > 0$; the case $R = 0$ will be treated separately.
Define the angle $\phiV$ by
\begin{equation}\label{eq:case2_phiV}
    \cos \phiV = \frac{\vDMinus}{R},
    \qquad
    \sin \phiV = \frac{\vOPlus}{R},
\end{equation}
Then, by the trigonometric identities, we have
\begin{equation*}
    \vDMinus \cos \phi + \vOPlus \sin \phi
    = R \cos (\phi - \phiV), \qquad \vDMinus \sin \phi - \vOPlus \cos \phi
    = R \sin (\phi - \phiV) \, . 
\end{equation*}
For notational convenience, we set
\begin{equation}\label{eq:case2_notation}
    \psi(t) := \phi (t) - \phiV, \qquad v := \vDPlus, \qquad \kappa(t) := \frac{1}{4}R \frac{3\rho(t)^2 + 1}{\rho(t)} \, .
\end{equation}
With these definitions, the original system \eqref{eq:pc_dyn_case_2} becomes
\begin{equation}\label{eq:case2_pc_dyn_app}
    \Drho = \frac{1}{4} (1 - \rho^2) (v \rho + R \cos \psi), \qquad  \Dpsi = -\kappa \sin \psi.
\end{equation}
The equilibria of \eqref{eq:case2_pc_dyn_app} are easily identified:
\begin{equation*}
\begin{aligned}
&\{ (\rho, \psi) = (1,0) \}, \qquad &&\{ (\rho, \psi) = (1,\pi) \},\\[5pt]
&\{ (\rho, \psi) = (\rhoEq,0) \} \text{ (if $v + R < 0$)}, \qquad &&\{ (\rho, \psi) = (-\rhoEq, \pi) \} \text{ (if $v - R > 0$)},
\end{aligned}
\end{equation*}
where we denote
\begin{equation}\label{eq:case2_rho_equi}
\rhoEq := - \frac{R}{v} \, .
\end{equation}

We now analyze the long-time behavior.
We first exclude a special initialization $\psi(0) = \pi$. 
This case will be discussed at the end.

We first show that $\rho(t) \in (0,1]$ for all $t \geq 0$, which in turn ensures that $\kappa(t)$ stays positive and does not diverge.
The upper bound $\rho(t) \leq 1$ follows from the invariance of the boundary $\rho = 1$ to the $\rho$-dynamics \eqref{eq:case2_pc_dyn_app} and the assumption $\rho(0) \leq 1$.
To prove $\rho(t) > 0$, we momentarily convert to Cartesian coordinates $x = \rho \cos \psi$ and $y = \rho \sin \psi$.
A direct computation gives
\[
\dot{x} = \frac{1}{4}\Big[v(1 - x^2 - y^2)x + R(1 - x^2 + 3y^2)\Big], \qquad \dot{y} = \frac{1}{4}\Big[v(1 - x^2 - y^2) - 4Rx\Big] y \, .
\]
The Cartesian system is smooth and is globally well-posed on the invariant unit disk $x^2 + y^2 \leq 1$.
Integrating the $y$-equation gives
\begin{equation*}
    y(t) = y(0) \exp \left\{ \frac {1} 4 \int_0^t \left[ v(1 - \rho(s)^2) - 4Rx(s) \right] ds \right\}.
\end{equation*}
If $y(0) \neq 0$, then the apriori estimates $\rho(t) \leq 1$ and $|x(t)| \leq 1$ ensure that the exponent in the $y$-formulation is finite on every finite time interval, which implies $y(t) \neq 0$ for every finite $t \geq 0$.
If $y(0) = 0$, then $y(t) = 0$ for all $t\geq 0$, and the $x$-equation reduces to
\begin{equation*}
    \dot{x} = \frac{1}{4} (1 - x^2) (vx + R), \quad x(0) = x_0 \, .
\end{equation*}
Since $\rho(0) > 0$ and $\psi(0) \neq \pi$, we have $x(0) > 0$.
Moreover, note that $\dot{x}|_{x=0} = \frac{R}{4} > 0$, we thus know that $x(t)$ cannot reach zero.

Combining the above discussions, we show that $\rho(t) = \sqrt{x^2(t) + y^2(t)} \in (0,1]$ for all $t \geq 0$, and hence $\kappa(t)$ is well-defined and positive. 
More precisely, $\kappa(t) = \frac{R}{4} (3\rho(t) + 1/\rho(t)) \geq \frac{R}{4}$.

Next, we analysis the $\psi$-dynamics. 
We first observe that $\psi = 0$ is an invariance state of the $\psi$-dynamics in \eqref{eq:case2_pc_dyn_app}.
Hence, if $\psi(0) = 0$, then $\psi(t) = 0$ for all $t \geq 0$.
Thus, in the remaining discussion, we restrict our attention to the case $\psi(0) \neq 0, \pi$.

Introduce the half-angle variable
\[
u(t) := \tan (\psi(t) / 2) \, .
\]
Using the identities
\begin{equation*}
\sin \psi = \frac{2u}{1 + u^2}, \qquad  \Dot{u} = \frac 1 2 (1 + u^2) \Dpsi,
\end{equation*}
we find that $u(t)$ solves the linear equation:
\begin{equation*}
    \Dot{u} = - \kappa(t) u \, .
\end{equation*}
Therefore,
\begin{equation}
    u(t) = u(0) \exp \left\{ -\int_0^t \kappa(s) \, ds \right\}.
\end{equation}
Since $\kappa(t) \geq \frac {R} 4$, together with $|\psi(t)| = 2 |\arctan (u(t))| \leq 2|u(t)|$ gives
\begin{equation}\label{eq:case2_psi_conv}
    |\psi(t)| \leq 2|u(0)| \exp \left\{ -\int_0^t \kappa(s) \, ds \right\} \leq 2 |u(0)| \exp \left( - \frac{R}{4} t \right) \, .
\end{equation}
Since $\psi(0) \neq 0, \pi$, which implies $|u(0)| < \infty$, we therefore conclude from the above inequality that $\psi(t)$ converges to $0$ exponentially fast as $t \rightarrow \infty$.

Next, we turn to the $\rho$-dynamics.
Define
\begin{equation}\label{eq:case2_func_G_and_E}
G(\rho) := \frac{1}{4} (1 - \rho^2) (v\rho + R), \qquad E(t) := \frac{R}{4} (1 - \rho^2) (1 - \cos \psi) \, .
\end{equation}
Then the $\rho$-equation in \eqref{eq:case2_pc_dyn_app} can be written as
\begin{equation}\label{eq:case2_rho_dyn_pt_form}
\Drho = G(\rho) - E(t), \quad \rho(0) = \rho_0 \, .
\end{equation}
Since $\rho(t) \leq 1$, $\cos \psi(t) \leq 1$, and $R > 0$, we know $E(t)$ is non-negative.
Moreover, by the exponential convergence of $\psi(t)$ to $0$, established in \eqref{eq:case2_psi_conv}, $E(t) \rightarrow 0$ exponentially fast as $t \rightarrow \infty$.
Thus the long-time behavior of $\rho(t)$ is governed, up to an exponentially decaying perturbation, by the vector field $G$.
We make this observation mathematically precise in the following lemma.

\begin{lemma}
\label{lem:pt_lem}
Let $a < b$.
Suppose that $\varrho:[0,\infty)\to[a,b]$ solves
\[
\dot{\varrho}(t)=g(\varrho(t))+e(t),
\]
where $g:[a,b]\to\mathbb{R}$ is Lipschitz, and $e: [0, \infty) \mapsto \R$ is continuous and satisfies
\[
\lim_{t\to\infty} e(t)=0 \, .
\]
Assume that there exists $\varrho_*\in[a,b]$ such that
\[
g(\varrho_*)=0,
\]
and
\begin{equation*}
(\varrho - \varrho_*) g (\varrho) < 0 \qquad \text{for every } \varrho \in [a,b] \setminus \{\varrho_*\} \, .
\end{equation*}
Then
\[
\lim_{t\to\infty}\varrho(t)=\varrho_* \, .
\]
\end{lemma}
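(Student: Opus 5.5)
The plan is a standard asymptotically-autonomous argument: the sign condition on $g$ makes $\varrho_*$ globally attracting on $[a,b]$, and the vanishing perturbation $e(t)$ cannot undo this. We argue by neighborhoods. Fix $\epsilon>0$ small, and let
\[
K_\epsilon := \{\varrho\in[a,b] : |\varrho-\varrho_*|\ge \epsilon\},
\qquad
m_\epsilon := \min_{K_\epsilon} |g| .
\]
If $K_\epsilon$ is empty there is nothing to prove. Otherwise, since $g$ is continuous and $K_\epsilon$ is compact, the minimum is attained. The sign condition gives $g\neq 0$ on $K_\epsilon$, so $m_\epsilon>0$. More precisely, $g>0$ on $K_\epsilon\cap\{\varrho<\varrho_*\}$ and $g<0$ on $K_\epsilon\cap\{\varrho>\varrho_*\}$. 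Since $e(t)\to 0$, choose $T_\epsilon$ such that $|e(t)|\le m_\epsilon/2$ for all $t\ge T_\epsilon$.

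\textbf{Step 1 (entry).} For $t\ge T_\epsilon$, whenever $\varrho(t)\in K_\epsilon$ with $\varrho(t)<\varrho_*$, we have
\[
\dot\varrho \;\ge\; m_\epsilon - \tfrac{m_\epsilon}{2} \;=\; \tfrac{m_\epsilon}{2},
\]
and symmetrically $\dot\varrho\le -m_\epsilon/2$ when $\varrho(t)>\varrho_*$. The interval $[a,b]$ has finite length, so the trajectory cannot stay in $K_\epsilon$ for longer than $2(b-a)/m_\epsilon$. Hence there is a time $t_1\ge T_\epsilon$ with $|\varrho(t_1)-\varrho_*|<\epsilon$.

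\textbf{Step 2 (trapping).} We show that after $t_1$ the trajectory never reaches distance $\epsilon$ from $\varrho_*$ again. Suppose there were a first time $s>t_1$ with $\varrho(s)=\varrho_*+\epsilon$, the other side being symmetric. At $s$ the point lies in $K_\epsilon$ and $s\ge T_\epsilon$, so $\dot\varrho(s)\le -m_\epsilon/2<0$. Thus just before $s$ we would have $\varrho>\varrho_*+\epsilon$. This contradicts the minimality of $s$: the trajectory stayed strictly below $\varrho_*+\epsilon$ on $(t_1,s)$, since it started below and could not have crossed earlier without an earlier hitting time.

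Hence $|\varrho(t)-\varrho_*|<\epsilon$ for all $t\ge t_1$. Since $\epsilon$ was arbitrary, $\varrho(t)\to\varrho_*$.

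The only delicate point is Step 2. We need a strict crossing argument that uses continuity of $\dot\varrho=g(\varrho)+e(t)$; this holds because $g$ is Lipschitz and $e$ is continuous, so $\varrho\in C^1$. We also need to treat the edge cases where $\varrho_*$ is an endpoint $a$ or $b$, in which case only one side of $K_\epsilon$ is nonempty. Neither should cause real difficulty, and the Lipschitz hypothesis is used only for continuity and well-posedness.
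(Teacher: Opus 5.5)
Your proof is correct and follows essentially the same route as the paper: the compact set $K_\epsilon$ with $m_\epsilon=\min_{K_\epsilon}|g|>0$, a time $T_\epsilon$ after which $|e|\le m_\epsilon/2$, finite-time entry into the $\epsilon$-neighborhood of $\varrho_*$, and trapping because the vector field points strictly inward at $\varrho_*\pm\epsilon$. Your explicit entry-time bound $2(b-a)/m_\epsilon$ and your first-hitting-time argument make rigorous two steps that the paper only sketches.
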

\begin{proof}
Fix $\varepsilon>0$ so that the set
\begin{equation*}
K_{\varepsilon} := \left\{ \varrho \in [a,b]: |\varrho - \varrho_*| \geq \varepsilon \right\}
\end{equation*}
is nonempty.
By the continuity of $g$ and the strict sign assumption, we have
\[
m_\varepsilon
:= \min_{\varrho \in K_{\varepsilon}} |g(\varrho)| > 0 \, .
\]
Since $e(t)\to0$, there exists $T_\varepsilon>0$ such that
\[
|e(t)|\leq \frac{m_\varepsilon}{2}
\qquad\text{for all }t\geq T_\varepsilon.
\]
Therefore, for $t\geq T_\varepsilon$, if
\[
\varrho(t)\leq \varrho_*-\varepsilon,
\]
then
\[
\dot{\varrho}(t)
=
g(\varrho(t))+e(t)
\geq
m_\varepsilon-|e(t)|
\geq
\frac{m_\varepsilon}{2}>0.
\]
Similarly, if
\[
\varrho(t)\geq \varrho_*+\varepsilon,
\]
then
\[
\dot{\varrho}(t)
=
g(\varrho(t))+e(t)
\leq
-m_\varepsilon+|e(t)|
\leq
-\frac{m_\varepsilon}{2}<0 \, .
\]
Define 
\begin{equation*}
    I_{\varepsilon} := [a, b] \cap [\varrho_* - \varepsilon, \varrho_* + \varepsilon] \, . 
\end{equation*}
We claim that the solution enters $I_{\varepsilon}$ in finite time after $T_{\varepsilon}$.

Indeed, if the solution stays below
$\varrho_*-\varepsilon$, it would increase at speed at least $m_\varepsilon/2$, while if the solution stays above
$\varrho_*+\varepsilon$, it would decrease at speed at least $m_\varepsilon/2$.
Moreover, once the solution has entered $I_{\varepsilon}$ after time $T_\varepsilon$, it cannot leave:
at the left endpoint $\varrho_*-\varepsilon$, whenever this point belongs to $[a,b]$, the derivative is strictly positive, and at the right endpoint
$\varrho_*+\varepsilon$, whenever this point belongs to $[a,b]$, the derivative is strictly negative. 
Hence there exists
$\widetilde{T}_\varepsilon\geq T_\varepsilon$ such that
\[
|\varrho(t)-\varrho_*|\leq\varepsilon
\qquad\text{for all }t\geq \widetilde{T}_\varepsilon.
\]
Since $\varepsilon>0$ is arbitrary, we conclude that
\[
\lim_{t\to\infty}\varrho(t)=\varrho_*.
\]
\end{proof}

With the above lemma, we are ready to discuss the asymptotic behavior of $\rho$-dynamics \eqref{eq:case2_rho_dyn_pt_form} in two parameter regimes.

\vspace{0.8em}
\textbf{Claim 1.} If $v + R \geq 0$, the solution $\rho(t)$ of \eqref{eq:case2_rho_dyn_pt_form} satisfies $\lim_{t \rightarrow \infty} \rho(t) = 1$. 
\begin{proof}[Proof of Claim 1.]
We focus on the non-trivial case $\rho(0) < 1$.
For any $\rho \in [0,1)$, our assumption implies that $v\rho + R > 0$, and $G(\rho) > 0$, as $G$ defined in \eqref{eq:case2_func_G_and_E}.
Then the $\rho$-dynamics in \eqref{eq:case2_rho_dyn_pt_form} satisfies the assumptions in Lemma \ref{lem:pt_lem} with choosing $a=0$, $b=1$, and $\varrho_* = 1$, and thus we conclude the convergence $\lim_{t \rightarrow \infty} \rho(t) = 1$.
\end{proof}

Here, we record an additional quantitative estimate in the sub-regime $v + R > 0$, which will be used in the proof of Theorem \ref{thm:case_2stab}.
We defer the proof until after the proof of Theorem \ref{thm:case_2}.

\begin{proposition}\label{prop:case2_quantitative}
Assume that $R > 0$ and $v + R > 0$. 
Let $(\rho(t), \psi(t))$ be the solution to \eqref{eq:case2_pc_dyn_app} with initial data $\rho(0) \in (0,1]$ and $\psi(0) \in \BBT^1 \setminus \{\pi\}$.
Then there exist constant $0 < \epsilon_* \leq \frac{1}{2\pi}$ and $C > 0$, depending  only on $V$, such that for every $0 < \epsilon \leq \epsilon_*$, if
\begin{equation}
    T_{\epsilon} := \inf \left\{ t \geq 0: \rho(t) \geq 1 - \epsilon, |\psi(t)| \leq \epsilon \right\},
\end{equation}
then
\begin{equation*}
T_{\epsilon} \leq C \log \frac{1}{\epsilon |\psi(0) - \pi|} \, .
\end{equation*}
Here $\psi(0)$ is represented in $[0,2\pi)$.
\end{proposition}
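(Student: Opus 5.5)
We split the time horizon into two phases. In the first phase the angle $\psi$ is driven toward $0$. In the second phase $\rho$ is driven toward $1$ while $\psi$ stays small. Each phase costs a logarithmic time.

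\textbf{Phase 1 (the angle).} The starting point is the explicit half-angle formula from the proof of Theorem \ref{thm:case_2}: with $u=\tan(\psi/2)$ we have $u(t)=u(0)\exp\{-\int_0^t\kappa\}$ and $\kappa\ge R/4$. For $\psi(0)\in[0,2\pi)\setminus\{\pi\}$, we use $|u(0)|=|\cot((\psi(0)-\pi)/2)|\le 2/|\psi(0)-\pi|$ (up to a harmless constant, since $|\psi(0)-\pi|\le\pi$). Combined with \eqref{eq:case2_psi_conv}, this gives
\[
|\psi(t)|\le \frac{4}{|\psi(0)-\pi|}e^{-Rt/4}.
\]
Hence $|\psi(t)|\le\delta$ for all $t\ge T_1(\delta):=\frac4R\log\frac{4}{\delta|\psi(0)-\pi|}$. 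This is the source of the factor $|\psi(0)-\pi|$ in the bound.

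\textbf{Phase 2 (the modulus), lower barrier.} For $t\ge T_1$ we use the decomposition \eqref{eq:case2_rho_dyn_pt_form}, $\dot\rho=G(\rho)-E(t)$, together with $E(t)\le \frac R4(1-\rho^2)\psi^2/2$. This gives
\[
\dot\rho\ge \tfrac14(1-\rho^2)\bigl(v\rho+R-R\delta^2/2\bigr).
\]
We choose $\delta\le\epsilon$ and $\epsilon_*$ small, depending only on $v$ and $R$, so that $c_0:=\min_{\rho\in[0,1]}(v\rho+R)-R\epsilon_*^2/2>0$. This is possible because $v+R>0$ and $R>0$ make $\min(R,v+R)>0$. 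On $\rho\le 1/2$ this yields $\dot\rho\ge \frac{3c_0}{16}$. So $\rho$ reaches $1/2$ within a time bounded by a constant depending on $V$, and it cannot fall back below $1/2$ afterwards.

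\textbf{Phase 2, final approach.} Once $\rho\ge 1/2$, set $q=1-\rho$. Then
\[
\dot q\le-\tfrac14(1+\rho)c_0\,q\le -\tfrac{c_0}{4}q,
\]
so $q$ decays exponentially. Therefore $q\le\epsilon$ after an additional time $\frac{4}{c_0}\log\frac{1}{2\epsilon}$. Since $|\psi|\le\epsilon$ persists after $T_1$, adding the three times gives
\[
T_\epsilon\le C_1\log\frac{4}{\epsilon|\psi(0)-\pi|}+C_2+C_3\log\frac1\epsilon.
\]
Because $\epsilon\le\frac1{2\pi}$ and $|\psi(0)-\pi|\le\pi$, we have $\epsilon|\psi(0)-\pi|\le\frac12$, so $\log\frac{1}{\epsilon|\psi(0)-\pi|}\ge\log 2$. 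The additive constants and the $\log\frac1\epsilon$ term can therefore be absorbed into a single $C\log\frac{1}{\epsilon|\psi(0)-\pi|}$. This absorption is exactly why $\epsilon_*\le\frac1{2\pi}$ appears in the statement.

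\textbf{Main obstacle.} The delicate step is the uniform positivity of $v\rho+R\cos\psi$ near $\rho=0$ when $v<0$. We handle it by using $R>0$ for small $\rho$ and $v+R>0$ for $\rho$ near $1$, with linearity in $\rho$ giving the bound on the whole interval. We also check that $\rho$ stays positive, which was already established in the proof of Theorem \ref{thm:case_2}, so that $\kappa$ is well defined throughout.
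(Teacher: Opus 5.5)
Your proposal is correct and follows essentially the same route as the paper. The first phase uses the half-angle bound $|\psi(t)|\le \frac{4}{|\psi(0)-\pi|}e^{-Rt/4}$, and the second uses uniform positivity of $v\rho+R\cos\psi$ through $\min\{R,v+R\}>0$ to get exponential decay of $1-\rho$; the paper chooses $\epsilon_*$ with $R(1-\cos\epsilon_*)\le \chi/2$ where you use $1-\cos\psi\le\psi^2/2$. Your intermediate step of driving $\rho$ up to $1/2$ is unnecessary, since $1-\rho^2\ge 1-\rho$ on $[0,1]$ already gives $\dot q\le -\frac{c_0}{4}q$ from any starting value, but it does no harm, and your explicit absorption of the additive constants into $C\log\frac{1}{\epsilon|\psi(0)-\pi|}$ is a detail the paper leaves implicit.
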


\textbf{Claim 2.} If $v + R < 0$ and $\rho(0) \neq 1$, the solution $\rho(t)$ of \eqref{eq:case2_rho_dyn_pt_form} satisfies $\lim_{t \rightarrow \infty} \rho(t) = \rhoEq$, with $\rhoEq$ defined in \eqref{eq:case2_rho_equi}.
\begin{proof}[Proof of Claim 2]
Since $v + R < 0$, we have
\[
    v \rho + R 
    \begin{cases}
        > 0, \quad \rho < \rho_{\mathrm{eq}}, \\
        = 0, \quad \rho = \rho_{\mathrm{eq}}, \\
        < 0, \quad \rho > \rho_{\mathrm{eq}},
    \end{cases}
    \text{ and therefore }
    G(\rho) 
    \begin{cases}
        > 0, \quad \rho < \rho_{\mathrm{eq}}, \\
        = 0, \quad \rho = \rho_{\mathrm{eq}}, \\
        < 0, \quad \rho > \rho_{\mathrm{eq}} 
    \end{cases} \, .
\]
Then the $\rho$-dynamics in \eqref{eq:case2_rho_dyn_pt_form} satisfies the assumptions in Lemma \ref{lem:pt_lem} with choosing $a=0$, $b=1$, and $\varrho_* = \rhoEq$, and thus we conclude the convergence $\lim_{t \rightarrow \infty} \rho(t) = \rhoEq$.
\end{proof}

Finally, we consider the boundary case $R = 0$.
In this regime, we restrict our attention to $v \neq 0$ since $v = 0$ and $R = 0$ corresponds to the trivial case where the matrix $V$ is the zero matrix.

When $R = 0$, the dynamics \eqref{eq:case2_pc_dyn_app} simplifies to 
\begin{equation*}
\Drho = \frac{1}{4} (1 - \rho^2) v \rho, \qquad \Dpsi = 0 \, .
\end{equation*}
The angular motion $\psi$ is thus a constant over time.
For the $\rho$-equation, we can easily identify the two fixed points: $\rho = 0$ and $\rho = 1$.
The long-time behavior is also straightforward.
If $v > 0$, Lemma \ref{lem:pt_lem}, applied with $\varrho(t) = \rho(t)$, $\varrho_* = 1$, and $e(t) = 0$, gives that for any initial condition $\rho(0) \in (0,1]$, $\lim_{t \rightarrow \infty} \rho(t) = 1$.
If $v < 0$, we set $\varrho(t) = -\rho(t)$, $\varrho_* = 0$, and it shows that for any initial condition $\rho(0) \in [0,1)$, $\lim_{t \rightarrow \infty} \rho(t) = 0$.

\paragraph{A note on the measure-zero initialization set $\{(\rho(0), \psi(0)): \psi(0) = \pi\}$.}
Since $\psi = \pi$ is a fixed point of the $\psi$-dynamics in \eqref{eq:case2_pc_dyn_app}, we have $\psi(t) = \psi(0)$ for all $t \geq 0$.
In this case, the $\rho$-equation reduces to
\[
\Drho = \frac{1}{4}(1-\rho^2) (v\rho - R) \, .
\]
We note that, in both parameter regimes $v + R \geq 0$ and $v+R < 0$, the limiting value of $\rho(t)$ may depend on the initial value $\rho(0)$.
Since this exceptional set has measure zero and its initial-condition-dependent behavior is not the main focus of this paper, we omit the detailed discussion.

\paragraph{Conclusion for the order parameter $\CR_2$.}
Combining the preceding arguments with the relation $\psi(t) = \phi(t) - \phiV$, we obtain the following convergence results.
\begin{itemize}
    \item If $v + R \geq 0$, then for any initial condition satisfying $\psi(0) \neq \pi$, the solution $(\rho, \phi)$ of \eqref{eq:pc_dyn_case_2} converges to $(1, \phiV)$.

    \item If $v + R < 0$, then for any initial condition satisfying $\rho(0) \neq 1$ and $\psi(0) \neq \pi$, the solution $(\rho, \phi)$ of \eqref{eq:pc_dyn_case_2} converges to $(\rhoEq, \phiV)$, where $\rhoEq$ is defined in \eqref{eq:case2_rho_equi}.
\end{itemize}

It remains only to translate these conditions into the parameter regimes stated in Theorem \ref{thm:case_2}.
This follows from the definitions $v = \vDPlus$, $R = \sqrt{(\vDMinus)^2 + (\vOPlus)^2}$, together with elementary algebraic identities.
For the reader's convenience, we also record the expression
\begin{equation}\label{eq:case2_rhoEq}
\rhoEq := - \frac{\sqrt{(\vDMinus)^2 + (\vOPlus)^2}}{\vDPlus} \, .
\end{equation}
\end{proof}

\begin{proof}[Proof of Proposition \ref{prop:case2_quantitative}]
For notation simplicity, we use the notation introduced in the proof of Theorem \ref{thm:case_2}; see the shorthand notation in \eqref{eq:case2_notation}, and the dynamics for $(\rho(t), \psi(t))$ in \eqref{eq:case2_pc_dyn_app}.

From the $\psi$-dynamics, we know that if $\psi(0) = 0$, then $\psi(t) = 0$ for all $t \geq 0$, satisfying $|\psi(t)| \leq \epsilon$.
Next, we consider the initial data $\psi(0) \neq \{0, \pi\}$.
By the inequalities \eqref{eq:case2_psi_conv} and $| \tan \psi| \leq \frac{1}{|\psi - \pi/2|}$ for $\psi \in [0,\pi]$, we have
\begin{equation*}
|\psi(t)| \leq 2 |\tan (\psi(0)/2)| \exp \left( -\frac{R}{4} t \right) \leq \frac{4}{|\psi(0) - \pi|} \exp \left( -\frac{R}{4} t \right) \, .
\end{equation*}
By letting the right-hand side of the above inequality less or equal to $\epsilon$, we get
\begin{equation*}
    t \geq \frac{4}{R} \log \frac{4}{\epsilon |\psi(0) - \pi|} =: T_{\psi, \epsilon} \, .
\end{equation*}

Next, we turn to the $\rho$-dynamics in \eqref{eq:case2_pc_dyn_app}.
Define
\begin{equation*}
\chi := \min_{\rho \in [0,1]} (v \rho + R) = \min \left\{ R, v+R \right\} \, .
\end{equation*}
Since $R > 0$ and $v+R > 0$, we have $\chi > 0$.
Choose $0 < \epsilon_* \leq 1/(2\pi) $, depending only on the matrix $V$, such that $R(1 - \cos \epsilon_*) \leq \frac{\chi}{2}$.
Then for every $0 < \epsilon \leq \epsilon_*$, whenever $|\psi| \leq \epsilon$, we have
\begin{equation*}
    v \rho + R \cos \psi = (v\rho + R) - R(1 - \cos\psi) \geq \frac{\chi}{2} \, .
\end{equation*}
Consequently, after time $T_{\psi, \epsilon}$, the $\rho$-dynamics satisfies
\begin{equation*}
\Drho = \frac{1}{4} (1 - \rho^2) (v\rho + R \cos\psi) \geq \frac{\chi}{8} (1-\rho^2) \geq \frac{\chi}{8} (1-\rho) \, .
\end{equation*}
Equivalently, $q(t) := 1 - \rho(t)$ satisfies $\dot{q}(t) \leq -\frac{\chi}{8} q(t)$ for $t \geq T_{\psi, \epsilon}$.
Integrating it over time gives
\begin{equation*}
q(t) \leq q(T_{\psi, \epsilon}) \exp \left( -\frac{\chi}{8} (t - T_{\psi,\epsilon}) \right) \leq \exp \left( -\frac{\chi}{8} (t - T_{\psi,\epsilon}) \right) \, .
\end{equation*}
Hence $\rho(t) \geq 1 - \epsilon$ if 
\begin{equation*}
    t \geq T_{\psi, \epsilon} + \frac{8}{\chi} \log \frac{1}{\epsilon} \, .
\end{equation*}
Since $|\psi(t)|$ continues to decrease after $T_{\psi, \epsilon}$, we also have $|\psi(t)| \leq \epsilon$.
Therefore,
\begin{equation*}
T_{\epsilon} = T_{\psi, \epsilon} + \frac{8}{\chi} \log \frac{1}{\epsilon} \leq \frac{4}{R} \log \frac{4}{\epsilon |\psi(0) - \pi|} + \frac{8}{\chi} \log \frac{1}{\epsilon} \leq C  \log \frac{1}{\epsilon |\psi(0) - \pi|},
\end{equation*}
where constant $C > 0$ only depends on the matrix $V$.
\end{proof}

\subsubsection*{Case 3.}
\label{proof:case_3}
\begin{proof}[Proof of Theorem \ref{thm:case_3}]
We start from the $(\rho, \phi)$-dynamics \eqref{eq:pc_dyn_general} associated with arbitrary matrices $A$ and $V$ defined in \eqref{eq:matrices_A_V_general}.
By Lemma \ref{lem:pc_dyn_general_expand}, this system admits the equivalent representation
\begin{equation}\label{eq:case3_pc_dyn_gen}
\begin{split}
        \Drho = 2 (1-\rho^2) \left( S_1(\phi) \rho + S_2(\phi) \right), \qquad
        \Dphi = 2 \left( S_3(\phi) (\rho^2+1) +  S_4(\phi) \frac{\rho^2 + 1}{\rho} + 2S_5(\phi) \rho + \re{b_5} \right),
    \end{split}
\end{equation}
where the constant $b_5$ and the functions $S_j$ are defined in \eqref{eq:coeff_b} and \eqref{eq:func_Sj}, respectively.

Our goal is to identify a family of matrices $A$ and $V$, together with explicit parameter regimes, such that the corresponding solution of \eqref{eq:case3_pc_dyn_gen} satisfies, as $t \rightarrow \infty$,
\[
\rho(t) \rightarrow 1 \qquad \text{while} \qquad \phi(t) \text{ does not converge} \, .
\]

\paragraph{$\rho$-dynamics.}
We first analyze the $\rho$-dynamics in \eqref{eq:case3_pc_dyn_gen}. 
Since $\rho \in (0,1)$, we obtain the following sufficient condition for $\rho(t) \rightarrow 1$: 
\begin{equation}\label{eq:suffi_cond_rho_converge}
S_1 (\phi) > 0, \qquad \text{and} \qquad S_2(\phi) \geq 0, \qquad \text{for all } \phi \in \BBT^1 .
\end{equation}
Indeed, under this condition, 
\begin{equation*}
\Drho = 2 (1-\rho^2) \left(S_1(\phi)\rho + S_2(\phi) \right) \geq 0
\end{equation*}
with equality only holding at $\rho = 1$.
Therefore, by Lemma \ref{lem:gamma-integral}, $\rho(t) \rightarrow 1$ as $t \rightarrow \infty$ for every $\rho(0) \in (0,1]$.

We next translate the condition \eqref{eq:suffi_cond_rho_converge} into explicit constraints on the matrices $A$ and $V$.
From \eqref{eq:func_Sj}, we recall
\begin{equation*}
S_1(\phi) = \im{b_1 e^{i2\phi} + b_2}, \qquad S_2(\phi) = \im{b_3 e^{i\phi}},
\end{equation*}
where $b_1, b_2$, and $b_3$ are defined in \eqref{eq:coeff_b}.
Hence, the conditions $S_1(\phi) > 0$ and $S_2(\phi) = 0$ for all $\phi \in [0, 2\pi)$ is equivalent to
\begin{equation*}
\im{b_2} > |b_1| \qquad \text{and} \qquad b_3 = 0 .
\end{equation*}
Using the definitions of $b_1, b_2$, and $b_3$, these conditions can be rewritten in terms of the coefficients $w_i$ introduced in \eqref{eq:w_i} as follows: 
\begin{itemize}
    \item[(i).] $w_1 = w_2 = 0$, and
    \item[(ii).] $w_5 + w_8 > \sqrt{(w_6 + w_7)^2 + (w_5 - w_8)^2}$.
\end{itemize}
Since translating these conditions into explicit constraints on general matrices $A$ and $V$ is algebraically involved, in the following we restrict attention to the important special case where $A$ is diagonal:
\begin{equation*}
A := \left(\begin{array}{ll}
    a_{11} & 0 \\
    0 & a_{22}
\end{array} \right), \qquad a_{11}, a_{22} \neq 0 .
\end{equation*}
We now characterize the family of matrices $V$ satisfying the above two conditions in this setting.

Under this assumption on $A$, condition (i) becomes
\begin{equation*}
w_1 = a_{11} v_{11} - a_{22} v_{22}, \qquad w_2 = - (a_{22} v_{12} + a_{11} v_{21}) .
\end{equation*}
Hence, $w_1 = w_2 = 0$ is equivalent to
\begin{equation*}
v_{22} = \frac{a_{11}}{a_{22}} v_{11}, \quad v_{21} = - \frac{a_{22}}{a_{11}} v_{12} .
\end{equation*}
Accordingly, the corresponding admissible family of matrices $V$ can be written as
\begin{equation*}
V := \left( \begin{array}{cc}
    v_{11} & v_{12}  \\[4pt]
    -\frac{a_{22}}{a_{11}} v_{12} & \frac{a_{11}}{a_{22}} v_{11} 
\end{array} \right), \qquad  v_{11}, v_{12} \in \R .
\end{equation*}

We next consider condition (ii).
For the matrices $A$ and $V$ defined above, one can verify that
\begin{equation*}
    w_5 + w_8 > \sqrt{(w_6 + w_7)^2 + (w_5 - w_8)^2} \qquad
    \Longleftrightarrow \qquad \frac{v_{11}}{a_{22}} > 0, \quad  8 v_{11}^2  (a_{11}^2 + a_{22}^2) \frac{a_{11}}{a_{22}} > (a_{11} - a_{22})^4 \frac{v_{12}^2}{a_{11}^2} .
\end{equation*}

Therefore, for the family of matrices 
\begin{equation}\label{eq:case3_matrices_A_V_in_proof}
A := \left(\begin{array}{ll}
    a_{11} & 0 \\
    0 & a_{22}
\end{array} \right), \qquad  V := \left( \begin{array}{cc}
    v_{11} & v_{12}  \\[4pt]
    -\frac{a_{22}}{a_{11}} v_{12} & \frac{a_{11}}{a_{22}} v_{11} 
\end{array} \right),
\end{equation}
with the entries $a_{11}, a_{22} \neq 0, v_{11}, v_{12} \in \R$ satisfying
\begin{equation}\label{eq:suffi_cond_rho_converge_A_V}
\frac{v_{11}}{a_{22}} > 0, \qquad 8 \frac{a_{11}}{a_{22}} (a_{11}^2 + a_{22}^2) v_{11}^2  >  \frac{(a_{11} - a_{22})^4}{a_{11}^2} v_{12}^2,
\end{equation}
the $\rho$-equation satisfies $\rho(t) \rightarrow 1$ as $t \rightarrow \infty$.

\paragraph{$\phi$-dynamics.}
We now turn to the $\phi$-dynamics in \eqref{eq:case3_pc_dyn_in_proof}.
For the matrices $A$ and $V$ above, the system simplifies to
\begin{equation}\label{eq:case3_pc_dyn_in_proof}
\Drho = \rho (1 - \rho^2) H(\phi), \qquad \Dphi = K(\phi) + (\rho-1) Q(\rho, \phi) .
\end{equation}
Here, the functions $H$, $K$, and $Q$ are given by
\begin{equation}\label{eq:case3_func_H}
H(\phi) := \frac{v_{11}}{a_{22}} (\aDPlus)^2 + (\aDMinus)^2 \left( -\frac{v_{11}}{a_{22}} \cos (2\phi) + \frac{v_{12}}{a_{11}} \sin (2\phi) \right),
\end{equation}
\begin{equation}\label{eq:case3_func_K}
K(\phi) := \frac{1}{2} \left( \aDPlus + \aDMinus \cos \phi \right) P(\phi),
\end{equation}
where
\begin{equation}
    P(\phi) :=\left( \frac{v_{11}}{a_{22}} \aDMinus \sin \phi + \frac{v_{12}}{a_{11}} \aDMinus \cos \phi - \frac{v_{12}}{a_{11}} \aDPlus \right),
\end{equation}
and
\begin{equation}\label{eq:case3_func_Q}
Q(\rho, \phi) := \frac{1}{2} \frac{v_{11}}{a_{22}} \aDPlus \aDMinus \sin \phi + \frac{1}{8} (\rho + 1) \left( - \frac{v_{12}}{a_{11}} (\aDPlus)^2 + \frac{v_{12}}{a_{11}} (\aDMinus)^2 \cos (2\phi) + \frac{v_{11}}{a_{22}} (\aDMinus)^2 \sin (2\phi) \right) .
\end{equation}

To show that $\phi(t)$ does not converge, it suffices to prove that there exists $T \geq 0$ such that 
\[
|\Dphi(t)| > 0, \qquad \forall t \geq T .
\]
We now identify an additional sufficient condition which, together with the previous condition on $A$ and V, guarantees this property.

Although the expressions for $K$ and $Q$ are somewhat involved, the key observation is simple.
Under the previous condition \eqref{eq:suffi_cond_rho_converge_A_V}, we already know that $\rho(t) \rightarrow 1$ as $t \rightarrow \infty$.
Moreover, since $\rho \in [0,1]$ and $a_{11}, a_{22} \neq 0$, there exists a constant $C < \infty$ such that 
\begin{equation*}
|Q(\rho, \phi)| \leq C, \qquad \forall (\rho, \phi) \in [0,1] \times \BBT^1 .
\end{equation*}
Therefore, for every $\varepsilon > 0$, there exists $T_{\varepsilon} < \infty$ such that 
\begin{equation*}
|(\rho - 1) Q(\rho, \phi)| \leq \varepsilon, \qquad \forall t \geq T_{\varepsilon} .
\end{equation*}
Thus, in order to ensure $|\Dphi| > 0$ for all sufficiently large $t$, it remains to impose a condition guaranteeing that $|K(\phi)| > 0$ for all $\phi \in \BBT^1$.

Note that the previous condition \eqref{eq:suffi_cond_rho_converge_A_V} implies that $a_{11}$ and $a_{22}$ have the same sign.
Hence,
\begin{equation*}
\begin{aligned}
\left|\aDPlus + \aDMinus \cos \phi \right| &= \left| (a_{11} + a_{22}) + (a_{11} - a_{22}) \cos \phi \right| = \left| 2 \left(a_{11} \cos^2 \frac{\phi}{2} + a_{22} \sin^2 \frac{\phi}{2} \right) \right|\\
&\geq 2 \min \left\{ |a_{11}|, |a_{22}| \right\} =: c_0 > 0, \qquad \forall \phi \in \BBT^1 .
\end{aligned}
\end{equation*}
Therefore, $|K(\phi)| > 0$ for all $\phi \in \BBT^1$ if and only if $|P(\phi)| > 0$ for all $\phi \in \BBT^1$.
From the definition of $P$, a sufficient condition for $|P(\phi)| > 0$ for all $\phi \in \BBT^1$ is
\begin{equation}\label{eq:suffi_cond_phi_not_converge}
\left| \frac{v_{12}}{a_{11}} \aDPlus \right| > |\aDMinus| \sqrt{\left( \frac{v_{11}}{a_{22}} \right)^2 + \left( \frac{v_{12}}{a_{11}} \right)^2} .
\end{equation}
Indeed, under this condition,
\begin{equation*}
|P(\phi)| = \left| \frac{v_{11}}{a_{22}} \aDMinus \sin \phi + \frac{v_{12}}{a_{11}} \aDMinus \cos \phi - \frac{v_{12}}{a_{11}} \aDPlus \right| \geq \left|\frac{v_{12}}{a_{11}} \aDPlus \right| - |\aDMinus| \sqrt{\left( \frac{v_{11}}{a_{22}} \right)^2 + \left( \frac{v_{12}}{a_{11}} \right)^2} =: c_1 > 0 .
\end{equation*}
It follows that
\begin{equation*}
|K(\phi)| \geq \frac{1}{2} c_0 c_1 > 0, \qquad \forall \phi \in \BBT^1 .
\end{equation*}

Combining the sufficient conditions above, we conclude that $\rho(t) \rightarrow 1$ as $t \rightarrow$, while $\phi(t)$ cannot converge.
This completes the proof.
\end{proof}

\subsubsection*{Case 4.}
\begin{proof}[Proof of Theorem~\ref{thm:case_4}]
\label{proof:case_4}
For convenience we relabel $v_{11} = a, v_{12} = b$.

\textbf{Step 1: Characterization of the fixed points.}
By letting $\Drho = \Dphi = 0$, we obtain the following fixed points of the dynamics \eqref{eq:pc_dyn_case_4}:
\begin{itemize}
    \item If $a \geq b$, there are two fixed points:
    \begin{equation}\label{eq:case4_E1_E2}
    E_1 := \left\{ (\rho, \phi) = (1, \pi + \arcsin(b/a)) \right\}, \qquad E_2 := \left\{ (\rho, \phi) = (1, 2\pi - \arcsin(b/a)) \right\} \, .
    \end{equation}
    When $a=b$, these two fixed points coincide, namely $E_1 = E_2 = \{(\rho, \phi) = (1, 3\pi/2)\}$.
    
    \item If $a < b$, there is a unique fixed point:
    \begin{equation}\label{eq:case4_E3}
    E_3 := \left\{(\rho,\phi)=\left(\tanh\!\left(\frac{1}{3}\operatorname{arctanh}\!\left(\frac{a}{b}\right)\right),\frac{3\pi}{2}\right)\right\} \, . 
    \end{equation}
\end{itemize}
We now provide the detailed derivation. First, consider the boundary case $\rho=1$.
In this case, we automatically have $\Drho = 0$, while $\Dphi = 0$ if and only if $b + a \sin(\phi) = 0$.
Since $a, b > 0$, this equation has no solutions when $a < b$, has the unique solution at $\phi = \frac{3\pi}{2}$ when $a = b$, and has two solutions when $a > b$.  
In the latter case, the two fixed points are $E_1$ and $E_2$ as defined in \eqref{eq:case4_E1_E2}.

Next, consider the interior case $0 < \rho < 1$.
From $\Drho = 0$, we obtain $\phi \in \{\frac{\pi}{2}, \frac{3\pi}{2}\}$.  
Imposing $\Dphi = 0$ then gives the following constraints on $\rho$:
\begin{equation*}
    \begin{cases}
        f_1(\rho) := -b \rho^3 - 3a \rho^2 - 3b\rho - a = 0, \qquad \text{if } \phi = \frac{\pi}{2},\\[4pt]
        f_2(\rho) := -b \rho^3 + 3a \rho^2 - 3b\rho + a = 0, \qquad \text{if } \phi = \frac{3\pi}{2} \, .
    \end{cases}
\end{equation*}
The equation $f_1(\rho) = 0$ has no solution in $[0,1]$ for any $a,b > 0$,  since $f_1(0) = -a$ and $f'(\rho) < 0$ for $\rho \in [0,1]$. 

It remains to analyze $f_2(\rho)=0$. Rewriting this equation gives
\[
\frac{a}{b}
=
\frac{\rho^3+3\rho}{3\rho^2+1}
=: q(\rho).
\]
The function $q$ is monotone increasing on $[0,1]$, since $q'(\rho)
=
\frac{3(1-\rho^2)^2}{(3\rho^2+1)^2}
\geq 0$,
and it satisfies $q(0)=0$ and $q(1)=1$. Therefore, if $a>b$, then $a/b>1$ and there is no solution in $[0,1]$. 
If $a=b$, the unique solution is $\rho=1$. 
If $a<b$, then $a/b\in(0,1)$, and hence there is a unique solution $\rho\in(0,1)$.
Using the identity
\[
\tanh(3u)=\frac{\tanh^3 u+3\tanh u}{3\tanh^2 u+1},
\]
this solution is given explicitly as in \eqref{eq:case4_E3}.

Next, we perform the linear stability analysis on these fixed points. 
The Jacobian matrix \eqref{eq:pc_dyn_case_4} is
\begin{equation}
    \mathcal{J} = \begin{bmatrix}
         - 2a\rho \cos(\phi) &-a(1-\rho^2)\sin(\phi)\\
         -2b\rho - 3a\sin(\phi) + \frac{a}{\rho^2} \sin(\phi) & -3a\rho\cos(\phi) - \frac{a}{\rho}\cos(\phi) 
    \end{bmatrix} \, .
\end{equation}
When $a > b$, evaluating $\mathcal{J}$ at the two boundary fixed points shows that both $E_1$ and $E_2$ are hyperbolic.
By the Hartman--Grobman Theorem, $E_1$ is unstable, while $E_2$ is asymptotically stable.
When $a = b$, the unique equilibrium is $E_3={(1,3\pi/2)}$. 
In this case, evaluating the Jacobian gives $\mathcal{J} = 0$, so $E_3$ is a degenerate equilibrium.
We postpone the characterization of its behavior to the energy analysis below.
Finally, when $a < b$, the two eigenvalues of $\mathcal{J}$ evaluated on $E_3$ are pure imaginary, and thus $E_3$ is a linear center.
In the next step, we perform an energy analysis and show that, in this case, $E_3$ is in fact a stable nonlinear center.




\textbf{Step 2: Energy Analysis.} 
\begin{equation}\label{eq:case_4_energy}
H(\rho, \phi) = \frac{-a \rho\sin(\phi) - \frac{b}{2}(\rho^2 + 1)}{(1-\rho^2)^2} \, .
\end{equation}
We first show that $H$ is a first integral for~\eqref{eq:pc_dyn_case_4}.

\begin{lemma} \label{lemma:case_4_energy}
    If $(\rho, \phi)$ follow the dynamics~\eqref{eq:pc_dyn_case_4} and $0 < \rho < 1$, then $\dot H(\rho, \phi) = 0$.
\end{lemma}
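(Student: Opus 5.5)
The plan is to verify the identity $\dot H = \partial_\rho H\,\dot\rho + \partial_\phi H\,\dot\phi = 0$ by direct differentiation. The most economical route is to show that the time-rescaled Hamiltonian form \eqref{eq:hamiltonian_form} holds. If
\[
\dot\rho = -\frac{d\tau}{dt}\,\partial_\phi H, \qquad \dot\phi = \frac{d\tau}{dt}\,\partial_\rho H, \qquad \frac{d\tau}{dt} = \frac{(1-\rho^2)^3}{2\rho},
\]
then $\dot H = \frac{d\tau}{dt}\left(-\partial_\rho H\,\partial_\phi H + \partial_\phi H\,\partial_\rho H\right) = 0$ immediately. So the work reduces to checking two partial-derivative identities on the open cylinder $(0,1)\times\BBT^1$, where $\rho\neq 0$ and $1-\rho^2\neq 0$ make every expression smooth. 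Throughout, write $H = N/(1-\rho^2)^2$ with $N := -a\rho\sin\phi - \frac{b}{2}(\rho^2+1)$.

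The first step is the $\phi$-derivative, which is trivial: $\partial_\phi H = -a\rho\cos\phi/(1-\rho^2)^2$. Multiplying by $-\frac{(1-\rho^2)^3}{2\rho}$ gives $\frac{a}{2}(1-\rho^2)\cos\phi$, which matches $\dot\rho$ in \eqref{eq:pc_dyn_case_4}.

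The second step is the $\rho$-derivative, which is the only real computation. By the quotient rule,
\[
\partial_\rho H = \frac{(1-\rho^2)\,\partial_\rho N + 4\rho N}{(1-\rho^2)^3}, \qquad \partial_\rho N = -a\sin\phi - b\rho.
\]
Multiplying by $\frac{(1-\rho^2)^3}{2\rho}$ gives
\[
\frac{1}{2\rho}\Big[(1-\rho^2)(-a\sin\phi - b\rho) - 4a\rho^2\sin\phi - 2b\rho(\rho^2+1)\Big].
\]
I then collect the $\sin\phi$ terms and the $b$ terms separately. The $\sin\phi$ terms give $-a\sin\phi\,(1+3\rho^2)$. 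The $b$ terms give $-b\rho(1-\rho^2+2\rho^2+2) = -b\rho(\rho^2+3)$. After dividing by $2\rho$, the result is $-\frac12\bigl(b(\rho^2+3) + a(3\rho + 1/\rho)\sin\phi\bigr)$, which is exactly $\dot\phi$.

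Combining the two steps yields $\dot H = 0$ for $0<\rho<1$. The main obstacle, if any, is bookkeeping in the $\rho$-derivative, namely keeping the sign of $4\rho N$ and the factor $\frac{1}{2\rho}$ straight. The restriction $0<\rho<1$ is needed only so that $H$ and the time change are well defined and nonvanishing. The paper's convention $a=v_{11}$, $b=v_{12}$ is used throughout.
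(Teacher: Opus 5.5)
Your proof is correct and follows essentially the same route as the paper. The paper writes $(\dot\rho,\dot\phi)^\top = \mu(\rho)^{-1} J \nabla H$, with $\mu(\rho) = 2\rho/(1-\rho^2)^3$ and $J$ the standard skew-symmetric matrix, and concludes $\dot H = 0$ from the skew-symmetry of $J$; since $\mu(\rho)^{-1} = d\tau/dt$, this is exactly your time-rescaled Hamiltonian identity. Your explicit computation of $\partial_\rho H$ and $\partial_\phi H$ matches the paper's formula for $\nabla H$.
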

\begin{proof}
    We can first 
    compute that
    \begin{equation}\label{eq:case4_nabla_H}
    \nabla H (\rho, \phi) = 
    \begin{pmatrix}
            \frac{\partial H}{\partial \rho} \\[5pt] \frac{\partial H}{\partial \phi}
    \end{pmatrix}
    = \begin{pmatrix}
            -\frac{a(1 + 3\rho^2) \sin \phi + b \rho (\rho^2 + 3)}{(1-\rho^2)^3} \\[5pt] - \frac{a \rho \cos \phi}{(1-\rho^2)^2}
    \end{pmatrix} \, .
    \end{equation} 
    Denote $\mu(\rho) = \frac{2\rho}{(1- \rho^2)^3}$ and $J = \begin{pmatrix}
        0 & -1 \\ 1& 0
    \end{pmatrix}$, then
    we have
    \begin{equation}\label{eq:case4_pc_vec_and_nabla_H}
        \begin{pmatrix}
            \dot \rho \\ \dot \phi
        \end{pmatrix} = \frac{1}{\mu(\rho)} J \nabla H \, .
    \end{equation}
    Therefore,
    \begin{equation*}
        \dot H = \nabla H \cdot \begin{pmatrix}
            \dot \rho \\ \dot \phi
        \end{pmatrix} = \frac{1}{\mu(\rho)} \nabla H \cdot J \nabla H = 0,
    \end{equation*}
    where the last equality follows from the skew-symmetry of $J$. 
    This proves that $H$ is conserved along trajectories in the interior region $0<\rho<1$.
\end{proof}
We now analyze the level set of the Hamiltonian $H$, since every trajectory $(\rho(t), \phi(t)) \in (0,1) \times \BBT^1$ evolves along a level set of $H$.  
For notation simplicity, we write $\{H(\rho, \phi) = c\}$, or simply $\{H = c\}$, to denote the level set of $H$ at value $c \in \R$ within the open cylinder $(0,1) \times \BBT^1$.

In the following, we first show in Lemma \ref{lem:case4_H_level_set} that for almost every $c \in \R$, the level set $\{H=c\}$ is a \emph{connected} one-dimensional manifold whenever it is nonempty.
Building upon this geometric property, we discuss in two regimes.

\begin{itemize}
    \item In the parameter regime $a < b$, we show that $\{H=c\}$, whenever nonempty, is a \emph{compact} subset of the open cylinder $(0,1) \times \BBT^1$; see Figure \ref{subfig:H_level_set_a_less_than_b} for a visual illustration.
    
    Consequently, any trajectory $(\rho(t), \phi(t))$ of the dynamics \eqref{eq:pc_dyn_case_4} that evolves along such a level set forms a closed periodic orbit.

    \item In the parameter regime $a \geq b$, we show that $\{H=c\}$, whenever nonempty, has closure intersecting the boundary $\{\rho=1\}$; see Figures \ref{subfig:H_level_set_a_equal_to_b} and \ref{subfig:H_level_set_a_greater_than_b} for a visual illustration.
    In particular, such a level set is \emph{not compact} as a subset of the open cylinder $(0,1) \times \BBT^1$. 
    
    Consequently, no trajectory $(\rho(t), \phi(t))$ of the dynamics \eqref{eq:pc_dyn_case_4} evolving along such a level set can form a nontrivial periodic orbit in the open cylinder $(0,1) \times \BBT^1$.
    Moreover, Step $1$ shows that the closed cylinder $[0,1] \times \BBT^1$ is a compact positive invariant set for the dynamics, and it contains an unstable fixed point $E_1$ and an asymptotically stable fixed point $E_2$.
    Then the Poincaré--Bendixson theorem \cite{wiggins2003introduction} implies that, for almost every initial condition $(\rho(0), \phi(0)) \in [0,1] \times \BBT^1$, the solution converges to $E_2$. 
\end{itemize}

\begin{figure}[!htb]
\centering
\newlength{\subfigimgheightK}
\setlength{\subfigimgheightK}{0.2\textheight}

\begin{subfigure}[t]{.32\textwidth}
  \centering
  \includegraphics[
    width=\linewidth,
    height=\subfigimgheightK,
    keepaspectratio,
    valign=b
  ]{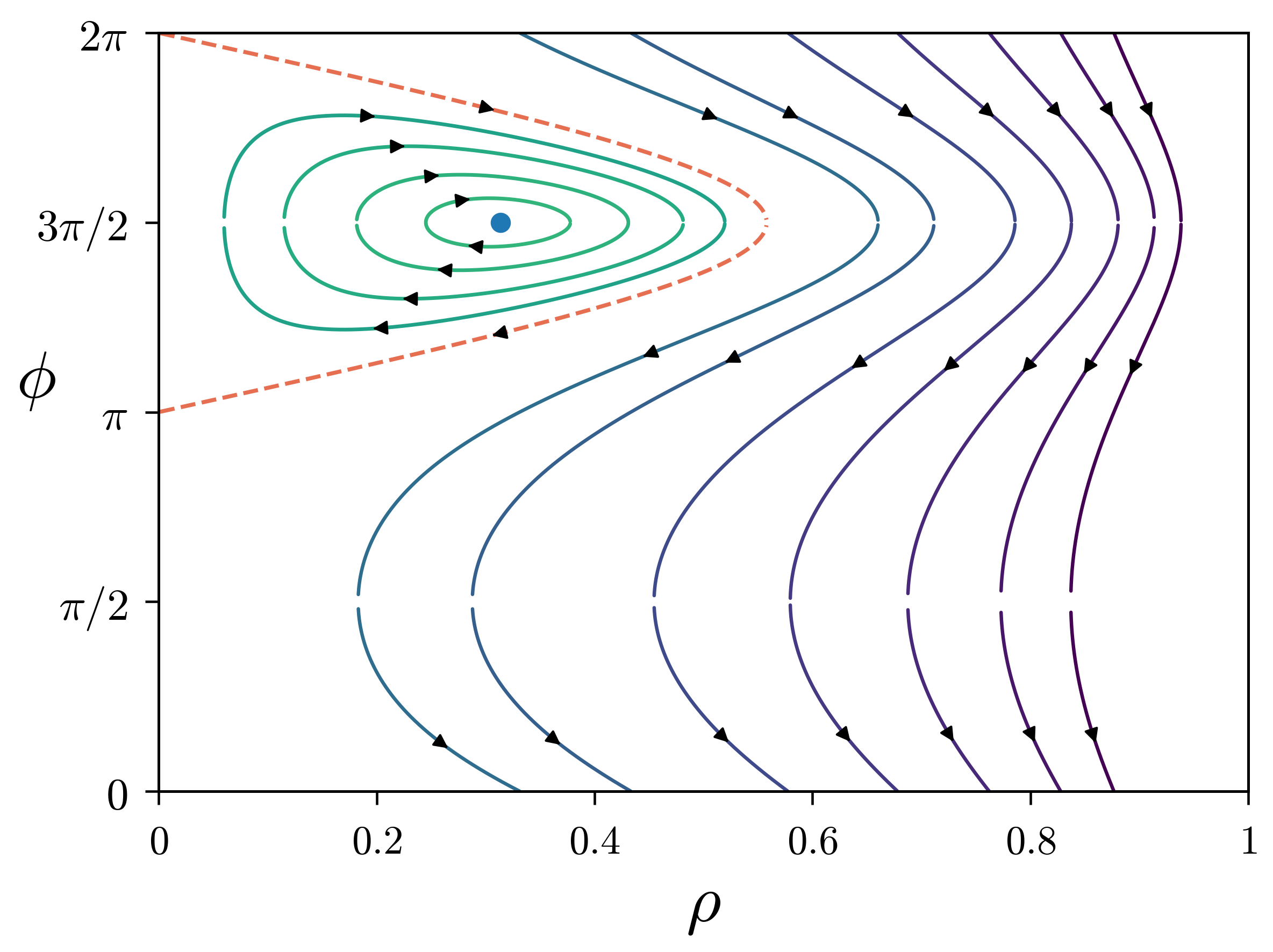}
  \caption{Regime $a < b$.}
  \label{subfig:H_level_set_a_less_than_b}
\end{subfigure}%
\hfill
\begin{subfigure}[t]{.32\textwidth}
  \centering
  \includegraphics[
    width=\linewidth,
    height=\subfigimgheightK,
    keepaspectratio,
    valign=b
  ]{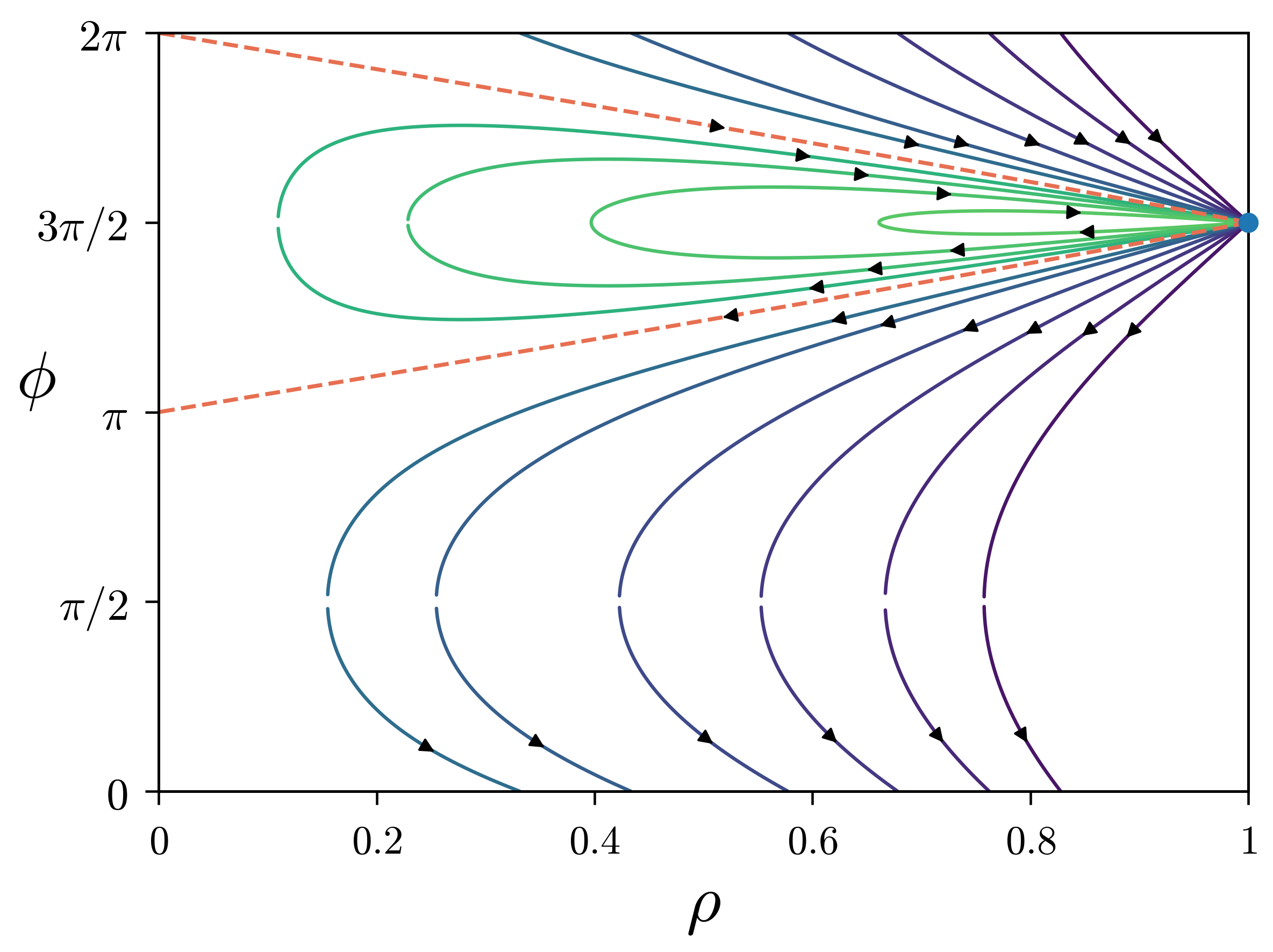}
  \caption{Regime $a = b$.}
  \label{subfig:H_level_set_a_equal_to_b}
\end{subfigure}%
\hfill
\begin{subfigure}[t]{.32\textwidth}
  \centering
  \includegraphics[
    width=\linewidth,
    height=\subfigimgheightK,
    keepaspectratio,
    valign=b
  ]{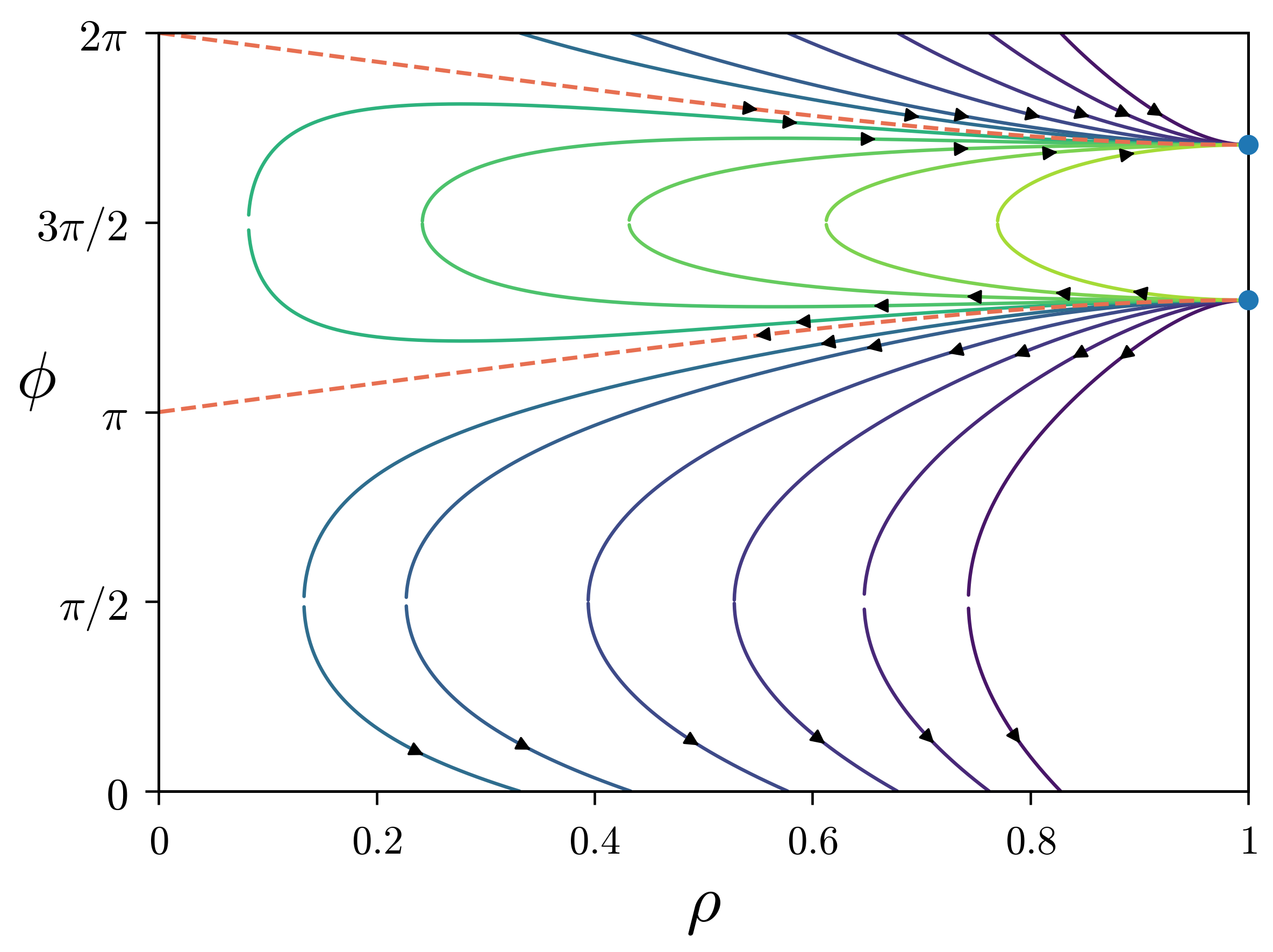}
  \caption{Regime $a > b$.}
  \label{subfig:H_level_set_a_greater_than_b}
\end{subfigure}

\caption{Level sets of the Hamiltonian $H(\rho,\phi)$ for different parameter regimes. 
The dashed orange curve indicates the special level set $\{H=-b/2\}$, while the colored solid curves represent other level sets. 
The blue dots denote the fixed points of the dynamics~\eqref{eq:pc_dyn_case_4}, introduced in~\eqref{eq:case4_E1_E2} and~\eqref{eq:case4_E3}. 
The arrows indicate the direction of the flow.}
\label{fig:H_level_set}
\end{figure}

We introduce some notation that will be used throughout the analysis and record a useful geometric observation.
Define 
\begin{equation}\label{eq:case4_func_W}
W(\rho) = W(\rho;a,b,c) := -\frac{c(1-\rho^2)^2 + \frac{b}{2}(\rho^2 + 1)}{a\rho} \, .
\end{equation}  
Then the level set of $H$ at value $c \in \R$ can be written as
\begin{equation}\label{eq:case4_H_level_set_parametrization}
\left\{ H(\rho, \phi) = c \right\} = \left\{ (\rho, \phi) \in (0,1) \times \BBT^1: \sin \phi = W(\rho) \right\} \, .
\end{equation}
Define 
\begin{equation}\label{eq:case4_set_I}
    I := \left\{ \rho \in (0,1): |W(\rho)| \leq 1 \right\} \, .
\end{equation}
Then the level set can be decomposed into two branches,
\begin{equation}\label{eq:case4_Gamma_1_2}
\left\{ H(\rho, \phi) = c \right\} = \left\{ (\rho, \arcsin (W(\rho))): \rho \in I \right\} \cup \left\{ (\rho, \pi - \arcsin (W(\rho))) : \rho \in I \right\} =: \Gamma_1 \cup \Gamma_2  \, .
\end{equation}
There are two special level sets that we exclude from the analysis below.
First, for any $a, b > 0$, there is a unique level set whose closure intersects the boundary $\{\rho = 0\}$, that is, $\{H = -b/2\}$;
see the dashed orange curve in Figure~\ref{fig:H_level_set}.
Second, when $a < b$, Step 1 shows that the dynamics \eqref{eq:pc_dyn_case_4} has a unique fixed point $E_3$, as defined in \eqref{eq:case4_E3}.
Together with the relation \eqref{eq:case4_pc_vec_and_nabla_H}, this implies that $\nabla H(\rho, \phi) = 0$ only at $E_3$.
We denote the corresponding critical value as $c^*$ and the associated level set by $\{H = c^*\}$.

The long-time behavior of \eqref{eq:pc_dyn_case_4} initialized on either of these exceptional level sets depends sensitively on the precise initial position and is not the focus of the present analysis. 
We therefore omit a detailed study of trajectories lying on these two level sets.

Finally, we record a geometric property of the remaining level sets in the following lemma.

\begin{lemma}\label{lem:case4_H_level_set}
For any $a, b >0$, and $c \in \R \setminus \{-b/2, c^*\}$, the level set $\{H(\rho, \phi) = c\}$ is either empty or a connected one-dimensional manifold.
\end{lemma}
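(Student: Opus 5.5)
The plan is to treat the manifold structure and the connectedness separately. For the manifold part, use the gradient formula \eqref{eq:case4_nabla_H} together with the representation \eqref{eq:case4_pc_vec_and_nabla_H}. Since $\mu(\rho)>0$ on $(0,1)$, $\nabla H$ vanishes exactly at interior fixed points of \eqref{eq:pc_dyn_case_4}. By Step 1 the only such point is $E_3$, and it exists only when $a<b$. Hence every $c\neq c^*$ is a regular value of $H$ on the open cylinder $(0,1)\times\BBT^1$. The implicit function theorem then shows that $\{H=c\}$ is an embedded one-dimensional submanifold whenever it is nonempty.

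For connectedness I work with the parametrization \eqref{eq:case4_H_level_set_parametrization}--\eqref{eq:case4_Gamma_1_2}. The first goal is to show that the set $I$ in \eqref{eq:case4_set_I} is an interval. Put $s=\rho^2$ and $g(\rho):=a\rho W(\rho)=-c(1-s)^2-\tfrac b2(1+s)$, which is a quadratic polynomial in $s$. Then
\[
I=\{\rho\in(0,1):\ g(\rho)-a\rho\le 0\}\cap\{\rho\in(0,1):\ g(\rho)+a\rho\ge 0\}.
\]
It therefore suffices to show that each of the two sets on the right is an interval; an intersection of intervals is an interval. The boundary information is as follows. At $\rho\to0^+$ we have $g\to-(c+\tfrac b2)\neq0$, because $c\neq-b/2$, so $W\to\pm\infty$ and $I$ stays away from $0$. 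At $\rho\to1^-$ we have $g\to-b<0$, so $W\to-b/a$.

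To prove that each set is an interval, I would count zeros of the quartics $p_\pm(\rho):=g(\rho)\mp a\rho$ on $(0,1)$. I would split into the cases $c>0$, $c=0$ and $c<0$, and use the sign of $p_\pm''$ together with the boundary signs above. For instance, $g$ is concave in $\rho$ for $c>0$ when $b$ dominates, and the term $\mp a\rho$ is linear, so $p_\pm$ is concave and changes sign at most twice. The aim is to show that the sign pattern of $p_\pm$ cannot produce two disjoint pieces. If some $p_\pm$ had a double root in $(0,1)$, then $W=\pm1$ and $W'=0$ there, which makes $\nabla H=0$ at a point with $\phi=\pm\pi/2$, i.e.\ a critical point. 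This can only be $E_3$, which is excluded since $c\neq c^*$. This fact cuts down the degenerate configurations.

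Given that $I$ is an interval, each of $\Gamma_1$ and $\Gamma_2$ is a continuous graph over $I$, hence connected. At any endpoint $\rho_0\in(0,1)$ of $I$ we have $|W(\rho_0)|=1$, so $\arcsin W(\rho_0)=\pi-\arcsin W(\rho_0)$ modulo $2\pi$, and the two branches meet there. An endpoint must lie in $(0,1)$ on the left, because $W\to\pm\infty$ as $\rho\to0^+$. Therefore $\Gamma_1\cup\Gamma_2$ is connected. The main obstacle I expect is the interval property of $I$: the zero-counting for the quartics $p_\pm$ over all sign regimes of $c$ is the delicate, computation-heavy step. If the concavity arguments fail in some regime, I would fall back on Descartes' rule of signs applied to $p_\pm(\rho)$ after the substitution $\rho=1/(1+u)$, $u>0$.
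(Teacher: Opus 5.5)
Your regular-value argument (interior critical points of $H$ are exactly the interior equilibria, and $E_3$ is the only one) is correct. So is your gluing step. Using only the left endpoint $\kappa_L=\inf I>0$, where $|W(\kappa_L)|=1$, is slightly cleaner than the paper's split into $a<b$ and $a\ge b$.

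There is a genuine gap at the step you flag yourself: you never prove that $I$ is an interval. Without it, $\Gamma_1$ and $\Gamma_2$ need not be connected, since a disconnected $I$ would split the level set into disjoint pieces. The sketched zero-counting does not close this, for three reasons.
\begin{itemize}
\item From $g''(\rho)=(4c-b)-12c\rho^2$, the function $g$, and hence $p_\pm$, is concave on $(0,1)$ only when $-b/8\le c\le b/4$.
\item Where concavity does hold, it only shows that the \emph{superlevel} set $\{p_-\ge 0\}=\{W\ge -1\}$ is an interval. For $\{p_+\le 0\}=\{W\le 1\}$ you need the opposite. When $c>-b/2$ we have $p_+(0^+)=-(c+b/2)<0$ and $p_+(1^-)=-(a+b)<0$, yet a concave $p_+$ with these end values could still be positive on a middle subinterval.
\item Ruling out double roots via the critical-point observation does not bound the number of simple roots of a quartic, and the Descartes fallback is left unexecuted.
\end{itemize}

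The missing idea is to differentiate $W$ directly instead of counting roots of $p_\pm$. One finds
\[
W'(\rho)=\frac{(1-\rho^2)\bigl(b+2c(1+3\rho^2)\bigr)}{2a\rho^2},
\]
whose sign is that of a function affine in $\rho^2$. Hence $W$ changes monotonicity at most once on $(0,1)$.
\begin{itemize}
\item If $c\le -b/2$, then $W$ is strictly decreasing and $I=W^{-1}([-1,1])$ is an interval.
\item If $c>-b/2$ and $W$ is monotone, $I$ is again an interval.
\item If $c>-b/2$ and $W$ is not monotone, it increases and then decreases, with its maximum at $\rho_*$ where $b+2c(1+3\rho_*^2)=0$. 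This gives $W(\rho_*)=-\frac{b\rho_*(3+\rho_*^2)}{a(1+3\rho_*^2)}<0$. So $\{W\le 1\}=(0,1)$, and $I=\{W\ge -1\}$ is a superlevel set of a unimodal function, hence an interval.
\end{itemize}
With this in place, your gluing step completes the proof.
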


\begin{proof}
Fix $c \in \R \setminus \{-b/2, c^*\}$.
Recall the decomposition of the level set $\{H=c\}$ into the two branches $\Gamma_1$ and $\Gamma_2$ over the set $I$ defined in \eqref{eq:case4_set_I} and \eqref{eq:case4_Gamma_1_2}. 
If $I = \emptyset$, then both $\Gamma_1$ and $\Gamma_2$ are empty, and therefore $\{H=c\}$ is empty.
Next, we focus on $I \neq \emptyset$.
In this case, we will show that both $\Gamma_1$ and $\Gamma_2$ are connected, and $\Gamma_1 \cap \Gamma_2 \neq \emptyset$, which implies that $\{H=c\}$ is connected.
Moreover, since $\nabla H(\rho, \phi) \neq 0$ for every $(\rho, \phi) \in \{H = c\}$, therefore by the regular level set theorem, $\{H=c\}$ is a connected one-dimensional manifold.

We first justify that both $\Gamma_1$ and $\Gamma_2$ are connected.
Since $W$ defined in \eqref{eq:case4_func_W} is continuous, it is enough to show that $I$ is connected.
A direct calculation gives
\[W'(\rho) = \frac{(1-\rho^2)(b + 2c(1 + 3\rho^2))}{2a\rho^2} \, .\]
Since $0 < \rho < 1$, the sign of $W'(\rho)$ is determined by the sign of 
\begin{equation}\label{eq:case4_small_w}
w(\rho) := b + 2c(1 + 3\rho^2) \, .
\end{equation}
The function $w$ is affine in $\rho^2$, and therefore has at most one zero in $(0,1)$.  
Hence $W$ is either strictly monotone, or it has exactly one critical point.
We now split the argument into two cases.

First, suppose $c \leq -\frac{b}{2}$.
Then, for every $\rho \in (0,1)$, $w(\rho) = b + 2c(1 + 3\rho^2) < 0$.
Therefore, $W$ is strictly decreasing on $(0,1)$.
Since $I$ is the preimage of the interval $[-1, 1]$ under a strictly monotone continuous function and is nonempty, then $I$ is an interval and therefore connected.

Next, suppose $c > -\frac{b}{2}$.
If $W$ is monotone, then again $I$ is an interval.
Otherwise, $W$ has a unique critical point, denoted as $\rho_* \in (0,1)$.
At this point,
\[
w(\rho_*) = b + 2c \left(1 + 3 \rho_*^2 \right) = 0 \, .
\]
Substituting this relation into the definition of $W$, we obtain
\begin{equation*}
W (\rho_*) = - \frac{b \rho_* (3 + \rho_*^2)}{a (1 + 3\rho_*^2)} < 0 \, .
\end{equation*}
Since $\rho_*$ is the unique maximum point of $W$, then we know $W(\rho) \leq W(\rho_*) < 0$ for all $\rho \in (0,1)$.
Consequently, 
\begin{equation*}
I = \left\{ \rho \in (0,1): |W(\rho)| \leq 1 \right\} = \left\{ \rho \in (0,1): W(\rho) \geq - 1 \right\} \, .
\end{equation*}
Together with $W$ increasing first and then decreasing, this implies the superlevel set $\{W \geq -1\}$ is an interval.
Therefore, in all cases, $I$ is connected, and consequently both $\Gamma_1$ and $\Gamma_2$ are connected.

It remains to show that $\Gamma_1 \cap \Gamma_2 \neq \emptyset$.
By the definitions of $I$ in \eqref{eq:case4_set_I} and the branch decomposition in \eqref{eq:case4_Gamma_1_2}, it suffices to show that there exists $\kappa \in (0,1)$ such that $|W(\kappa)| = 1$.
We split the argument into two regimes.

First, consider the regime $a < b$.
Let $\kappa_R := \sup I$.
Since $W$ extends continuously to $\rho=1$ and $W(1) = -b/a < - 1$, there exists a neighborhood of $\rho=1$ on which $W < -1$.
Together with $I \neq \emptyset$, this implies that $\kappa_R < 1$.
Moreover, because $c \neq -b/2$, we have $|W(\rho)| \to \infty$ as $\rho \to 0^+$, so $I$ is bounded away from $\rho=0$.
Hence $\kappa_R \in (0,1)$.
By the continuity of $W$ and the definition of $I$, we get $\kappa_R \in I$ and $W(\kappa_R) = -1$.
Therefore, the two branches coincide at the point $(\kappa_R, 3\pi/2)$, and hence $\Gamma_1 \cap \Gamma_2 \neq \emptyset$.

Next, consider the regime $a \geq b$.
Let $\kappa_L := \inf I$.
Since $c \neq -b/2$, we have
\[
\lim_{\rho\to0^+} W(\rho)=-\infty
\quad\text{if } c>-b/2,
\qquad
\lim_{\rho\to0^+} W(\rho)=+\infty
\quad\text{if } c<-b/2 \, .
\]
Thus $I$ is bounded away from $\rho=0$, and so $\kappa_L > 0$.
Moreover, we have $W(1) = -b/a \geq -1$.
Then by the continuity of $W$ and the definition of $I$, we know that $\kappa_L \in I$ and $|W(\kappa_L)| = 1$.
More precisely,
\[
W(\kappa_L)=-1
\quad\text{if } c>-b/2,
\qquad
W(\kappa_L)=1
\quad\text{if } c<-b/2.
\]
Consequently, the two branches coincide at $(\kappa_L,3\pi/2)$ when $c > -b/2$, and at $(\kappa_L,\pi/2)$ when $c < -b/2$. 
This proves that $\Gamma_1 \cap \Gamma_2 \neq \emptyset$, completing the proof.
\end{proof}

Now, we are ready to study the structure of the level sets of $H$ based on the relation between the parameters $a$ and $b$.

\textbf{Regime $\mathbf{a < b}$.}
Fix $c \in \R \setminus \{-b/2, c^*\}$. 
By Lemma~\ref{lem:case4_H_level_set}, the level set $\{H=c\}$ is a connected one-dimensional manifold whenever it is nonempty.
Thus it remains to show that $\{H = c\}$ is compact as a subset of the open cylinder $(0,1) \times \BBT^1$. 

To this end, we first note that, by the definition of $H$ in \eqref{eq:case_4_energy}, for every $\phi \in \BBT^1$, we have $H(\rho, \phi) \to -\infty$ as $\rho \to 1^-$.  
Together with the continuity of $H$, this implies that, for any finite $c \in \R$, the level set $\{H = c\}$ stays bounded away from the boundary $\{\rho = 1\}$.  
Combining this with the fact that $\{H = -b/2\}$ is the only level set whose closure intersects with the boundary $\{\rho=0\}$, we see that, for every $c \neq -\frac{b}{2}$, there exist $0 < \rho_-^c < \rho_+^c < 1$ such that $\{H = c\} \subset K_c$ with $K_c := [\rho_-^c, \rho_+^c] \times \BBT^1$.   
Since $H$ is continuous, $\{H = c\}$ is closed in $K_c$.  
Because $K_c$ is compact, it follows that $\{H = c\}$ is also compact. 
We thus conclude that, for every $c \in \R \setminus \{-b/2, c^*\}$, the level set $\{H=c\}$ is a compact, connected one-dimensional manifold whenever it is nonempty.

\begin{remark}
We record a qualitative consequence of the above analysis in the regime $a < b$. 
Since distinct level sets are disjoint, and since no level set other than $\{H = -\frac{b}{2}\}$ can approach the boundary $\{\rho = 0\}$, the exceptional curve $\{H = -\frac{b}{2}\}$ separates the cylinder $(0,1) \times \BBT^1$ into two regions.  
The region $\{H > -\frac{b}{2}\}$ consists of closed loops on the cylinder, corresponding to libration in the dynamics~\eqref{eq:pc_dyn_case_4}.  
By contrast, the region $\{H < -\frac{b}{2}\}$ consists of loops that wrap around the cylinder, corresponding to rotation.
This is also illustrated in Figure \ref{subfig:H_level_set_a_less_than_b}.
\end{remark}

\textbf{Regime $\mathbf{a \geq b}$.} 
Fix $c \in \R \setminus \{-b/2, c^*\}$.
By Lemma~\ref{lem:case4_H_level_set}, the level set $\{H=c\}$, whenever nonempty, is a connect one-dimensional manifold.
It remains to show that the closure of $\{H=c\}$ intersects the boundary $\{\rho=1\}$.
In fact, the interaction points are precisely the boundary fixed points $E_1$ and $E_2$ defined in \eqref{eq:case4_E1_E2}; see Figures \ref{subfig:H_level_set_a_equal_to_b} and \ref{subfig:H_level_set_a_greater_than_b}.

Recall the definition of set $I$ in \eqref{eq:case4_set_I} and the branch decomposition of $\{H = c\}$ in \eqref{eq:case4_Gamma_1_2}. 
Together with the continuity of function $W$ (see \eqref{eq:case4_func_W}), it suffices to show that the closure of $I$ reaches $\rho=1$ whenever $I \neq \emptyset$.

If $a > b$, this is immediate.
Indeed, $W$ is continuous, and $W(1) = -\frac{b}{a} > -1$.
Hence $|W(\rho)| \leq 1$ for all $\rho$ sufficiently close to $1$, and therefore the closure of $I$ reaches $\rho=1$.

It remains to consider the borderline case $a = b$.
In this case, $W(1) = -1$.
Moreover, a direct calculation gives
\begin{equation}\label{eq:case4_W_rho_plus_1}
W(\rho) + 1 = - \frac{(1-\rho)^2}{a \rho} \left( \frac{a}{2} + c(1 + \rho)^2 \right) \, .
\end{equation}
Since $I \neq \emptyset$, there exists some $\rho \in (0,1)$ such that $|W(\rho)| \leq 1$ and hence $W(\rho) + 1 \geq 0$.
By \eqref{eq:case4_W_rho_plus_1}, $W(\rho) + 1 \geq 0$ can occur only if $a/2 + c(1 + \rho)^2 \leq 0$ for some $\rho \in (0,1)$, which forces $c < -a/8$.
Consequently, for all $\rho$ sufficiently close to $1$, we have $a/2 + c(1+\rho)^2 < 0$ and therefore $W(\rho) + 1 > 0$.
Together with $W(1) = -1$ and the definition of $I$, this implies that the closure of $I$ reaches $\rho=1$, completing the proof.



\end{proof}

\subsection{Auxiliary Lemmas}
\begin{lemma}\label{lem:pc_dyn_general_expand}
The polar-coordinate system \eqref{eq:pc_dyn_general} admits the equivalent representation
\begin{equation*}
    \begin{split}
        \Drho = 2(1-\rho^2) \left( S_1(\phi) \rho + S_2(\phi) \right), \qquad
        \Dphi = 2 \left( S_3(\phi) (\rho^2+1) + S_4(\phi) \frac{\rho^2 + 1}{\rho} + 2S_5(\phi) \rho + \re{b_5} \right),
    \end{split}
    \end{equation*}
where both functions $S_{j}$ constant $b_5$ depend on the matrices $A$ and $V$.
The explicit form of $S_j$ and $b_5$ are provided in \eqref{eq:func_Sj} and \eqref{eq:coeff_b}, respectively.
\end{lemma}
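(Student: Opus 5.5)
The plan is to compute $e^{-i\phi}F(\rho e^{i\phi})$ explicitly and read off its real and imaginary parts, using \eqref{eq:pc_dyn_general}. Write $r=\rho e^{i\phi}$. By \eqref{eq:coeff_B_C}, $B(r)=b_1\rho e^{i\phi}+b_2\rho e^{-i\phi}+b_3$ and $C(r)=2\rho\,\re{b_4e^{i\phi}}+b_5$ (with $b_5$ real, so $\re{b_5}=b_5$). Since $F(r,r)=2i(Br^2+Cr+\overline{B})$, we get
\[
e^{-i\phi}F(r)=2i\bigl(B\rho^2e^{i\phi}+C\rho+\overline{B}e^{-i\phi}\bigr).
\]
The middle term $2iC\rho$ is purely imaginary. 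So $\Drho=\re{\cdot}$ collects only the $B$ terms, and $\Dphi$ receives the contribution $2C$.

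For the radial part, put $X:=Be^{i\phi}$. Then
\[
B\rho^2e^{i\phi}+\overline{B}e^{-i\phi}=\rho^2X+\overline{X}.
\]
Hence $\re{2i(\rho^2X+\overline X)}=-2\im{\rho^2X+\overline X}=-2(\rho^2-1)\im{X}=2(1-\rho^2)\im{X}$. Next expand $\im{X}=\rho\,\im{b_1e^{i2\phi}+b_2}+\im{b_3e^{i\phi}}$. This gives $S_1(\phi):=\im{b_1e^{i2\phi}+b_2}$ and $S_2(\phi):=\im{b_3e^{i\phi}}$, matching the forms quoted in the Case 3 proof.

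For the angular part, $\im{2i(\rho^2X+\overline X)}=2\re{\rho^2X+\overline X}=2(\rho^2+1)\re{X}$. Therefore
\[
\Dphi=\frac{2(\rho^2+1)}{\rho}\re{X}+2C .
\]
Substituting $\re{X}=\rho\,\re{b_1e^{i2\phi}+b_2}+\re{b_3e^{i\phi}}$ yields the term $S_3(\phi)(\rho^2+1)$ with $S_3:=\re{b_1e^{i2\phi}+b_2}$, and the term $S_4(\phi)\frac{\rho^2+1}{\rho}$ with $S_4:=\re{b_3e^{i\phi}}$. The remaining piece $2C=4\rho\,\re{b_4e^{i\phi}}+2b_5$ gives $2\cdot 2S_5\rho$ with $S_5:=\re{b_4e^{i\phi}}$, plus $2\re{b_5}$. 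Factoring out the overall $2$ reproduces the stated formula; these definitions are what \eqref{eq:func_Sj} should record.

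No step is genuinely hard. The only care needed is consistent bookkeeping of conjugates, namely that $\overline{B}(r)$ means $\overline{B(r)}$ so that $\overline{B}e^{-i\phi}=\overline X$, and of the factor $2$. I would sanity-check the result against Case 2, where \eqref{eq:pc_dyn_case_2} must be recovered.
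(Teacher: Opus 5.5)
Your proposal is correct and follows essentially the same computation as the paper: substitute $\CR_2=\rho e^{i\phi}$, form $e^{-i\phi}F(\rho e^{i\phi})$, and separate real and imaginary parts. The only difference is bookkeeping. The paper expands into five monomial terms $A_1,\dots,A_5$ and collects them, whereas your grouping $e^{-i\phi}F=2i(\rho^2X+\overline{X})+2iC\rho$, with $X=Be^{i\phi}$ and $C$ real, produces the factors $(1-\rho^2)$ and $(1+\rho^2)$ and the disappearance of $b_4$ from the $\rho$-equation in a single step.
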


\begin{proof}
    For notational convenience we drop the time dependence on all variables.
    
    Substituting $\CR_2 = \rho e^{i\phi}$ and $\overline{\CR_2} = \rho e^{-i\phi}$ into the ODE \eqref{eq:dyn_r2} satisfied by $\CR_2$ gives
    \begin{equation}\label{eq:pc_transform}
        \frac{d}{dt} \CR_2 = \frac{d}{dt} (\rho e^{i\phi}) = e^{i\phi} \left( \Drho + i \rho \Dphi \right) .
    \end{equation}
    By using the definitions of $B$ and $C$ from \eqref{eq:coeff_B_C}, we can expand the left-hand side of the above equation:
    \begin{equation*}
    \begin{aligned}
        \mathrm{LHS} &= \frac{d}{dt} \CR_2 = 2i \left( B(\CR_2) (\CR_2)^2 + C(\CR_2) \CR_2 + \overline{B}(\CR_2) \right) \\
        &= 2i \left( b_1 \CR_2^3 + b_2 \CR_2^2\overline{\CR_2} + b_3\CR_2^2 + b_4 \CR_2^2 + \overline{b_4} \overline{\CR_2}\CR_2 + b_5 \CR_2 + \overline{b_1} \overline{\CR_2} + \overline{b_2} \CR_2 + \overline{b_3} \right) \\
        &= 2i \left( b_1 \rho^3 e^{i3\phi} + b_2 \rho^3 e^{i\phi} + (b_3 + b_4) \rho^2 e^{i2\phi} + \overline{b_4} \rho^2 + (\overline{b_2} + b_5) \rho e^{i\phi} + \overline{b_1} \rho e^{-i\phi} + \overline{b_3} \right)\\
        &= e^{i\phi} \left(2 \left( ib_1e^{i2\phi} \rho^3 + i \overline{b_1} e^{-i2\phi} \rho + i(b_3 + b_4) e^{i\phi} \rho^2 + i (\overline{b_3} + \overline{b_4} \rho^2) e^{-i\phi} + i \left( b_2 \rho^3 + (\overline{b_2} + b_5) \rho \right) \right) \right) = e^{i\phi} G(\rho, \phi),
    \end{aligned}
    \end{equation*}
    where we defined $G$ implicitly in the last step.
   
    Comparing the two expressions for the left-hand and right-hand sides of~\eqref{eq:pc_transform} results in the complex-valued identity
    \begin{equation}
    \label{eq:complex_iden}
        \Drho + i \rho \Dphi = G(\rho, \phi).
    \end{equation}
    Recalling that $\rho$ and $\phi$ are both real-valued,
    we separate the real and imaginary parts of the function $G$.
    In particular, letting
    \begin{equation*}
        G(\rho, \phi)
        = G_R (\rho, \phi) + i G_I (\rho, \phi)
        \qquad
        \text{with}
        \qquad
        G_R := \re{G}
        \text{ and }
        G_I := \im{G},
    \end{equation*}
    we can match the real and imaginary parts of \eqref{eq:complex_iden} to obtain the real-valued coupled ODE system
    \begin{equation}
        \label{eq:aux_coupled_dyn}
        \Drho = G_R(\rho, \phi), \qquad
        \Dphi = \frac{1}{\rho} G_I(\rho, \phi),
    \end{equation}
    and it remains to write down $G_R$ as well as $G_I$ explicitly.
    
    To do so, we introduce some notation and set
    \begin{equation*}
    \begin{aligned}
        G(\rho, \phi)
        &= 2 \left( ib_1e^{i2\phi} \rho^3 + i \overline{b_1} e^{-i2\phi} \rho + i(b_3 + b_4) e^{i\phi} \rho^2 + i (\overline{b_3} + \overline{b_4} \rho^2) e^{-i\phi} + i \left( b_2 \rho^3 + (\overline{b_2} + b_5) \rho \right) \right)\\
        &= 2 \left( A_1 + A_2 + A_3 + A_4 + A_5 \right).
    \end{aligned}
    \end{equation*}
    We can then compute for each term $A_j$ the real and imaginary part.
    \begin{itemize}
        \item Term $A_1$:
        \begin{equation*}
        \begin{aligned}
        \re{A_1} = \re{ib_1 e^{i2\phi} \rho^3} = - \im{b_1 e^{i2\phi}} \rho^3, \qquad 
        \im{A_1} = \im{ib_1 e^{i2\phi} \rho^3} = \re{b_1 e^{i2\phi}} \rho^3.
        \end{aligned}
        \end{equation*}
    
        \item Term $A_2$:
        \begin{equation*}
        \begin{aligned}
        \re{A_2} = \re{i\overline{b_1} e^{-i2\phi} \rho} = \im{b_1 e^{i2\phi}} \rho, \qquad \im{A_2} = \im{i\overline{b_1} e^{-i2\phi} \rho} = \re{b_1 e^{i2\phi}} \rho .
        \end{aligned}
        \end{equation*}
    
        \item Term $A_3$:
        \begin{equation*}
        \begin{aligned}
        &\re{A_3} = \re{i(b_3 + b_4) e^{i\phi} \rho^2} = - \im{(b_3 + b_4) e^{i\phi}} \rho^2,\\[5pt]
        &\im{A_3} = \im{i(b_3 + b_4) e^{i\phi} \rho^2} = \re{(b_3 + b_4) e^{i\phi}} \rho^2 .
        \end{aligned}
        \end{equation*}
    
        \item Term $A_4$:
        \begin{equation*}
        \begin{aligned}
        &\re{A_4} = \re{i (\overline{b_3} + \overline{b_4} \rho^2) e^{-i\phi}} = \im{b_3 e^{i\phi}} + \im{b_4 e^{i\phi}} \rho^2,\\[5pt]
        &\im{A_4} = \im{i (\overline{b_3} + \overline{b_4} \rho^2) e^{-i\phi}} = \re{b_3 e^{i\phi}} + \re{b_4 e^{i\phi}} \rho^2 .
        \end{aligned}
        \end{equation*}
    
        \item Term $A_5$:
        \begin{equation*}
        \begin{aligned}
        &\re{A_5} = \re{i \left( b_2 \rho^3 + (\overline{b_2} + b_5) \rho \right)} = -\im{b_2} \rho^3 + \im{b_2} \rho,\\[5pt]
        &\im{A_5} = \im{i \left( b_2 \rho^3 + (\overline{b_2} + b_5) \rho \right)} = \re{b_2} \rho^3 +  (\re{b_2} + \re{b_5}) \rho .
        \end{aligned}
        \end{equation*}
    \end{itemize}
    For the last term, recall that $b_5$ is real-valued.
    
    With those equalities we have 
    \begin{equation*}
    \begin{aligned}
    G_R(\rho, \phi) &= 2 \left( \re{A_1} + \re{A_2} + \re{A_3} + \re{A_4} + \re{A_5} \right)\\
    &= 2 (1-\rho^2)\left( \im{b_1 e^{i2\phi} + b_2} \rho  + \im{b_3 e^{i\phi}} \right),
    \end{aligned}
    \end{equation*}
    and
    \begin{equation*}
    \begin{aligned}
    G_I(\rho, \phi) &= 2 \left( \im{A_1} + \im{A_2} + \im{A_3} + \im{A_4} + \im{A_5} \right)\\
    &= 2 \left( (\rho^2 + 1) \left(  \re{b_1 e^{i2\phi} + b_2} \rho + \re{b_3 e^{i\phi}} \right) + \rho \left(2 \re{b_4 e^{i\phi}} \rho + \re{b_5} \right) \right) .
    \end{aligned}
    \end{equation*}
    By introducing the notation
    \begin{equation}
    \label{eq:func_Sj}
    \begin{aligned}
    &\qquad \qquad S_1(\phi) := \im{b_1 e^{i2\phi} + b_2}, \qquad S_2(\phi) := \im{b_3 e^{i\phi}},\\[5pt]
    &S_3(\phi) := \re{b_1 e^{i2\phi} + b_2}, \qquad S_4(\phi) := \re{b_3 e^{i\phi}}, \qquad S_5(\phi) := \re{b_4 e^{i\phi}},
    \end{aligned}
    \end{equation}
    we eventually obtain the coupled ODE system
    \begin{equation*}
    \begin{split}
        \Drho &= 2 (1-\rho^2) \left( S_1(\phi) \rho + S_2(\phi) \right)\\
        \Dphi &= 2 \left( S_3(\phi) (\rho^2+1) +  S_4(\phi) \frac{\rho^2 + 1}{\rho} + 2 S_5(\phi) \rho + \re{b_5} \right),
    \end{split}
    \end{equation*}
    which concludes the proof.
\end{proof}

\begin{lemma}\label{lem:gamma-integral}
Let $U:[0,\infty)\to[0,1)$ be an absolutely continuous function satisfying
\begin{equation}\label{eq:gamma-ode}
\dot U(t)=(1-U(t)^2)\,S(t)
\end{equation}
for some locally integrable function $S:[0,\infty)\to\mathbb R$.
Assume that
\begin{equation}\label{eq:S-integral}
\lim_{t\to\infty}\int_0^t S(\tau)\,d\tau=\infty .
\end{equation}
Then
\[
\lim_{t\to\infty}U(t)=1 .
\]
\end{lemma}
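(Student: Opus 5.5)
The plan is to solve the ODE \eqref{eq:gamma-ode} explicitly by separation of variables, using the artanh substitution. Since $U(t)\in[0,1)$ for all $t$, the quantity $1-U(t)^2$ is strictly positive. The function
\[
\Lambda(t):=\operatorname{artanh}(U(t))=\tfrac12\log\frac{1+U(t)}{1-U(t)}
\]
is then well defined and finite for every $t\ge0$.

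The first step is to check that $\Lambda$ is absolutely continuous on every compact interval $[0,T]$. Because $U$ is continuous on $[0,T]$ with values in $[0,1)$, its maximum $\max_{[0,T]}U$ is strictly less than $1$. The map $\operatorname{artanh}$ is $C^1$, hence Lipschitz, on $[0,\max_{[0,T]}U]$. The composition of a Lipschitz function with an absolutely continuous function is absolutely continuous, which settles this step.

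Next, the chain rule holds almost everywhere and gives
\[
\dot\Lambda=\frac{\dot U}{1-U^2}=S .
\]
Integrating this identity yields
\[
\Lambda(t)=\Lambda(0)+\int_0^t S(\tau)\,d\tau .
\]
By \eqref{eq:S-integral}, this tends to $+\infty$ as $t\to\infty$.

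Finally, since $\tanh$ is continuous and increasing with $\tanh(x)\to1$ as $x\to\infty$, we get $U(t)=\tanh(\Lambda(t))\to1$. Equivalently, $U(t)=\tanh\bigl(\operatorname{artanh}U(0)+\int_0^tS\bigr)$ holds for all $t$.

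The only point needing care is the regularity justification in the first step: the chain rule requires the composition to remain absolutely continuous, and this is guaranteed precisely because $U$ stays strictly below $1$ on compact time intervals. No other obstacle is expected.
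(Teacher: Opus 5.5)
Your proposal is correct and follows the same route as the paper: the substitution $\Lambda=\operatorname{artanh}(U)$ turns the equation into $\dot\Lambda=S$, which integrates to a divergent quantity, so $U=\tanh(\Lambda)\to 1$. Your added argument that $\Lambda$ is absolutely continuous on compact intervals (since $\max_{[0,T]}U<1$ makes $\operatorname{artanh}$ Lipschitz on the range of $U$) supplies a step that the paper's proof uses implicitly when it integrates $\dot u=S$.
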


\begin{proof}
For $U(t)\in[0,1)$ define
\[
u(t):=\operatorname{artanh}(U(t))
=\frac12\log\frac{1+U(t)}{1-U(t)} .
\]
Since $\frac{d}{dU}\operatorname{artanh}(U)=(1-U^2)^{-1}$, the chain rule yields
\[
\dot u(t)=\frac{\dot U(t)}{1-U(t)^2}=S(t)
\quad\text{for a.e.\ }t\ge0 .
\]
Integrating in time, we obtain
\[
u(t)=u(0)+\int_0^t S(\tau)\,d\tau .
\]
By assumption \eqref{eq:S-integral}, the right-hand side diverges to $\infty$ as $t\to\infty$.
Therefore $u(t)\to\infty$, and since $U(t)=\tanh(u(t))$, it follows that
\[
U(t)\xrightarrow[t\to\infty]{}1 .
\]
\end{proof}

\section{Proof Details for Section \ref{sec:stability}}\label{app:stability}
In this appendix, we provide the technical details for the proofs of Section~\ref{sec:stability}.

\subsection{Proof of Main Results}

\begin{proof}[Proof of Corollary \ref{cor:diff_r2_alpha_pc_bound}]
We prove the three inequalities in \eqref{eq:diff_r2_alpha_pc_bound} one by one using elementary inequalities together with the estimate established in Proposition \ref{prop:diff_r2_alpha}, namely,
\begin{equation*}
|\CR_2 - \alpha| \leq C_{\nu}(\gamma) (1 - \gamma) \, .
\end{equation*}
We recall the polar-representation $\alpha = \gamma e^{i\Phi}$ and $\CR_2 = \rho e^{i\phi}$.

We begin with the first inequality. By the reverse triangle inequality,
\begin{equation*}
|\rho - \gamma| = ||\CR_2|-|\alpha||\leq |\CR_2-\alpha| \leq C_{\nu}(\gamma) (1 - \gamma) \, . 
\end{equation*}
 
Next, for the second inequality, we write
\[
\CR_2-\alpha = \rho e^{i\phi}-\gamma e^{i\Phi} = (\rho-\gamma)e^{i\phi} + \gamma\bigl(e^{i\phi}-e^{i\Phi}\bigr) \, .
\]
Rearranging terms and applying the triangle inequality gives
\[
\gamma |e^{i\phi}-e^{i\Phi}| \leq |\CR_2-\alpha|+|\rho-\gamma| \leq 2C_\nu(\gamma)(1-\gamma),
\]
and therefore
\begin{equation*}
    |e^{i\phi}-e^{i\Phi}| \leq 2C_\nu(\gamma) \frac{(1-\gamma)}{\gamma} \, .
\end{equation*}

Finally, we prove the third inequality. Choosing the representation of $\Phi - \phi$ modulo $2\pi$ so that $|\Phi-\phi|\leq \pi$, we have
\[
|\Phi-\phi|
=
2\left|\frac{\Phi-\phi}{2}\right|
\leq
2\pi \left|\sin\frac{\Phi-\phi}{2}\right| = \pi |e^{i\phi}-e^{i\Phi}|
\leq
2\pi C_\nu(\gamma) \frac{1-\gamma}{\gamma}.
\]
This completes the proof.
\end{proof}

\begin{proof}[Proof of Theorem \ref{thm:case_1stab}]
We start the proof with a change of variables.
Define a scalar
\begin{equation*}
    R := \sqrt{(\aDMinus)^2 + (\aOPlus)^2},
\end{equation*}
and an angle $\PhiA$ satisfying
\begin{equation*}
\aDMinus = R \cos \PhiA, \qquad \aOPlus = R \sin \PhiA \, .
\end{equation*}
When $R=0$, we set $\PhiA = 0$.
Then we can compute using trigonometric identities that
\begin{equation*}
\begin{aligned}
&\aDMinus \cos \Phi + \aOPlus \sin \Phi = R \cos \PhiA  \cos \Phi + R \sin \PhiA \sin \Phi = R \cos (\Phi - \PhiA),\\[4pt]
&\aDMinus \sin \Phi - \aOPlus \cos \Phi = R \cos \PhiA \sin \Phi - R \sin \PhiA \cos \Phi = R \sin (\Phi - \PhiA) \, .
\end{aligned}
\end{equation*}
Therefore, the original system \eqref{eq:alpha_pc_dyn_case1} can be written as
\begin{equation}\label{eq:case1_stab_pc_dyn_app}
\begin{aligned}
\dot{\gamma} &= \frac{1}{4}(1 - \gamma^2) \left( \aDPlus \gamma + R \cos (\Phi - \PhiA) \right) + \DeltaGamma,\\
\dot{\Phi} &= \frac{1}{4} (1 - \gamma^2) \left( \aOMinus - \frac{R}{\gamma} \sin (\Phi - \PhiA) \right) + \DeltaPhi \, .
\end{aligned}
\end{equation}

By Lemma \ref{lem:stability_case1_case2}, the error terms $\DeltaGamma$ and $\DeltaPhi$ satisfy
\begin{equation*}
|\DeltaGamma| \leq C_{A,V} C_{\nu}(\gamma) \frac{1 - \gamma}{\gamma} (1 - \gamma^2), \qquad |\DeltaPhi| \leq C_{A, V} C_{\nu} (\gamma) \frac{1 - \gamma}{\gamma},
\end{equation*}
where $C_{A,V}$ is a constant that depends on $A$ and $V$.

Next, we focus on the convergence of the modulus $\gamma$. 
First, we show that there exists a threshold value $\gammaTH \in [0,1]$ so that, if $\gamma(t_0) \in [\gammaTH, 1]$ for some $t_0$, then $\gamma(t)$ converges to $1$ exponentially fast for $t \geq t_0$.
In particular, since the matrix $A$ satisfies condition \eqref{eq:case_1_stab_A_condi}, together with the definitions of $\aDPlus$ and $R$, there exists a constant $\delta > 0$ so that 
\begin{equation*}
   \lambda_{\text{min}} \left(A + A^{\top} \right) = \aDPlus - R \geq \delta \, .
\end{equation*}
This in turn implies that
\begin{equation*}
\aDPlus \gamma + R \cos (\Phi - \PhiA) \geq (R + \delta) \gamma - R = \delta \gamma - R (1 - \gamma) \, .
\end{equation*}
We infer that
\begin{equation}\label{eq:case_1stabgrowth}
     \dot{\gamma}\geq \frac{1}{4}(1-\gamma^2)(\delta\gamma - R(1-\gamma) -C_{A,V} C_\nu(\gamma)\frac{1-\gamma}{\gamma})\geq \frac{1}{8}(1-\gamma^2)\delta \gammaTH,
\end{equation}
for all $\gamma \in [\gammaTH, 1]$, provided that $\gammaTH$ is sufficiently close to $1$.
This follows from Assumption \ref{asm:Lp} and Lemma \ref{lem:nu-lp-unified}, which imply that $C_\nu(\gamma)(1-\gamma) \rightarrow 0$ as $\gamma\to1$.
We note that $\gammaTH$ depends on the constant $M$ which controls the $L^p$-norm of $\nu$. 

From the above argument, we know that if there exists a time $0 < t_0 < \infty$ so that $\gamma(t_0) \in [\gammaTH, 1]$, then $\gamma(t)$ remains in this interval and increases monotonically toward the boundary value $1$.
In particular, inequality \eqref{eq:case_1stabgrowth} continues to hold for all $t \geq t_0$.
Moreover, 
\begin{equation*}
\frac{d}{dt} (1 - \gamma) \leq -\frac{\delta}{8} \gammaTH (1 - \gamma^2) \leq - c (1 - \gamma), \qquad \text{where } c := \frac{\delta}{8} \gammaTH (1 + \gammaTH) > 0 \, .
\end{equation*}
Then Grönwall's inequality gives the exponential convergence of $\gamma(t)$ to $1$, namely, for $t \geq t_0$,
\begin{equation}\label{eq:case1_stab_gamma_exp_conv}
    1 - \gamma(t) \leq (1 - \gammaTH) \exp \left( - c (t - t_0) \right) \, .
\end{equation}
Therefore, it remains to show that for any given initial data $\gamma(0)$ satisfying our assumptions, there exists $t_0 \in (0,\infty) $ such that $\gamma(t_0) \geq \gammaTH$.

Let $\alpha_{\OA} = \gamma_{\OA} e^{i\Phi_{\OA}}$ be the solution of the OA system \eqref{eq:pc_dyn_case_1}, with initial data $\alpha(0)$ satisfying~\eqref{eq:case1_stab_initial_data}.
By Theorem~\ref{thm:case_1} and Proposition~\ref{prop:case1_quantitative}, under the condition~\eqref{eq:case_1_stab_A_condi} on the matrix $A$
there exists a time $t_0 > 0$, depending on $\iota$ in~\eqref{eq:case1_stab_initial_data}, such that $\gamma_{\OA} (t_0) \geq \frac{1 + \gammaTH}{2}$.
Then, we conclude from Lemma \ref{lem:stability_estimate_control} and Corollary \ref{cor:diff_r2_alpha_pc_bound} that there exists a constant $\varepsilon(\iota, p, M, A) > 0$, depending on $\iota$, the matrix $A$, and $p, M$ through $\gammaTH$, that is sufficiently small so that the closeness condition~\eqref{eq:case_1_stab_close_to_unif} implies
\begin{equation*}
    |\gamma(t_0) - \gamma_{\OA}(t_0)| \leq \frac{1 - \gammaTH}{2} \, .
\end{equation*}
The reverse triangle inequality then implies
\begin{equation*}
\gamma(t_0) \geq \gamma_{\OA} (t_0) - \frac{1 - \gammaTH}{2} \geq \gammaTH \,,
\end{equation*}
which is what we needed to conclude.

Finally, we establish the convergence of the angular variable $\Phi$. 
From the $\Phi$-equation in \eqref{eq:case1_stab_pc_dyn_app}, Lemma \ref{lem:stability_case1_case2}, and the fact that $\gamma(t) \in [\gammaTH, 1]$, for all $t \geq t_0$, we obtain
\begin{equation*}
|\DPhi| = \left| \frac{1}{4} (1 - \gamma^2) \left( \aOMinus - \frac{R}{\gamma} \sin (\Phi - \PhiA) \right) + \DeltaPhi \right| \leq C (1 - \gamma) + C_{A,V} C_{\nu}(\gamma) \frac{1 - \gamma}{\gamma} \leq C \left( 1 + C_{\nu}(\gamma) \right) (1 - \gamma) \, .
\end{equation*}
Here, the constant $C > 0$ may change from one inequality to the next.
By Assumption \ref{asm:Lp}, Lemma \ref{lem:nu-lp-unified}, and the exponential convergence of $\gamma(t)$ to $1$ established in \eqref{eq:case1_stab_gamma_exp_conv}, the right-hand side of above inequality is integrable over $[t_0, \infty)$, and hence $|\DPhi| \in L^1 ([t_0, \infty))$.
This implies that $\Phi(t)$ converges to some $\Phi_{\infty} \in \BBT^1$ as $t \rightarrow \infty$.

\end{proof}

\begin{proof}[Proof of Theorem \ref{thm:case_2stab}]
We start the proof with a change of variables.
Define a scalar
\begin{equation*}
    R := \sqrt{(\vDMinus)^2 + (\vOPlus)^2},
\end{equation*}
and an angle $\PhiV$ satisfying
\begin{equation}\label{eq:case2_PhiV}
\vDMinus = R \cos \PhiV, \qquad \vOPlus = R \sin \PhiV \, .
\end{equation}
When $R=0$, we set $\PhiV = 0$.
Then we can compute using trigonometric identities that
\begin{equation*}
\begin{aligned}
&\vDMinus \cos \Phi + \vOPlus \sin \Phi = R \cos \PhiV  \cos \Phi + R \sin \PhiV \sin \Phi = R \cos (\Phi - \PhiV),\\[4pt]
&\vDMinus \sin \Phi - \vOPlus \cos \Phi = R \cos \PhiV \sin \Phi - R \sin \PhiV \cos \Phi = R \sin (\Phi - \PhiV) \, .
\end{aligned}
\end{equation*}
Therefore, the original system \eqref{eq:alpha_pc_dyn_case2} can be written as
\begin{equation*}
\begin{aligned}
\Dgamma(t) &= \frac{1}{4} (1 - \gamma(t)^2) \left( \vDPlus \gamma(t) + R \cos (\Phi(t) - \PhiV) \right) + \DeltaGamma(t), \\
\DPhi(t) &= - \frac{1}{4} \frac{3\gamma(t)^2 + 1}{\gamma(t)} R \sin (\Phi(t) - \PhiV) + \DeltaPhi(t) \, .
\end{aligned}
\end{equation*}

\textit{Claim:} There exist constants $\gammaTH \in [0,1]$ and $\PhiTH > 0$ so that, if $(\gamma(t_0), \Phi(t_0))$ satisfies  
\begin{equation}\label{eq:case2_TH}
\gamma(t_0) \in [\gammaTH, 1] \quad \text{and} \quad |\Phi(t_0) - \PhiV| \leq \PhiTH
\end{equation}
for some $t_0>0$, then $(\gamma(t), \Phi(t)) \rightarrow (1, \PhiV)$ as $t \rightarrow \infty$.\\

Assuming momentarily that the above claim has been proved, it would remain to show that for any initial condition $(\gamma(0), \Phi(0))$ satisfying our assumptions there exists a time $t_0 > 0$ such that $(\gamma(t_0), \Phi(t_0))$ satisfies condition \eqref{eq:case2_TH}. 
To verify the latter assertion, let $\alpha_{\OA}(t) = \gamma_{\OA}(t) e^{i \Phi_{\OA}(t)}$ be the solution to the OA system \eqref{eq:pc_dyn_case_2}, with initial data $\alpha_{\OA}(0) = \alpha(0)$ satisfying \eqref{eq:case2_stab_initial_data}. By Theorem \ref{thm:case_2} and Proposition \ref{prop:case2_quantitative}, under the condition \eqref{eq:case_2_stab_V_condi} on the matrix $V$, there exists a time $t_0 > 0$, depending on $\iota$ in \eqref{eq:case2_stab_initial_data}, such that $\gamma_{\OA}(t_0) \geq \frac{1 + \gammaTH}{2}$ and $|\Phi_{\OA}(t_0) - \PhiV| \leq \frac{\PhiTH}{2}$. 
Moreover, by Lemma \ref{lem:stability_estimate_control} and Corollary \ref{cor:diff_r2_alpha_pc_bound}, there exists a constant $\varepsilon(\iota, p, M, V) > 0$, depending on $\iota$, the matrix $V$, and $p, M$ through $\gammaTH$, sufficiently small so that the closeness condition \eqref{eq:case_2_stab_close_to_unif} implies
\begin{equation*}
|\gamma(t_0) - \gamma_{\OA}(t_0)| \leq \frac{1 - \gammaTH}{2} \quad \text{and} \quad |\Phi(t_0) - \Phi_{\OA}(t_0)| \leq \frac{\PhiTH}{2} \, .
\end{equation*}
Therefore, by the triangle inequality, we have  
\begin{equation*}
\gamma(t_0) \geq \gamma_{\OA} - \frac{1 - \gammaTH}{2} \geq \gammaTH, \quad \text{and} \quad |\Phi(t_0) - \PhiV| \leq |\Phi(t_0) - \Phi_{\OA}(t_0)| + |\Phi_{\OA}(t_0) - \PhiV| \leq \PhiTH \, .
\end{equation*}

\begin{proof}[Proof of the claim]
We first show the convergence $\gamma(t) \rightarrow 1$, and then show that $\Phi(t) \rightarrow \PhiV$.

\paragraph{Step 1:} $\gamma(t) \rightarrow 1$ as $t \rightarrow \infty$.

Since the matrix $V$ satisfies condition \eqref{eq:case_2_stab_V_condi}, there exists a constant $\delta > 0$ so that $\lambdaMax(V) \geq \delta$.
Together with Lemma \ref{lem:stability_case1_case2}, we can compute that
\begin{equation}\label{eq:gamma_aux_ineq}
\begin{aligned}
\Dgamma(t) &= \frac{1}{4} (1 - \gamma(t)^2) \left( \vDPlus \gamma(t) + R \cos (\Phi(t) - \PhiV) \right) + \DeltaGamma(t)\\
&\geq \frac{1}{4} (1 - \gamma(t)^2) \left( \frac{\delta}{2} \gamma(t) + R \left( \cos (\Phi(t) - \PhiV) - \gamma(t) \right) \right) + \DeltaGamma(t)\\
&\geq \frac{1}{8} (1 - \gamma(t)^2) \delta \left( 1 - \frac{2}{k} \right)\gamma(t) - C_{A, V} C_{\nu} (\gamma(t)) \frac{1 - \gamma(t)}{\gamma(t)} (1 - \gamma(t)^2),
\end{aligned}
\end{equation}
provided that 
\begin{equation}\label{eq:case_2_stab_gamma_Phi_condi_origin}
R \left( \cos \left( \Phi(t) - \PhiV \right) - \gamma(t) \right) \geq - \frac{\delta}{k} \gamma(t)
\end{equation}
with any $k > 0$ to be specified later.
Here, the first inequality uses the condition \eqref{eq:case_2_stab_V_condi} on $V$, while the second inequality makes use of the condition \eqref{eq:case_2_stab_gamma_Phi_condi_origin} and Lemma \ref{lem:stability_case1_case2}.

Furthermore, since $\nu$ satisfies Assumption \ref{asm:Lp}, there exists a constant $\BargammaTH \in [0,1]$, depending on constant $M$, so that
\begin{equation*}
C_{A,V} C_{\nu}(\gamma) \frac{1 - \gamma}{\gamma} \leq \frac{\delta}{16} \left(1 - \frac{2}{k} \right) \gamma 
\end{equation*}
for all $\gamma \in [\BargammaTH, 1]$, provided $k > 2$.
The above inequality, together with \eqref{eq:gamma_aux_ineq}, implies that, within the interval $[\BargammaTH, 1]$, we have
\begin{equation}\label{eq:gamma_growth_ineq}
\Dgamma (t) \geq \frac{\delta}{16} \left( 1 - \frac{2}{k} \right) \gamma(t) (1 - \gamma(t)^2) \, .
\end{equation}
Then by Lemma \ref{lem:gamma-integral}, we know that if $\gamma(t)$ enters the interval $[\BargammaTH,1]$, then $\gamma(t) \rightarrow 1$ as $t \rightarrow \infty$, provided the condition \eqref{eq:case_2_stab_gamma_Phi_condi_origin} holds.
We note that when $R = 0$, condition \eqref{eq:case_2_stab_gamma_Phi_condi_origin} is automatically satisfies whenever $k > 2$.
Therefore, for the remaining proof in Step 1, it suffices to consider the case $R > 0$.

Let $k > 2$ be chosen such that $0 < 1 - \frac{\delta}{kR} < 1$. 
Next, we show that, for a time $t_0 > 0$, if $(\gamma(t_0), \Phi(t_0))$ satisfies the conditions:
\begin{subequations}\label{eq:case2_claim_condi}
\begin{equation}\label{eq:case2_claim_condi_1}
\gamma(t_0) \in [\gammaTH, 1],
\end{equation}
\begin{equation}\label{eq:case2_claim_condi_2}
\left| \Phi(t_0) - \PhiV \right| \leq \PhiTH := \arccos \left( 1 - \frac{\delta}{k R} \right),
\end{equation}
\end{subequations}
where $\gammaTH \geq \BargammaTH$ and $\gammaTH$ satisfies
\begin{equation}\label{eq:case2_claim_condi_3}
\left( \frac{4 C_{A,V}}{R} \frac{S_p(\gammaTH)}{\gammaTH (3\gammaTH^2 + 1)} \right)^2 \leq 1 - \left( 1 - \frac{\delta}{kR} \right)^2,
\end{equation}
then the condition \eqref{eq:case_2_stab_gamma_Phi_condi_origin} holds for all $t \geq t_0$.
Here, the function $S_p: [0,1] \mapsto \R$ is defined by
\begin{equation}\label{eq:Sp_gamma}
S_p (\gamma) := \left\{ \begin{array}{ll}
    C_p M (1 - \gamma)^{1 - 1/p}, & 1 < p < \infty, \\[5pt]
    C_{\infty} M (1-\gamma) \log \frac{2}{1 - \gamma}, & p = \infty,
\end{array} \right.
\end{equation}
where the constants $C_p$, for $1 < p \leq \infty$, are given in Lemma \ref{lem:nu-lp-unified}, and $M$ is introduced in Assumption \ref{asm:Lp}.

We first introduce some notation for later use.
Denote $\WPhi(t) := \Phi(t) - \PhiV$.
Then the angular dynamics of $\WPhi$ can be written as
\begin{equation}\label{eq:case2_angular_eq_aux}
\Dot{\WPhi} (t) = -\frac{1}{4} \frac{3\gamma(t)^2 + 1}{\gamma(t)} R \sin \WPhi (t) + \DeltaPhi (t) \, .
\end{equation}
By Lemma \ref{lem:stability_case1_case2}, Lemma \ref{lem:nu-lp-unified}, and the definition of function $S_p$ in \eqref{eq:Sp_gamma}, the perturbation satisfies
\begin{equation}\label{eq:case2_deltaPhi_bound}
|\DeltaPhi (t)| \leq C_{A, V} C_{\nu} (\gamma(t)) \frac{1 - \gamma(t)}{\gamma(t)} \leq \frac{C_{A,V}}{\gamma(t)} S_p(\gamma(t)) = \frac{1}{4} \frac{3\gamma(t)^2 + 1}{\gamma(t)} R Z(\gamma(t)),
\end{equation}
where we denote
\begin{equation}\label{eq:Z_gamma}
Z(\gamma) := \frac{4 C_{A,V}}{R} \frac{S_p(\gamma)}{3 \gamma^2 + 1} \, .
\end{equation}
Lastly, we recall the constant
\begin{equation}\label{eq:const_chi}
    \PhiTH = \arccos \left(1 - \frac{\delta}{kR} \right) \in (0, \pi/2) \, .
\end{equation}

Next, we prove that the interval $[-\PhiTH, \PhiTH]$ is forward invariant for $\WPhi(t)$; see the definition of constant $\PhiTH$ in \eqref{eq:const_chi}.
That is, we prove that if $\WPhi(t)$ satisfies the initial condition $| \WPhi (t_0) | \leq \PhiTH$, then
\begin{equation*}
 | \WPhi (t) | \leq \PhiTH \quad \text{for all } t \geq t_0 \, .
\end{equation*}
Once this is proved, the condition \eqref{eq:case_2_stab_gamma_Phi_condi_origin} follows immediately.
Indeed, if $|\WPhi(t)| \leq \PhiTH$, together with $\gamma(t) \in [0,1]$, we have
\begin{equation*}
\cos \WPhi (t) \geq \cos \PhiTH = 1 - \frac{\delta}{kR} \geq \left( 1 - \frac{\delta}{kR} \right) \gamma(t) \, .
\end{equation*}
Then, first subtracting $\gamma(t)$ and then multiplying $R$ on both slides to the above inequality gives the condition \eqref{eq:case_2_stab_gamma_Phi_condi_origin}.

Thus, it remains to prove the forward invariance of $[-\PhiTH, \PhiTH]$ with respect to the angular dynamics $\WPhi$ in \eqref{eq:case2_angular_eq_aux}.
Suppose, for contradiction, that $\WPhi(t)$ exits the interval $[-\PhiTH, \PhiTH]$.
Let $T > t_0$ be the first exit time.
Then
\begin{equation*}
|\WPhi (t)| \leq \PhiTH \quad \text{for all } t \in [t_0, T],
\end{equation*}
and at the exit time $T$, one has either
\begin{equation*}
\WPhi (T) = \PhiTH \quad \text{or} \quad \WPhi(T) = -\PhiTH \, .
\end{equation*}

For every $t \in [t_0, T]$, since $|\WPhi (t)| \leq \PhiTH$, the previous argument implies the condition \eqref{eq:case_2_stab_gamma_Phi_condi_origin} holds.
Together with the differential inequality \eqref{eq:gamma_growth_ineq} and $\gamma(t_0) \in [\gammaTH,1]$, we know that $\gamma(t)$ is monotonic increasing on $[t_0, T]$.
Moreover, by the definitions of $S_p$ in \eqref{eq:Sp_gamma} and $Z(\gamma)$ in \eqref{eq:Z_gamma}, the function $Z(\gamma)$ is nonincreasing. 
As a consequence,
\begin{equation*}
Z(\gamma(t)) \leq Z (\gamma(t_0)) \quad \text{for all } t \in [t_0, T] \, .
\end{equation*}
Therefore by the initial condition \eqref{eq:case2_claim_condi_3}, together with $Z(\gamma) \geq 0$ and $\sin \PhiTH \geq 0$, we have
\begin{equation}\label{eq:case2_Z_ineq}
Z(\gamma(t)) \leq Z (\gamma(t_0)) \leq \sin \PhiTH \quad \text{for all } t \in [t_0, T] \, .
\end{equation}

We now check the direction of the vector field for $\WPhi$-dynamics at the boundary point $\WPhi(T) = \PhiTH$ and $\WPhi(T) = -\PhiTH$.

First suppose that $\WPhi(T) = \PhiTH$.
Then by the angular dynamics \eqref{eq:case2_angular_eq_aux} and the perturbation bound \eqref{eq:case2_deltaPhi_bound}, we get
\begin{equation*}
\Dot{\WPhi}(T) = -\frac{1}{4} \frac{3\gamma(T)^2 + 1}{\gamma(T)} R \sin \PhiTH + \DeltaPhi(T) \leq -\frac{1}{4} \frac{3\gamma(T)^2 + 1}{\gamma(T)} R \left( \sin \PhiTH - Z (\gamma(T)) \right) \leq 0,
\end{equation*}
where the last inequality uses \eqref{eq:case2_Z_ineq}.
Thus, at the right endpoint $\WPhi = \PhiTH$, the vector field points inward or is tangent.
This implies that the solution $\WPhi(t)$ cannot exit the interval through the right endpoints.

With similar argument, we can show that the solution $\WPhi(t)$ cannot exit the interval $[-\PhiTH, \PhiTH]$ through the left endpoints either.
Indeed, suppose that $\WPhi(T) = -\PhiTH$, again using the angular dynamics \eqref{eq:case2_angular_eq_aux} and the perturbation bound \eqref{eq:case2_deltaPhi_bound}, we obtain
\begin{equation*}
\Dot{\WPhi}(T) = \frac{1}{4} \frac{3\gamma(T)^2 + 1}{\gamma(T)} R \sin \PhiTH + \DeltaPhi(T) \geq \frac{1}{4} \frac{3\gamma(T)^2 + 1}{\gamma(T)} R \left( \sin \PhiTH - Z (\gamma(T)) \right) \geq 0 \, .
\end{equation*}

This shows that the interval $[-\PhiTH, \PhiTH]$ is forward-invariance with respect to the $\WPhi$-dynamics on $[t_0, T]$, which contradicts the definition of $T$ as the first exit time.
Therefore, we conclude that $|\WPhi(t)| \leq \PhiTH$ for all $t \geq t_0$ and then \eqref{eq:case_2_stab_gamma_Phi_condi_origin} holds for all $t \geq t_0$.
Hence, $\gamma(t) \rightarrow 1$ follows immediately.

We further show that $\gamma(t)$ converges to $1$ exponentially fast for $t \geq t_0$.
Indeed, since $\gamma(t) \in [\gammaTH,1]$ for all $t \geq t_0$, inequality \eqref{eq:gamma_growth_ineq} gives
\begin{equation*}
1 - \Dgamma (t) \leq -\frac{\delta}{16} \left( 1 - \frac{2}{k} \right) \gamma(t) \left( 1 - \gamma(t)^2 \right) \leq -c \left( 1 - \gamma(t) \right), \qquad \text{where } c := \frac{\delta}{16} \left( 1 - \frac{2}{k} \right) \gammaTH (1 + \gammaTH) \, .
\end{equation*}
Then Grönwall's inequality gives
\begin{equation}\label{eq:case2_stab_gamma_exp_conv}
    1 - \gamma(t) \leq (1 - \gammaTH) \exp \left( - c (t - t_0) \right), \qquad t \geq t_0 \, .
\end{equation}

\paragraph{Step 2:} $\Phi(t) \rightarrow \PhiV$ as $t \rightarrow \infty$.

We begin by recalling several formulas and conclusions established above.
In particular, recall the angular dynamics \eqref{eq:case2_angular_eq_aux} of $\WPhi(t) = \Phi(t) - \PhiV$:
\begin{equation*}
\dot \WPhi(t) = -c(t) \sin \WPhi(t) + \DeltaPhi(t),
\end{equation*}
where we define
\begin{equation}\label{eq:case2_ct}
    c(t) := \frac{1}{4} \frac{3\gamma(t)^2 + 1}{\gamma(t)} R \, .
\end{equation}
From the proof in Step 1, if $(\gamma(t_0), \Phi(t_0))$ satisfies condition \eqref{eq:case2_claim_condi}, then
\begin{equation*}
\gamma(t) \in [\gammaTH, 1] \quad \text{and} \quad |\WPhi(t)| \leq \PhiTH \in (0, \pi/2) \quad \text{for all } t \geq t_0 \, .
\end{equation*}
Moreover, $\gamma(t)$ converges to $1$ exponentially fast as $t \rightarrow \infty$ .
We now establish the convergence of the angular variable $\Phi(t)$ by considering the cases $R=0$ and $R>0$ separately.

\textbf{Regime $R = 0$.}
In this case, the $\Phi$-equation in \eqref{eq:case2_pc_dyn_app}, together with Lemma \ref{lem:stability_case1_case2}, gives
\begin{equation*}
|\DPhi| = |\DeltaPhi| \leq C_{A,V} C_{\nu} (\gamma) \frac{1 - \gamma}{\gamma} \leq C C_{\nu} (\gamma) (1 - \gamma),
\end{equation*}
where $C > 0$ is a fixed constant.
By Assumption \ref{asm:Lp}, Lemma \ref{lem:nu-lp-unified}, and the exponential convergence of $\gamma(t)$ to $1$ established in \eqref{eq:case2_stab_gamma_exp_conv}, the right-hand side of above inequality is integrable over $[t_0, \infty)$, and hence $|\dot{\Phi}| \in L^1 ([t_0, \infty))$.
This implies that $\Phi(t)$ converges to some $\Phi_{\infty} \in \BBT^1$ as $t \rightarrow \infty$.

\textbf{Regime $R > 0$.}
In this regime, the coefficient $c(t)$, as defined in \eqref{eq:case2_ct}, is uniformly bounded below by a positive constant,
namely,
\begin{equation*}
c(t) \geq c_* := \frac{R}{4} \min_{\gamma \in [\gammaTH, 1]} \frac{3\gamma^2 + 1}{\gamma} > 0 \quad \text{for all } t \geq t_0 \, .
\end{equation*}
Next, recall from \eqref{eq:case2_deltaPhi_bound}, the perturbation $\DeltaPhi(t)$ satisfies
\begin{equation*}
|\DeltaPhi(t)| \leq C_{A,V} C_{\nu} (\gamma(t)) \frac{1 - \gamma(t)}{\gamma(t)} \, .
\end{equation*}
By Assumption \ref{asm:Lp}, Lemma \ref{lem:nu-lp-unified}, and $\gamma(t) \rightarrow 1$, we obtain
\begin{equation*}
    \DeltaPhi(t) \rightarrow 0 \quad \text{as } t \rightarrow \infty \, .
\end{equation*}

Now, we are ready to prove that $\WPhi(t) = \Phi(t) - \PhiV \rightarrow 0$.
Consider the Lyapunov function
\begin{equation*}
    L(t) = \WPhi(t)^2 \, .
\end{equation*}
Using the angular dynamics of $\WPhi$, we get
\begin{equation*}
\Dot{L}(t) = 2 \WPhi(t) \Dot{\WPhi} (t) = -2c(t) \WPhi(t) \sin \WPhi(t) + 2 \WPhi(t) \DeltaPhi (t) \, .
\end{equation*}
Since $|\WPhi(t)| \leq \PhiTH < \pi/2$, we have the elementary inequality
\begin{equation*}
\WPhi \sin \WPhi \geq c_{\mathrm{th}} \WPhi^2 \quad \text{for all } |\WPhi| \leq \PhiTH,
\end{equation*}
where
\begin{equation*}
    c_{\mathrm{th}} := \frac{\sin \PhiTH}{\PhiTH} > 0 \, .
\end{equation*}
Therefore, we have the following differential inequality
\begin{equation*}
\begin{aligned}
\Dot{L}(t) &\leq -2 c_* c_{\mathrm{th}} L(t) + 2 |\WPhi(t)| |\DeltaPhi(t)| \\
&\leq - C L(t) + \frac{1}{C} |\DeltaPhi(t)|^2,
\end{aligned}
\end{equation*}
where the last inequality applies Young's inequality and uses the notation $C := c_* c_{\mathrm{th}}$.
By Grönwall's inequality, for $t \geq t_0$, 
\begin{equation*}
L(t) \leq e^{-C(t-t_0)} L(t_0) + \frac{1}{C} \int_{t_0}^t e^{-C(t-s)} |\DeltaPhi (s)|^2 ds \, .
\end{equation*}
On the right-hand side of above inequality, the first term clearly converges to $0$ as $t \rightarrow \infty$.
For the second term, since $|\DeltaPhi(s)|^2 \rightarrow 0$ as $s \rightarrow \infty$, we also have
\begin{equation*}
    \lim_{t \rightarrow \infty} \int_{t_0}^t e^{-C(t-s)} |\DeltaPhi (s)|^2 ds = 0 \, .
\end{equation*}
Therefore, $L(t) = \WPhi(t)^2 \rightarrow 0$, and thus $\WPhi(t) = \Phi(t) - \PhiV \rightarrow 0$.
\end{proof}

\end{proof}

\subsection{Auxiliary Lemmas}

\begin{lemma}[Properties of the Möbius Transformation]\label{lem:Mobius_tf_properties}
Let $M_{\alpha, \eta}$ be the Möbius transformation defined in \eqref{eq:Mobius_tf} considered as a mapping from the unit circle into itself.
It has the following properties.
\begin{itemize}
    \item[(i).] 
    $\alpha = \int_0^{2\pi} M_{\alpha, \eta} (e^{i\varphi}) d\Unif(\varphi)$, where $\Unif$ denotes the uniform probability measure on $[0, 2\pi)$.
    \item[(ii).] $\mathrm{Lip} (M_{\alpha, \eta}) \leq \frac{2}{1 - \gamma}$, where $\gamma=|\alpha|$.
\end{itemize}
\end{lemma}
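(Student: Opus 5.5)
For (i), the plan is to reuse the residue computation from the proof of Lemma~\ref{lem:WS_to_OA}, specialized to $m=1$. Substituting $z=e^{i(\varphi+\eta)}$, the uniform average becomes the contour integral $\frac{1}{2\pi i}\oint_{\S^1} \frac{\alpha+z}{1+\alphaBar z}\,\frac{dz}{z}$. When $|\alpha|<1$, the only pole inside the unit disk is at $z=0$, where the residue equals $\alpha$. In the degenerate case $|\alpha|=1$, the map is constant and equal to $\alpha$, by \eqref{eqn:MobiusUnitNormAlpha}, off a single point, so the identity still holds. The rotation by $\eta$ plays no role because the uniform measure is rotation invariant.

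For (ii), I will compute the derivative directly. Viewed as a function of $\varphi$, $M_{\alpha,\eta}(e^{i\varphi})$ has derivative
\[
\frac{d}{d\varphi}M_{\alpha,\eta}(e^{i\varphi}) = \frac{ie^{i(\varphi+\eta)}(1-|\alpha|^2)}{(1+\alphaBar e^{i(\varphi+\eta)})^2}.
\]
This uses the standard formula $\frac{d}{du}\frac{\alpha+u}{1+\alphaBar u}=\frac{1-|\alpha|^2}{(1+\alphaBar u)^2}$. Its modulus is bounded by $\frac{1-\gamma^2}{(1-\gamma)^2}=\frac{1+\gamma}{1-\gamma}\le \frac{2}{1-\gamma}$, because $|1+\alphaBar u|\ge 1-\gamma$ on the unit circle. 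The mean value inequality along arcs then gives the Lipschitz bound in the variable $\varphi$, which is the variable relevant to $d_{\BL}$ on $[0,2\pi)$. If instead the Lipschitz constant is taken with respect to the chordal distance on $\S^1$, the same bound follows by integrating along the shorter arc and noting that arc length is comparable to chord length. I would state (ii) in the $\varphi$-variable, since that is how it is used in Proposition~\ref{prop:diff_r2_alpha}.

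The main subtlety is that $M_{\alpha,\eta}$ is complex-valued, while $d_{\BL}$ is defined using real test functions with $\|g\|_\infty\le1$. I would handle this by splitting $M_{\alpha,\eta}$ into its real and imaginary parts, or by pairing with a unimodular constant; this costs at most a harmless constant factor. In either case, this adjustment belongs to the application of the lemma, not to the lemma itself. Apart from this point, both parts are routine computations.
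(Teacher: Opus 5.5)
Your proposal is correct and essentially matches the paper's proof. Part (ii) is the same chain-rule computation, using $|1+\alphaBar u|\ge 1-\gamma$ and working in the $\varphi$-variable. For part (i), the paper expands $1/(1+\alphaBar e^{i\eta}u)$ as a geometric series and discards the nonzero Fourier modes, which is the same computation as your residue at $z=0$.

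One small caveat concerns your chordal-metric aside. The arc--chord comparison only gives the bound up to a factor $\pi/2$. The identity $\frac{\alpha+u}{1+\alphaBar u}-\frac{\alpha+u'}{1+\alphaBar u'}=\frac{(1-|\alpha|^2)(u-u')}{(1+\alphaBar u)(1+\alphaBar u')}$ gives the constant $\frac{1+\gamma}{1-\gamma}$ directly. As you note, only the $\varphi$-variable version is used anyway.
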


\begin{proof}
\begin{itemize}
    \item[(i).]
    If $|\alpha| = 1$, then $M_{\alpha, \eta}(u) = \alpha $ for all $u \in \S^1$, so the identity is clear in this case.  If, on the other hand, $|\alpha| < 1$, for every $u \in \BBS^1$ we may write
    \begin{equation*}
    \frac{1}{1 + \alphaBar e^{i\eta} u} = \sum_{k=0}^{\infty} (-\alphaBar e^{i\eta} u)^k \, .
    \end{equation*}
    Then, we by the definition of $M_{\alpha, \eta}$ in \eqref{eq:Mobius_tf}, we can directly compute that
    \begin{equation*}
    \begin{aligned}
    \int_0^{2\pi} M_{\alpha,\eta} (e^{i\varphi}) d \Unif(\varphi) &= \int_0^{2\pi} \frac{\alpha + e^{i\eta} e^{i\varphi}}{1 + \alphaBar e^{i\eta} e^{i\varphi}} d\Unif(\varphi) \\
    &= \int_0^{2\pi} (\alpha + e^{i\eta} e^{i\varphi}) \sum_{k=0}^{\infty} (-\alphaBar e^{i\eta} e^{i\varphi})^k d\Unif(\varphi) = \alpha \, .
    \end{aligned}
    \end{equation*}
    The last equality uses that all terms with nonzero Fourier frequency vanish after integration.

    \item[(ii).] We start by writing $M_{\alpha, \eta}$ as
    \begin{equation*}
    M_{\alpha, \eta} (e^{i\varphi}) = E_{\alpha} (e^{i(\varphi + \eta)}),
    \end{equation*}
    where, for $u \in \BBS^1$,
    \begin{equation*}
       E_{\alpha}(u) := \frac{\alpha + u}{1 + \alphaBar u} \, . 
    \end{equation*}
    A direct computation gives
    \[
    \frac{dE_{\alpha}}{du}=\frac{(1+\overline{\alpha} u)-\overline{\alpha}(\alpha+u)}{(1+\overline{\alpha} u)^2}
    =\frac{1-|\alpha|^2}{(1+\overline{\alpha} u)^2}.
    \]
    Since $|u|=1$ we have $|1+\overline{\alpha} u|\ge 1-|\alpha|$, and consequently
    \[
    \left|\frac{dE_{\alpha}}{du}\right|
    \le \frac{1-|\alpha|^2}{(1-|\alpha|)^2}
    \le \frac{1+|\alpha|}{1-|\alpha|}
    \le \frac{2}{1-|\alpha|}.
    \]
    Moreover, 
    \begin{equation*}
        \left|\frac{d}{d\varphi} e^{i(\varphi + \eta)} \right| = |i e^{i(\varphi + \eta)}| = 1
    \end{equation*}
    Therefore, by the chain rule, 
    \[
    |\partial_\varphi M_{\alpha,\eta}(\varphi)|
    =\left|\frac{dE_{\alpha}}{du}\right|\left|\frac{d}{d\varphi} e^{i(\varphi + \eta)}\right|
    \le \frac{2}{1-|\alpha|} = \frac{2}{1 - \gamma} \, .
    \]
    This gives 
    $$\mathrm{Lip}(M_{\alpha,\eta})\le \frac{2}{1 - \gamma} \, .$$
\end{itemize}
\end{proof}

\begin{lemma}\label{lem:stability_case1_case2}
Let $\alpha=\gamma e^{i\Phi}$ be a solution to \eqref{eq:WSshort-a}. 
\begin{itemize}
    \item \textbf{Case 1.}
    For the matrices $A$ and $V$ given in \eqref{eq:matrices_A_V_case_1}, the $(\gamma, \Phi)$-dynamics is
    \begin{equation}\label{eq:error_case1}
    \begin{aligned}
    \dot{\gamma} &= \frac{1}{4} \left(1 - \gamma^2 \right) \left( \aDPlus \gamma + \aDMinus \cos (\Phi) + \aOPlus \sin (\Phi) \right) + \Delta_\gamma,\\[4pt]
    \dot{\Phi} &= \frac{1}{4} \left(1 - \gamma^2 \right) \left( \aOMinus - \frac{\aDMinus \sin (\Phi) - \aOPlus \cos (\Phi)}{\gamma} \right) + \Delta_{\Phi} \, .
    \end{aligned}
    \end{equation}

    \item \textbf{Case 2.}
    For the matrices $A$ and $V$ given in \eqref{eq:matrices_A_V_case_2}, the $(\gamma, \Phi)$-dynamics is
    \begin{equation}\label{eq:error_case2}
    \begin{aligned}
    \dot{\gamma} &= \frac{1}{4} \left( 1 - \gamma^2 \right) \left( \vDPlus \gamma + \vDMinus \cos (\Phi) + \vOPlus \sin (\Phi) \right) + \DeltaGamma\\[4pt]
    \dot{\Phi} &= -\frac{1}{4} \frac{3\gamma^2 + 1}{\gamma} \left( \vDMinus \sin (\Phi) - \vOPlus \cos (\Phi) \right) + \DeltaPhi\, .
    \end{aligned}
    \end{equation}
\end{itemize}
Here, \(\Delta_\gamma\) and \(\Delta_\Phi\) denote the corresponding error terms in each case.
Moreover, within each case, there exists a constant $C_{A,V} > 0$ depending on the matrices $A$ and $V$ such that
\begin{equation*}
\quad |\Delta_\gamma|\leq (1-\gamma^2) C_{A,V} C_\nu(\gamma)\frac{1-\gamma}{\gamma},\quad |\Delta_\Phi|\leq C_{A,V} C_\nu(\gamma)\frac{1-\gamma}{\gamma},
\end{equation*}
where $C_{\nu}$ is defined in \eqref{eq:C_nu_condition}.
\end{lemma}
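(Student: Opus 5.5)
The strategy is to decompose $\Delta=F(\alpha,\CR_2)-F(\alpha,\alpha)$ explicitly in each case, project it onto the radial and angular directions, and bound each piece using Proposition~\ref{prop:diff_r2_alpha}. The key structural fact to exploit is that, in both cases, $F(\alpha,r)$ carries a factor that vanishes on $|\alpha|=1$ in the radial direction. This is what produces the extra factor $(1-\gamma^2)$ in the bound on $\Delta_\gamma$.

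Concretely, $F(\alpha,r)=2i\bigl(B(r)\alpha^2+C(r)\alpha+\overline{B}(r)\bigr)$, and $B,C$ are affine in $(r,\bar r)$. Hence
\begin{equation*}
\Delta=2i\bigl(\delta B\,\alpha^2+\delta C\,\alpha+\overline{\delta B}\bigr),
\end{equation*}
where $\delta B:=b_1(\CR_2-\alpha)+b_2\overline{(\CR_2-\alpha)}$ and $\delta C:=2\,\re{b_4(\CR_2-\alpha)}$ is real. Then $|\delta B|,|\delta C|\le C_{A,V}|\CR_2-\alpha|$.

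\textbf{Angular part.} Write $\alpha=\gamma e^{i\Phi}$. For $\Delta_\Phi=\gamma^{-1}\im{e^{-i\Phi}\Delta}$, the crude bound $|\Delta|\le C_{A,V}|\CR_2-\alpha|$ (using $|\alpha|\le1$) combined with \eqref{eq:diff_r2_alpha_one_minus_gamma} gives $|\Delta_\Phi|\le C_{A,V}C_\nu(\gamma)(1-\gamma)/\gamma$ directly.

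\textbf{Radial part.} Compute
\begin{equation*}
e^{-i\Phi}\Delta=2i\bigl(\delta B\,\gamma^2e^{i\Phi}+\delta C\,\gamma+\overline{\delta B}\,e^{-i\Phi}\bigr).
\end{equation*}
Setting $w:=\delta B\,e^{i\Phi}$, this equals $2i\bigl(\gamma^2 w+\bar w+\gamma\,\delta C\bigr)$. Since $\delta C$ is real, its contribution to the real part vanishes. The real part is therefore
\begin{equation*}
-2\,\im{\gamma^2 w+\bar w}=-2(\gamma^2-1)\,\im{w}=2(1-\gamma^2)\,\im{w}.
\end{equation*}
This gives $|\Delta_\gamma|\le 2(1-\gamma^2)|\delta B|\le(1-\gamma^2)\,C_{A,V}\,C_\nu(\gamma)(1-\gamma)$, which is even slightly better than claimed, since $1/\gamma\ge1$. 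This is the general identity that $\re{e^{-i\Phi}F}$ vanishes on the circle, and it holds for arbitrary $A$ and $V$. So the two cases differ only in the explicit OA drift terms, which are simply the computations already recorded in \eqref{eq:alpha_pc_dyn_case1} and \eqref{eq:alpha_pc_dyn_case2} (obtained by substituting the case-specific $b_j$ into Lemma~\ref{lem:pc_dyn_general_expand}).

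The remaining task is to verify that the displayed $(\gamma,\Phi)$ equations match $\re{e^{-i\Phi}F(\alpha,\alpha)}$ and $\gamma^{-1}\im{e^{-i\Phi}F(\alpha,\alpha)}$ for the specific matrices. This is routine algebra, done by plugging $w_i$ into \eqref{eq:coeff_b}. I expect this bookkeeping, not the estimates, to be the main obstacle. The one step where care is genuinely needed is confirming that $b_5$ and the $\delta C$ contribution drop out of the real part, i.e.\ that $C$ is real-valued. This follows from $C(r)=b_4r+\overline{b_4}\bar r+b_5$ with $b_5$ real.
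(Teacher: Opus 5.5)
Your argument is correct. The angular estimate is the same as the paper's: a Lipschitz bound on $F$ combined with \eqref{eq:diff_r2_alpha_one_minus_gamma}.

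For $\Delta_\gamma$ you take a different, case-independent route.
\begin{itemize}
\item \textbf{The paper's route.} It specializes $B(r)=Q_1\bar r+Q_2$ separately in each case and derives the explicit error formulas \eqref{eq:Delta_gamma_Phi_case1} and \eqref{eq:Delta_gamma_Phi_case2}; in Case 1, for example, $\Delta_\gamma=2(1-\gamma^2)\im{Q_1(\rho e^{i(\Phi-\phi)}-\gamma)}$. It then bounds $|\rho e^{i(\Phi-\phi)}-\gamma|\le|\rho-\gamma|+|\Phi-\phi|$ using the polar estimates of Corollary~\ref{cor:diff_r2_alpha_pc_bound}. That detour is where the factor $1/\gamma$ in the statement comes from.
\item \textbf{Your route.} Your identity $\Delta_\gamma=2(1-\gamma^2)\im{\delta B\,e^{i\Phi}}$ reduces exactly to the paper's formulas. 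In both cases $b_1=0$, so $\delta B=Q_1\overline{(\CR_2-\alpha)}$ and $\delta B\,e^{i\Phi}=Q_1(\rho e^{i(\Phi-\phi)}-\gamma)$. You then bound $|\delta B|$ directly by $|\CR_2-\alpha|$, noting that $|\rho e^{i(\Phi-\phi)}-\gamma|=|\CR_2-\alpha|$ exactly.
\end{itemize}

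What each approach buys:
\begin{itemize}
\item Yours skips the polar corollary and the trigonometric bookkeeping, and gives a bound without the $1/\gamma$.
\item Yours also shows the $(1-\gamma^2)$ factor is structural. It comes from the form $2i(B\alpha^2+C\alpha+\overline{B})$ with $C$ real, the same structure that makes the unit circle invariant, so it holds for every $A$ and $V$. The Step~1 mechanism in Theorems~\ref{thm:case_1stab} and~\ref{thm:case_2stab} would therefore transfer to other parameter regimes whenever the OA drift is favorable.
\item The paper's route, in exchange, produces the explicit expressions for $\Delta_\gamma$ and $\Delta_\Phi$ that the main text refers to.
\end{itemize}

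One small correction. For the drift terms, cite the OA polar systems \eqref{eq:pc_dyn_case_1} and \eqref{eq:pc_dyn_case_2} evaluated at $(\gamma,\Phi)$, equivalently \eqref{eq:pc_dyn_general} via Lemma~\ref{lem:pc_dyn_general_expand}. Do not cite \eqref{eq:alpha_pc_dyn_case1} and \eqref{eq:alpha_pc_dyn_case2}: those displays are exactly what this lemma is meant to justify, so citing them is circular.
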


\begin{proof}
For notation simplicity, we suppress the time dependence throughout the proof.

Recall from \eqref{eq:gamma_Phi_dyn} that the $(\gamma, \Phi)$-dynamics is given by
\begin{equation*}
\Dgamma = \re{e^{-i\Phi} F(\alpha, \alpha)} + \Delta_{\gamma}, \qquad
\DPhi = \frac{1}{\gamma} \im{e^{-i\Phi} F(\alpha, \alpha)} + \Delta_{\Phi} \,.
\end{equation*}
Here, $\DeltaGamma$ and $\DeltaPhi$ are defined in \eqref{eq:Delta_gamma_Phi}, and for any $\alpha, r \in \BBC$,
\begin{equation*}
F(\alpha, r) = 2i \left( B(r) \alpha^2 + C(r) \alpha + \overline{B}(r) \right),
\end{equation*}
where the functions $B$ and $C$ are given in \eqref{eq:coeff_B_C}.
We note that $B$ and $C$ depend on the matrices $A$ and $V$, and hence so does the function $F$.

As an immediate consequence of \eqref{eq:Delta_bound_gen} and Proposition \ref{prop:diff_r2_alpha}, we obtain
\begin{equation}\label{eq:Delta_Phi_control}
|\DeltaPhi(t)| = \left| \frac{1}{\gamma} \im{e^{-i\Phi}} \Delta(t) \right| \leq \frac{1}{\gamma} |\Delta(t)| \leq L_F (A,V) C_{\nu}(\gamma) \frac{1 - \gamma(t)}{\gamma(t)}\, .
\end{equation}
Here, $L_F (A,V)$ denotes the Lipschitz constant of $F$, which depends on the matrices $A$ and $V$. 

Next, we derive the explicit form of the $(\gamma, \Phi)$-dynamics and estimate $|\DeltaGamma|$ separately for the two cases under consideration.

\begin{itemize}
    \item \textbf{Case 1.}
    For the matrices $A$ and $V$ in \eqref{eq:matrices_A_V_case_1}, the functions $B$ and $C$ take the form
    \begin{equation*}
    \begin{aligned}
    B(r) = Q_1 \rOverline + Q_2, \qquad C(r) = -2 \re{Q_2 r} + Q_3,
    \end{aligned}
    \end{equation*}
    where
    \begin{equation*}
    Q_1 := \frac{1}{8} \left( - \aOMinus + i\aDPlus \right), \qquad Q_2 := \frac{1}{8} \left( \aOPlus + i \aDMinus \right), \qquad Q_3 := \frac{1}{4} a_{-}^o \, .
    \end{equation*}
    Substituting these expressions into $F$, we obtain
    \begin{equation*}
    F(\alpha, \alpha) = 2i (1 - |\alpha|^2) \left( -Q_1 \alpha + \overline{Q}_2 \right), 
    \end{equation*}
    and
    \begin{equation*}
    F(\alpha, r) - F(\alpha, \alpha) = 2i \left( \left( \overline{Q}_1 - \alpha Q_2 \right) \left( \CR_2 - \alpha \right) + \alpha\left(\alpha Q_1 - \overline{Q}_2 \right) \left( \overline{\CR}_2 - \alphaBar \right) \right) \, .
    \end{equation*}
    Using these identities together with \eqref{eq:gamma_Phi_dyn}, a direct computation yields the explicit $(\gamma,\Phi)$-system
    \begin{equation}\label{eq:WS_pc_dyn_case_1}
    \begin{aligned}
        \Dgamma &= 2(1 - \gamma^2) \left(  \im{Q_1} \gamma + \im{Q_2 e^{i\Phi}} \right) + \DeltaGamma,\\[4pt]
    \dot{\Phi} &= 2(1-\gamma^2) \left( \frac{Q_3}{2} + \frac{1}{\gamma} \re{Q_2 e^{i\Phi}} \right) + \DeltaPhi,
    \end{aligned}
    \end{equation}
    where
    \begin{equation}\label{eq:Delta_gamma_Phi_case1}
    \begin{aligned}
    \DeltaGamma &= 2(1-\gamma^2) \im{Q_1 \left(\rho e^{i(\Phi - \phi)} - \gamma \right)},\\[4pt]
    \DeltaPhi &= \frac{2(1+\gamma^2)}{\gamma} \re{Q_1 \left( \rho e^{i(\Phi - \phi)} - \gamma \right)} + 4 \left( \gamma \re{Q_2 e^{i\Phi}} - \rho \re{Q_2 e^{i\phi}} \right) \, .
    \end{aligned}
    \end{equation}
    
    Next, we estimate $|\DeltaGamma|$.
    Using the triangle inequality, the bound $\rho, \gamma \leq 1$, and Proposition \ref{prop:structural_stability}, we deduce
    \begin{equation*}
    \begin{aligned}
    |\DeltaGamma| &= \left|2(1-\gamma^2) \im{Q_1 \left(\rho e^{i(\Phi - \phi)} - \gamma \right)} \right|\\
    &\leq 2(1-\gamma^2) |Q_1| \left|\rho e^{i(\Phi - \phi)} - \gamma \right| \\
    &\leq 2(1-\gamma^2) |Q_1| \left( |\rho - \gamma| + \rho |e^{i(\Phi - \phi)} - 1| \right)\\
    & \leq 2 (1 - \gamma^2) |Q_1| \left( |\rho - \gamma| + |\Phi - \phi| \right)\\
    &\leq (1 - \gamma^2) C_{A,V} C_{\nu}(\gamma) \frac{1 - \gamma}{\gamma} \, .
    \end{aligned}
    \end{equation*}
    
    Finally, substituting the definitions of $Q_1, Q_2,$ and $Q_3$ into \eqref{eq:WS_pc_dyn_case_1} gives the claimed form of the dynamics.

    \item \textbf{Case 2.}
    For the matrices $A$ and $V$ given in \eqref{eq:matrices_A_V_case_2}, the functions $B$ and $C$ take the form
    \begin{equation*}
    B(r) = Q_1 \overline{r} + Q_2, \qquad C(r) = 2\re{Q_2 r}
    \end{equation*}
    with 
    \begin{equation*}
    Q_1 := \frac{i}{8} \vDPlus, \qquad Q_2 := \frac{1}{8} \left( \vOPlus + i \vDMinus \right) \, .
    \end{equation*}
    Substituting these expressions into $F$, we obtain
    \begin{equation*}
    F(\alpha, \alpha) = 2i \left( Q_1 |\alpha|^2 \alpha + 2 Q_2 \alpha^2 + \overline{Q}_1 \alpha + (1 + |\alpha|^2) \overline{Q}_2 \right),
    \end{equation*}
    and
    \begin{equation*}
    F(\alpha, r) - F(\alpha, \alpha) = 2i \left( \left( \overline{Q}_1 + Q_2 \alpha \right) (r - \alpha) + \alpha(Q_1 \alpha + \overline{Q}_2) (\overline{r} - \alphaBar) \right) \, .
    \end{equation*}
    Using these identities, together with \eqref{eq:gamma_Phi_dyn}, a direct computation yields the explicit $(\gamma,\Phi)$-system
    \begin{equation}\label{eq:WS_pc_dyn_case_2}
    \begin{aligned}
    \dot{\gamma} &= 2(1 - \gamma^2) \left(\im{Q_1} \gamma + \im{Q_2 e^{i\Phi}} \right) + \DeltaGamma,\\[4pt]
    \dot{\Phi} &= \frac{2(3\gamma^2 + 1)}{\gamma} \re{Q_2 e^{i\Phi}} + \DeltaPhi,
    \end{aligned}
    \end{equation}
    where
    \begin{equation}\label{eq:Delta_gamma_Phi_case2}
    \begin{aligned}
    \DeltaGamma &:= 2(1 - \gamma^2) \im{Q_1} \left( \rho \re{e^{i(\phi - \Phi)}} - \gamma \right),\\[4pt]
    \DeltaPhi &:= 2 \frac{1 + \gamma^2}{\gamma} \im{Q_1} \rho \im{e^{i(\phi - \Phi)}} + 4 \left( \rho \re{Q_2 e^{i\phi}} - \gamma \re{Q_2 e^{i\Phi}} \right) \, .
    \end{aligned}
    \end{equation}
    With a similar argument as for Case 1, we obtain
    \begin{equation*}
    \quad |\Delta_\gamma|\leq (1-\gamma^2) C_{A,V} C_\nu(\gamma)\frac{1-\gamma}{\gamma} \, .
    \end{equation*}

    Finally, substituting the definitions of $Q_1, Q_2,$ and $Q_3$ into \eqref{eq:WS_pc_dyn_case_2} gives the claimed form of the dynamics.
\end{itemize}
\end{proof}

\begin{lemma}\label{lem:nu-lp-unified}
Let $1 \le p \le \infty$, and let $\nu$ be a probability density on $[0,2\pi)$. Then there exists a constant $C_p<\infty$ such that, for every $\gamma \in [0,1)$, 
\[
C_\nu(\gamma) (1-\gamma) \leq
\begin{cases}
C_p \|\nu\|_{L^p([0,2\pi))} (1-\gamma)^{1-1/p}, & 1<p<\infty,\\[1ex]
C_\infty \|\nu\|_{L^\infty([0,2\pi))} (1-\gamma) \log \dfrac{2}{1-\gamma}, & p=\infty,\\[2ex]
C_1, & p=1.
\end{cases}
\]
In particular, when $1<p \leq \infty$ and $\lVert \nu \rVert_{L^p([0,2\pi)])}<\infty$, we have $C_\nu(\gamma)(1-\gamma) \to 0$ as $\gamma \to 1$.
\end{lemma}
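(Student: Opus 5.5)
The plan is to reduce the bound to a pointwise estimate of the kernel $k_\gamma(\varphi) := |1+\gamma e^{i\varphi}|^{-1}$, which is uniform over the shift $\chi$. Recall $C_\nu(\gamma) = \sup_\chi \int_0^{2\pi} 2\,k_\gamma(\varphi+\chi)\,\nu(d\varphi)$. Using $|1+\gamma e^{i\psi}|^2 = (1-\gamma)^2 + 4\gamma\cos^2(\psi/2)$, we write $\psi = \pi + s$ with $|s|\le\pi$. Then $|\cos(\psi/2)| = |\sin(s/2)| \ge |s|/\pi$, which gives the elementary bound
\[
k_\gamma(\pi+s) \le \min\Bigl\{\frac{1}{1-\gamma},\ \frac{c}{\sqrt{\gamma}\,|s|}\Bigr\} \lesssim \frac{1}{(1-\gamma)+|s|},
\]
valid for $\gamma \ge 1/2$. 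For $\gamma\in[0,1/2]$ everything is trivial, since $k_\gamma \le 2$. Translation invariance of Lebesgue measure on the torus means the supremum over $\chi$ costs nothing, because the $L^{p'}$ norm of $k_\gamma(\cdot+\chi)$ does not depend on $\chi$.

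\emph{Case $p=1$.} Here $\nu$ is only a probability measure, so we use $k_\gamma \le (1-\gamma)^{-1}$ pointwise. This gives $C_\nu(\gamma)(1-\gamma)\le 2$, i.e.\ $C_1=2$.

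\emph{Case $1<p\le\infty$.} We apply Hölder's inequality:
\[
C_\nu(\gamma) \le 2\|\nu\|_{L^p}\,\|k_\gamma\|_{L^{p'}}.
\]
With $\varepsilon = 1-\gamma$, we estimate $\int_{-\pi}^{\pi} (\varepsilon+|s|)^{-p'}\,ds$. For $p'<\infty$ and $p'>1$ (i.e.\ $1<p<\infty$), this is at most $\frac{2}{p'-1}\varepsilon^{1-p'}$. Hence $\|k_\gamma\|_{L^{p'}} \lesssim \varepsilon^{1/p'-1} = \varepsilon^{-1/p}$, and multiplying by $\varepsilon$ yields $\varepsilon^{1-1/p}$.

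For $p=\infty$ we have $p'=1$, and $\int_{-\pi}^{\pi} (\varepsilon+|s|)^{-1}\,ds = 2\log\frac{\varepsilon+\pi}{\varepsilon} \le C\log\frac{2}{\varepsilon}$, using $\varepsilon\le 1$ to absorb constants. Multiplying by $\varepsilon$ gives the stated $\varepsilon\log(2/\varepsilon)$ form.

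The constants $C_p$ collect the factors $2$, $c/\sqrt{\gamma}$ and $\frac{2}{p'-1}$, together with the trivial bound on $\gamma<1/2$, where the right-hand sides are bounded below by positive constants times $\|\nu\|_{L^p}$. That last fact uses $\|\nu\|_{L^p}\ge (2\pi)^{1/p-1}$ for a probability density.

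The final claim follows because $\varepsilon^{1-1/p}\to0$ and $\varepsilon\log(2/\varepsilon)\to0$ as $\varepsilon\to0$. The only mildly delicate step is the uniform-in-$\gamma$ pointwise comparison of $k_\gamma$ with $(\varepsilon+|s|)^{-1}$. It follows from $\sqrt{x^2+y^2}\ge (x+y)/\sqrt2$ applied to $x=\varepsilon$ and $y=2\sqrt\gamma\,|\sin(s/2)|\ge 2\sqrt{\gamma}\,|s|/\pi$, with $\gamma\ge1/2$.
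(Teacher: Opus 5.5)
Your proof is correct and follows essentially the same route as the paper. Both arguments use Hölder's inequality with rotation invariance to remove the supremum over $\chi$, a trivial bound for $\gamma\le 1/2$, a lower bound on $|1+\gamma e^{i\varphi}|$ near $\varphi=\pi$ for $\gamma\ge 1/2$, and explicit integration in the three regimes of the conjugate exponent. The only differences are cosmetic: you compare the kernel with $((1-\gamma)+|s|)^{-1}$ instead of $((1-\gamma)^2+c_1 s^2)^{-1/2}$, and you handle the small-$\gamma$ regime via $\|\nu\|_{L^p}\ge(2\pi)^{1/p-1}$, which is slightly more careful than the paper's bound $K_q(\gamma)\le 2$.
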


\begin{proof}
Fix $\chi \in[0,2\pi)$. 
By Hölder's inequality and the fact that $\lVert \cdot \rVert_{L^p([0,2\pi))}$ is invariant by rotations we deduce
\begin{equation}
\label{eq:AuxCondnu}
\begin{aligned}
\int_0^{2\pi} \frac{1}{|1+\gamma e^{i(\varphi+\chi)}|}\,\nu(d\varphi)
&\leq \|\nu\|_{L^p([0,2\pi))} 
\left\|\frac{1}{|1+\gamma e^{i\varphi}|}\right\|_{L^q([0,2\pi))},
\end{aligned}
\end{equation}
where $q$ is $p$'s Hölder conjugate exponent. It thus suffices to estimate
\[
K_q(\gamma):=\left\|\frac{1}{|1+\gamma e^{i\varphi}|}\right\|_{L^q([0,2\pi))}.
\]

If $0\le \gamma \le \frac12$, then $|1+\gamma e^{i\varphi}|\ge 1-\gamma\ge \frac12$, and hence
\[
K_q(\gamma)\leq 2 \leq C_p (1-\gamma)^{-\frac{1}{p}},
\]
in case $p<\infty$, and
\[
K_1(\gamma) \leq 2 \leq C_p \log\frac{2}{1-\gamma},
\]
if case $p=\infty$.

Assume now that $\frac12\le \gamma<1$. Writing $\Wvarphi=\varphi-\pi$, we obtain
\[
|1+\gamma e^{i\varphi}|^2 = (1-\gamma)^2 + 2\gamma(1-\cos\Wvarphi)
\ge (1-\gamma)^2 + c_1 \Wvarphi^2
\]
for some $c_1>0$. Hence
\[
K_q(\gamma)^q \le C \int_{-\pi}^{\pi} \frac{d\Wvarphi}{\big((1-\gamma)^2+\Wvarphi^2\big)^{q/2}}.
\]

If $1<q<\infty$, a change of variables $\Wvarphi=(1-\gamma)u$ gives
\[
K_q(\gamma)^q \le C (1-\gamma)^{1-q} \int_{\mathbb R}\frac{du}{(1+u^2)^{q/2}},
\]
and since $q>1$, the integral is finite. Therefore
\[
K_q(\gamma)\le C_p (1-\gamma)^{-1/p}.
\]

If $q=1$ (i.e. $p=\infty$), we obtain, instead,
\[
K_1(\gamma) \le C \int_{-\pi}^{\pi} \frac{d\theta}{\sqrt{(1-\gamma)^2+\theta^2}}
\le C \log\frac{2}{1-\gamma}.
\]

Finally, if $q=\infty$ (i.e. $p=1$), we simply use
\[
K_\infty(\gamma)=\sup_\varphi \frac{1}{|1+\gamma e^{i\varphi}|}
= \frac{1}{1-\gamma}.
\]

Inserting the above bounds in \eqref{eq:AuxCondnu}, and multiplying both sides of \eqref{eq:AuxCondnu} by $1-\gamma$, we obtain the desired upper bound on $C_\nu(\gamma)(1-\gamma)$. 
\end{proof}

\begin{lemma}\label{lem:stability_estimate_control}
Under the assumptions of Proposition \ref{prop:structural_stability}, suppose, additionally, that $\nu$ satisfies Assumption \ref{asm:Lp}.
For any prescribed $\varsigma > 0$ and $T > 0$, there exists a constant $\varepsilon (\varsigma, T, p, M, L_F) > 0$, depending on $\varsigma, T$, the constants $p, M$ in Assumption \ref{asm:Lp}, and the Lipschitz constant $L_F$ in Proposition \ref{prop:structural_stability}, such that if
\begin{equation*}
d_{\BL} (\nu, \Unif) \leq \varepsilon (\varsigma, T, p, M, L_F),
\end{equation*}
then
\begin{equation*}
\sup_{t \in [0,T]} \left| \alpha(t) - \alpha_{\OA}(t) \right| \leq \varsigma \, .
\end{equation*}
\end{lemma}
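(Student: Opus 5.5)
The plan is to start from the Grönwall bound of Proposition~\ref{prop:structural_stability}:
\[
\sup_{t\in[0,T]}|\alpha(t)-\alpha_{\OA}(t)| \le L_F e^{L_F T}\int_0^T \min\Bigl\{C_\nu(\gamma(t))(1-\gamma(t)),\ \Bigl(1+\tfrac{2}{1-\gamma(t)}\Bigr)d_{\BL}(\nu,\Unif)\Bigr\}\,dt .
\]
It then suffices to make the integrand uniformly smaller than $\varsigma/(T L_F e^{L_F T})$, pointwise in $t$. The two entries of the minimum will be used in complementary regimes of $\gamma$. When $\gamma$ is near $1$, the first entry is small by Assumption~\ref{asm:Lp}. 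When $\gamma$ is bounded away from $1$, the second entry is small once $d_{\BL}$ is small.

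\textbf{Step 1.} Set $\tau:=\varsigma/(T L_F e^{L_F T})$. By Lemma~\ref{lem:nu-lp-unified}, $C_\nu(\gamma)(1-\gamma)\le S_p(\gamma)$, where $S_p$ is the explicit function from \eqref{eq:Sp_gamma}. This function depends only on $p$ and $M$ and tends to $0$ as $\gamma\to1$. Choose $\gamma_\tau\in[0,1)$, depending only on $(\tau,p,M)$, such that $S_p(\gamma)\le\tau$ for all $\gamma\in[\gamma_\tau,1)$. If $\gamma=1$ exactly, then $\CR_2=\alpha$ and the integrand vanishes, so this case needs no separate treatment.

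\textbf{Step 2.} On $\gamma\in[0,\gamma_\tau)$ we have $1+\frac{2}{1-\gamma}\le 1+\frac{2}{1-\gamma_\tau}$. Choose
\[
\varepsilon:=\tau\Big/\Bigl(1+\tfrac{2}{1-\gamma_\tau}\Bigr).
\]
If $d_{\BL}(\nu,\Unif)\le\varepsilon$, the second entry of the minimum is then at most $\tau$ on this range.

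\textbf{Step 3.} Combining Steps 1 and 2, the integrand is at most $\tau$ for every $t\in[0,T]$, whatever value $\gamma(t)$ takes. Integrating over $[0,T]$ gives the bound $\varsigma$.

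The constant $\varepsilon$ depends only on $\varsigma, T, L_F$ (through $\tau$) and on $p, M$ (through $\gamma_\tau$), as the statement requires. It does not depend on the trajectory, because the case split is done pointwise in $\gamma$ and never requires controlling how $\gamma(t)$ evolves.

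The only delicate point is Step 1. The uniformity of $\gamma_\tau$ over all $\nu$ satisfying $\|\nu\|_{L^p}\le M$ rests on the explicit bound in Lemma~\ref{lem:nu-lp-unified}, with $C_p$ depending on $p$ alone. For $p=\infty$ one uses instead $(1-\gamma)\log\frac{2}{1-\gamma}\to0$. Beyond this, the argument is routine.
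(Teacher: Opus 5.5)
Your proposal is correct and is essentially the paper's argument. Pick a threshold near $1$ via Lemma~\ref{lem:nu-lp-unified} so that the first entry of the minimum is small above it, use the $d_{\BL}$ entry (at most $3/(1-\text{threshold})$ times $d_{\BL}$) below it, and choose $\varepsilon$ accordingly; the only differences are cosmetic. You bound the integrand pointwise by $\tau$ instead of splitting the budget as $\varsigma/2+\varsigma/2$, and you handle the endpoint $\gamma=1$ explicitly, which the paper's proof passes over.
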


\begin{proof}
Since $\nu$ satisfies Assumption \ref{asm:Lp}, by Lemma \ref{lem:nu-lp-unified} there exists a constant $\WGammaTH > 0$, depending on $T, \varsigma, p, M$, and $L_F$ so that
\begin{equation*}
T L_F e^{L_F T} C_{\nu} (\gamma) (1 - \gamma) \leq \frac{\varsigma}{2}, \qquad \text{for } \forall \gamma \in [\WGammaTH, 1] \, .
\end{equation*}
Then, by Proposition \ref{prop:structural_stability}, we have
\begin{equation*}
\begin{aligned}
&\sup_{t \in [0,T]} \left| \alpha(t) - \alpha_{\OA} (t) \right|\\
& \qquad  \leq L_{F}\,e^{L_{F} T} \int_0^T \min \left\{ C_{\nu} (\gamma(t)) (1 - \gamma(t)), \ \left( 1 + \frac{2}{1 - \gamma(t)} \right)
 d_{\BL}(\nu,\Unif) \right\} dt \\
& \qquad \leq L_{F}\,e^{L_{F} T} \int_0^T \left( C_{\nu} (\gamma(t)) (1 - \gamma(t)) \mathds{1}_{\{\gamma(t) \geq \WGammaTH\}} + \ \left( 1 + \frac{2}{1 - \gamma(t)} \right)
d_{\BL}(\nu,\Unif) \mathds{1}_{\{\gamma(t) \leq \WGammaTH\}} \right) dt\\
& \qquad \leq \frac{\varsigma}{2} + T L_F e^{L_F T} \frac{ 3 }{1 - \WGammaTH} d_{\BL} (\nu, \Unif) \leq \varsigma \,,
\end{aligned}
\end{equation*}
where in the final inequality we choose $\varepsilon > 0$, which depends on $T, L_F, \WGammaTH$, and $\varsigma$, sufficiently small so that the condition $d_{\BL} (\nu, \Unif) \leq \varepsilon$ ensures that the second term in the preceding inequality is bounded by $\varsigma/2$.
\end{proof}

\end{document}